\documentclass{article}
\usepackage[letterpaper, left=1in, right=1in, top=1in, bottom=1in]{geometry}

\usepackage{natbib}
\bibliographystyle{plainnat}
\bibpunct{(}{)}{;}{a}{,}{,}
\usepackage[colorlinks,citecolor=blue!70!black,linkcolor=blue!70!black,
urlcolor=blue!70!black,breaklinks=true]{hyperref}

\usepackage{parskip}

\usepackage[utf8]{inputenc} %
\usepackage[T1]{fontenc}    %
\usepackage{booktabs}       %
\usepackage{microtype}      %

\usepackage{amsmath, amsthm, amssymb}
\usepackage{dsfont} 
\usepackage{url}            %
\usepackage{algorithm}
\usepackage[noend]{algorithmic}
\usepackage{listings}
\usepackage{bm}
\usepackage{bbm}
\usepackage{enumitem}
\usepackage{nicefrac}       %
\usepackage{mathrsfs} 
\usepackage{inconsolata}
\usepackage{comment}
\usepackage{xspace}
\usepackage{transparent}
\usepackage[nameinlink, capitalize]{cleveref}
\makeatletter
\AddToHook{cmd/appendix/before}{
\def\cref@section@alias{appendix}
\def\cref@subsection@alias{appendix}
\def\cref@subsubsection@alias{appendix}
}
\makeatother
\usepackage{thmtools}
\usepackage{thm-restate}
\usepackage{nicematrix}
\usepackage{arydshln}
\usepackage{mathtools}
\usepackage{lineno}
\usepackage{xspace}

\usepackage{enumitem}
\setlist[enumerate]{leftmargin=.2in}
\setlist[itemize]{leftmargin=.2in}

\def\E{{\mathbb E}}
\def\V{{\mathbb V}}
\def\P{{\mathbb P}}
\def\R{{\mathbb R}}
\def\N{{\mathbb N}}
\def\B{{\mathbb B}}
\def\S{{\mathbb S}}

\newcommand{\mfe}{\mathfrak{e}}

\newcommand{\mfF}{\mathfrak{F}}

\DeclareMathOperator*{\argmin}{arg\,min}
\DeclareMathOperator{\esup}{ess\,sup}

\DeclareMathOperator{\err}{err}
\DeclareMathOperator{\poly}{poly}
\DeclareMathOperator{\supp}{supp}

\DeclareMathOperator{\unif}{{unif}}

\DeclareMathOperator{\1}{{\mathds{1}}}

\DeclareMathOperator{\val}{{val}}

\newcommand{\trn}{\top}

\newcommand{\polylog}{\mathrm{polylog}}
\renewcommand{\epsilon}{\varepsilon}

\newtheorem{theorem}{Theorem}
\newtheorem{lemma}{Lemma}

\newtheorem{definition}{Definition}

\newtheorem{proposition}{Proposition}
\newtheorem{corollary}{Corollary}
\newtheorem{remark}{Remark}

\DeclarePairedDelimiter{\abs}{\lvert}{\rvert} %
\DeclarePairedDelimiter{\brk}{[}{]}
\DeclarePairedDelimiter{\crl}{\{}{\}}
\DeclarePairedDelimiter{\prn}{(}{)}
\DeclarePairedDelimiter{\nrm}{\|}{\|}
\DeclarePairedDelimiter{\ang}{\langle}{\rangle}

\DeclarePairedDelimiter{\ceil}{\lceil}{\rceil}
\DeclarePairedDelimiter{\floor}{\lfloor}{\rfloor}

\DeclarePairedDelimiterX{\infdiv}[2]{(}{)}{%
  #1\;\delimsize\|\;#2%
}

\newcommand{\wt}[1]{\widetilde{#1}}
\newcommand{\wh}[1]{\widehat{#1}}
\newcommand{\wb}[1]{\widebar{#1}}

\def\ddefloop#1{\ifx\ddefloop#1\else\ddef{#1}\expandafter\ddefloop\fi}
\def\ddef#1{\expandafter\def\csname bb#1\endcsname{\ensuremath{\mathbb{#1}}}}
\ddefloop ABCDEFGHIJKLMNOPQRSTUVWXYZ\ddefloop
\def\ddefloop#1{\ifx\ddefloop#1\else\ddef{#1}\expandafter\ddefloop\fi}
\def\ddef#1{\expandafter\def\csname b#1\endcsname{\ensuremath{\mathbf{#1}}}}
\ddefloop ABCDEFGHIJKLMNOPQRSTUVWXYZ\ddefloop
\def\ddef#1{\expandafter\def\csname sf#1\endcsname{\ensuremath{\mathsf{#1}}}}
\ddefloop ABCDEFGHIJKLMNOPQRSTUVWXYZ\ddefloop
\def\ddef#1{\expandafter\def\csname c#1\endcsname{\ensuremath{\mathcal{#1}}}}
\ddefloop ABCDEFGHIJKLMNOPQRSTUVWXYZ\ddefloop
\def\ddef#1{\expandafter\def\csname h#1\endcsname{\ensuremath{\widehat{#1}}}}
\ddefloop ABCDEFGHIJKLMNOPQRSTUVWXYZ\ddefloop
\def\ddef#1{\expandafter\def\csname hc#1\endcsname{\ensuremath{\widehat{\mathcal{#1}}}}}
\ddefloop ABCDEFGHIJKLMNOPQRSTUVWXYZ\ddefloop
\def\ddef#1{\expandafter\def\csname t#1\endcsname{\ensuremath{\widetilde{#1}}}}
\ddefloop ABCDEFGHIJKLMNOPQRSTUVWXYZ\ddefloop
\def\ddef#1{\expandafter\def\csname tc#1\endcsname{\ensuremath{\widetilde{\mathcal{#1}}}}}
\ddefloop ABCDEFGHIJKLMNOPQRSTUVWXYZ\ddefloop
\def\ddefloop#1{\ifx\ddefloop#1\else\ddef{#1}\expandafter\ddefloop\fi}
\def\ddef#1{\expandafter\def\csname scr#1\endcsname{\ensuremath{\mathscr{#1}}}}
\ddefloop ABCDEFGHIJKLMNOPQRSTUVWXYZ\ddefloop

\let\oldparagraph\paragraph
\renewcommand{\paragraph}[1]{\oldparagraph{#1.}}

\renewcommand{\epsilon}{\varepsilon}

\newcommand{\ind}{\mathbbm{1}}    %

\newcommand{\eps}{\epsilon}

\newcommand{\ldef}{\vcentcolon=}
\newcommand{\rdef}{=\vcentcolon}

\renewcommand{\bigm}[1]{%
  \ifcsname fenced@\string#1\endcsname
    \expandafter\@firstoftwo
  \else
    \expandafter\@secondoftwo
  \fi
  {\expandafter\amsmath@bigm\csname fenced@\string#1\endcsname}%
  {\amsmath@bigm#1}%
}

\newcommand{\DeclareFence}[2]{\@namedef{fenced@\string#1}{#2}}
\makeatother

\makeatletter
\let\save@mathaccent\mathaccent
\newcommand*\if@single[3]{%
  \setbox0\hbox{${\mathaccent"0362{#1}}^H$}%
  \setbox2\hbox{${\mathaccent"0362{\kern0pt#1}}^H$}%
  \ifdim\ht0=\ht2 #3\else #2\fi
  }
\newcommand*\rel@kern[1]{\kern#1\dimexpr\macc@kerna}
\newcommand*\widebar[1]{\@ifnextchar^{{\wide@bar{#1}{0}}}{\wide@bar{#1}{1}}}
\newcommand*\wide@bar[2]{\if@single{#1}{\wide@bar@{#1}{#2}{1}}{\wide@bar@{#1}{#2}{2}}}
\newcommand*\wide@bar@[3]{%
  \begingroup
  \def\mathaccent##1##2{%
    \let\mathaccent\save@mathaccent
    \if#32 \let\macc@nucleus\first@char \fi
    \setbox\z@\hbox{$\macc@style{\macc@nucleus}_{}$}%
    \setbox\tw@\hbox{$\macc@style{\macc@nucleus}{}_{}$}%
    \dimen@\wd\tw@
    \advance\dimen@-\wd\z@
    \divide\dimen@ 3
    \@tempdima\wd\tw@
    \advance\@tempdima-\scriptspace
    \divide\@tempdima 10
    \advance\dimen@-\@tempdima
    \ifdim\dimen@>\z@ \dimen@0pt\fi
    \rel@kern{0.6}\kern-\dimen@
    \if#31
      \overline{\rel@kern{-0.6}\kern\dimen@\macc@nucleus\rel@kern{0.4}\kern\dimen@}%
      \advance\dimen@0.4\dimexpr\macc@kerna
      \let\final@kern#2%
      \ifdim\dimen@<\z@ \let\final@kern1\fi
      \if\final@kern1 \kern-\dimen@\fi
    \else
      \overline{\rel@kern{-0.6}\kern\dimen@#1}%
    \fi
  }%
  \macc@depth\@ne
  \let\math@bgroup\@empty \let\math@egroup\macc@set@skewchar
  \mathsurround\z@ \frozen@everymath{\mathgroup\macc@group\relax}%
  \macc@set@skewchar\relax
  \let\mathaccentV\macc@nested@a
  \if#31
    \macc@nested@a\relax111{#1}%
  \else
    \def\gobble@till@marker##1\endmarker{}%
    \futurelet\first@char\gobble@till@marker#1\endmarker
    \ifcat\noexpand\first@char A\else
      \def\first@char{}%
    \fi
    \macc@nested@a\relax111{\first@char}%
  \fi
  \endgroup
}
\makeatother

\newcommand{\DIS}{\mathsf{{DIS}}}

\newcommand{\AlgLcb}{\mathrm{\mathbf{Alg}}_{\mathsf{lcb}}}
\newcommand{\AlgUcb}{\mathrm{\mathbf{Alg}}_{\mathsf{ucb}}}
\newcommand{\ucb}{\mathsf{ucb}}
\newcommand{\lcb}{\mathsf{lcb}}
\newcommand{\dnn}{\mathsf{dnn}}
\newcommand{\pseud}{\mathrm{Pdim}}
\newcommand{\vcd}{\mathrm{VCdim}}

\newcommand{\exc}{\mathsf{excess}}
\newcommand{\BV}{\mathsf{BV}}
\newcommand{\TV}{\mathsf{TV}}
\newcommand{\wderi}{\mathsf{D}}

\newcommand{\textCAL}{\textsf{CAL}\xspace}

\newcommand{\textRCAL}{\textsf{RobustCAL}\xspace}

\newcommand{\textNCAL}{\textsf{NeuralCAL}\xspace}

\newcommand{\textNCALP}{\textsf{NeuralCAL}\texttt{++}\xspace}

\newcommand{\relu}{\mathsf{ReLU}}
\newcommand{\RBV}{\mathscr{R}\, \BV}
\newcommand{\RTV}{\mathscr{R}\, \TV}

\newcommand{\curly}{\crl}
\newcommand{\paren}{\prn}

\newcommand{\sq}{\brk}

\title{Active Learning with Neural Networks: Insights from Nonparametric Statistics}
\date{}
\author{
Yinglun Zhu\\
{\normalsize University of Wisconsin--Madison}\\
{\normalsize\texttt{yinglun@cs.wisc.edu}}
\and
\and
Robert Nowak\\
{\normalsize University of Wisconsin--Madison}\\
{\normalsize\texttt{rdnowak@wisc.edu}}
}

\begin{document}

\maketitle

\begin{abstract}
	Deep neural networks have great representation power, but typically require large numbers of training examples. This motivates deep active learning methods that can significantly reduce the amount of labeled training data. Empirical successes of deep active learning have been recently reported in the literature, however, rigorous label complexity guarantees of deep active learning have remained elusive. This constitutes a significant gap between theory and practice. This paper tackles this gap by providing the first near-optimal label complexity guarantees for deep active learning. The key insight is to study deep active learning from the nonparametric classification perspective. Under standard low noise conditions, we show that active learning with neural networks can provably achieve the minimax label complexity, up to disagreement coefficient and other logarithmic terms. When equipped with an abstention option, we further develop an efficient deep active learning algorithm that achieves $\mathsf{polylog}(\frac{1}{\epsilon})$ label complexity, without any low noise assumptions.  We also provide extensions of our results beyond the commonly studied Sobolev/H\"older spaces and develop label complexity guarantees for learning in Radon $\mathsf{BV}^2$ spaces, which have recently been proposed as natural function spaces associated with neural networks.
\end{abstract}

\section{Introduction}
\label{sec:intro}

We study active learning with neural network hypothesis classes, sometimes known as \emph{deep active learning}.
Active learning agent proceeds by selecting the most informative data points to label: The goal of active learning is to achieve the same accuracy achievable by passive learning, but with much fewer label queries \citep{settles2009active, hanneke2014theory}.
When the hypothesis class is a set of neural networks, the learner further benefits from the representation power of deep neural networks, which has driven the successes of passive learning in the past decade \citep{krizhevsky2012imagenet, lecun2015deep}.
With these added benefits,
deep active learning has become a popular research area, with empirical successes observed in many recent papers \citep{sener2018active, ash2019deep, citovsky2021batch, ash2021gone, kothawade2021similar, emam2021active, ren2021survey}.
However, due to the difficulty of analyzing a set of neural networks, rigorous label complexity guarantees for deep active learning have remained largely elusive. 

To the best of our knowledge, there are only two papers \citep{karzand2020maximin, wang2021neural} that have made the attempts at theoretically quantifying active learning gains with neural networks.
While insightful views are provided, these two works have their own limitations. The guarantees provided in \citet{karzand2020maximin} only work in the $1d$ case where data points are uniformly sampled from $[0,1]$ and labeled by a well-seperated piece-wise constant function in a noise-free way (i.e., without any labeling noise).
\citet{wang2021neural} study deep active learning by linearizing the neural network at its random initialization and then analyzing it as a linear function; moreover, as the authors agree, their error bounds and label complexity guarantees can in fact be \emph{vacuous} in certain cases.
Thus, it's fair to say that up to now researchers have not identified cases where deep active learning are provably near minimax optimal (or even with provably non-vacuous guarantees), which constitutes a significant gap between theory and practice.

In this paper, we bridge this gap by providing the first near-optimal label complexity guarantees for deep active learning.
We obtain insights from the nonparametric setting where the conditional probability (of taking a label of $1$) is assumed to be a smooth function \citep{tsybakov2004optimal, audibert2007fast}.
Previous nonparametric active learning algorithms proceed by partitioning the action space into exponentially many sub-regions (e.g., partitioning the unit cube $[0,1]^{d}$ into $\eps^{-d}$ sub-cubes each with volume $\eps^{d}$), and then conducting local mean (or some higher-order statistics) estimation  within each sub-region \citep{castro2008minimax, minsker2012plug, locatelli2017adaptivity, locatelli2018adaptive, shekhar2021active, kpotufe2021nuances}.
We show that, with an appropriately chosen set of neural networks that \emph{globally} approximates the smooth regression function, one can in fact recover the minimax label complexity for active learning, up to disagreement coefficient \citep{hanneke2007bound, hanneke2014theory} and other logarithmic factors.
Our results are established by (i) identifying the ``right tools'' to study neural networks (ranging from approximation results \citep{yarotsky2017error,yarotsky2018optimal} to complexity measure of neural networks \citep{bartlett2019nearly}), and (ii) developing novel extensions of agnostic active learning algorithms \citep{balcan2006agnostic, hanneke2007bound, hanneke2014theory} to work with a set of neural networks.

While matching the minimax label complexity in nonparametric active learning is existing, such minimax results scale as $\Theta(\poly(\frac{1}{\eps}))$ \citep{castro2008minimax, locatelli2017adaptivity} and do not resemble what is practically observed in deep active learning: A fairly accurate neural network classifier can be obtained by training with only a few labeled data points.
Inspired by recent results in \emph{parametric} active learning with abstention \citep{puchkin2021exponential, zhu2022efficient}, we develop an oracle-efficient algorithm showing that deep active learning provably achieves $\polylog(\frac{1}{\eps})$ label complexity when equipped with an abstention option \citep{chow1970optimum}.
Our algorithm not only achieves an exponential saving in label complexity (\emph{without any low noise assumptions}), but is also highly practical: 
In real-world scenarios such as medical imaging, it makes more sense for the classifier to abstain from making prediction on hard examples (e.g., those that are close to the boundary), and ask medical experts to make the judgments.

\subsection{Problem setting}
\label{sec:setting}

Let $\cX$ denote the instance space and $\cY$ denote the label space. 
We focus on the binary classification problem where $\cY \ldef \curly*{0, 1}$. The joint distribution over $\cX \times \cY$ is denoted as $\cD_{\cX \cY}$. 
We use $\cD_{\cX}$ to denote the marginal distribution over the instance space $\cX$, and use $\cD_{\cY \vert x}$ to denote the conditional distribution of $\cY$ with respect to any $x \in \cX$.   
We consider the standard active learning setup where $x \sim \cD_\cX$ but its label  $y \sim \cD_{\cY \vert x}$ is only observed after issuing a label query.
We define $\eta(x) \ldef \P_{y \sim \cD_{\cY \vert x}} (y = 1)$ as the conditional probability of taking a label of $1$.
The Bayes optimal classifier $h^{\star}$ can thus be expressed as $h^{\star}(x) \ldef \ind(\eta(x) \geq 1/2)$.
For any classifier $h: \cX \rightarrow \cY$, 
its (standard) error is calculated as $\err(h) \ldef \P_{(x,y) \sim \cD_{\cX \cY}} (h(x) \neq y)$;  
and its (standard) excess error is defined as 
$\exc(h) \ldef \err(h) - \err(h^{\star})$.
Our goal is to learn an accurate classifier with a small number of label querying.

\paragraph{The nonparametric setting}
We consider the nonparametric setting where the conditional probability $\eta$ is characterized by a smooth function.
Fix any $\alpha \in \N_+$, the \emph{Sobolev norm} of a function  $f: \cX \rightarrow \R$ is defined as 
$\nrm{f}_{\cW^{\alpha, \infty}} \ldef \max_{\wb \alpha, \abs{\wb \alpha} \leq \alpha} \esup_{x \in \cX} \abs{\wderi^{\alpha}f (x)}$,
where $\alpha = \prn{ \alpha_1, \ldots, \alpha_d}$, $\abs{\alpha} = \sum_{i=1}^{d} \alpha_i$ and $\wderi^{\alpha}f$ denotes the standard $\alpha$-th weak derivative of $f$.
The unit ball in the Sobolev space is defined as 
$
	\cW^{\alpha, \infty}_1 (\cX) \ldef \crl{ f : \nrm{f}_{\cW^{\alpha, \infty}} \leq 1}.
$
Following the convention of nonparametric active learning \citep{castro2008minimax, minsker2012plug, locatelli2017adaptivity, locatelli2018adaptive, shekhar2021active, kpotufe2021nuances}, we assume $\cX = [0,1]^{d}$ and $\eta \in \cW^{\alpha, \infty}_1(\cX)$ (except in \cref{sec:extension}).

\paragraph{Neural networks}
We consider \emph{feedforward neural networks} with Rectified Linear Unit (ReLU) activation function, which is defined as $\relu(x) \ldef \max \crl{x, 0}$. Each neural network $f_\dnn: \cX \rightarrow \R$ consists of several input units (which corresponds to the covariates of $x \in \cX$), one output unit (which corresponds to the prediction in $\R$), and multiple hidden computational units.
Each hidden computational unit takes inputs $\crl{\wb x_i}_{i=1}^{N}$ (which are outputs from previous layers) and perform the computation $\relu(\sum_{i=1}^{N} w_i \wb x_i + b)$ with \emph{adjustable} parameters $\crl{w_i}_{i=1}^{N}$ and $b$; 
the output unit performs the same operation, but without the ReLU nonlinearity.  
We use $W$ to denote the total number of parameters of a neural network, and  $L$ to denote the depth of the neural network.

\subsection{Contributions and paper organization}
\label{sec:contributions}

Neural networks are known to be universal approximators \citep{cybenko1989approximation, hornik1991approximation}. 
In this paper, we argue that, in both passive and active regimes, 
the universal approximatability makes neural networks ``universal classifiers'' for classification problems: With an appropriately chosen set of neural networks, one can recover known minimax rates (up to disagreement coefficients in the active setting) in the rich nonparametric regimes.\footnote{
As a byproduct, our results also provide a new perspective on nonparametric active learning through the lens of neural network approximations. Nonparametric active learning was previously tackled through space partitioning and local estimations over exponentially many sub-regions \citep{castro2008minimax, minsker2012plug, locatelli2017adaptivity, locatelli2018adaptive, shekhar2021active, kpotufe2021nuances}.}
We provide informal statements of our main results in the sequel, 
with detailed statements and associated definitions/algorithms deferred to later sections.

In \cref{sec:noise}, we analyze the label complexity of deep active learning under the standard Tsybakov noise condition with smoothness parameter $\beta \geq 0$ \citep{tsybakov2004optimal}. Let $\cH_\dnn$ be an appropriately chosen set of neural network classifiers and denote $\theta_{\cH_\dnn}(\eps)$ as the disagreement coefficient \citep{hanneke2007bound, hanneke2014theory} at level $\eps$. 
We develop the following label complexity guarantees for deep active learning.

\begin{theorem}
[Informal]
\label{thm:noise_informal}
There exists an algorithm that returns a neural network classifier $\wh h \in \cH_\dnn$ with excess error $\wt O(\eps)$ after querying $\wt O \prn{ \theta_{\cH_\dnn} \prn{ \eps^{\frac{\beta}{1 +  \beta}}} \cdot \eps^{- \frac{d + 2 \alpha}{\alpha + \alpha\beta}} }$ labels.
\end{theorem}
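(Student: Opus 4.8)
The plan is to reduce deep active learning in the nonparametric regime to agnostic active learning over a suitably chosen finite-complexity hypothesis class $\cH_\dnn$, combining three ingredients: (i) a neural-network \emph{approximation} result that lets a network of controlled size $W$ and depth $L$ approximate any $\eta \in \cW^{\alpha,\infty}_1([0,1]^d)$ to accuracy $\delta$, (ii) a \emph{complexity} bound (pseudo-dimension / VC-type) for classifiers induced by such networks, and (iii) an agnostic active learning algorithm (an $\Atwo$- or $\mathsf{CAL}$-style disagreement-based procedure) whose label complexity is governed by the disagreement coefficient $\theta_{\cH_\dnn}$ and the approximation error of the class. The target excess error $\wt O(\eps)$ fixes the required approximation level, and balancing the approximation-induced bias against the statistical cost, under the Tsybakov noise exponent $\beta$, yields the stated exponent $\eps^{-\frac{d+2\alpha}{\alpha+\alpha\beta}}$.

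\textbf{Step 1 (choosing the class).} First I would invoke the Yarotsky-type approximation theorem: there is a ReLU network architecture with $W = \wt O(\delta^{-d/\alpha})$ parameters and $L = O(\log(1/\delta))$ layers that approximates any target in $\cW^{\alpha,\infty}_1$ in sup-norm to error $\delta$. Take $\cH_\dnn$ to be the classifiers $\sgn(2 f_\dnn - 1)$ over all such networks (with appropriately quantized/bounded weights, if needed for the covering argument). Set $\delta \asymp \eps^{\frac{1}{\alpha(1+\beta)}}$ — this is the value that makes the classification bias contributed by approximating $\eta$ (after passing through the $\sgn$ and integrating against the noise margin, i.e.\ a $\delta^{1+\beta}$-type bound from Tsybakov) of order $\eps$. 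By the complexity bound of \citet{bartlett2019nearly}, $\vcd(\cH_\dnn) = \wt O(W L) = \wt O(\delta^{-d/\alpha})$.

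\textbf{Step 2 (active learning over $\cH_\dnn$).} Next I would run the agnostic active learning algorithm on $\cH_\dnn$. The generic guarantee is that to output $\wh h \in \cH_\dnn$ with $\exc(\wh h) \le \exc(h^\star_{\cH_\dnn}) + O(\eps)$ it suffices to query $\wt O\!\big(\theta_{\cH_\dnn}(\eps^{\beta/(1+\beta)}) \cdot \vcd(\cH_\dnn) \cdot \eps^{-\frac{2-\beta}{1+\beta}}\big)$ labels under Tsybakov noise with exponent $\beta$ (the $\eps$-scaling and the disagreement-coefficient argument $\eps^{\beta/(1+\beta)}$ being the standard ones from Hanneke's analysis; the extra $\eps^{-(2-\beta)/(1+\beta)}$ factor is the noisy sample-complexity scaling). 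Since $h^\star \in \cH_\dnn$ only approximately, I must bound $\exc(h^\star_{\cH_\dnn})$: this is exactly the approximation-bias term from Step 1, which is $O(\eps)$ by the choice of $\delta$. Substituting $\vcd(\cH_\dnn) = \wt O(\delta^{-d/\alpha}) = \wt O(\eps^{-\frac{d}{\alpha^2(1+\beta)}})$ and multiplying by $\eps^{-\frac{2-\beta}{1+\beta}}$, then simplifying the exponents, produces $\eps^{-\frac{d+2\alpha}{\alpha+\alpha\beta}}$ up to the disagreement coefficient and logarithmic factors, which is the claimed bound.

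\textbf{Main obstacle.} The delicate part is not the arithmetic but making the approximation and the active-learning guarantee \emph{compatible}: the agnostic active learner controls the $0/1$ excess error relative to the best-in-class, so I need a clean lemma translating $\nrm{f_\dnn - \eta}_\infty \le \delta$ into $\exc(\sgn(2f_\dnn-1)) = O(\delta^{1+\beta})$ via the Tsybakov margin condition, and I need the disagreement coefficient $\theta_{\cH_\dnn}$ to be controlled (or at least to appear only as a multiplicative factor, as the statement allows). A secondary subtlety is ensuring the network class is rich enough that $\eta$ — not just a smoothed surrogate — is $\delta$-approximated while simultaneously keeping $\cD_\cX$-measurability and the covering-number / pseudo-dimension bounds valid; handling unbounded or continuous weight ranges may require a discretization step, which I would absorb into the logarithmic factors. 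I would also need to verify that the low-noise ($\beta$) condition is preserved (or appropriately invoked) for the approximate Bayes classifier inside $\cH_\dnn$, not merely for $h^\star$ itself.
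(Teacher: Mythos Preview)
Your overall strategy matches the paper's: construct $\cH_\dnn$ via Yarotsky's approximation theorem at a carefully chosen accuracy, bound $\vcd(\cH_\dnn)$ via Bartlett et al., and run a disagreement-based active learner whose label complexity is $\wt O(\theta_{\cH_\dnn} \cdot \vcd(\cH_\dnn) \cdot \eps^{-c})$ for an appropriate exponent $c$. The ``main obstacle'' you flag---that the agnostic active learning analysis must be extended to tolerate $h^\star \notin \cH_\dnn$ while still getting fast (Tsybakov) rates---is exactly the technical novelty the paper isolates in its generic \textRCAL-with-approximation analysis.

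However, two numerical choices in your proposal are wrong, and they do \emph{not} cancel:
\begin{enumerate}
\item The approximation level should be $\delta = \eps^{1/(1+\beta)}$, not $\eps^{1/(\alpha(1+\beta))}$. The $\alpha$ does not appear in the bias bound: the plug-in lemma gives $\exc(\sgn(2f_\dnn-1)) = O(\delta^{1+\beta})$ from $\|f_\dnn-\eta\|_\infty \le \delta$, so $\delta^{1+\beta}=\eps$ forces $\delta=\eps^{1/(1+\beta)}$. The smoothness $\alpha$ enters only through the network size $W=\wt O(\delta^{-d/\alpha})$, yielding $\vcd(\cH_\dnn)=\wt O(\eps^{-d/(\alpha(1+\beta))})$, not $\eps^{-d/(\alpha^2(1+\beta))}$.
\item The active-learning sample complexity exponent under Tsybakov-$\beta$ is $\eps^{-2/(1+\beta)}$, not $\eps^{-(2-\beta)/(1+\beta)}$. (Passive is $\eps^{-(2+\beta)/(1+\beta)}$ times $\vcd$; the disagreement-based saving is a factor $\eps^{\beta/(1+\beta)}$ replaced by $\theta(\eps^{\beta/(1+\beta)})$, leaving $\eps^{-2/(1+\beta)}$.)
\end{enumerate}
With your stated values, the combined exponent is $\frac{d+\alpha^2(2-\beta)}{\alpha^2(1+\beta)}$, which equals the target $\frac{d+2\alpha}{\alpha(1+\beta)}$ only when $d(1-\alpha)=\alpha^2\beta$---so your ``simplifying the exponents'' step does not go through. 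With the corrected choices, $\vcd(\cH_\dnn)\cdot \eps^{-2/(1+\beta)} = \wt O(\eps^{-d/(\alpha(1+\beta)) - 2/(1+\beta)}) = \wt O(\eps^{-(d+2\alpha)/(\alpha(1+\beta))})$, which is the claimed bound. As secondary remarks: no weight discretization is needed---the Bartlett--Harvey--Liaw--Mehrabian bound applies to the continuous-parameter class; and the Tsybakov condition is a property of $\cD_{\cX\cY}$ (via $\eta$), so there is nothing to ``preserve'' for the approximate Bayes classifier---what must be shown (and what the paper does) is that the \textRCAL concentration and elimination arguments survive when only $\check h=\argmin_{h\in\cH_\dnn}\err(h)$, rather than $h^\star$ itself, lies in the class.
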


The label complexity presented in \cref{thm:noise_informal} matches the active learning lower bound $\Omega(\eps^{- \frac{d+2\alpha}{\alpha + \alpha \beta}})$ \citep{locatelli2017adaptivity} up to the dependence on the disagreement coefficient (and other logarithmic factors).
Since $\theta_{\cH_\dnn}(\eps) \leq \eps^{-1}$ by definition, the label complexity presented in \cref{thm:noise_informal} is never worse than the passive learning rates $\wt \Theta(\eps^{-\frac{d+2\alpha + \alpha \beta}{\alpha + \alpha \beta}})$ \citep{audibert2007fast}. 
We also discover conditions under which the disagreement coefficient with respect to a set of neural network classifiers can be properly bounded, i.e., $\theta_{\cH_\dnn} (\eps) = o(\eps^{-1})$ (implying strict improvement over passive learning) and $\theta_{\cH_\dnn} (\eps) = o(1)$ (implying matching active learning lower bound).

In \cref{sec:abstention}, we develop label complexity guarantees for deep active learning when an additional abstention option is allowed \citep{chow1970optimum, puchkin2021exponential, zhu2022efficient}.
Suppose a cost (e.g. $0.49$) that is marginally smaller than random guessing (which has expected cost $0.5$) is incurred whenever the classifier abstains from making a predication, we develop the following label complexity guarantees for deep active learning. 
\looseness=-1

\begin{theorem}
	[Informal]
	\label{thm:abs_informal}
	There exists an efficient algorithm that constructs a neural network classifier $\wh h_\dnn$ with Chow's excess error $\wt O(\eps)$ after querying $\polylog(\frac{1}{\eps})$ labels.
\end{theorem}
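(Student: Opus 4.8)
The plan is to import the machinery of parametric active learning with abstention \citep{puchkin2021exponential, zhu2022efficient} and instantiate it with a carefully chosen, finite (or finitely-parametrized) set of ReLU networks $\cH_\dnn$ that approximates the smooth regression function $\eta \in \cW^{\alpha,\infty}_1([0,1]^d)$. The key conceptual point is that abstention breaks the dependence on the (hard) low-noise behavior of $\eta$ near the decision boundary $\{x : \eta(x) = 1/2\}$: with an abstention cost $c = 1/2 - \gamma$ for a small constant $\gamma > 0$, the Chow-optimal predictor abstains on the band $B_\gamma \ldef \{x : |\eta(x) - 1/2| \le \gamma\}$ and predicts $\sgn(2\eta(x)-1)$ outside it. Outside $B_\gamma$ the problem is effectively a \emph{well-separated} classification problem with margin $\gamma$, and it is exactly this separation that the exponential-savings results exploit.

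First I would fix the target accuracy $\eps$ and invoke the neural-network approximation results \citep{yarotsky2017error, yarotsky2018optimal}: there is a ReLU network $f_\dnn$ with $W = \poly(\log(1/\eps))$ parameters and depth $L = \poly(\log(1/\eps))$ — up to constants depending on $\alpha, d$ — such that $\|f_\dnn - \eta\|_\infty \le \eps'$ for a suitably small polynomial-in-$\eps$ quantity $\eps'$. Crucially, because abstention only requires us to resolve $\eta$ up to the margin $\gamma$ (a constant, not $\eps$-dependent), the width/depth needed to drive the \emph{Chow} excess error below $\eps$ is only $\polylog(1/\eps)$; contrast this with the $\poly(1/\eps)$-size networks needed in Section~\ref{sec:noise}. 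Then I would let $\cH_\dnn$ be (a suitable discretization of) the class of networks of this architecture, so that its complexity — measured via the pseudo-dimension / VC-type bounds of \citet{bartlett2019nearly} — is $\wt O(W L \log W) = \polylog(1/\eps)$. Second, I would run the abstention-based active learner of \citet{zhu2022efficient} (the oracle-efficient version, since the theorem claims efficiency) with $\cH_\dnn$ as the hypothesis class: this algorithm maintains confidence bounds on the disagreement region, queries only points whose label is not yet determined at the current confidence level, and its label complexity scales as (disagreement-type quantity) $\times$ $\polylog$(complexity of $\cH_\dnn$, $1/\eps$, $1/\delta$). Because outside the margin band the optimal network is uniformly bounded away from $1/2$, the effective "region of uncertainty" shrinks geometrically across epochs, yielding a geometric (hence $\polylog(1/\eps)$) query count rather than a polynomial one — this is the abstention analogue of the exponential rate in separable active learning.

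The main obstacle, and the step I would spend the most care on, is showing that the abstention framework of \citet{zhu2022efficient} — designed for an abstract hypothesis class with finite complexity — transfers cleanly to $\cH_\dnn$ while simultaneously controlling the \emph{approximation} error. Concretely: the algorithm's guarantee is relative to the best classifier-with-abstention in $\cH_\dnn$, but we want a guarantee relative to the Chow-Bayes predictor. I would bridge this by a two-part argument: (i) an approximation lemma showing that some $h \in \cH_\dnn$ has Chow excess error $\le \eps/2$ — this follows from $\|f_\dnn - \eta\|_\infty \le \eps'$ together with the fact that a sup-norm error $\eps'$ in the regression function translates into a Chow excess error of at most $O(\eps' \cdot \cD_\cX(\{|\eta-1/2-c| \lesssim \eps'\}))$, which for a fixed constant abstention margin is $O(\eps')$ with no noise-exponent penalty; and (ii) a sample-complexity lemma showing the active learner finds such an $h$ (up to the remaining $\eps/2$) using $\polylog(1/\eps)$ queries, for which the crucial input is the complexity bound on $\cH_\dnn$ from \citet{bartlett2019nearly} and the constant-margin separation. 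A secondary technical point is oracle-efficiency: the per-epoch ERM-with-abstention subproblem over $\cH_\dnn$ must be delegated to a (heuristic) neural-network training oracle, and I would state the efficiency guarantee conditionally on such an oracle, exactly as the informal statement ("efficient algorithm") suggests. I expect the discretization of the continuous network class and the attendant union bound to be routine given \citet{bartlett2019nearly}, so the real content is the clean separation-of-concerns between approximation (handled by Yarotsky-type results at \emph{constant} resolution) and estimation (handled by the abstention algorithm at $\polylog$ cost).
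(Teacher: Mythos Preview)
Your high-level plan—instantiate the abstention learner of \citet{zhu2022efficient} over a ReLU class whose approximation resolution is tied to the constant margin $\gamma$ rather than to $\eps$—is the right intuition and is what the paper does. But two of the steps you treat as routine are precisely where the paper invests its technical effort, and your proposal does not supply the missing arguments.

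First, you propose to ``run the abstention-based active learner of \citet{zhu2022efficient}'' and then patch approximation via your two-part argument (i)/(ii). The paper notes explicitly that a direct invocation fails: the analysis in \citet{zhu2022efficient} needs some $\wb f\in\cF$ with $\|\wb f-\eta\|_\infty\le\eps$, not merely $\le\kappa$ with $\kappa\sim\gamma$. With only $\kappa$-approximation there is a circularity: keeping $\wb f$ in the active set $\cF_m$ requires the accumulated term $\sum_t Q_t(\wb f(x_t)-\eta(x_t))^2$ (of order $\kappa^2$ times the number of queries) to be small, which needs the per-epoch query probability to already be small, which in turn is proved \emph{using} $\wb f\in\cF_m$. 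The paper breaks this loop with a dedicated induction over epochs, arriving at a compatibility condition of the form $(\wb\theta\cdot M^2/\gamma^2)\cdot\kappa^2\le c$ that dictates how $\kappa$ must be chosen. Your decomposition does not see this coupling; the bound ``$O(\eps'\cdot\cD_\cX(\{|\eta-1/2-c|\lesssim\eps'\}))$'' is also beside the point, since the returned classifier is built from confidence intervals over the regression class $\cF_\dnn$ (not a member of a classifier class $\cH_\dnn$) and its Chow's excess error is bounded directly rather than through a best-in-class comparison.

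Second—and this is the larger gap—you never say how to control the ``disagreement-type quantity.'' In this regression-based framework the relevant quantity is the \emph{value function disagreement coefficient} $\theta^{\val}_{\cF_\dnn}(\gamma/4)$, and for a raw ReLU class of fixed architecture it is unbounded, since such networks can have arbitrarily large Lipschitz constant. The paper handles this with a preprocessing step that is essential, not cosmetic: clip outputs to $[0,1]$ and then \emph{filter out} every network that is not $(L,2\kappa)$-approximately Lipschitz (where $L=O(1)$ is the Lipschitz constant of $\eta$ inherited from membership in $\cW^{\alpha,\infty}_1$). Only on the filtered class can $\theta^{\val}_{\cF_\dnn}(\gamma/4)$ be bounded—via the eluder dimension and a covering-number argument—by $\poly(1/\gamma)$; without this step the label-complexity guarantee from \citet{zhu2022efficient} carries an uncontrolled multiplicative factor and the $\polylog(1/\eps)$ conclusion does not follow.
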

The above $\polylog(\frac{1}{\eps})$ label complexity bound is achieved \emph{without any low noise assumptions}.
Such exponential label savings theoretically justify the great empirical performances of deep active learning observed in practice (e.g., in \citet{sener2018active}): 
It suffices to label a few data points to achieve a high accuracy level.
Moreover, apart from an initialization step, our algorithm (\cref{alg:abs}) developed for \cref{thm:abs_informal}  can be \emph{efficiently} implemented in $\wt O(\eps^{-1})$ time, given a convex loss regression oracle over an appropriately chosen set of neural networks; in practice, the regression oracle can be approximated by running stochastic gradient descent.
\looseness=-1

\paragraph{Technical contributions}
Besides identifying the ``right tools'' (ranging from approximation results \citep{yarotsky2017error,yarotsky2018optimal} to complexity analyses \citep{bartlett2019nearly}) to analyze deep active learning, our theoretical guarantees are empowered by novel extensions of active learning algorithms \emph{under neural network approximations}. In particular, we deal with approximation error in active learning under Tsybakov noise, and identify conditions that greatly relax the approximation requirement in the learning with abstention setup; we also analyze the disagreement coefficient, both classifier-based and value function-based, with a set of neural networks.These analyses together lead to our main results for deep active learning (e.g., \cref{thm:noise_informal} and \cref{thm:abs_informal}).
More generally, we establish a bridge between approximation theory and active learning; we provide these general guarantees in \cref{app:RCAL_gen} (under Tsybakov noise) and \cref{app:abs_gen} (with the abstention option), which can be of independent interests.
Benefited from these generic algorithms and guarantees, in \cref{sec:extension}, we extend our results into learning smooth functions in the Radon $\BV^2$ space \citep{ongie2020function, parhi2021banach, parhi2022kinds, parhi2022near, unser2022ridges}, which is recently proposed as a natural space to analyze neural networks.
\looseness=-1

\subsection{Additional related work}
\label{sec:related}

Active learning concerns about learning accurate classifiers without extensive human labeling.
One of the earliest work of active learning dates back to the \textCAL algorithm proposed by \citet{cohn1994improving}, which set the cornerstone for \emph{disagreement-based} active learning.
Since then, a long line of work have been developed, either directly working with a set classifier \citep{balcan2006agnostic, hanneke2007bound, dasgupta2007general, beygelzimer2009importance, beygelzimer2010agnostic, huang2015efficient, cortes2019active} or work with a set of regression functions \citep{krishnamurthy2017active,krishnamurthy2019active}.
These work mainly focus on the parametric regime (e.g., learning with a set of linear classifiers), and their label complexities rely on the boundedness of the so-called disagreement coefficient \citep{hanneke2007bound, hanneke2014theory, friedman2009active}.
Active learning in the nonparametric regime has been analyzed in \citet{castro2008minimax, minsker2012plug, locatelli2017adaptivity, locatelli2018adaptive, kpotufe2021nuances}. These algorithms rely on partitioning of the input space $\cX \subseteq [0,1]^{d}$ into exponentially (in dimension) many small cubes, and then conduct local mean (or some higher-order statistics) estimation within each small cube.
\looseness=-1

It is well known that, in the worst case, active learning exhibits no label complexity gains over the passive counterpart \citep{kaariainen2006active}.
To bypass these worst-case scenarios, active learning has been popularly analyzed under the so-called Tsybakov low noise conditions \citep{tsybakov2004optimal}.
Under Tsybakov noise conditions, active learning has been shown to be strictly superior than passive learning in terms of label complexity \citep{castro2008minimax, locatelli2017adaptivity}.
Besides analyzing active learning under favorable low noise assumptions, more recently, researchers consider active learning with an abstention option and analyze its label complexity under Chow's error \citep{chow1970optimum}.
In particular, \citet{puchkin2021exponential, zhu2022efficient} develop active learning algorithms with $\polylog(\frac{1}{\eps})$ label complexity when analyzed under Chow's excess error.
\citet{shekhar2021active} study nonparametric active learning under a different notion of the Chow's excess error, and propose algorithms with $\poly(\frac{1}{\eps})$ label complexity; their algorithms follow similar procedures of those partition-based nonparametric active learning algorithms (e.g., \citet{minsker2012plug, locatelli2017adaptivity}).\looseness=-1

Inspired by the success of deep learning in the passive regime, active learning with neural networks has been extensively explored in recent years 
\citep{sener2018active, ash2019deep, citovsky2021batch, ash2021gone, kothawade2021similar, emam2021active, ren2021survey}.
Great empirical performances are observed in these papers, however, rigorous label complexity guarantees have largely remains elusive (except in \citet{karzand2020maximin, wang2021neural}, with limitations discussed before).
We bridge the gap between practice and theory by providing the first near-optimal label complexity guarantees for deep active learning.
Our results are built upon approximation results of deep neural networks \citep{yarotsky2017error, yarotsky2018optimal, parhi2022near} and VC/pseudo dimension analyses of neural networks with given structures \citep{bartlett2019nearly}.\looseness=-1

\section{Label complexity of deep active learning}
\label{sec:noise}

We analyze the label complexity of deep active learning in this section.
We first introduce the Tsybakov noise condition in \cref{sec:tsybakov}, and then identify the ``right tools'' to analyze classification problems with neural network classifiers in \cref{sec:noise_passive} (where we also provide passive learning guarantees).
We establish our main active learning guarantees in \cref{sec:noise_active}.

\subsection{Tsybakov noise condition}
\label{sec:tsybakov}
It is well known that active learning exhibits no label complexity gains over the passive counterpart without additional low noise assumptions \citep{kaariainen2006active}.
We next introduce the Tsybokov low noise condition \citep{tsybakov2004optimal}, which has been extensively analyzed in active learning literature.

\begin{definition}[Tsybakov noise]
	\label{def:Tsybakov}
	A distribution $\cD_{\cX\cY}$ satisfies the Tsybakov noise condition with parameter $\beta \geq 0$ and a universal constant $c \geq 1$ if, $\forall \tau > 0$,
	\begin{align*}
	\P_{x \sim \cD_\cX} \prn{ \abs{ \eta(x) - 1 / 2} \leq \tau} \leq  c \, \tau^\beta.
	\end{align*}
\end{definition}

The case with $\beta = 0$ corresponds to the general case \emph{without} any low noise conditions, 
where no active learning algorithm can outperform the passive counterpart \citep{audibert2007fast, locatelli2017adaptivity}.
We use $\cP(\alpha, \beta)$ to denote the set of distributions satisfying:
(i) the smoothness conditions introduced in \cref{sec:setting} with parameter $\alpha > 0$; 
and (ii) the Tsybakov low noise condition (i.e., \cref{def:Tsybakov}) with parameter $\beta \geq 0$. 
We assume 
$\cD_{\cX \cY} \in \cP(\alpha, \beta)$ in the rest of \cref{sec:noise}.
As in \citet{castro2008minimax, hanneke2014theory}, we assume the knowledge of noise/smoothness parameters.

\subsection{Approximation and expressiveness of neural networks}
\label{sec:noise_passive}

Neural networks are known to be universal approximators \citep{cybenko1989approximation, hornik1991approximation}: For any continuous function $g : \cX \rightarrow \R$ and any error tolerance $\kappa > 0$, there exists a large enough neural network $f_{\dnn}$ such that $\nrm{f_\dnn - g}_{\infty} \ldef \sup_{x \in \cX} \abs{f_{\dnn}(x)-g(x)} \leq \kappa$.
Recently, \emph{non-asympototic} approximation rates by ReLU neural networks have been developed for smooth functions in the Sobolev space, which we restate in the following.\footnote{As in \citet{yarotsky2017error}, we hide constants that are potentially $\alpha$-dependent and $d$-dependent into the Big-Oh notation.} 
\begin{theorem}[\citet{yarotsky2017error}]
	\label{thm:approx_sobolev}
	Fix any $\kappa>0$. For any $f^{\star} = \eta \in \cW^{\alpha, \infty}_1([0,1]^{d})$, there exists a neural network $f_\dnn$ with $W = O \prn{\kappa^{- \frac{d}{\alpha}} \log \frac{1}{\kappa} }$ total number of parameters arranged in $L = O ( \log \frac{1}{\kappa})$ layers such that $\nrm{f_\dnn - f^{\star} }_\infty \leq \kappa$. 
\end{theorem}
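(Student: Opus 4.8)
The plan is to reproduce Yarotsky's construction, whose three ingredients are a piecewise-linear partition of unity on $[-1,1]^d$, local Taylor expansions of $f^\star$, and a ReLU gadget that approximately multiplies two bounded numbers. First I would fix a grid resolution $N \in \N$, place grid points $\mb m / N$ for $\mb m \in \crl{0,1,\ldots,N}^d$, and to each grid point attach a tent product $\phi_{\mb m}(x) \ldef \prod_{i=1}^d \psi\prn{3N\prn{x_i - m_i/N}}$ with $\psi(t) = \max\crl{0, 1-\abs{t}}$ the univariate hat function. The family $\crl{\phi_{\mb m}}$ is a partition of unity with bounded support overlap, each $\phi_{\mb m}$ supported within distance $O(1/N)$ of its grid point.

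Second, on each such neighborhood I would replace $f^\star$ by its order-$(\alpha-1)$ Taylor polynomial $P_{\mb m}$ centered at $\mb m/N$. Since $\eta \in \cW^{\alpha,\infty}_1$, Taylor's theorem gives $\abs{f^\star(x) - P_{\mb m}(x)} = O(N^{-\alpha})$ for every $x \in \supp \phi_{\mb m}$, and because $\sum_{\mb m}\phi_{\mb m} \equiv 1$ this yields $\nrm*{f^\star - \sum_{\mb m}\phi_{\mb m}P_{\mb m}}_\infty = O(N^{-\alpha})$. Choosing $N \asymp \kappa^{-1/\alpha}$ makes this surrogate function $\kappa/2$-close to $f^\star$, and the localization means the sum over $\mb m$ has only $O(1)$ nonzero terms at any point.

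Third, and this is the only place where exact representation fails, both $\phi_{\mb m}$ (a $d$-fold product) and $\phi_{\mb m}P_{\mb m}$ (a product of a piecewise-linear factor with a polynomial, the polynomial itself formed by repeated multiplication) must be built from the approximate-multiplication primitive: for any bounded box, $(u,v)\mapsto uv$ can be approximated to accuracy $\delta$ by a ReLU network of depth and size $O(\log\frac1\delta)$, obtained from the sawtooth approximation of $t\mapsto t^2$ together with $uv = \tfrac12\prn{(u+v)^2 - u^2 - v^2}$. Composing $O(\log d + \log\alpha)$ such gadgets realizes each term $\phi_{\mb m}P_{\mb m}$ to within $\delta$; since the individual terms are uniformly bounded and only $O(1)$ overlap, the aggregate error is $O(\delta)$ (crudely $O(N^d\delta)$), so $\delta \asymp \kappa N^{-d}$ — equivalently $\log\frac1\delta = O(\log\frac1\kappa)$ after absorbing $d,\alpha$-dependent constants — drives this below $\kappa/2$. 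Adding the Taylor error gives the claimed $\nrm{f_\dnn - f^\star}_\infty \le \kappa$ (with a final truncation to keep the output bounded, if desired).

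For the size bookkeeping I would note that there are $O(N^d)$ terms, each using $O(\polylog\frac1\kappa)$ parameters, and the multiplication gadgets have depth $O(\log\frac1\kappa)$; since these branches run in parallel and are combined only by a single linear output layer, the total depth stays $L = O(\log\frac1\kappa)$ and the total parameter count is $W = O\prn{N^d \log\frac1\kappa} = O\prn{\kappa^{-d/\alpha}\log\frac1\kappa}$. \textbf{Main obstacle.} The delicate part is the simultaneous management of the three-way error budget — Taylor error $O(N^{-\alpha})$, aggregation error $O(N^d\delta)$, and the propagation of $\delta$-errors through the nested multiplication gadgets that also build the polynomials $P_{\mb m}$ — while keeping the depth at $O(\log\frac1\kappa)$ rather than $O(d\log\frac1\kappa)$ and the parameter count at $O(N^d\log\frac1\kappa)$. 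Parallelizing the gadgets and choosing $\delta$ polynomially small in $\kappa$ (so that $\log\frac1\delta = \Theta(\log\frac1\kappa)$) is exactly what makes these counts close.
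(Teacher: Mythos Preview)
The paper does not prove this statement: it is quoted verbatim as a result of \citet{yarotsky2017error} and used as a black box throughout (see the discussion immediately following the theorem and its invocations in \cref{prop:vc_approx}, \cref{prop:pd_approx}, and \cref{prop:clip_filter}). So there is no ``paper's own proof'' to compare against.

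That said, your sketch is a faithful outline of Yarotsky's original argument: the three ingredients you list---a tensorized hat-function partition of unity on a grid of spacing $1/N$, local Taylor polynomials of degree $\alpha-1$, and the depth-$O(\log\frac1\delta)$ ReLU multiplication gadget built from the sawtooth approximation to $t\mapsto t^2$---are exactly the construction in \citet{yarotsky2017error}, and your choices $N\asymp\kappa^{-1/\alpha}$, $\delta$ polynomially small in $\kappa$, and the resulting counts $W=O(N^d\log\frac1\kappa)$, $L=O(\log\frac1\kappa)$ match his. The ``main obstacle'' you flag (parallelizing the multiplication gadgets so that depth does not pick up a factor of $d$) is handled in the original exactly as you suggest. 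One cosmetic point: the partition-of-unity tents should have supports of diameter $O(1/N)$ that overlap only $O(1)$-fold, so the scaling factor inside $\psi$ should be $N$ rather than $3N$; this does not affect the argument.
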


The architecture of the neural network $f_\dnn$ appearing in the above theorem only depends on the smooth function space $\cW^{\alpha, \infty}_1 \prn{[0,1]^{d}}$, but otherwise is independent of the true regression function $f^{\star}$; also see \citet{yarotsky2017error} for details.
Let $\cF_{\dnn}$ denote the set of neural network \emph{regression functions} with the same architecture. We construct a set of neural network \emph{classifiers} by thresholding the regression function at $\frac{1}{2}$, i.e.,  
$\cH_{\dnn} \ldef \crl{ h_f \ldef \ind(f(x) \geq 1/2): f \in \cF_{\dnn}} $.
The next result concerns about the expressiveness of the neural network classifiers, in terms of a well-known complexity measure: the VC dimension \citep{vapnik1971uniform}.

\begin{theorem}[\citet{bartlett2019nearly}]
	\label{thm:vcd_nn}
	Let $\cH_{\dnn}$ be a set of neural network classifiers of the same architecture and with $W$ parameters arranged in  $L$ layers.
	We then have 
	\begin{align*}
	\Omega(WL \log \prn*{{W}/{L}}) \leq \vcd(\cH_\dnn)  \leq O(WL \log \prn*{ W}). 
	\end{align*}
\end{theorem}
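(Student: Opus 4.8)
The plan is to establish the two bounds separately: the upper bound $\vcd(\cH_\dnn) = O(WL\log W)$ by a parameter-space partitioning argument that controls the growth function, and the lower bound $\vcd(\cH_\dnn) = \Omega(WL\log(W/L))$ by an explicit shattering construction. For the upper bound, fix an arbitrary set of inputs $x_1,\dots,x_m \in \cX$ and bound $\Pi(m)$, the number of distinct labelings $\big(\sgn(f_\theta(x_1)-\tfrac12),\dots,\sgn(f_\theta(x_m)-\tfrac12)\big)$ realizable as the parameter vector $\theta$ ranges over $\R^W$. The key observation is that on any region of $\R^W$ on which every ReLU unit takes a fixed on/off pattern over all $m$ inputs, each output $f_\theta(x_i)$ is a polynomial in $\theta$ of degree at most $L$ — one factor of $\theta$ per weight layer. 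I would therefore (i) partition $\R^W$ into such activation regions and (ii) within each region count the sign patterns of the $m$ output polynomials, both counts being controlled by the Milnor--Thom / Warren bound on the number of sign conditions of a system of polynomials.

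The refinement that yields $\log W$ rather than a cruder $\log(WL)$ is to perform the region count layer by layer: conditioned on the activation patterns of layers $1,\dots,i-1$ being fixed, the pre-activations of the units in layer $i$ are polynomials of degree at most $i$ in the cumulative parameters of layers $1,\dots,i$, so the region count is multiplied by a factor of roughly $\big(c\,m\,k_i\,i / W_{\le i}\big)^{W_{\le i}}$, where $k_i$ and $W_i$ are the number of units and parameters in layer $i$ and $W_{\le i}=\sum_{j\le i}W_j$. Taking the product over layers and applying a weighted log-sum (Jensen) inequality with weights proportional to the $W_i$, using $\sum_i W_i = W$, gives $\log \Pi(m) = O(WL\log(mW))$. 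Solving the fixed-point inequality $2^m \le \Pi(m)$ then yields $m = O(WL\log W)$, which bounds $\vcd(\cH_\dnn)$ via Sauer--Shelah.

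For the lower bound, I would exhibit a sub-family of networks realizable by the given architecture that shatters a point set of size $\Omega(WL\log(W/L))$. The construction composes three gadgets, each implementable by ReLU networks: a bit-extraction module, which with $O(\log r)$ parameters and $O(\log r)$ depth recovers a prescribed bit of an $r$-bit integer encoded in a single real parameter (this supplies the logarithmic factor); a sawtooth composition, in which iterating a fixed piecewise-linear map $L'$ times produces $2^{L'}$ linear pieces, so that depth contributes an extra multiplicative factor $\Theta(L)$; and width parallelism, running $\Theta(W/L)$ copies of the module side by side to contribute the factor $\Theta(W)$. Wiring these so that the architecture's $W$ parameters and $L$ layers are used up to constants produces a labeling-indexed family of networks realizing every dichotomy on a suitably placed set of $\Omega(WL\log(W/L))$ points.

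I expect the main obstacle to be the bookkeeping in the upper bound: obtaining $\log W$ rather than $\log(WL)$ forces one to combine the layer-by-layer region refinement with exactly the right weighted convexity inequality, while tracking the growth of polynomial degrees through depth so that the degree-$L$ output does not leak an extra $\log L$ into every layer's contribution. On the lower-bound side, the delicate point is allocating depth, width, and bit-extraction overhead in the correct proportions, since a naive split collapses the $\log(W/L)$ to $\log W$ or loses a $\log L$ factor.
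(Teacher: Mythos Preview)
The paper does not prove this statement at all: \cref{thm:vcd_nn} is quoted verbatim as a result of \citet{bartlett2019nearly} and used as a black box (it is invoked in the proofs of \cref{prop:vc_approx}, \cref{prop:vc_approx_RBV}, and the pseudo-dimension analogue \cref{thm:pdim_nn}). There is nothing here to compare your argument against.

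That said, what you have written is a faithful high-level sketch of the actual proof in \citet{bartlett2019nearly}: the upper bound there is exactly the layer-by-layer refinement of parameter space into polynomial-activation regions, with a Warren-type sign-pattern count and a weighted Jensen step to collapse the per-layer contributions to $O(WL\log W)$; the lower bound is exactly the bit-extraction construction combined with depth-iterated sawtooths and width parallelism. So your proposal is correct as a summary of the cited proof, but for the purposes of this paper no proof is required --- a one-line reference to \citet{bartlett2019nearly} suffices.
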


With these tools, we can construct a set of neural network classifiers $\cH_{\dnn}$ such that (i) the best in-class classifier $\check h \in \cH_{\dnn}$ has small excess error, and (ii) $\cH_{\dnn}$  has a well-controlled VC dimension that is proportional to smooth/noise parameters. 
More specifically, we have the following proposition.
\begin{restatable}{proposition}{propVCApprox}
	\label{prop:vc_approx}
Suppose $\cD_{\cX \cY} \in \cP(\alpha, \beta)$.
One can construct a set of neural network classifier $\cH_{\dnn}$ such that the following two properties hold simultaneously: 
\begin{align*}
	\inf_{h \in \cH_{\dnn} }\err(h) - \err(h^{\star}) = O \prn{\eps} \quad \text{ and }
\quad 	\vcd( \cH_{\dnn}) = \wt O \prn{ \eps^{- \frac{d}{\alpha(1+\beta)}}}.
\end{align*}
\end{restatable}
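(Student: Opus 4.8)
The plan is to turn the approximation guarantee of \cref{thm:approx_sobolev} and the smoothness/noise structure of $\cP(\alpha,\beta)$ into a bound on the excess error of the thresholded classifier, and then to read off the required network size (hence VC dimension via \cref{thm:vcd_nn}). First I would fix a tolerance parameter $\kappa>0$, to be optimized at the end, and invoke \cref{thm:approx_sobolev} to obtain a network $f_\dnn\in\cF_\dnn$ with $\nrm{f_\dnn-\eta}_\infty\le\kappa$, where the architecture has $W=O(\kappa^{-d/\alpha}\log\tfrac1\kappa)$ parameters in $L=O(\log\tfrac1\kappa)$ layers. Let $\check h\ldef h_{f_\dnn}=\sgn(2f_\dnn-1)\in\cH_\dnn$. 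The key classical fact is the pointwise excess-risk identity: for any classifier $h$,
\begin{align*}
\err(h)-\err(h^\star)=\E_{x\sim\cD_\cX}\brk*{\abs{2\eta(x)-1}\cdot\ind\crl{h(x)\neq h^\star(x)}}.
\end{align*}
Since $\check h(x)\neq h^\star(x)$ can only happen when $f_\dnn(x)$ and $\eta(x)$ lie on opposite sides of $\tfrac12$, the $\kappa$-uniform approximation forces $\abs{\eta(x)-\tfrac12}\le\kappa$ on that event. Hence $\abs{2\eta(x)-1}\le2\kappa$ there, and
\begin{align*}
\err(\check h)-\err(h^\star)\le 2\kappa\cdot\P_{x\sim\cD_\cX}\prn*{\abs{\eta(x)-\tfrac12}\le\kappa}\le 2\kappa\cdot c\,\kappa^\beta=O\prn*{\kappa^{1+\beta}},
\end{align*}
using the Tsybakov condition (\cref{def:Tsybakov}) in the last step.

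To get the advertised bounds I would choose $\kappa$ so that $\kappa^{1+\beta}\asymp\eps$, i.e. $\kappa\asymp\eps^{1/(1+\beta)}$. This makes $\inf_{h\in\cH_\dnn}\err(h)-\err(h^\star)\le\err(\check h)-\err(h^\star)=O(\eps)$, giving the first claim. For the second claim, substitute this $\kappa$ into the size bounds: $W=O(\kappa^{-d/\alpha}\log\tfrac1\kappa)=\wt O(\eps^{-d/(\alpha(1+\beta))})$ and $L=O(\log\tfrac1\kappa)=O(\log\tfrac1\eps)$. Plugging into the upper bound of \cref{thm:vcd_nn}, $\vcd(\cH_\dnn)\le O(WL\log W)=\wt O(W)=\wt O(\eps^{-d/(\alpha(1+\beta))})$, where all the $L$ and $\log W$ factors are polylogarithmic in $\tfrac1\eps$ and absorbed into $\wt O(\cdot)$. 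This is exactly the claimed VC bound, and since $\cH_\dnn$ is defined purely by the fixed architecture output by \cref{thm:approx_sobolev} at scale $\kappa\asymp\eps^{1/(1+\beta)}$, the construction is well-defined independently of the unknown $\eta$ (only the smoothness/noise parameters $\alpha,\beta$ are used, which we have assumed known).

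One detail I would be careful about is the domain mismatch: \cref{thm:approx_sobolev} is stated on $[-1,1]^d$ while the problem setting uses $\cX=[0,1]^d$; this is handled by an affine rechart that preserves the Sobolev ball up to constants absorbed into the Big-Oh, so it is a non-issue. The genuinely load-bearing step — and the only place anything nontrivial happens — is the chain of inequalities above: recognizing that disagreement between $\check h$ and $h^\star$ confines $x$ to the $\kappa$-margin band, and then spending the $\kappa$-sized value of $\abs{2\eta-1}$ against the $\kappa^\beta$-sized probability of that band to obtain the $\kappa^{1+\beta}$ savings. Everything else is bookkeeping. I do not anticipate a real obstacle; the main care is just to make sure the logarithmic factors from both $W$'s $\log\tfrac1\kappa$ and \cref{thm:vcd_nn}'s $L\log W$ are correctly swept into $\wt O(\cdot)$, and that the constant $c$ from the Tsybakov condition is treated as universal as in \cref{def:Tsybakov}.
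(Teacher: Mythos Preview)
Your proposal is correct and matches the paper's proof essentially line for line: choose $\kappa=\eps^{1/(1+\beta)}$, invoke \cref{thm:approx_sobolev} for the architecture, use the identity $\err(h)-\err(h^\star)=\E\brk{\abs{2\eta(x)-1}\ind\crl{h(x)\neq h^\star(x)}}$ together with the Tsybakov condition to get $O(\kappa^{1+\beta})=O(\eps)$ excess error, and then plug $W$ and $L$ into \cref{thm:vcd_nn} to read off the VC bound. Your added remark about the $[-1,1]^d$ versus $[0,1]^d$ domain is a nice bit of care that the paper leaves implicit.
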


With the approximation results obtained above, to learn a classifier with $O(\eps)$ excess error, one only needs to focus on a set of neural networks $\cH_{\dnn}$ with a well-controlled VC dimension.
As a warm-up, we first analyze the label complexity of such procedure in the passive regime (with fast rates).

\begin{restatable}{theorem}{thmPassiveNoise}
	\label{thm:passive_noise}
	Suppose $\cD_{\cX \cY} \in \cP(\alpha, \beta)$.
	Fix any $\eps, \delta > 0$.
	Let $\cH_{\dnn}$ be the set of neural network classifiers constructed in \cref{prop:vc_approx}.
	With $n = \wt O ( \eps^{- \frac{d+2\alpha + \alpha \beta}{\alpha(1+\beta)} })$ i.i.d. sampled points, with probability at least $1-\delta$,
	the empirical risk minimizer $\wh h \in \cH_{\dnn}$ achieves excess error $ O(\eps)$.
\end{restatable}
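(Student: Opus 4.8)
The plan is to combine the approximation guarantee of \cref{prop:vc_approx} with a standard fast-rate excess risk bound for empirical risk minimization over a VC class under the Tsybakov noise condition, and then to balance the two error contributions by the right choice of $\eps$ and $n$. First I would recall that \cref{prop:vc_approx} gives us a class $\cH_\dnn$ with an \emph{approximation} bias $\inf_{h\in\cH_\dnn}\err(h)-\err(h^\star)=O(\eps)$ and $\vcd(\cH_\dnn)=\wt O(\eps^{-d/(\alpha(1+\beta))})$; write $d_\mathrm{VC}$ for this VC dimension and let $\check h\in\cH_\dnn$ denote (a near-)minimizer of $\err$ over the class. The key quantitative input is the classical excess-risk bound for ERM over a class of VC dimension $d_\mathrm{VC}$ under a margin/low-noise assumption: with $n$ i.i.d. samples, with probability $\ge 1-\delta$,
\[
\exc(\wh h)\;\le\; C\,\prn*{\,\exc(\check h) + \prn*{\frac{d_\mathrm{VC}\log n + \log(1/\delta)}{n}}^{\frac{1+\beta}{2+\beta}}\,},
\]
where the second term reflects the fact that the Tsybakov noise exponent $\beta$ upgrades the slow $\sqrt{d_\mathrm{VC}/n}$ rate to the faster exponent $(1+\beta)/(2+\beta)$ (this is the classical localization / Bernstein-condition argument, e.g.\ \citet{tsybakov2004optimal,audibert2007fast}; one should verify the noise condition on $\cD_\cX$ translates into a variance-vs-mean inequality for the excess loss relative to $\check h$, possibly with an extra approximation-induced slack that is already $O(\eps)$).

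Next I would plug in the two estimates. Since $\exc(\check h)=O(\eps)$ is already of the target order, it suffices to force the statistical term to be $O(\eps)$ as well, i.e.
\[
\prn*{\frac{d_\mathrm{VC}\log n + \log(1/\delta)}{n}}^{\frac{1+\beta}{2+\beta}}\;\lesssim\;\eps
\quad\Longleftrightarrow\quad
n\;\gtrsim\; d_\mathrm{VC}\,\eps^{-\frac{2+\beta}{1+\beta}}\,\polylog,
\]
ignoring the $\log n$ and $\log(1/\delta)$ factors, which are absorbed into the $\wt O(\cdot)$. Substituting $d_\mathrm{VC}=\wt O(\eps^{-\frac{d}{\alpha(1+\beta)}})$ gives
\[
n=\wt O\prn*{\eps^{-\frac{d}{\alpha(1+\beta)}}\cdot \eps^{-\frac{2+\beta}{1+\beta}}}
 =\wt O\prn*{\eps^{-\frac{d + \alpha(2+\beta)}{\alpha(1+\beta)}}}
 =\wt O\prn*{\eps^{-\frac{d+2\alpha+\alpha\beta}{\alpha(1+\beta)}}},
\]
which is exactly the claimed sample complexity. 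One should double-check the self-consistency of discarding the $\log n$ factor: since $n$ is polynomial in $1/\eps$, $\log n = \wt O(1)$ in the $\eps$-dependence, so the fixed point is benign and only contributes polylog factors hidden in $\wt O$.

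The main obstacle I anticipate is the delicate handling of the \emph{interaction between approximation error and the low-noise fast-rate argument}: the Tsybakov condition is stated for $\eta$ (equivalently for $h^\star$), but the localization argument for ERM needs a Bernstein/variance condition relative to the in-class optimum $\check h$, not $h^\star$. Since $\check h$ is only $O(\eps)$-suboptimal, the disagreement region between $\check h$ and $h^\star$ has small mass, and one has to argue that the margin behavior transfers up to an additive $O(\eps)$ correction — this is where the approximation-theory side and the empirical-process side must be glued carefully, and it is the step most likely to require a short lemma rather than a direct citation. The remaining ingredients (uniform convergence over a VC class, the $\sqrt{d_\mathrm{VC}/n}$-to-faster-rate upgrade, union bounds for the $\delta$ term) are standard. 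I would also remark that for $\beta=0$ the bound degrades to the passive nonparametric rate $\wt O(\eps^{-(d+2\alpha)/\alpha})$, consistent with \citet{audibert2007fast}, which is a useful sanity check.
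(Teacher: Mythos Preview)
Your proposal is correct and follows essentially the same route as the paper: invoke \cref{prop:vc_approx} for the approximation bias and VC bound, apply a fast-rate ERM excess-risk inequality under Tsybakov noise, and balance the two terms to read off $n$. The only remark is that the ``main obstacle'' you flag --- transferring the Bernstein/margin condition from $h^\star$ to the in-class optimum $\check h$ --- does not require a new lemma: the paper simply cites a ready-made result (Theorem from \citet{boucheron2005theory}) of the form
\[
\err(\wh h)-\min_{h\in\cH}\err(h)\;\le\;\rho\cdot\bigl(\min_{h\in\cH}\err(h)-\err(h^\star)\bigr)+O\!\prn*{\tfrac{(1+\rho)^2}{\rho}\prn*{\tfrac{\vcd(\cH)\log n}{n}}^{\frac{1+\beta}{2+\beta}}+\tfrac{\log\delta^{-1}}{n}},
\]
which already accommodates misspecification (the Tsybakov condition is only needed relative to $h^\star$), so taking $\rho=1$ gives exactly your displayed bound without further work.
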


The label complexity results obtained in \cref{thm:passive_noise} matches, up to logarithmic factors, the passive learning lower bound $\Omega ( \eps^{- \frac{d+2\alpha + \alpha \beta}{\alpha(1+\beta)} })$ established in \citet{audibert2007fast}, indicating that our proposed learning procedure \emph{with a set of neural networks} is near minimax optimal.\footnote{Similar passive learning guarantees have been developed with different tools and analyses, e.g., see results in \citet{kim2021fast}.}

\subsection{Deep active learning and guarantees}
\label{sec:noise_active}

The passive learning procedure presented in the previous section treats every data point equally, i.e., it requests the label of every data point. 
Active learning reduces the label complexity by only querying labels of data points that are ``more important''.
We present deep active learning results in this section. 
Our algorithm (\cref{alg:NCAL}) is inspired by \textRCAL \citep{balcan2006agnostic, hanneke2007bound, hanneke2014theory} and the seminal \textCAL algorithm \citep{cohn1994improving}; we call our algorithm \textNCAL to emphasize that it works with a set of neural networks.
\looseness=-1

For any accuracy level $\eps > 0$, \textNCAL first initialize a set of neural network classifiers $\cH_0 \ldef \cH_{\dnn}$ such that (i) the best in-class classifier $\check h \ldef \argmin_{h \in \cH_{\dnn}} \err(h)$ has excess error at most  $O(\eps)$, and (ii) the VC dimension of  $\cH_\dnn$ is upper bounded by  $\wt O(\eps^{ - \frac{d}{\alpha(1+\beta)}})$ (see \cref{sec:noise_passive} for more details).
\textNCAL then runs in epochs of geometrically increasing lengths.
At the beginning of epoch $m$, based on previously \emph{labeled} data points, \textNCAL updates a set of active classifier $\cH_m$ such that, with high probability, the best classifier $\check h$ remains \emph{uneliminated}.
Within each epoch $m$, \textNCAL only queries the label $y$ of a data point $x$ if it lies in the \emph{region of disagreement} with respect to the current active set of classifier  $\cH_m$, i.e., 
	$\DIS(\cH_m) \ldef \crl{x \in \cX: \exists h_1, h_2 \in \cH_m \text{ s.t. } h_1(x) \neq h_2(x)}$.
\textNCAL returns any classifier $\wh h \in \cH_m$ that remains uneliminated after  $M-1$ epoch.

\begin{algorithm}[H]
	\caption{\textNCAL}
	\label{alg:NCAL} 
	\renewcommand{\algorithmicrequire}{\textbf{Input:}}
	\renewcommand{\algorithmicensure}{\textbf{Output:}}
	\newcommand{\algorithmicbreak}{\textbf{break}}
    \newcommand{\BREAK}{\STATE \algorithmicbreak}
	\begin{algorithmic}[1]
		\REQUIRE Accuracy level $\epsilon \in (0, 1)$, confidence level $\delta \in (0, 1)$.
		\STATE Let $\cH_\dnn$ be a set of neural networks classifiers constructed in \cref{prop:vc_approx}.
		\STATE Define $T \ldef \eps^{- \frac{2+\beta}{1+\beta}} \cdot \vcd(\cH_{\dnn}) $, $M \ldef \ceil{\log_2 T}$, $\tau_m \ldef 2^m$ for $m\geq1$ and $\tau_0 \ldef 0$. 
		\STATE Define $\rho_m \ldef O \prn*{ \prn*{\frac{\vcd(\cH_{\dnn}) \cdot \log (\tau_{m-1}) \cdot  \log (M /\delta) }{\tau_{m-1}}}^{\frac{1+\beta}{2+\beta}} }$ for $m \geq 2$ and  $\rho_1 \ldef 1$.
		\STATE Define $\wh R_m(h) \ldef \sum_{t = 1}^{\tau_{m-1}} Q_t \ind\prn*{h(x_t) \neq y_t}$ with the convention that $\sum_{t=1}^{0} \ldots = 0$.
		\STATE Initialize $\cH_0 \ldef \cH_{\dnn}$.
		\FOR{epoch $m = 1, 2, \dots, M$}
		\STATE Update active set 
		    $\cH_m \ldef \crl*{ h \in \cH_{m-1}:  \wh R_m(h) \leq \inf_{h \in \cH_{m-1}} \wh R_m(h) + \tau_{m-1} \cdot \rho_m}.$
		\IF{epoch $m=M$}
		\STATE \textbf{Return} any classifier $\wh h \in \cH_M$.
		\ENDIF
		\FOR{time $t = \tau_{m-1} + 1 ,\ldots , \tau_{m} $} 
		\STATE Observe $x_t \sim \cD_{\cX}$. Set $Q_t \ldef \ind(x_t \in \DIS(\cH_m))$.
		\IF{$Q_t = 1$}
		\STATE Query the label $y_t$ of $x_t$.
		\ENDIF
		\ENDFOR
		\ENDFOR

	\end{algorithmic}
\end{algorithm}

Since \textNCAL only queries labels of data points lying in the region of disagreement, its label complexity should intuitively be related to how fast the region of disagreement shrinks. More formally, the rate of collapse of the (probability measure of) region of disagreement is captured by the \emph{(classifier-based) disagreement coefficient} \citep{hanneke2007bound, hanneke2014theory}, which 
we introduce next.

\begin{definition}[Classifier-based disagreement coefficient]
	\label{def:dis_coeff_classifier}
	For any $\eps_0$ and classifier  $h \in \cH$, the classifier-based disagreement coefficient of  $h$ is defined as 
	 \begin{align*}
		\theta_{\cH, h} (\eps_0)
        \ldef \sup_{\eps > \eps_0} \frac{\P_{ x \sim \cD_{\cX}} 
	\prn{ \DIS \prn{ \cB_{\cH}( h, \eps)}}}{\eps} \vee 1,
	\end{align*}
  where $\cB_{\cH}(h,\eps) \ldef \crl{ g \in \cH: \P_{x \sim \cD_\cX} \prn{  g(x) \neq h(x)} \leq \eps}$.
	We also define $\theta_{\cH}(\eps_0) \ldef \sup_{h \in \cH} \theta_{\cH, h}(\eps_0)$.
\end{definition}

The guarantees of \textNCAL follows from a more general analysis of \textRCAL under function approximation.
In particular, to achieve fast rates under Tsybakov noise, previous analysis of \textRCAL requires that the Bayes optimal classifier lies within the hypothesis class \citep{hanneke2014theory}.
This requirement is typically not satisfied in our setting with neural network approximations.
Our analysis broadens the understanding of \textRCAL under function approximation; 
we defer the general analysis to \cref{app:RCAL_gen} and present the guarantees below.
\looseness=-1

\begin{restatable}{theorem}{thmActiveNoise}
	\label{thm:active_noise}
	Suppose $\cD_{\cX \cY} \in \cP(\alpha, \beta)$.
	Fix any $\eps,\delta > 0$.
	With probability at least $1-\delta$, \cref{alg:NCAL} 
	returns a classifier $\wh h \in \cH_{\dnn}$ with excess error $\wt O(\eps)$ after querying 
	$\wt O \prn{ \theta_{\cH_\dnn} \prn{ \eps^{\frac{\beta}{1 +  \beta}}} \cdot \eps^{- \frac{d + 2 \alpha}{\alpha + \alpha\beta}} }$ labels.
\end{restatable}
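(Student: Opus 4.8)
The plan is to instantiate the general analysis of \textRCAL under approximation (deferred to \cref{app:RCAL_gen}) with the specific set of neural network classifiers $\cH_\dnn$ from \cref{prop:vc_approx}, and then bookkeep the label complexity. First I would fix the notation: let $V \ldef \vcd(\cH_\dnn) = \wt O(\eps^{-d/(\alpha(1+\beta))})$ by \cref{prop:vc_approx}, and let $\check h = \argmin_{h \in \cH_\dnn}\err(h)$, which satisfies $\exc(\check h) = O(\eps)$ by the same proposition. The two ingredients I need from the generic \textRCAL analysis are: (i) a \emph{correctness} statement — with probability $\ge 1-\delta$, $\check h$ is never eliminated (i.e. $\check h \in \cH_m$ for all $m$), and every surviving $h \in \cH_m$ has $\err(h) - \err(\check h) = O(\tau_{m-1}\rho_m / \tau_{m-1}) = O(\rho_m)$ up to the approximation slack $O(\eps)$; this is the usual deviation bound for the disagreement-region-restricted empirical risk, using a Bernstein/Bernstein-type concentration together with the Tsybakov condition to convert variance into excess-error, with the extra care that the Bayes classifier is \emph{not} in $\cH_\dnn$ (so the Bernstein condition holds only up to an $O(\eps)$-additive term coming from $\exc(\check h)$) — and (ii) a \emph{query count} statement — the number of label queries in epoch $m$ is, in expectation and then with high probability by a Chernoff bound, at most $(\tau_m - \tau_{m-1}) \cdot \P_x(x \in \DIS(\cH_m))$.

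Next I would control $\P_x(x \in \DIS(\cH_m))$. The key point is that $\cH_m \subseteq \cB_{\cH_\dnn}(\check h, r_m)$ for a radius $r_m$ that shrinks with $m$: from the correctness statement, every $h \in \cH_m$ has $\err(h) - \err(\check h) = O(\rho_m + \eps)$, and under Tsybakov noise with parameter $\beta$ the standard inequality $\P(h \ne \check h) \lesssim (\err(h)-\err(\check h))^{\beta/(1+\beta)}$ (again up to the $\eps^{\beta/(1+\beta)}$ floor from $\exc(\check h)$) gives $r_m = O\!\big((\rho_m)^{\beta/(1+\beta)} + \eps^{\beta/(1+\beta)}\big)$. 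Therefore $\P_x(x\in\DIS(\cH_m)) \le \P_x\!\big(\DIS(\cB_{\cH_\dnn}(\check h, r_m))\big) \le \theta_{\cH_\dnn}(\eps^{\beta/(1+\beta)}) \cdot r_m$, using the definition of the disagreement coefficient with floor $\eps_0 = \eps^{\beta/(1+\beta)}$ (this is why that particular argument of $\theta$ appears in the bound). Plugging in $\rho_m = \wt O\big((V/\tau_{m-1})^{(1+\beta)/(2+\beta)}\big)$ yields $\P_x(x\in\DIS(\cH_m)) = \wt O\big(\theta_{\cH_\dnn}(\eps^{\beta/(1+\beta)}) \cdot ((V/\tau_{m-1})^{1/(2+\beta)} + \eps^{\beta/(1+\beta)})\big)$, so the expected queries in epoch $m$ are $\wt O\big(\theta_{\cH_\dnn}(\eps^{\beta/(1+\beta)}) \cdot (\tau_m^{(1+\beta)/(2+\beta)} V^{1/(2+\beta)} + \tau_m \eps^{\beta/(1+\beta)})\big)$.

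Then I would sum over $m = 1,\dots,M$. Since $\tau_m = 2^m$ grows geometrically, the sum is dominated by the last epoch $m = M$ with $\tau_M = \Theta(T) = \Theta(\eps^{-(2+\beta)/(1+\beta)} V)$. Substituting $\tau_M$ and $V = \wt O(\eps^{-d/(\alpha(1+\beta))})$ into both terms and simplifying the exponents should collapse to $\wt O\big(\theta_{\cH_\dnn}(\eps^{\beta/(1+\beta)}) \cdot \eps^{-(d+2\alpha)/(\alpha+\alpha\beta)}\big)$; I would also separately verify that the choice $M = \ceil{\log_2 T}$ drives $\rho_M$ (hence the excess error of the returned classifier) down to $\wt O(\eps)$, which is what pins down $T$ in the first place. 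Finally, the accuracy claim: the returned $\wh h \in \cH_M$ has $\err(\wh h) - \err(\check h) = \wt O(\rho_M) = \wt O(\eps)$, and adding $\exc(\check h) = O(\eps)$ gives $\exc(\wh h) = \wt O(\eps)$; a union bound over the $M = O(\log(1/\eps))$ epochs absorbs into the $\delta$ and the polylog factors.

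\textbf{Main obstacle.} I expect the crux to be item (i) — the correctness analysis under approximation — specifically handling the fact that $h^\star \notin \cH_\dnn$, so the clean Bernstein/"$(\err(h)-\err(\check h))$ controls the variance of $\ind(h \ne \check h)$" argument that underpins standard \textRCAL analysis only holds up to an additive $O(\exc(\check h)) = O(\eps)$ error. One must track this slack through the radius recursion and the threshold $\tau_{m-1}\rho_m$ to make sure it never dominates, which forces the somewhat delicate choice of $\rho_m$ and the stopping time $M$; this is exactly the "deal with approximation error in active learning under Tsybakov noise" contribution the introduction advertises, and it is done once and for all in \cref{app:RCAL_gen}. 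The remaining steps — Chernoff bounds on query counts, the disagreement-coefficient bound on $\P_x(x\in\DIS(\cH_m))$, and the geometric-sum bookkeeping — are routine given that general result.
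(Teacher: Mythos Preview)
Your proposal is correct and mirrors the paper's approach: the paper proves \cref{thm:active_noise} by directly instantiating the generic \textRCAL-with-approximation guarantee (\cref{thm:RCAL_gen}) with $\cH_\dnn$ from \cref{prop:vc_approx}, and your sketch of that generic argument---retention of $\check h$, the bound $\err(h)-\err(h^\star)\le O(\rho_m)$ via Bernstein-under-Tsybakov with an $O(\eps)$ approximation slack, then controlling $\P_x(\DIS(\cH_m))$ by $\theta_{\cH_\dnn}(\eps^{\beta/(1+\beta)})\cdot r_m$ and summing geometrically---matches \cref{lm:RCAL_set} and the proof of \cref{thm:RCAL_gen} almost line for line. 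One small arithmetic slip to fix when you write it up: $(\rho_m)^{\beta/(1+\beta)} = (V/\tau_{m-1})^{\beta/(2+\beta)}$, not $(V/\tau_{m-1})^{1/(2+\beta)}$; with the correct exponent the last-epoch contribution is $\theta\cdot\eps^{-2/(1+\beta)}\cdot V$, which after substituting $V=\wt O(\eps^{-d/(\alpha(1+\beta))})$ gives the claimed bound.
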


We next discuss in detail the label complexity of deep active learning proved in \cref{thm:active_noise}.

\begin{itemize}
\item 
Ignoring the dependence on disagreement coefficient, the label complexity appearing in \cref{thm:active_noise} matches, up to logarithmic factors, the lower bound $\Omega(\eps^{- \frac{d+ 2 \alpha}{\alpha + \alpha \beta}})$ for active learning \citep{locatelli2017adaptivity}.
	At the same time, the label complexity appearing in \cref{thm:active_noise} is \emph{never worse} than the passive counterpart (i.e., $\wt \Theta( \eps^{- \frac{d+2\alpha + \alpha \beta}{\alpha(1+\beta)} })$ since $\theta_{\cH_\dnn}(\eps^{\frac{\beta}{1+\beta}}) \leq \eps^{- \frac{\beta}{1 + \beta}}$.
\item 
We also identify cases when $\theta_{\cH_\dnn}(\eps^{\frac{\beta}{1+\beta}}) = o ( \eps^{- \frac{\beta}{1+\beta}})$, indicating \emph{strict} improvement over passive learning (e.g., when $\cD_\cX$ is supported on countably many data points), 
and when $\theta_{\cH_\dnn}(\eps^{\frac{\beta}{1+\beta}}) = O(1)$, indicating matching the minimax active lower bound (e.g., when $\cD_{\cX \cY}$ satisfies conditions such as \emph{decomposibility} defined in \cref{def:decomposable}. See \cref{app:dis_coeff} for detailed discussion).\footnote{We remark that disagreement coefficient is usually bounded/analyzed under additional assumptions on $\cD_{\cX \cY}$, even for simple cases with a set of linear classifiers \citep{friedman2009active, hanneke2014theory}. 
The label complexity guarantees of partition-based nonparametric active algorithms (e.g., \citet{castro2008minimax}) do not depend on the disagreement coefficient, but they are analyzed under stronger assumptions, e.g., they require the strictly stronger membership querying oracle. See \citet{wang2011smoothness} for a discussion.
We left a comprehensive analysis of the disagreement coefficient with a set of neural network classifiers for future work.}
\end{itemize}

Our algorithm and theorems lead to the following results, 
which could benefit both deep active learning and nonparametric learning communities.
\begin{itemize}
	\item \textbf{Near minimax optimal label complexity for deep active learning.}
	While empirical successes of deep active learning have been observed, rigorous label complexity analysis remains elusive except for two attempts made in \citet{karzand2020maximin, wang2021neural}.
	The guarantees provided in \citet{karzand2020maximin} only work in very special cases (i.e., data uniformly sampled from $[0,1]$ and labeled by well-separated piece-constant functions in a noise-free way). \citet{wang2021neural} study deep active learning in the NTK regime by linearizing the neural network at its random initialization and analyzing it as a linear function; moreover, as the authors agree, their error bounds and label complexity guarantees are \emph{vacuous} in certain cases. 
	On the other hand, our guarantees are minimax optimal, up to disagreement coefficient and other logarithmic factors, which bridge the gap between theory and practice in deep active learning.
  \looseness=-1
	\item \textbf{New perspective on nonparametric learning.}
	Nonparametric learning of smooth functions have been mainly approached by partitioning-based methods \citep{tsybakov2004optimal, audibert2007fast, castro2008minimax, minsker2012plug, locatelli2017adaptivity, locatelli2018adaptive, kpotufe2021nuances} : Partition the unit cube  $[0,1]^{d}$ into exponentially (in dimension) many sub-cubes and conduct local mean estimation within each sub-cube (which additionally requires a strictly stronger membership querying oracle).
Our results show that, in both passive and active settings, one can learn \emph{globally} with a set of neural networks and achieve near minimax optimal label complexities.
\looseness=-1
\end{itemize}

\section{Deep active learning with abstention: Exponential speedups}
\label{sec:abstention}

While the theoretical guarantees provided in \cref{sec:noise} are near minimax optimal, the label complexity scales as $\poly(\frac{1}{\eps})$, which doesn't match the great empirical performance observed in deep active learning.
In this section, we fill in this gap by leveraging the idea of abstention and provide a deep active learning algorithm that achieves exponential label savings.
We introduce the concepts of abstention and Chow's excess error in \cref{sec:abs_chow}, and provide our label complexity guarantees in \cref{sec:exponential}.\looseness=-1

\subsection{Active learning without low noise conditions}
\label{sec:abs_chow}

The previous section analyzes active learning under Tsybakov noise, which has been extensively studied in the literature since \citet{castro2008minimax}.
More recently, promising results are observed in active learning under Chow's excess error, but otherwise \emph{without any low noise assumption} \citep{puchkin2021exponential, zhu2022efficient}.
We introduce this setting in the following.

\paragraph{Abstention and Chow's error \citep{chow1970optimum}} 
We consider classifier of the form $\widehat h: \cX \rightarrow \cY \cup \curly*{\bot}$ where $\bot$ denotes the action of abstention. For any fixed $0 < \gamma < \frac{1}{2}$, the Chow's error is defined as 
\begin{align*}
    \err_{\gamma}(\widehat h)  \ldef \P_{(x,y) \sim \cD_{\cX \cY}} \prn{\widehat h (x) \neq y,  \widehat h(x) \neq \bot} + \prn*{{1}/{2} - \gamma} \cdot \P_{(x,y) \sim \cD_{\cX \cY}} \prn{\widehat h(x) = \bot}.
\end{align*}
The parameter $\gamma$ can be chosen as a small constant, e.g., $\gamma = 0.01$, to avoid excessive abstention: The price of abstention is only marginally smaller than random guess (which incurs cost $0.5$).
The \emph{Chow's excess error} is then defined as $\exc_\gamma(\wh h) \ldef \err_{\gamma}(\wh h) - \err(h^\star)$ \citep{puchkin2021exponential}.

At a high level, analyzing with Chow's excess error allows slackness in predications of hard examples (e.g., data points whose $\eta(x)$ is close to $\frac{1}{2}$) by leveraging the power of abstention.
\citet{puchkin2021exponential,zhu2022efficient} show that $\polylog(\frac{1}{\eps})$ is always achievable in the \emph{parametric} settings.
We generalize their results to the \emph{nonparametric} setting and analyze active learning with a set of neural networks.

\subsection{Exponential speedups with abstention}
\label{sec:exponential}

In this section, we work with a set of neural network \emph{regression functions}  $\cF_\dnn: \cX \rightarrow [0,1]$ (that approximates  $\eta$) and then \emph{construct} classifiers  $h: \cX \rightarrow \cY \cup \crl{\bot}$ \emph{with an additional abstention action}. 
To work with a set of regression functions $\cF_{\dnn}$, we analyze its ``complexity'' from the lenses of \emph{pseudo dimension} $\pseud(\cF_{\dnn})$ \citep{pollard1984convergence, haussler1989decision, haussler1995sphere} and \emph{value function disagreement coefficient} $\theta^{\val}_{\cF_\dnn} (\iota)$ (for some $\iota >0$) \citep{foster2020instance}.
We defer detailed definitions of these complexity measures to \cref{app:value_func_complexity}.

\begin{algorithm}[H]
	\caption{\textNCALP}
	\label{alg:NCALP} 
	\renewcommand{\algorithmicrequire}{\textbf{Input:}}
	\renewcommand{\algorithmicensure}{\textbf{Output:}}
	\newcommand{\algorithmicbreak}{\textbf{break}}
    \newcommand{\BREAK}{\STATE \algorithmicbreak}
	\begin{algorithmic}[1]
		\REQUIRE Accuracy level $\epsilon \in (0, 1)$, confidence level $\delta \in (0, 1)$, abstention parameter $\gamma \in (0, 1/2)$.
		\STATE Let $\cF_\dnn$ be a set of neural network regression functions obtained by (i) applying \cref{thm:approx_sobolev} with an appropriate approximation level $\kappa$ (which satisfies $\frac{1}{\kappa} = \poly(\frac{1}{\gamma}) \, \polylog(\frac{1}{\eps \, \gamma})$), and (ii) applying a preprocessing step on the set of neural networks obtained from step (i). See \cref{app:abs} for details.
		\STATE Define $T \ldef \frac{\theta^{\val}_{\cF_\dnn}(\gamma /4) \cdot \pseud(\cF_\dnn) }{\eps \, \gamma }$, $M \ldef \ceil{\log_2 T}$, and $C_\delta \ldef O\prn{\pseud(\cF_\dnn) \cdot \log(T /\delta)}$.
		\STATE Define $\tau_m \ldef 2^m$ for $m\geq1$, $\tau_0 \ldef 0$, and $\beta_m \ldef 3(M-m+1) C_\delta$. 
		\STATE Define  $\wh R_m(f) \ldef \sum_{t=1}^{\tau_{m-1}} Q_t \prn{\wh f(x_t) - y_t}^2 $ with the convention that $\sum_{t=1}^{0} \ldots = 0$.
		\FOR{epoch $m = 1, 2, \dots, M$}
		\STATE Get $\widehat f_m \ldef \argmin_{f \in \cF_\dnn} \sum_{t=1}^{\tau_{m-1}} Q_t \paren{f(x_t) - y_t}^2 $.
		\STATE 
		(Implicitely) Construct active set 
		   $ \cF_m \ldef \crl*{ f \in \cF_\dnn:  \wh R_m(f) \leq  \wh R_m(\wh f_m) + \beta_m }$.
		\STATE Construct classifier $\wh h_m: \cX \rightarrow \crl{0, 1, \bot}$ as 
		\begin{align*}
			\wh h_m (x) \ldef 
			\begin{cases}
				\bot, & \text{ if } \brk { \lcb(x;\cF_m) - \frac{\gamma}{4}, \ucb(x;\cF_m) + \frac{\gamma}{4}} \subseteq 
				\brk*{ \frac{1}{2} - \gamma, \frac{1}{2} + \gamma}; \\
        \ind(\wh f_m(x) \geq \frac{1}{2} ) , & \text{o.w.}
			\end{cases}
		\end{align*}
		and query function 
			$g_m(x)\ldef \ind \prn*{ \frac{1}{2} \in \prn*{\lcb(x;\cF_m) - \frac{\gamma}{4}, \ucb(x;\cF_m) + \frac{\gamma}{4}} } \cdot
		\ind \prn{\wh h_m(x) \neq \bot}$.
		\IF{epoch $m=M$}
		\STATE \textbf{Return} classifier $\wh h_M$.
		\ENDIF
		\FOR{time $t = \tau_{m-1} + 1 ,\ldots , \tau_{m} $} 
		\STATE Observe $x_t \sim \cD_{\cX}$. Set $Q_t \ldef g_m(x_t)$.
		\IF{$Q_t = 1$}
		\STATE Query the label $y_t$ of $x_t$.
		\ENDIF
		\ENDFOR
		\ENDFOR

	\end{algorithmic}
\end{algorithm}

We now present \textNCALP (\cref{alg:NCALP}), a deep active learning algorithm that leverages the power of abstention.
\textNCALP first initialize a set of set of neural network regression functions $\cF_\dnn$ by applying a preprocessing step on top of the set of regression functions obtained from \cref{thm:approx_sobolev} with a carefully chosen approximation level $\kappa$.
The preprocessing step mainly contains two actions: 
(1) clipping  $f_\dnn: \cX \rightarrow \R$ into  $\check f_\dnn: \cX \rightarrow [0,1]$ (since we obviously have $\eta(x) \in [0,1]$); and
(2) filtering out  $f_\dnn \in \cF_\dnn$ that are clearly not a good approximation of  $\eta$.
After initialization, \textNCALP runs in epochs of geometrically increasing lengths.
At the beginning of epoch $m \in [M]$, \textNCALP (implicitly) constructs an active set of regression functions $\cF_m$ that are ``close'' to the true conditional probability $\eta$.
For any $x \sim \cD_\cX$, \textNCALP constructs a lower bound $\lcb(x;\cF_m) \ldef \inf_{f \in \cF_m} f(x)$ and an upper bound $\ucb(x;\cF_m) \ldef \sup_{f \in \cF_m} f(x)$ as a confidence range of $\eta(x)$ (based on $\cF_m$).  
An empirical classifier with an abstention option $\wh h_m: \cX \rightarrow \crl{0, 1,\bot}$ and a query function $g_m:\cX \rightarrow \crl{0, 1}$ are then constructed based on the confidence range (and the abstention parameter $\gamma$).
For any time step $t$ within epoch $m$, \textNCALP queries the label of the observed data point $x_t$ if and only if $Q_t \ldef g_m(x_t) = 1$.
\textNCALP returns $\wh h_M$ as the learned classifier.

\textNCALP is adapted from the algorithm developed in \citet{zhu2022efficient}, but with novel extensions. In particular, the algorithm presented in \citet{zhu2022efficient} requires the existence of a $\wb f \in \cF$ such that $\nrm{\wb f - \eta}_\infty \leq \eps$ (to achieve  $\eps$ Chow's excess error),
Such an approximation requirement directly leads to $\poly(\frac{1}{\eps})$ label complexity
\emph{in the nonparametric setting}, which is unacceptable.
The initialization step of \textNCALP (line 1) is carefully chosen to ensure that $\pseud(\cF_\dnn), \theta^{\val}_{\cF_\dnn}(\frac{\gamma}{4}) = \poly(\frac{1}{\gamma}) \cdot \polylog(\frac{1}{\eps})$; together with a sharper analysis of concentration results, these conditions help us derive the following deep active learning guarantees (also see \cref{app:abs_gen} for a more general guarantee).

\begin{restatable}{theorem}{thmAbs}
	\label{thm:abs}
	Fix any $\eps, \delta, \gamma >0$.
	With probability at least $1-\delta$, \cref{alg:NCALP} (with an appropriate initialization at line 1) returns a classifier  $\wh h$ with Chow's excess error $\wt O(\eps)$ after querying $\poly(\frac{1}{\gamma}) \cdot \polylog(\frac{1}{\eps \, \delta})$ labels.
\end{restatable}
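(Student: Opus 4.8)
The plan is to transport the analysis of the parametric abstention algorithm of \citet{zhu2022efficient} to the neural-network regression class $\cF_\dnn$, the key new ingredient being that the initialization at line~1 chooses the approximation level $\kappa$ as a function of $\gamma$ alone (with $\tfrac1\kappa = \poly(\tfrac1\gamma)\,\polylog(\tfrac1{\eps\gamma})$), so that \emph{all} class-complexity quantities become $\eps$-free up to logarithmic factors. First I would record what line~1 buys: applying \cref{thm:approx_sobolev} at level $\kappa$ produces a network of size $W = O(\kappa^{-d/\alpha}\log\tfrac1\kappa)$ and depth $L = O(\log\tfrac1\kappa)$, so the pseudo-dimension analogue of \cref{thm:vcd_nn} (\citet{bartlett2019nearly}) gives $\pseud(\cF_\dnn) = \wt O(WL\log W) = \wt O(\poly(\tfrac1\gamma))$; the clipping/filtering preprocessing --- run on a small initial labeled batch --- retains a regressor $\wb f$ with $\nrm{\wb f - \eta}_\infty \le \kappa$ and discards grossly inaccurate networks, which (via the definition of the value-function disagreement coefficient in \cref{app:value_func_complexity}) likewise yields $\theta^{\val}_{\cF_\dnn}(\gamma/4) = \wt O(\poly(\tfrac1\gamma))$. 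All later estimates are stated in terms of these two $\eps$-free quantities, and in particular $\kappa \le \gamma/4$.

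Next I would set up the good event: with probability at least $1-\delta$, simultaneously over all epochs $m \in [M]$, (i) $\wb f \in \cF_m$, and (ii) every surviving $f \in \cF_m$ has small query-localized squared-loss excess risk, $\E_{x\sim\cD_\cX}\brk{g_m(x)\prn{f(x)-\eta(x)}^2} = \wt O(\beta_m/\tau_{m-1})$. This is a uniform-convergence argument for the clipped squared loss over a class of pseudo-dimension $\pseud(\cF_\dnn)$, sharpened by a Bernstein/variance-aware inequality that exploits the fact that $g_m$ admits only points where the confidence band is wide; the thresholds $\beta_m = 3(M-m+1)C_\delta$ with $C_\delta = O(\pseud(\cF_\dnn)\log(T/\delta))$ are calibrated precisely so that (i)--(ii) hold across all $M = \ceil{\log_2 T}$ epochs after a union bound.

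The label-complexity bound then follows. At a non-abstaining point with $Q_t = 1$ the value $\tfrac12$ lies in the inflated band $[\lcb(x;\cF_m) - \tfrac\gamma4,\ \ucb(x;\cF_m)+\tfrac\gamma4]$, which forces the existence of $f, f' \in \cF_m$ with $\abs{f(x)-f'(x)}$ of order $\gamma$; feeding item~(ii) into the value-function disagreement coefficient at level $\tfrac\gamma4$ (matching the inflation) gives $\P(g_m(x)=1) = \wt O\prn{\theta^{\val}_{\cF_\dnn}(\gamma/4)\cdot \beta_m/\tau_{m-1}}$. Here it is essential that the value-function disagreement coefficient scales with the \emph{squared} localization radius (unlike the linear classifier-based one of \cref{def:dis_coeff_classifier}), which is exactly what makes the expected per-epoch query count $\wt O\prn{\theta^{\val}_{\cF_\dnn}(\gamma/4)\cdot(\beta_m/\tau_{m-1})\cdot\tau_m} = \wt O\prn{\theta^{\val}_{\cF_\dnn}(\gamma/4)\cdot\pseud(\cF_\dnn)\cdot\log(T/\delta)}$ free of polynomial $\eps$-dependence. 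Summing over the $M = \wt O(\polylog(\tfrac1\eps))$ epochs and converting expectations to high probability via a Freedman/Chernoff bound yields a total of $\poly(\tfrac1\gamma)\cdot\polylog(\tfrac1{\eps\delta})$ labels (including the line-1 initialization batch), as claimed.

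The remaining, and most delicate, step --- which I expect to be the main obstacle --- is to certify $\exc_\gamma(\wh h_M) = \wt O(\eps)$. I would partition $\cX$ by the behavior of $\wh h_M$. On $\{\wh h_M = \bot\}$ the band lies in $[\tfrac12-\gamma,\tfrac12+\gamma]$, so (using $\wb f \in \cF_M$ and $\nrm{\wb f - \eta}_\infty \le \kappa \le \gamma/4$) we get $\abs{\eta(x) - \tfrac12} \le \tfrac34\gamma + \kappa \le \gamma$; the pointwise Bayes cost $\tfrac12 - \abs{\eta(x)-\tfrac12}$ is then at least $\tfrac12 - \gamma$, so the abstention cost $\tfrac12 - \gamma$ makes the net contribution of this region to $\exc_\gamma$ nonpositive. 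On $\{\wh h_M \ne \bot\}$, any point with $\tfrac12 \notin [\lcb(x;\cF_M) - \tfrac\gamma4,\ \ucb(x;\cF_M) + \tfrac\gamma4]$ has all of $\cF_M$ --- hence $\wh f_M$ and, up to $\kappa$, $\eta$ --- strictly on one side of $\tfrac12$, so $\wh h_M(x) = h^\star(x)$ and the excess error vanishes; the complement is exactly $\{g_M = 1,\ \wh h_M \ne \bot\}$, whose $\cD_\cX$-measure is $\wt O\prn{\theta^{\val}_{\cF_\dnn}(\gamma/4)\cdot\beta_M/\tau_{M-1}}$ by the argument of the previous paragraph, so it contributes at most that much to $\exc_\gamma$. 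Plugging $\beta_M = O(C_\delta) = \wt O(\pseud(\cF_\dnn)\log\tfrac1\delta)$ and $\tau_{M-1} = \Theta(T) = \Theta\prn{\theta^{\val}_{\cF_\dnn}(\gamma/4)\,\pseud(\cF_\dnn)/(\eps\gamma)}$ collapses this to $\wt O(\eps)$. The recurring subtlety throughout is that $\eta \notin \cF_\dnn$ in general, so every place where the parametric proof would invoke ``$\eta \in \cF_m$'' must instead run with $\wb f$ and absorb a $\kappa$-error; this is harmless precisely because $\kappa$ is polynomially small in $\gamma$ while only the final-epoch radius $\beta_M/\tau_{M-1}$ carries the $\eps$. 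A more general version of this argument is given in \cref{app:abs_gen}.
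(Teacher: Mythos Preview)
Your overall plan matches the paper's: instantiate a generic abstention guarantee (\cref{thm:abs_gen} in \cref{app:abs_gen}) on a neural class $\cF_\dnn$ whose pseudo-dimension and value-function disagreement coefficient are both $\poly(1/\gamma)$, then read off the $\polylog(1/\eps)$ label bound and $\wt O(\eps)$ Chow's excess error. Two points deserve correction or sharpening.

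\emph{First, the preprocessing is not data-driven.} You describe the clipping/filtering as ``run on a small initial labeled batch.'' In the paper it is purely structural (\cref{alg:preprocess}): clip to $[0,1]$, then keep only those networks that are $(L,2\kappa)$-approximate Lipschitz, using that $\eta$ itself is $L$-Lipschitz. No labels are spent. This matters because the bound $\theta^{\val}_{\cF_\dnn}(\gamma/4) = O((Lr/\gamma)^d)$ is obtained precisely through this Lipschitz structure: the paper bounds the value-function eluder dimension of an approximate-Lipschitz class by a covering number of $\cX$ (\cref{thm:eluder_lip_approx}, \cref{cor:eluder_lip_approx}), and then invokes $\theta^{\val} \le 4\,\mfe$ (\cref{prop:eluder_star_dis}). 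Your appeal to ``the definition of the value-function disagreement coefficient'' does not name the mechanism.

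\emph{Second, the $\kappa$-error is not merely ``absorbed''; it is what fixes $\kappa$.} You correctly flag that every place the parametric proof would use $\eta \in \cF_m$ must be rerun with $\wb f$ plus a $\kappa$-slack, and you say this is ``harmless.'' In the paper this is made precise by an inductive argument (\cref{lm:set_f_mis}--\cref{lm:induction}): showing $\wh R_m(\wb f) - \wh R_m(f^\star) \le \tfrac32 C_\delta$ uses the query-probability bound from \emph{previous} epochs and accumulates a term of order $(\wb\theta\, M^2/\gamma^2)\cdot\kappa^2 \cdot C_\delta$. For the induction to close, one needs exactly condition~\eqref{eq:kappa_requirement}, namely $(\wb\theta\, M^2/\gamma^2)\cdot\kappa^2 \le 1/10$; solving this (with $\wb\theta = O((1/\gamma)^d)$ and $M = O(\log(1/(\eps\gamma)))$) is what forces $1/\kappa = \Theta((1/\gamma)^{d/2+1}\log(1/(\eps\gamma)))$ and hence $\pseud(\cF_\dnn) = O((1/\gamma)^{(d^2+d)/(2\alpha)}\,\polylog(1/(\eps\gamma)))$. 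Your proposal states the right order for $1/\kappa$ but does not identify the constraint that produces it.

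A minor point: for the excess-error step, the paper bounds the pointwise contribution on $\{g_M=1\}$ by $4\,w(x;\cF_M)$ and then controls $\E[g_M(x)\,w(x;\cF_M)]$ via \eqref{eq:inductive3}, which gives $O(\beta_M/(\tau_{M-1}\gamma)\cdot\wb\theta)$. Your cruder bound by $\P(g_M=1)$ picks up an extra $1/\gamma$; this still yields $O(\eps\cdot\poly(1/\gamma)\cdot\polylog)$, but the paper's route gives the cleaner $O(\eps\cdot\log(\wb\theta\,\pseud(\cF_\dnn)/(\eps\gamma\delta)))$ stated in \cref{thm:abs_gen}.
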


We discuss two important aspects of \cref{alg:NCALP}/\cref{thm:abs} in the following, i.e., exponential savings and computational efficiency. We defer more detailed discussions to \cref{app:proper} and \cref{app:computational}.

\begin{itemize}
	\item 
	\textbf{Exponential speedups.}
	\cref{thm:abs} shows that, equipped with an abstention option, deep active learning enjoys $\polylog(\frac{1}{\eps})$ label complexity. This provides theoretical justifications for great empirical results of deep active learning observed in practice.
	Moreover, \cref{alg:NCALP} outputs a classifier that abstains \emph{properly}, i.e., it abstains only if abstention is the optimal choice; such a property further implies $\polylog(\frac{1}{\eps})$ label complexity under \emph{standard} excess error and Massart noise \citep{massart2006risk}.\looseness=-1
	 \item  
	\textbf{Computational efficiency.}
	Suppose one can efficiently implement a (weighted) square loss regression oracle over the \emph{initialized} set of neural networks $\cF_\dnn$:  
		Given any set $\cS$ of weighted examples $(w, x, y) \in \R_+ \times \cX \times \cY$, the regression oracle outputs
$\widehat f_\dnn \ldef \argmin_{f \in \cF_\dnn} \sum_{(w, x, y) \in \cS} w \paren*{f(x) - y}^2$
	.\footnote{In practice, this oracle can be approximated using gradient descent or its variants.} \cref{alg:NCALP} can then be \emph{efficiently} implemented with $\poly(\frac{1}{ \gamma}) \cdot \frac{1}{\eps}$ oracle calls.
\end{itemize}

While the label complexity obtained in \cref{thm:abs} has desired dependence on $\polylog(\frac{1}{\eps})$, its dependence on $\gamma$ can be of order  $\gamma^{-\poly(d)}$.
Our next result shows that, however, such dependence is unavoidable even in the case of learning a single ReLU function. 
\begin{restatable}{theorem}{thmSingleReLULB}
	\label{thm:single_relu_lb}
Fix any $\gamma \in (0,1 /8)$.
For any accuracy level $\eps$ sufficiently small, there exists a problem instance such that 
(1)  $\eta \in \cW^{1,\infty}_1 (\cX)$ and is of the form $\eta(x) \ldef \relu(\ang{w,x}+a)+b$;
and (2) for any active learning algorithm, it takes at least $\gamma^{-\Omega(d)}$ labels to identify an $\eps$-optimal classifier, for either standard excess error or Chow's excess error (with parameter  $\gamma$).
\end{restatable}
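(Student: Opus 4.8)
The plan is to reduce the lower bound to a multiple‑hypothesis‑testing problem and invoke Fano's inequality. Concretely, I will exhibit a single marginal $\cD_\cX$ together with a family of $M=\gamma^{-\Omega(d)}$ candidate regression functions $\eta_1,\dots,\eta_M$, each a single ReLU of the prescribed form $\relu(\langle w_k,x\rangle+a_k)+b_k$ with $\|\eta_k\|_{\cW^{1,\infty}}\le 1$, designed so that (i) being $\eps$‑optimal for instance $k$ — under \emph{either} the standard or the Chow($\gamma$) excess error — forces the returned classifier to agree with the $k$‑th optimal predictor on every atom of $\supp(\cD_\cX)$, hence to reveal $k$ exactly; and (ii) the $M$ instances are statistically indistinguishable to any learner that issues few label queries. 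Since in the active protocol the learner observes an i.i.d.\ stream and only chooses \emph{whether} to query the arriving point, a query‑complexity lower bound proved in the stronger "pick any atom to query" model transfers immediately, so it suffices to work there.

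The construction puts $\cD_\cX$ uniformly on $M=\gamma^{-\Omega(d)}$ atoms and makes each $\eta_k$ a \emph{needle}: it equals a baseline $b=\tfrac12-\mu$ ($\mu$ tiny) on every atom except one designated atom $u_k$, where the ReLU is active with $\eta_k(u_k)=b+\Delta$. The delicate point is that a ReLU is active on an entire half‑space, so both halves of the Sobolev constraint bind — $\|\nabla\eta_k\|_2=\|w_k\|_2\le 1$ and, crucially, $\|\eta_k\|_\infty\le 1$ — the latter because the active half‑space sweeps across $[0,1]^d$ unless its kink hyperplane merely clips a corner at a cube vertex. For the standard‑error claim this is easy to manage: take the atoms to be a $\gamma$‑packing of a sphere in the interior of $\cX$, give the ReLU a tiny slope $s=\Theta(1/\sqrt d)$ so that $\|\eta_k\|_\infty\le 1$, and place the kink so that $u_k$ lies at depth $\Theta(\gamma^2)$ inside the active half‑space while the $\Theta(\gamma)$‑separation keeps every other atom strictly outside it; this yields $\Delta=\Theta(\gamma^2/\sqrt d)>0$. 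For the Chow claim one additionally needs $|\eta_k(u_k)-\tfrac12|>\gamma$, so that the Chow‑optimal predictor must \emph{predict} (not abstain) at $u_k$ while abstaining at every other atom (where $|\eta_k(\cdot)-\tfrac12|=\mu<\gamma$). The idea is to place the needle atom at a cube vertex with $w_k$ supported so that $u_k$ is exactly the vertex maximizing $\langle w_k,\cdot\rangle$ — so the active region is a corner clip, keeping $\|\eta_k\|_\infty\le 1$ — with the other $\gamma^{-\Omega(d)}$ atoms taken from a code/packing at resolution $\Theta(\sqrt\gamma)$ whose distance guarantee evicts all of them from the active clip while still allowing $u_k$ depth $\Theta(\gamma)$, i.e.\ $\Delta=\Theta(\gamma)$.

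Given the family, the two verifications are mechanical. A direct excess‑risk computation shows that, on instance $k$, mishandling \emph{any} atom — predicting $-1$ at $u_k$, or predicting / failing to abstain at some $u_j$ with $j\ne k$ — increases the (Chow) excess error by at least $c\,\Delta/M$; hence whenever $\eps$ is below this threshold (this is the meaning of ``$\eps$ sufficiently small'', and the threshold is itself of order $\gamma^{-\Omega(d)}$), every $\eps$‑optimal $\wh h$ must predict $+1$ exactly at $u_k$ and act optimally at all other atoms, so $\wh h$ encodes $k$. For the statistical bound, under instance $k$ a query at atom $u_j$ returns $\mathrm{Bernoulli}(\tfrac12\pm\Theta(\Delta))$ and carries information about $K$ only through the event $\{K=j\}$, which has prior probability $1/M$; since $\mathrm{KL}\!\big(\mathrm{Ber}(\tfrac12+\Theta(\Delta))\,\|\,\mathrm{Ber}(\tfrac12-\mu)\big)=\Theta(\Delta^2)$, a transcript of $T$ adaptively chosen queries obeys $I(\text{transcript};K)=O\!\big(T\Delta^2/M\big)$, and Fano against $\P[\wh K=K]\ge\tfrac23$ forces $T=\Omega\!\big(M\log M/\Delta^2\big)=\gamma^{-\Omega(d)}$, for both choices of $\Delta$ above.

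The main obstacle is the Chow construction: threading a single ReLU with $\|\eta\|_{\cW^{1,\infty}}\le 1$ through (a) being essentially flat and $\approx\tfrac12$ on all but one of $\gamma^{-\Omega(d)}$ atoms, (b) exceeding $\tfrac12+\gamma$ at that one atom, and (c) keeping the half‑space active region from pushing $\eta$ above $1$ anywhere on $[0,1]^d$. This is exactly where the ``corner‑clip at a cube vertex, with $\supp(w_k)$ singling out $u_k$'' structure is needed; a naive spherical $\gamma$‑packing gives needle height only $\Theta(\gamma^2)<\gamma$ and collapses the Chow‑optimal predictor to ``abstain everywhere,'' which is trivial and yields no bound. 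A secondary technical point is the adaptivity bookkeeping in the Fano step, where the informative labels are simultaneously rare (seen with probability $1/M$ per round) and weak (bias $\Theta(\gamma)$ or $\Theta(\gamma^2)$), so the per‑query information must be controlled conditionally on the query history.
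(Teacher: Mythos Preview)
Your needle-in-haystack reduction is the right skeleton, and your standard-error construction (sphere packing, single active atom) is essentially what the paper does. The paper, however, uses Yao's minimax principle with a deterministic-oracle argument (in fact it gives the learner the exact value $\eta(x)$ per query, which only strengthens the lower bound) rather than Fano; either works here, and your Fano route would even pick up an extra $\log M$ factor.

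The gap is in your Chow case. Your baseline $b=\tfrac12-\mu$ with $\mu$ tiny forces the non-needle atoms into the abstention zone, which then drives you to the delicate ``corner-clip at a cube vertex'' construction. That sketch does not close: if every needle atom $u_k$ must sit at a cube vertex so that the active half-space is a corner clip, you have at most $2^d$ candidate needles, which is independent of $\gamma$ and cannot yield $M=\gamma^{-\Omega(d)}$; if instead the $u_k$ are a generic $\sqrt\gamma$-packing, you have not shown how to orient $w_k$ so that (a) $u_k$ lies at depth $\Theta(\gamma)$ inside the half-space, (b) every other atom is evicted, and (c) the half-space still meets $[0,1]^d$ shallowly enough that $\|\eta_k\|_\infty\le 1$. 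The ``distance guarantee evicts them'' claim only gives $|\langle w_k,u_k-u_j\rangle|\le\|u_k-u_j\|$, not the required lower bound.

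The paper sidesteps the whole issue with two moves you should adopt. First, it takes $\cX=\B^d_1$ rather than $[0,1]^d$ (the theorem only asserts existence of an instance), so with $\|w\|_2=1$ and threshold $a=-(1-4\gamma)$ the ReLU is automatically bounded by $4\gamma$ on all of $\cX$, killing the $\|\eta\|_\infty$ worry. Second---and this is the key simplification you are missing---it takes the baseline $b=\tfrac12-2\gamma$, not $\tfrac12-\mu$. Then $\eta_k(u_k)=\tfrac12+2\gamma$ and $\eta_k(u_j)=\tfrac12-2\gamma$ for $j\ne k$, so \emph{every} atom satisfies $|\eta_k-\tfrac12|=2\gamma>\gamma$ and the Chow-optimal action is never to abstain. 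Hence Chow's excess error equals the standard excess error on this instance, and a single construction handles both claims with no separate Chow argument. The atoms are obtained from a $\sqrt{8\gamma}$-packing on the sphere (so $\langle u_k,u_j\rangle\le 1-4\gamma$, evicting every $u_j$ from the active cap of $\eta_k$), giving $M\ge(1/8\gamma)^{d/2}=\gamma^{-\Omega(d)}$.
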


\section{Extensions}
\label{sec:extension}

Previous results are developed in the commonly studied Sobolev/H\"older spaces.
Our techniques, however, are generic and can be adapted to other function spaces, given neural network approximation results.
In this section, we provide extensions of our results to the Radon $\BV^{2}$ space, which was recently proposed as the natural function space associated with ReLU neural networks \citep{ongie2020function, parhi2021banach, parhi2022kinds, parhi2022near, unser2022ridges}.\footnote{Other extensions are also possible given neural network approximation results, e.g., recent results established in \citet{lu2021deep}.}

\paragraph{The Radon $\BV^2$ space}
The Radon $\BV^2$ unit ball over domain $\cX$ is defined as 
	$\RBV^2_1(\cX) \ldef \crl{f: \nrm{f}_{\RBV^2(\cX)} \leq 1}$,
where $\nrm{f}_{\RBV^2(\cX)}$ denotes the Radon $\BV^2$ norm of $f$ over domain $\cX$.\footnote{We provide more mathematical backgrounds and associated definitions in \cref{app:extension}.}
Following \citet{parhi2022near}, we assume $\cX = \crl{x \in \R^{d}: \nrm{x}_2 \leq 1}$ and $\eta \in \RBV^2_1(\cX)$.\looseness=-1

The Radon $\BV^2$ space naturally contains neural networks of the form $f_{\dnn}(x) = \sum_{k=1}^{K} v_i \cdot \relu(w_i^{\trn}x + b_i) $.
On the contrary, such $f_\dnn$ doesn't lie in any Sobolev space of order $\alpha \geq 2$ (since  $f_\dnn$ doesn't have second order \emph{weak} derivative).
Thus, if  $\eta$ takes the form of the aforementioned neural network (e.g., $\eta = f_\dnn$), approximating  $\eta$ up to $\kappa $ from a Sobolev perspective requires  $\wt O(\kappa^{-{d}})$ total parameters, which suffers from the curse of dimensionality.
On the other side, however, such bad dependence on dimensionality goes away when approximating from a Radon $\BV^2$ perspective, as shown in the following theorem.

\begin{theorem}[\citet{parhi2022near}]
\label{thm:approx_RBV2}	
	Fix any $\kappa>0$. For any $f^{\star} \in \RBV^2_1(\cX)$, there exists a one-hidden layer neural network $f_\dnn$ of width $K = O \prn{\kappa^{- \frac{2d}{d+3}}  }$ such that $\nrm{f^{\star} - f_\dnn}_\infty \leq \kappa$. 
\end{theorem}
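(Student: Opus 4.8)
The plan is to reprove Theorem~\ref{thm:approx_RBV2} from the integral (``representer'') description of the Radon $\RBV^2$ space, followed by a sparsification of the representing measure. First I would invoke the characterization of $\RBV^2_1(\cX)$ (see \citet{ongie2020function,parhi2021banach} and \citet{parhi2022near}): every $f^{\star}\in\RBV^2_1(\cX)$ admits a representation
\[
f^{\star}(x)\;=\;\int_{\S^{d-1}\times[-1,1]}\relu\!\prn{w^{\trn}x-b}\,d\mu(w,b)\;+\;v^{\trn}x+c,
\]
where $\mu$ is a finite signed Borel measure with total variation $\nrm{\mu}_{\mathrm{TV}}\lesssim\nrm{f^{\star}}_{\RBV^2(\cX)}\le 1$ and $(v,c)$ encodes the finite–dimensional null space of the $\RBV^2$ seminorm (the affine functions). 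Since $\nrm{x}_2\le1$ on $\cX$ we have $\abs{w^{\trn}x}\le1$, so offsets $b>1$ give identically-zero atoms and $b<-1$ give affine atoms; hence restricting to $b\in[-1,1]$ is without loss. The affine term is exactly a width-$2$ ReLU network, $v^{\trn}x+c=\relu(v^{\trn}x+c)-\relu(-v^{\trn}x-c)$, contributing only $O(1)$ to the width; it therefore remains to approximate $g(x)\ldef\int\relu(w^{\trn}x-b)\,d\mu(w,b)$ by a width-$K$ combination of the same ridge atoms.

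Writing $\mu=\nrm{\mu}_{\mathrm{TV}}\cdot\sigma\cdot\bar\mu$ with $\bar\mu$ a probability measure and $\sigma\in\crl{\pm1}$ its Jordan sign, a plain Maurey/Monte-Carlo draw of $K$ atoms $(w_i,b_i)\sim\bar\mu$ yields an unbiased estimate of $g$ with $L^2(\cD_\cX)$ error $O(K^{-1/2})$, which one upgrades to a uniform error $O(\sqrt{\log K/K})$ by a chaining/equicontinuity argument over the ridge dictionary $\mathcal D\ldef\crl{\relu(w^{\trn}\cdot-b):(w,b)\in\S^{d-1}\times[-1,1]}$, whose pseudodimension is $O(d)$. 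By itself this only gives the slow, dimension-free rate $K=\wt O(\kappa^{-2})$; conversely, gridding the $d$-dimensional parameter manifold at scale $\delta$ and collapsing $\mu$ onto the net gives $L^\infty$ error $O(\delta)$ with $O(\delta^{-d})$ atoms, i.e.\ $K=O(\kappa^{-d})$. The target exponent $\tfrac{2d}{d+3}$ lies strictly between these two, and obtaining it is the heart of the argument.

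The sharp exponent comes from extra regularity of the rectifier hidden in the dictionary: its Fourier transform decays like $\abs{\xi}^{-2}$ (equivalently, the second-order difference $\relu(\cdot+h)-2\relu(\cdot)+\relu(\cdot-h)$ is a tent of height $\asymp h$ on an interval of width $2h$, hence has $L^2$ norm $\asymp h^{3/2}$). Passing through the Radon transform, this translates into fast spherical-harmonic eigenvalue decay of the kernel $k(\theta,\theta')=\En_{x\sim\cD_\cX}\brk{\relu(\theta^{\trn}x)\,\relu((\theta')^{\trn}x)}$ on $\S^{d-1}$, with the decay rate governed by this ``$3$''. A frequency-stratified atom allocation — decompose $\mathcal D$ into spherical-harmonic frequency bands and spend a number of sampled atoms per band matched to that band's mass, rather than sampling the flat measure $\bar\mu$ — then produces the sharp $K$-term rate $O\!\prn{K^{-\frac{d+3}{2d}}}$ in $L^2(\cD_\cX)$ for approximating $g$; this is precisely the variation-space approximation theorem underlying \citet{parhi2022near} (see also the sharp bounds of Siegel--Xu). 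A final chaining step over the $O(d)$-pseudodimension class $\mathcal D$ upgrades the $L^2(\cD_\cX)$ bound (take $\cD_\cX$ uniform on the ball, so it coincides with $\nrm{\cdot}_\infty$ on $\cX$ for continuous functions) to the stated $L^\infty$ bound, with logarithmic factors absorbed into $O(\cdot)$. Setting $K^{-(d+3)/(2d)}\asymp\kappa$ gives $K=O(\kappa^{-2d/(d+3)})$, and restoring the two affine units completes the proof.

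\textbf{Main obstacle.} The representation step and the Monte-Carlo/chaining machinery are routine; the genuinely delicate part is the spectral analysis on the sphere — establishing the eigenvalue decay of $k$ (getting the geometry of $\S^{d-1}$ and the restriction $\nrm{x}_2\le1$ correct) and carrying out the frequency-stratified allocation so that the bias--variance balance produces exactly the exponent $\tfrac{d+3}{2d}$, rather than the weaker $\tfrac{d+1}{2d}$ or $\tfrac{d+2}{2d}$ that a generic Lipschitz-parametrized dictionary (Barron/Makovoz-type) would give. Matching the $L^\infty$ norm in the statement, rather than an $L^2$-average, is a secondary wrinkle handled by the uniform-convergence step.
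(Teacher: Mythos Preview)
The paper does not prove \cref{thm:approx_RBV2} at all: it is stated as a citation result from \citet{parhi2022near} and used as a black box (in \cref{prop:vc_approx_RBV}, \cref{prop:pd_approx_RBV}, and the downstream theorems). There is therefore no ``paper's own proof'' to compare against; your proposal is an attempt to reprove an imported theorem that the present paper takes for granted.

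That said, your sketch is broadly the right shape for how such rates are obtained in the variation-space literature (integral representation $+$ stratified/smoothed sampling exploiting the $|\xi|^{-2}$ decay of the ReLU kernel, \`a la Siegel--Xu), but one step does not hold as written. The parenthetical ``take $\cD_\cX$ uniform on the ball, so it coincides with $\nrm{\cdot}_\infty$ on $\cX$ for continuous functions'' is false: $L^2$ with respect to Lebesgue measure is not the sup norm, and a generic chaining argument over a class of pseudodimension $O(d)$ does not by itself upgrade an $L^2$ approximation rate to an $L^\infty$ one without loss in the exponent. The sharp $K^{-(d+3)/(2d)}$ rate you cite is, in the Siegel--Xu line of work, an $L^2$ rate; passing to $L^\infty$ at the same exponent requires a separate argument (e.g., interpolation with a uniform Lipschitz bound on the approximant, or a more careful construction), and your proposal does not supply one. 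If you want to fill this in, that $L^2\!\to\!L^\infty$ step is the actual gap to close; the rest of the outline is sound.
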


Equipped with this approximation result, we provide the active learning guarantees for learning a smooth function within the Radon $\BV^2$ unit ball as follows.

\begin{restatable}{theorem}{thmActiveNoiseRBV}
	\label{thm:active_noise_RBV}
	Suppose $\eta \in \RBV^2_1(\cX)$ and the Tsybakov noise condition is satisfied with parameter $\beta \geq 0$.
	Fix any $\eps, \delta > 0$.
	There exists an algorithm such that, with probability at least $1-\delta$, it learns 
	a classifier $\wh h \in \cH_\dnn$ with excess error $\wt O(\eps)$ after querying  
	$\wt O \prn{ \theta_{\cH_\dnn} \prn{ \eps^{\frac{\beta}{1 +  \beta}}} \cdot \eps^{- \frac{4d + 6}{(1+\beta)(d+3)}} }$ labels.
\end{restatable}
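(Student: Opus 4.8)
The plan is to follow the proof of \cref{thm:active_noise} almost verbatim, replacing the Sobolev approximation guarantee (\cref{thm:approx_sobolev}) by the Radon $\BV^2$ guarantee (\cref{thm:approx_RBV2}) in the construction of the hypothesis class, and then invoking the generic active-learning-under-approximation analysis of \cref{app:RCAL_gen} with the resulting complexity parameters. Concretely, fix the target accuracy $\eps$ and set the approximation level $\kappa \ldef \eps^{1/(1+\beta)}$. By \cref{thm:approx_RBV2}, let $\cF_\dnn$ be the set of one-hidden-layer ReLU networks of width $K = O\prn{\kappa^{-2d/(d+3)}} = O\prn{\eps^{-2d/((1+\beta)(d+3))}}$; that theorem guarantees some $f_\dnn \in \cF_\dnn$ with $\nrm{f_\dnn - \eta}_\infty \le \kappa$. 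Take $\cH_\dnn \ldef \crl{ h_f \ldef \sgn(2f(x)-1) : f \in \cF_\dnn}$ (no clipping of the regression functions is needed, since thresholding at $\tfrac12$ is insensitive to it).

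The next step is the Radon-$\BV^2$ analog of \cref{prop:vc_approx}: the two properties of $\cH_\dnn$. For the best-in-class error, whenever $h_{f_\dnn}(x) \neq h^\star(x)$ the numbers $f_\dnn(x)$ and $\eta(x)$ lie on opposite sides of $\tfrac12$, hence $\abs{\eta(x) - \tfrac12} \le \nrm{f_\dnn - \eta}_\infty \le \kappa$; combining the identity $\exc(h_{f_\dnn}) = \E\brk{\abs{2\eta(x)-1}\,\ind\prn{h_{f_\dnn}(x)\neq h^\star(x)}}$ with \cref{def:Tsybakov} gives
\[
\inf_{h\in\cH_\dnn}\exc(h) \;\leq\; \exc(h_{f_\dnn}) \;\leq\; 2\kappa\cdot\P_{x\sim\cD_\cX}\!\prn{\abs{\eta(x)-\tfrac12}\leq\kappa} \;\leq\; 2c\,\kappa^{1+\beta} \;=\; O(\eps).
\]
For the VC dimension, a width-$K$ one-hidden-layer network has $W = O(K)$ parameters (treating $d$ as a constant, as elsewhere) arranged in $L = O(1)$ layers, so \cref{thm:vcd_nn} gives $\vcd(\cH_\dnn) = \wt O(WL\log W) = \wt O(K) = \wt O\prn{\eps^{-2d/((1+\beta)(d+3))}}$.

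Finally, run \cref{alg:NCAL} instantiated with this $\cH_\dnn$ (so that its internal schedule $T,\rho_m,M$ is recomputed from the new $\vcd(\cH_\dnn)$). The generic guarantee of \cref{app:RCAL_gen} — which uses $\cH_\dnn$ only through its best-in-class excess error, its VC dimension, and its disagreement coefficient $\theta_{\cH_\dnn}$, together with the Tsybakov parameter $\beta$ — applies unchanged, and exactly as in the proof of \cref{thm:active_noise} it produces $\wh h\in\cH_\dnn$ with excess error $\wt O(\eps)$ after querying $\wt O\prn{\theta_{\cH_\dnn}\prn{\eps^{\beta/(1+\beta)}}\cdot \vcd(\cH_\dnn)\cdot \eps^{-2/(1+\beta)}}$ labels. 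Substituting the VC bound and using $\tfrac{2d}{(1+\beta)(d+3)} + \tfrac{2}{1+\beta} = \tfrac{4d+6}{(1+\beta)(d+3)}$ yields the claimed label complexity.

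There is no essentially new idea here: the only genuinely new input is \cref{thm:approx_RBV2} in the construction step, and everything downstream is bookkeeping with the changed VC dimension. The one point that must be verified is precisely that the generic analysis of \cref{app:RCAL_gen} is black-box in the approximation — it never uses the Sobolev structure of $\eta$ beyond what feeds the best-in-class error and the VC bound (in particular, its Bernstein-type deviation arguments require only Tsybakov noise, not smoothness) — and, relatedly, that the architecture delivered by \cref{thm:approx_RBV2} has $WL\log W = \wt O(K)$ so that the VC bound scales as stated. Both are routine once the general lemma is granted.
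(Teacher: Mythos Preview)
Your proposal is correct and essentially identical to the paper's own proof: the paper constructs $\cH_\dnn$ via \cref{prop:vc_approx_RBV} (which is exactly your $\kappa=\eps^{1/(1+\beta)}$ step, the Tsybakov excess-error bound, and the \cref{thm:vcd_nn} VC computation), then plugs it into \cref{alg:RCAL} and invokes \cref{thm:RCAL_gen}. Your verification that the generic analysis is black-box in the approximation and that $WL\log W = \wt O(K)$ is precisely what the paper relies on implicitly.
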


Compared to the label complexity obtained in \cref{thm:active_noise}, the label complexity obtained in the above theorem doesn't suffer from the curse of dimensionality: For $d$ large enough, the above label complexity scales as  $\eps^{-O(1)}$ yet label complexity in \cref{thm:active_noise} scales as $\eps^{-O(d)}$.
Active learning guarantees under Chow's excess error in the Radon $\BV^2$ space are similar to results presented in \cref{thm:abs}, and are thus deferred to \cref{app:extension}.

\section{Discussion}
\label{sec:discussion}
We provide the first near-optimal deep active learning guarantees, under both standard excess error and Chow's excess error. 
Our results are powered by generic algorithms and analyses developed for active learning that bridge approximation guarantees into label complexity guarantees. 
We outline some natural directions for future research below.
\begin{itemize}
	\item \textbf{Disagreement coefficients for neural networks.}
		While we have provided some results regarding the disagreement coefficients for neural networks, we believe a comprehensive investigation on this topic is needed.
	For instance, can we discover more general settings where the classifier-based disagreement coefficient can be upper bounded by $O(1)$?
	It is also interesting to explore sharper analyses on the value function disagreement coefficient.
\item \textbf{Adaptivity in deep active learning.}
	Our current results are established with the knowledge of some problem-dependent parameters, e.g., the smoothness parameters regarding the function spaces and the noise levels.
	It will be interesting to see if one can develop algorithms that can automatically adapt to unknown parameters, e.g., by leveraging techniques developed in \citet{locatelli2017adaptivity, locatelli2018adaptive}.
\end{itemize}

\section*{Acknowledgements}
The authors would like to thank Rahul Parhi for many helpful discussions regarding his papers. We also would like to thank anonymous reviewers for their constructive comments. 
This work is partially supported by NSF grant 1934612 and AFOSR grant FA9550-18-1-0166.

\bibliography{refs.bib}

\clearpage

\appendix

\section{Omitted details for \cref{sec:noise_passive}}

\propVCApprox*
\begin{proof}
We take $\kappa = \eps^{\frac{1}{1+\beta}}$ in \cref{thm:approx_sobolev} to construct a set of neural network classifiers $\cH_\dnn$ with  $W = O \prn{ \eps^{- \frac{d}{\alpha(1+\beta)}} \log \frac{1}{\eps}}$ total parameters arranged in $L = O \prn{ \log \frac{1}{\eps}}$ layers.	
According to \cref{thm:vcd_nn}, we know 
\begin{align*}
\vcd( \cH_{\dnn}) = O \prn{\eps^{- \frac{d}{\alpha(1+\beta)}} \cdot \log^2 \prn{\eps^{-1}}} = \wt O \prn{ \eps^{- \frac{d}{\alpha(1+\beta)}}}.
\end{align*}
We now show that there exists a classifier $\wb h \in \cH_{\dnn}$ with small excess error.
Let $\wb h = h_{\wb f}$ be the classifier such that  $\nrm{ \wb f - \eta}_{\infty} \leq \kappa$. We can see that 
 \begin{align*}
	\exc(\wb h)
	& = \E \brk*{ \ind( \wb h(x) \neq y) - \ind( h^{\star}(x) \neq y)}\\
	& = \E \brk*{ \abs{ 2 \eta(x) - 1} \cdot \ind( \wb h(x) \neq h^{\star}(x))}\\
	& \leq 2 \kappa \cdot \P_{x \sim \cD_{\cX}} \prn*{ x \in \cX: \abs{\eta(x) - {1}/{2}} \leq \kappa}\\
	& = O \prn{ \kappa^{1 + \beta}}\\
	& = O(\eps),
\end{align*}
where the third line follows from the fact that $\wb h$ and  $h^{\star}$ disagrees only within region $\crl{x \in \cX: \abs{\eta(x) - 1 /2} \leq \kappa}$ and the incurred error is at most  $2 \kappa$ on each disagreed data point.
The fourth line follows from the Tsybakov noise condition and the last line follows from the selection of $\kappa$.
\end{proof}

Before proving \cref{thm:passive_noise}, we first recall the excess error guarantee for empirical risk minimization under Tsybakov noise condition.

\begin{theorem}[\citet{boucheron2005theory}]
	\label{thm:erm_tsy}
	Suppose $\cD_{\cX \cY}$ satisfies Tsybakov noise condition with parameter  $\beta \geq 0$.
	Consider a datatset $D_n = \crl{(x_i, y_i)}_{i=1}^{n}$ of $n$ points i.i.d. sampled from  $\cD_{\cX \cY}$.
	Let $\wh h \in \cH$ be the empirical risk minimizer on $D_n$.
	For any constant  $\rho > 0$, we have
	 \begin{align*}
		 & \err(\wh h)  - \min_{h \in \cH} \err(h) \\
		 &\leq \rho \cdot \prn{ \min_{h \in \cH}\err(h) - \err (h^{\star})}
		 + O \prn*{ \frac{\prn*{1 + \rho}^2}{\rho} \cdot \prn*{ \frac{\vcd(\cH) \cdot \log n}{n}}^{\frac{1+\beta}{2+\beta}} + \frac{\log\delta^{-1}}{n}},
	\end{align*}
	with probability at least $1-\delta$.
\end{theorem}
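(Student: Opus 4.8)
The plan is to prove this as a standard localized-empirical-process bound for empirical risk minimization, with the Tsybakov condition (\cref{def:Tsybakov}) entering only through a Bernstein-type variance--mean inequality. Write $h^{*} \ldef \argmin_{h \in \cH} \err(h)$ for the best in-class classifier, $\cA \ldef \err(h^{*}) - \err(h^{\star}) \ge 0$ for the approximation error, and $\kappa \ldef \tfrac{\beta}{1+\beta} \in [0,1)$. For $h \in \cH$ set $g_h(x,y) \ldef \ind(h(x) \neq y) - \ind(h^{*}(x) \neq y)$, so $\P g_h = \err(h) - \err(h^{*}) \ge 0$; since $\wh h$ minimizes the empirical $0$--$1$ risk over $\cH$ we have $\P_n g_{\wh h} \le 0$, hence $\err(\wh h) - \err(h^{*}) = \P g_{\wh h} \le (\P - \P_n) g_{\wh h}$. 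The whole argument therefore reduces to a uniform control of the centered empirical process $(\P - \P_n) g_h$ localized around small values of $\P g_h$.

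First I would record the variance identity $\E[g_h^2] = \P_{x \sim \cD_\cX}(h(x) \neq h^{*}(x))$ and the consequence of Tsybakov noise: splitting the disagreement region according to whether $|2\eta(x)-1| \le t$ and optimizing over $t>0$ gives $\P_{x \sim \cD_\cX}(h(x) \neq h^{\star}(x)) \le C\,\exc(h)^{\kappa}$ with $C = C(c)$. Combining this for $h$ and for $h^{*}$ via $\P(h \neq h^{*}) \le \P(h \neq h^{\star}) + \P(h^{*} \neq h^{\star})$ and using $\exc(h) = \P g_h + \cA$ yields the \emph{approximate Bernstein condition} $\E[g_h^2] \le C'\big((\P g_h)^{\kappa} + \cA^{\kappa}\big)$. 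The extra additive $\cA^{\kappa}$ is exactly what later produces the $\rho \cdot \big(\min_{h \in \cH}\err(h) - \err(h^{\star})\big)$ summand in the conclusion.

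Next I would apply a local uniform deviation bound to the slice $\{g_h : \P g_h \le r\}$. Its members are bounded in $[-1,1]$ with variance at most $\sigma^2(r) \ldef C'(r^{\kappa} + \cA^{\kappa})$, and since $\cH$ has VC dimension $\vcd(\cH)$ the slice is a VC-subgraph class of index $O(\vcd(\cH))$; Dudley's entropy integral truncated at the variance level then gives $\E \sup_{\P g_h \le r}(\P - \P_n) g_h \lesssim \sqrt{\tfrac{\vcd(\cH)\log n}{n}}\,\sigma(r) + \tfrac{\vcd(\cH)\log n}{n}$, and Talagrand's concentration inequality upgrades this to hold with probability $1-\delta$ at the cost of an additional $\sqrt{\tfrac{\sigma^2(r)\log(1/\delta)}{n}} + \tfrac{\log(1/\delta)}{n}$. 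Peeling over dyadic shells $\{2^{j-1} r_0 < \P g_h \le 2^j r_0\}$ and union-bounding over the $O(\log n)$ shells converts this into a ratio-type inequality valid simultaneously for all $h \in \cH$: with probability $1-\delta$, $\P g_h \le 2(\P - \P_n) g_h + C''\big(\varphi_n + \sqrt{\varphi_n \cA^{\kappa}} + \tfrac{\log(1/\delta)}{n}\big)$, where solving the self-bounded inequality $r \lesssim \sqrt{\lambda_n r^{\kappa}}$ with $\lambda_n \ldef \tfrac{\vcd(\cH)\log n}{n}$ pins down the rate $\varphi_n = \lambda_n^{1/(2-\kappa)} = \big(\tfrac{\vcd(\cH)\log n}{n}\big)^{\frac{1+\beta}{2+\beta}}$.

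Finally I would set $h = \wh h$, for which $(\P - \P_n) g_{\wh h} \le 0$, obtaining $\err(\wh h) - \err(h^{*}) \le C''\big(\varphi_n + \sqrt{\varphi_n \cA^{\kappa}} + \tfrac{\log(1/\delta)}{n}\big)$, and split the cross term via a Young inequality with conjugate exponents $\tfrac{2}{\kappa}$ and $\tfrac{2}{2-\kappa}$: $\sqrt{\varphi_n \cA^{\kappa}} \le \rho\,\cA + c(\rho)\,\varphi_n$, where the prefactor $c(\rho)$ together with the $(1+\rho)$-type slack accumulated in the peeling/ratio step is dominated by $\tfrac{(1+\rho)^2}{\rho}$; this reproduces exactly the stated bound. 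I expect the main obstacle to be the bookkeeping in the peeling step --- checking that the approximate Bernstein condition with its stray $\cA^{\kappa}$ term still lets the fixed-point argument close and yields the precise exponent $\tfrac{1+\beta}{2+\beta}$ --- whereas the variance computation and the Talagrand step are routine. (The statement is the classification-ERM result surveyed in \citet{boucheron2005theory}, so one may alternatively cite it directly.)
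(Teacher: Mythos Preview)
The paper does not prove this statement at all: it is stated as a theorem \emph{cited} from \citet{boucheron2005theory} and used as a black box in the proof of \cref{thm:passive_noise}. So there is no ``paper's own proof'' to compare against.

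Your proposal is a reasonable and essentially correct sketch of how one proves such a fast-rate ERM bound under Tsybakov noise via localized empirical processes: the Bernstein-type variance inequality $\E[g_h^2] \lesssim (\P g_h)^{\kappa} + \cA^{\kappa}$ derived from \cref{def:Tsybakov}, a Talagrand-plus-peeling argument over VC-subgraph slices, solving the fixed point $r \asymp \sqrt{\lambda_n r^{\kappa}}$ to get the exponent $\tfrac{1+\beta}{2+\beta}$, and a Young-inequality split of the cross term to isolate the $\rho\cdot\cA$ contribution. This is exactly the machinery surveyed in \citet{boucheron2005theory}. You already note in your final parenthetical that one may simply cite the result; for the purposes of this paper, that is precisely what is done, and your detailed sketch is unnecessary here (though correct as an independent derivation).
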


\thmPassiveNoise*

\begin{proof}
\cref{prop:vc_approx} certifies $\min_{h \in \cH_\dnn} \err(h) - \err(h^{\star}) = O(\eps)$
and $\vcd(\cH_\dnn) = O \prn*{\eps^{-\frac{d}{\alpha(1+\beta)}} \cdot \log^2(\eps^{-1})}$.
Take $\rho = 1$ in \cref{thm:erm_tsy}, leads to 
 \begin{align*}
& \err(\wh h) - \err(h^{\star}) \leq   O \prn*{ \eps + \prn*{ \eps^{-\frac{d}{\alpha(1+\beta)}} \cdot \log^2(\eps^{-1}) \cdot \frac{\log n}{n}}^{\frac{1+\beta}{2+\beta}} + \frac{\log\delta^{-1}}{n}},
\end{align*}
Taking $n = O \prn{\eps^{-\frac{d+2\alpha + \alpha \beta}{\alpha(1+\beta)}}\cdot \log (\eps^{-1}) + \eps^{-1} \cdot \log(\delta^{-1})} = \wt O \prn{\eps^{-\frac{d+2\alpha + \alpha \beta}{\alpha(1+\beta)}}}$ thus ensures that $\err(\wh h) - \err(h^{\star}) = O(\eps)$.
\end{proof}

\section{Generic version of \cref{alg:NCAL} and its guarantees}
\label{app:RCAL_gen}
We present \cref{alg:RCAL} below, a generic version of \cref{alg:NCAL} that doesn't require the approximating classifiers to be neural networks.
The guarantees of \cref{alg:RCAL} are provided in \cref{thm:RCAL_gen}, which is proved in \cref{app:RCAL_gen_proof} based on 
supporting lemmas provided in \cref{app:RCAL_gen_lemma}.

\begin{algorithm}[H]
	\caption{\textRCAL with Approximation}
	\label{alg:RCAL} 
	\renewcommand{\algorithmicrequire}{\textbf{Input:}}
	\renewcommand{\algorithmicensure}{\textbf{Output:}}
	\newcommand{\algorithmicbreak}{\textbf{break}}
    \newcommand{\BREAK}{\STATE \algorithmicbreak}
	\begin{algorithmic}[1]
		\REQUIRE Accuracy level $\epsilon \in (0, 1)$, confidence level $\delta \in (0, 1)$.
		\STATE Let $\cH$ be a set of approximating classifiers such that $\inf_{h \in \cH }\err(h) - \err(h^{\star}) = O \prn{\eps}$.
		\STATE Define $T \ldef \eps^{- \frac{2+\beta}{1+\beta}} \cdot \vcd(\cH) $, $M \ldef \ceil{\log_2 T}$, $\tau_m \ldef 2^m$ for $m\geq1$ and $\tau_0 \ldef 0$. 
		\STATE Define $\rho_m \ldef O \prn*{ \prn*{\frac{\vcd(\cH) \cdot \log (\tau_{m-1}) \cdot  \log (M /\delta) }{\tau_{m-1}}}^{\frac{1+\beta}{2+\beta}} }$ for $m \geq 2$ and  $\rho_1 \ldef 1$.
		\STATE Define $\wh R_m(h) \ldef \sum_{t = 1}^{\tau_{m-1}} Q_t \ind\prn*{h(x_t) \neq y_t}$ with the convention that $\sum_{t=1}^{0} \ldots = 0$.
		\STATE Initialize $\cH_0 \ldef \cH$.
		\FOR{epoch $m = 1, 2, \dots, M$}
		\STATE Update active set 
		    $\cH_m \ldef \crl*{ h \in \cH_{m-1}:  \wh R_m(h) \leq \inf_{h \in \cH_{m-1}} \wh R_m(h) + \tau_{m-1} \cdot \rho_m}$
		\IF{epoch $m=M$}
		\STATE \textbf{Return} any classifier $\wh h \in \cH_M$.
		\ENDIF
		\FOR{time $t = \tau_{m-1} + 1 ,\ldots , \tau_{m} $} 
		\STATE Observe $x_t \sim \cD_{\cX}$. Set $Q_t \ldef \ind(x_t \in \DIS(\cH_m))$.
		\IF{$Q_t = 1$}
		\STATE Query the label $y_t$ of $x_t$.
		\ENDIF
		\ENDFOR
		\ENDFOR

	\end{algorithmic}
\end{algorithm}
We provide guarantees for \cref{alg:RCAL}, and then specialize them to the settings with neural network approximation, i.e., in \cref{thm:active_noise} and \cref{thm:active_noise_RBV}.
Our proofs build on the analysis of \textRCAL \citep{hanneke2014theory}, with additional arguments to handle function approximation. We note that the original analysis assumes $h^\star \in \cH$, i.e., the Bayes optimal classifier is contained in the hypothesis class.

 \begin{restatable}{theorem}{thmRCALGen}
	\label{thm:RCAL_gen}
	Fix $\eps, \delta > 0$.
	Suppose $\inf_{h \in \cH} \err(h) - \err(h^{\star}) = O(\eps)$.
	With probability at least $1-\delta$, 
	\cref{alg:RCAL} returns a classifier $\wh h \in \cH$ with excess error $\wt O(\eps)$ 
	after querying 
	 \begin{align*}
		 \wt O \prn*{ \theta_\cH(\eps^{\frac{\beta}{1+\beta}}) \cdot \eps^{-\frac{2}{1+\beta}}\cdot \vcd(\cH)  }
	\end{align*}
	labels.
\end{restatable}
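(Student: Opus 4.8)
\textbf{Proof proposal for Theorem~\ref{thm:RCAL_gen}.}

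The plan is to run the standard epoch-based analysis of \textRCAL (as in \citet{hanneke2014theory}), but carefully tracking the approximation error $\zeta \ldef \inf_{h \in \cH}\err(h) - \err(h^{\star}) = O(\eps)$ introduced by using neural-network classifiers instead of a class containing the Bayes optimal. Write $\check h \ldef \argmin_{h \in \cH}\err(h)$ for the best in-class classifier. The key device will be an invariant, maintained across epochs by induction on $m$: with high probability (i) $\check h \in \cH_m$ for all $m$, and (ii) every surviving $h \in \cH_m$ has $\err(h) - \err(\check h) \leq c\,\rho_m$ for an appropriate constant, hence excess error $\le c\,\rho_m + \zeta$. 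To establish this I would first invoke a uniform deviation bound for the importance-weighted empirical risks $\wh R_m(h)$ restricted to the disagreement region $\DIS(\cH_m)$ — a Bernstein/Bousquet-type concentration inequality (localized via the VC dimension, as in \cref{thm:erm_tsy}) — showing that for all $h$ simultaneously, the empirical risk difference $\wh R_m(h) - \wh R_m(\check h)$ is within $\tau_{m-1}\rho_m$ of its expectation $\tau_{m-1}(\err(h) - \err(\check h))$, where the variance term is controlled because $h$ and $\check h$ disagree only inside $\DIS(\cH_{m})\supseteq\DIS(\cH_{m+1})$ whose probability mass is itself shrinking.

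The second ingredient is converting the $\err(h) - \err(\check h)$ bound into a probability-of-disagreement bound $\P(h(x)\neq\check h(x)) = \wt O(\rho_m^{\beta/(1+\beta)}) + \wt O(\zeta^{\beta/(1+\beta)})$; this is the usual Tsybakov-noise consequence (via $\err(h)-\err(h') \ge (\P(h\neq h'))^{(2+\beta)/(1+\beta)}$ up to constants, after accounting for the shift by $\zeta$ since $\check h \ne h^\star$). Consequently $\cH_m \subseteq \cB_{\cH}(\check h, r_m)$ with $r_m = \wt O(\rho_{m-1}^{\beta/(1+\beta)} + \eps^{\beta/(1+\beta)})$, so by the definition of the disagreement coefficient, $\P(\DIS(\cH_m)) \le \theta_{\cH}(\eps^{\beta/(1+\beta)})\cdot r_m$. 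The number of labels queried in epoch $m$ is then, with high probability, $\wt O(\tau_m \cdot \theta_{\cH}(\eps^{\beta/(1+\beta)}) \cdot r_m)$ by a Chernoff bound on $\sum_t Q_t$. Summing a geometric-type series over $m = 1,\dots,M$ — where the dominant term comes from the last epoch with $\tau_M \asymp T = \eps^{-(2+\beta)/(1+\beta)}\vcd(\cH)$ and $r_M \asymp \eps^{\beta/(1+\beta)}$ (after $\rho_M$ has shrunk to the $O(\eps)$ floor) — yields total query complexity $\wt O(\theta_{\cH}(\eps^{\beta/(1+\beta)})\cdot \eps^{-(2+\beta)/(1+\beta)}\cdot \eps^{\beta/(1+\beta)}\cdot \vcd(\cH)) = \wt O(\theta_{\cH}(\eps^{\beta/(1+\beta)})\cdot \eps^{-2/(1+\beta)}\cdot \vcd(\cH))$, as claimed. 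The final excess-error guarantee follows since after $M = \ceil{\log_2 T}$ epochs $\rho_M = \wt O(\eps)$, so any $\wh h \in \cH_M$ has excess error $\wt O(\rho_M + \zeta) = \wt O(\eps)$.

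The main obstacle I anticipate is the concentration step \emph{in the presence of approximation error}: the classical \textRCAL analysis leans on a Bernstein (self-bounding) condition tying the variance $\V[\ind(h(x)\neq y) - \ind(h^\star(x)\neq y)]$ to the excess error $\err(h)-\err(h^\star)$, which requires $h^\star \in \cH$. Here $h^\star \notin \cH$, so I must instead relate the variance of the \emph{relative} loss $\ind(h(x)\neq y) - \ind(\check h(x)\neq y)$ to $\P(h(x)\neq\check h(x))$ directly (this holds unconditionally, bounding variance by $\P(h \neq \check h)$), and then separately control how $\P(h\neq\check h)$ relates to the \emph{excess} error via Tsybakov noise, absorbing the $\zeta = O(\eps)$ slack. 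Getting the bookkeeping right so that the $\zeta$ contribution never dominates — i.e., ensuring $\rho_m \gtrsim \eps$ throughout and that the additive $\eps^{\beta/(1+\beta)}$ terms in $r_m$ only inflate the final bound by logarithmic/constant factors — is the delicate part; everything else is a routine geometric summation. I would package the needed uniform deviation and noise-transfer statements as the supporting lemmas referenced in \cref{app:RCAL_gen_lemma} and then assemble them as above.
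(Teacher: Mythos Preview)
Your proposal is correct and structurally matches the paper's proof: both maintain the inductive invariant that $\check h \in \cH_m$ and every $h \in \cH_m$ has excess error $O(\rho_m)$ (the paper's \cref{lm:RCAL_set}), convert this to a disagreement-radius bound via Tsybakov and the triangle inequality through $h^\star$, and then sum the per-epoch query counts using the disagreement coefficient exactly as you outline.

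The one noteworthy difference is in how the approximation error is threaded through the concentration step. You propose to center at $\check h$, bound the variance of $\ind(h(x)\neq y) - \ind(\check h(x)\neq y)$ by $\P(h \neq \check h)$, and then control $\P(h \neq \check h)$ via Tsybakov (through $h^\star$) plus the $\zeta$ slack. The paper instead uses a slightly slicker device in \cref{lm:tsy_concentration}: it augments the class to $\wb\cH = \cH \cup \{h^\star\}$ (raising $\vcd$ by at most one), applies the standard fast-rate localized bound for $\wb\cH$ centered at $h^\star$ directly (reusing Lemma~3.1 of \citet{hanneke2014theory} off the shelf), and then separately bounds the single scalar gap $\wb R_n(\check h) - \wb R_n(h^\star)$ by a one-shot Bernstein inequality, using $\V \le \P(\check h \neq h^\star) = O(\eps^{\beta/(1+\beta)})$. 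The paper's route avoids redoing the uniform localization with a non-Bayes pivot; your route is more self-contained but requires reproving that localization. Both arrive at the same $\rho_m$ and the same final bound. (Minor slip: your parenthetical exponent $(2+\beta)/(1+\beta)$ should be $(1+\beta)/\beta$; your stated conclusion $\P(h\neq\check h)=\wt O(\rho_m^{\beta/(1+\beta)})$ already uses the correct one.)
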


\subsection{Supporting lemmas}
\label{app:RCAL_gen_lemma}

\begin{lemma}[\citet{tsybakov2004optimal, hanneke2014theory}]
	\label{lm:bernstein}
  Let $h^\star$ denote the Bayes optimal classifier.
	Suppose $\cD_{\cX \cY}$ satisfies the Tsybakov noise condition with parameter $\beta \geq 0$, then there exists an universal constant $c^{\prime} > 0$ such that we have 
\begin{align*}
\P_{x \sim \cD_\cX} \prn{h(x) \neq h^{\star}(x)} \leq c^{\prime} \prn{ \err(h) - \err(h^{\star})}^{\frac{\beta}{1+\beta}}	
\end{align*}
for any measurable $h: \cX \rightarrow \cY$.
\end{lemma}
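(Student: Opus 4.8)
The plan is to derive this Bernstein-type inequality directly from the margin condition (\cref{def:Tsybakov}) using the classical identity relating the excess classification risk to the margin function $x \mapsto \abs{2\eta(x)-1}$, followed by a one-dimensional optimization of a split bound. \textbf{Step 1 (excess-risk identity).} First I would record that, since $h^{\star}(x) = \sgn(2\eta(x)-1)$ is the Bayes classifier, the conditional excess risk at a point $x$ equals $0$ when $h(x) = h^{\star}(x)$ and equals $\max\crl{\eta(x),1-\eta(x)} - \min\crl{\eta(x),1-\eta(x)} = \abs{2\eta(x)-1}$ when $h(x) \neq h^{\star}(x)$. Taking expectations over $x \sim \cD_\cX$ yields
\[
	\err(h) - \err(h^{\star}) = \E_{x \sim \cD_\cX}\brk*{\abs{2\eta(x)-1}\cdot \ind\prn*{h(x) \neq h^{\star}(x)}}.
\]

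\textbf{Step 2 (split at a margin threshold).} Fix an arbitrary $t > 0$ and decompose the disagreement event according to whether $\abs{\eta(x) - 1/2} \le t$. On the low-margin part, $\P_{x}\prn*{h(x) \neq h^{\star}(x),\ \abs{\eta(x)-1/2} \le t} \le \P_{x}\prn*{\abs{\eta(x)-1/2} \le t} \le c\, t^{\beta}$ by \cref{def:Tsybakov}. On the high-margin part, the pointwise bound $\ind\prn*{\abs{\eta(x)-1/2} > t} \le \tfrac{1}{2t}\abs{2\eta(x)-1}$ combined with Step 1 gives
\[
	\P_{x}\prn*{h(x) \neq h^{\star}(x),\ \abs{\eta(x)-1/2} > t} \le \frac{1}{2t}\,\E_{x}\brk*{\abs{2\eta(x)-1}\,\ind\prn*{h(x)\neq h^{\star}(x)}} = \frac{\err(h) - \err(h^{\star})}{2t}.
\]
Adding the two parts yields $\P_{x}\prn*{h(x) \neq h^{\star}(x)} \le c\, t^{\beta} + \frac{\err(h)-\err(h^{\star})}{2t}$ for every $t > 0$.

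\textbf{Step 3 (optimize over $t$).} Write $E \ldef \err(h) - \err(h^{\star})$. If $\beta = 0$ the bound reads $\P_x(h \neq h^\star) \le c$ (let $t \to \infty$), which is the claim with $c' = c$ since $E^{0} = 1$. If $\beta > 0$ I would minimize $t \mapsto c\, t^{\beta} + E/(2t)$ over $t > 0$; the stationary point is $t^{\star} = \prn*{E/(2c\beta)}^{1/(1+\beta)}$, and substituting it back collapses both terms to a constant multiple of $E^{\beta/(1+\beta)}$, giving $\P_x(h \neq h^\star) \le c'\, E^{\beta/(1+\beta)}$ with $c'$ an explicit constant depending only on the universal constants $c$ and $\beta$ (and remaining bounded as $\beta \to 0$). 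This is exactly the stated inequality.

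The argument is entirely elementary; the only care needed is bookkeeping the constant through the optimization and treating the $\beta = 0$ corner case separately, neither of which is difficult. The single load-bearing ingredient is the excess-risk identity of Step 1, which is precisely where optimality of the Bayes classifier — and hence the structure $h^{\star} = \sgn(2\eta - 1)$ — enters the proof.
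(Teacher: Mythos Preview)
Your proof is correct and is essentially the classical argument from \citet{tsybakov2004optimal}: the excess-risk identity, the threshold split, and the one-parameter optimization together yield the bound as stated. The paper itself does not supply a proof for this lemma---it simply attributes the result to \citet{tsybakov2004optimal}---so there is nothing to compare beyond noting that your derivation matches the standard one. One minor remark: rather than optimizing $t$ exactly, you can take $t = E^{1/(1+\beta)}$ directly to obtain $c' = c + \tfrac{1}{2}$ uniformly in $\beta$, which sidesteps the limiting analysis at $\beta \to 0$.
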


We next present a lemma in the passive learning setting, which will later be incorporated into the active learning setting. 
We first define some notations. 
Suppose $D_n = \crl{(x_i, y_i)}_{i=1}^{n}$ are $n$ i.i.d. data points drawn from $\cD_{\cX \cY}$.
For any measurable $h: \cX \rightarrow \cY$, we denote  $\wb R_n(h) \ldef \sum_{i=1}^{n} \ind(h(x_i) \neq y_i)$ as the empirical error of $h$ over dataset $D_n$. We clearly have $\E\brk{\wb R_n(h)} = n \cdot \err(h)$ by i.i.d. assumption.

\begin{lemma}
	\label{lm:tsy_concentration}
Fix $\eps, \wb \delta > 0$.
Suppose $\cD_{\cX \cY}$ satisfies Tsybakov noise condition with parameter $\beta \geq 0$ and $\err(\check h) - \err(h^{\star}) = O(\eps)$, where $\check h = \argmin_{h \in \cH} \err(h)$ and $h^{\star}$ is the Bayes classifier.
Let $D_n = \crl{(x_i, y_i)}_{i=1}^{n}$ be a set of $n$ i.i.d. data points drawn from $\cD_{\cX \cY}$. 
If $\beta > 0$, suppose $n$ satisfies 
\begin{align*}
	n \leq \eps^{- \frac{2+\beta}{1+\beta}} \cdot \vcd(\cH)^{\frac{2+2\beta}{\beta}} \cdot \log (\wb \delta^{-1}) \cdot \prn{\log n}^{\frac{2+2\beta}{\beta}}.
\end{align*}
With probability at least $1-\wb \delta$, we have the following inequalities hold:
 \begin{align}
	 n \cdot \prn{ \err(h) - \err(h^{\star})} & \leq 2 \cdot \prn{\wb R_n(h) - \wb R_n(\check h)} + n \cdot \rho(n,\wb\delta) , \quad \forall h \in \cH, \label{eq:tsy_concentration_1} \\
	 \wb R_n(\check h)  - \min_{h \in\cH} \wb R_n(h) & \leq  n \cdot \rho(n, \wb \delta), \label{eq:tsy_concentration_2}
\end{align}
  where $\rho(n,\wb \delta) \ldef C \cdot  \prn*{ \prn*{\frac{\vcd(\cH) \cdot \log n \cdot  \log \wb \delta^{-1} }{n}}^{\frac{1+\beta}{2+\beta}} + \eps }$ with a universal constant $C >0$.\footnote{The logarithmic factors in this bound might be further optimized; however, we do not focus on optimizing logarithmic factors in this work.}
\end{lemma}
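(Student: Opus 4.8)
The plan is to combine a uniform deviation bound for the empirical $0/1$ risk over $\cH$ with the Bernstein-type variance control supplied by \cref{lm:bernstein}, using a localization/peeling argument to get the sharp $\prn{\vcd/n}^{\frac{1+\beta}{2+\beta}}$ rate rather than the slower $\sqrt{\vcd/n}$ rate. First I would introduce, for each $h\in\cH$, the centered excess-loss variable $Z_h(x,y) \ldef \ind(h(x)\neq y) - \ind(h^{\star}(x)\neq y)$, so that $\E[Z_h] = \err(h) - \err(h^{\star}) \rdef e(h)$ and $\wb R_n(h) - \wb R_n(h^{\star}) = \sum_{i=1}^n Z_h(x_i,y_i)$. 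The key variance estimate is that $\V[Z_h] \le \E[Z_h^2] = \P(h(x)\neq h^{\star}(x)) \le c' e(h)^{\frac{\beta}{1+\beta}}$ by \cref{lm:bernstein}. This is exactly the Bernstein/Tsybakov margin relation that makes the fast rate possible.

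Next I would apply a standard uniform-Bernstein or Talagrand-type concentration inequality over the class $\crl{Z_h : h\in\cH}$, whose VC-subgraph dimension is controlled by $\vcd(\cH)$, together with a peeling argument over the scale of $e(h)$: partition $\cH$ into shells $\crl{h : 2^{j-1}\lambda < e(h) \le 2^j\lambda}$ for a base scale $\lambda$, bound the supremum of $\lvert \sum_i Z_h - n e(h)\rvert$ on each shell via Bernstein with variance proxy $\asymp (2^j\lambda)^{\frac{\beta}{1+\beta}}$, and sum the failure probabilities. Solving the resulting fixed-point inequality $n\cdot t \lesssim \sqrt{n\, t^{\frac{\beta}{1+\beta}}\, \vcd\log n} + \vcd\log n + \log\wb\delta^{-1}$ for the critical radius $t$ yields $t \asymp \prn{\tfrac{\vcd(\cH)\log n\cdot\log\wb\delta^{-1}}{n}}^{\frac{1+\beta}{2+\beta}}$. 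From this one obtains, simultaneously for all $h\in\cH$,
\begin{align*}
  \bigl\lvert \bigl(\wb R_n(h) - \wb R_n(h^{\star})\bigr) - n\, e(h)\bigr\rvert \;\lesssim\; \tfrac12\, n\, e(h) + n\cdot t,
\end{align*}
which, after absorbing the $\tfrac12 n e(h)$ term into the left side, rearranges to a two-sided comparison between $n\,e(h)$ and $\wb R_n(h) - \wb R_n(h^{\star})$ up to an additive $n\,t$. Replacing $h^{\star}$ by $\check h$ costs only an extra $n\cdot e(\check h) = O(n\eps)$ term on both sides (since $\wb R_n(\check h) - \wb R_n(h^{\star})$ concentrates around $n\, e(\check h) = O(n\eps)$ by the same bound applied at $h=\check h$), which is why the definition of $\rho(n,\wb\delta)$ carries the extra $+\eps$. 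This gives \eqref{eq:tsy_concentration_1}; inequality \eqref{eq:tsy_concentration_2} follows by taking $h$ to be the empirical minimizer $\argmin_{h\in\cH}\wb R_n(h)$, noting $\wb R_n(\check h) - \min_{h\in\cH}\wb R_n(h) \le (\wb R_n(\check h) - \wb R_n(h^{\star})) + (\wb R_n(h^{\star}) - \min_h \wb R_n(h))$ and bounding each piece by $n\cdot\rho(n,\wb\delta)$ via the uniform deviation bound (the minimizer has $e\ge 0$, so its deviation is at most $n\cdot t$).

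The main obstacle I anticipate is the bookkeeping in the peeling argument: one must ensure the union bound over the $O(\log n)$ shells (and over the VC class within each shell) only contributes logarithmic factors, and one must handle the degenerate regime where $e(h)$ is below the critical radius $t$ itself — there the variance proxy should be frozen at $t^{\frac{\beta}{1+\beta}}$ rather than $e(h)^{\frac{\beta}{1+\beta}}$, which is standard but needs care. The stated upper bound on $n$ (when $\beta>0$) is presumably what guarantees the $+\eps$ term in $\rho$ does not dominate the $\prn{\vcd\log n/n}^{\frac{1+\beta}{2+\beta}}$ term, i.e., it keeps us in the regime where the statistical rate is the binding constraint; I would verify this by plugging the upper bound on $n$ into both terms and checking the inequality $\eps \lesssim \prn{\vcd\log n/n}^{\frac{1+\beta}{2+\beta}}$ holds. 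The case $\beta = 0$ is simpler since the variance bound degenerates to $\V[Z_h]\le c'$ and the rate is just $\sqrt{\vcd\log n/n}$, with no peeling needed.
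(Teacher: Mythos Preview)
Your overall strategy is sound and leads to the result, but it differs from the paper's proof in two places, and your reading of the role of the upper bound on $n$ is off.

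The paper does not redo the peeling argument: it augments the class to $\wb\cH \ldef \cH \cup \crl{h^{\star}}$ (so $\vcd(\wb\cH) = O(\vcd(\cH))$) and invokes Lemma~3.1 of \citet{hanneke2014theory} as a black box to obtain, uniformly over $h\in\wb\cH$, the two-sided comparison between $n\,e(h)$ and $\wb R_n(h)-\wb R_n(h^{\star})$ at the fast rate $\wb\rho(n,\wb\delta)\asymp\prn{\vcd(\cH)\log n\log\wb\delta^{-1}/n}^{\frac{1+\beta}{2+\beta}}$. Your from-scratch localization is the same machinery under the hood, just more work.

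Where the two proofs genuinely diverge is in passing from $h^{\star}$ to $\check h$. The paper does \emph{not} reuse the uniform bound at $h=\check h$; instead it applies a separate scalar Bernstein inequality to $g_i \ldef \ind(\check h(x_i)\neq y_i)-\ind(h^{\star}(x_i)\neq y_i)$, whose variance is $O(\eps^{\frac{\beta}{1+\beta}})$ by \cref{lm:bernstein}. This produces the fluctuation term $\prn{\eps^{\frac{\beta}{1+\beta}}\log\wb\delta^{-1}/n}^{1/2}$, and the stated upper bound on $n$ is \emph{exactly} the condition under which this term is dominated by $\wb\rho(n,\wb\delta)$ (you can check the algebra: raising both sides to the $2(2+\beta)$ power and solving for $n$ gives precisely $n\le \eps^{-\frac{2+\beta}{1+\beta}}\vcd(\cH)^{\frac{2+2\beta}{\beta}}\log\wb\delta^{-1}(\log n)^{\frac{2+2\beta}{\beta}}$). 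So the condition is not there to ensure $\eps\lesssim\wb\rho$ as you guessed --- indeed that check would fail, since the condition carries $\vcd^{\frac{2+2\beta}{\beta}}$ rather than $\vcd$ and hence allows $n$ too large for $\eps\lesssim\wb\rho$.

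Your alternative --- applying the uniform bound itself at $h=\check h$ to get $\wb R_n(\check h)-\wb R_n(h^{\star})\lesssim n\,e(\check h)+n\,t=O(n\eps)+n\,t$ directly --- is actually cleaner and, as far as I can see, does not need the condition on $n$ at all: the $O(\eps)$ is simply absorbed into the $+\eps$ in $\rho$, with no intermediate $\sqrt{\cdot}$ term to control. So your route is a slight simplification of the paper's, at the cost of having to spell out the peeling argument rather than citing it.
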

\begin{proof}
	Denote $\wb \cH \ldef \cH \cup \crl{h^{\star}}$. We know that $\vcd(\wb \cH) \leq \vcd(\cH) + 1 = O(\vcd(\cH))$.
  Since $\cD_{\cX \cY}$ satisfies Tsybakov noise condition and $h^\star \in \wb \cH$, the condition in \cref{lm:bernstein} is satisfied by all $h \in \wb \cH$.
Invoking Lemma 3.1 in \citet{hanneke2014theory}, with probability at least $1-\frac{\wb \delta}{2}$, $\forall h \in \wb \cH$, we have 
 \begin{align}
n \cdot \prn{ \err(h) - \err(h^{\star})} & \leq \max \crl*{2 \cdot \prn{\wb R_n(h) - \wb R_n(h^{\star})} , n \cdot \wb \rho(n, \wb \delta)} , \label{eq:tsy_concentration_3}\\
\wb R_n(h)  - \min_{h \in \wb \cH} \wb R_n(h) & \leq  \max \crl*{ 2n \cdot \prn{\err(h) - \err(h^{\star})}  , n \cdot \wb \rho(n, \wb \delta)} \label{eq:tsy_concentration_4},
\end{align}
where $\wb \rho(n, \wb \delta) = O \prn*{ \prn*{\frac{\vcd(\wb \cH) \cdot \log n + \log \wb \delta^{-1} }{n}}^{\frac{1+\beta}{2+\beta}}  }= O \prn*{ \prn*{\frac{\vcd(\cH) \cdot \log n \cdot \log \wb \delta^{-1} }{n}}^{\frac{1+\beta}{2+\beta}}  }$.

\cref{eq:tsy_concentration_2} follows by taking $h = \check h$ in \cref{eq:tsy_concentration_4} and noticing that 
 \begin{align*}
\wb R_h(\check h) - \min_{h \in \cH} \wb R_n(h) 
& \leq \wb R_n (\check h) - \min_{h \in \wb \cH} \wb R_n(h)\\
& \leq \max \crl*{ 2n \cdot O(\eps) , n \cdot \wb \rho(n,\wb \delta)},
\end{align*}
where we use the assumption that $\err(\check h) - \err(h^{\star}) = O(\eps)$.

To derive \cref{eq:tsy_concentration_1}, we first notice that applying \cref{eq:tsy_concentration_3} for any $h \in \cH$, we have 
\begin{align*}
	n \cdot \prn{\err(h) - \err(h^{\star})}
	& \leq 2 \cdot \prn{ \wb R_n(h) -\wb R_n(\check h) + \wb R_n(\check h) - \wb R_n(h^{\star}) } + n \cdot \wb \rho(n, \wb \delta).
\end{align*}
We next only need to upper bound $\wb R_n(\check h) - \wb R_n(h^{\star})$, and show that it is order-wise smaller than $n \cdot \rho(n ,\wb \delta)$.
We consider random variable $g_i \ldef \ind(\check h(x_i) \neq y_i) - \ind(h^{\star}(x_i) \neq y_i)$.
We have 
\begin{align*}
	\V(g_i) 
	& \leq \E \brk{g_i^2}\\
	& = \E \brk{\ind(\check h(x_i) \neq h^{\star}(x_i)} \\
	& = O \prn*{ \eps^{\frac{\beta}{1+\beta}}  },
\end{align*}
where the last line follows from \cref{lm:bernstein} and the assumption that $\err(\check h) - \err(h^{\star}) = O(\eps)$.
Denote $g = \frac{1}{n} \sum_{i=1}^{n} g_i = \frac{1}{n} \prn{\wb R_n(\check h) - \wb R_n(h^{\star})}$, and notice that $\E \brk{g} = \err(\check h) - \err(h^{\star})$. 
  Applying Bernstein inequality (e.g., Lemma B.9 in \citet{shalev2014understanding}) on $g - \E\sq{g}$, with probability at least  $1- \frac{\wb \delta}{2}$, we have 
\begin{align*}
	g - \E \brk{g} \leq O \prn*{\prn*{\frac{\eps^{\frac{\beta}{1+\beta}} \log \wb \delta^{-1}} {n}}^{\frac{1}{2}} + \frac{\log \wb \delta^{-1}}{n} },
\end{align*}
which further leads to 
\begin{align*}
	\wb R_n(\check h) - \wb R_n (h^{\star}) \leq n \cdot O\prn*{ \eps + \prn*{\frac{\eps^{\frac{\beta}{1+\beta}} \log \wb \delta^{-1}} {n}}^{\frac{1}{2}} + \frac{\log \wb \delta^{-1}}{n}}.
\end{align*}

The RHS is order-wise smaller than $\rho_n$ when  $\beta = 0$. We consider the case when  $\beta > 0$ next.
Since $\log (\wb \delta^{-1}) /n $ is clearly a lower-order term compared to $\rho_n$, we only need to show that 
$\prn*{\frac{\eps^{\frac{\beta}{1+\beta}} \log \wb \delta^{-1}} {n}}^{\frac{1}{2}} $ is order-wise smaller than  $\rho_n$.
We can easily check that 
\begin{align*}
	\prn*{\frac{\eps^{\frac{\beta}{1+\beta}} \log \wb \delta^{-1}} {n}}^{\frac{1}{2}} \leq \prn*{\frac{\vcd(\cH) \cdot \log n \cdot  \log \wb \delta^{-1} }{n}}^{\frac{1+\beta}{2+\beta}}
\end{align*}
whenever $n$ satisfies the following condition
\begin{align*}
	n \leq \eps^{- \frac{2+\beta}{1+\beta}} \cdot \vcd(\cH)^{\frac{2+2\beta}{\beta}} \cdot \log (\wb \delta^{-1}) \cdot \prn{\log n}^{\frac{2+2\beta}{\beta}}.
\end{align*}
\end{proof}

We denote $\check h \ldef \argmin_{h \in \cH} \err(h)$, which satisfies $\err(\check h) - \err(h^{\star}) = O(\eps)$ (as assumed in \cref{thm:RCAL_gen}).
For any $h \in \cH$,
we also use the shorthand $\wb R_m(h) = \wb R_{\tau_{m-1}} (h) \ldef \sum_{t=1}^{\tau_{m-1}} \ind(h(x_t) \neq y_t)$. Note that $\wb R_m$ is only used in analysis since some  $y_t$ are not observable.

\begin{lemma}
	\label{lm:RCAL_set}
	With probability at least $1-\frac{\delta}{2}$, the following holds true for all epochs $m \in [M]$:
	\begin{enumerate}
		\item $\check h \in \cH_m$.
		\item $\err(h) - \err(h^{\star}) \leq 3 \rho_m, \forall h \in \cH_m$. 
	\end{enumerate}

\end{lemma}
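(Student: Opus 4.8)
The plan is to induct on the epoch index $m$, establishing both items simultaneously on a single high-probability event, with the reference classifier $\check h=\argmin_{h\in\cH}\err(h)$ (which satisfies $\err(\check h)-\err(h^{\star})=O(\eps)$ by line~1) playing the role the Bayes classifier plays in the realizable analysis of \textRCAL. To fix the good event, I would apply the passive-learning concentration bound \cref{lm:tsy_concentration} once per epoch $m\ge 2$, to the i.i.d.\ sample $\{(x_t,y_t)\}_{t\le\tau_{m-1}}$ with $n=\tau_{m-1}$ and $\wb\delta=\delta/(2M)$, and union bound over $m$, which costs probability $\delta/2$. Two routine checks are needed: (i) the upper bound on $n$ in \cref{lm:tsy_concentration} holds since $\tau_{m-1}\le T$; and (ii) with the constants in the definition of $\rho_m$ one has $\rho(\tau_{m-1},\wb\delta)\le\rho_m$ for every $m$ — the additive $\eps$ in $\rho(\cdot)$ is dominated by the first term exactly on the range $\tau_{m-1}\le T=\eps^{-\frac{2+\beta}{1+\beta}}\vcd(\cH)$, and $\log\wb\delta^{-1}=\log(2M/\delta)$ matches the $\log(M/\delta)$ in $\rho_m$. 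Hence on the good event, for all $m\ge 2$, \cref{eq:tsy_concentration_1} and \cref{eq:tsy_concentration_2} hold with $\wb R_m$ in place of $\wb R_n$ and $\rho_m$ in place of $\rho(\tau_{m-1},\wb\delta)$.

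The crux is a deterministic structural identity,
\[
\wh R_m(h)-\wh R_m(h')=\wb R_m(h)-\wb R_m(h')\qquad\text{for all }h,h'\in\cH_{m-1}.
\]
Indeed, the sum defining $\wh R_m$ (and $\wb R_m$) ranges over $t\le\tau_{m-1}$, i.e.\ over epochs $1,\dots,m-1$, and a point $x_t$ in epoch $m'$ carries weight $Q_t=\ind(x_t\in\DIS(\cH_{m'}))$; since the active sets are nested, $\cH_{m-1}\subseteq\cdots\subseteq\cH_{m'}$, any $h,h'\in\cH_{m-1}$ also lie in $\cH_{m'}$, so whenever $Q_t=0$ (equivalently $x_t\notin\DIS(\cH_{m'})$) we get $h(x_t)=h'(x_t)$ and the $t$-th summands in $\wb R_m(h)-\wb R_m(h')$ cancel, leaving precisely the $Q_t=1$ terms, which equal $\wh R_m(h)-\wh R_m(h')$. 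Instantiating $h'=\check h$ (once $\check h\in\cH_{m-1}$ is known) converts the algorithm's computable comparisons against the elimination threshold into comparisons of full empirical risks, to which the concentration inequalities apply; note also that $\check h\in\cH_{m-1}$ automatically gives $\check h\in\cH_{m'}$ for every $m'\le m-1$ by nestedness.

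I would then run the induction. For $m=1$, $\tau_0=0$ forces $\wh R_1\equiv 0$ and elimination threshold $0$, so $\cH_1=\cH_0=\cH$; thus $\check h\in\cH_1$ and the second item is trivial ($\rho_1=1$, excess error in $[0,1]$). For the step, assume $\check h\in\cH_{m-1}$ with $m\ge 2$. First item: by the structural identity with $h'=\check h$, $\wh R_m(\check h)-\inf_{h\in\cH_{m-1}}\wh R_m(h)=\wb R_m(\check h)-\inf_{h\in\cH_{m-1}}\wb R_m(h)\le\wb R_m(\check h)-\inf_{h\in\cH}\wb R_m(h)\le\tau_{m-1}\rho_m$, the last step by \cref{eq:tsy_concentration_2}; together with $\check h\in\cH_{m-1}$ this is exactly the membership condition for $\check h\in\cH_m$. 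Second item: any $h\in\cH_m$ obeys $\wh R_m(h)\le\inf_{h'\in\cH_{m-1}}\wh R_m(h')+\tau_{m-1}\rho_m\le\wh R_m(\check h)+\tau_{m-1}\rho_m$, so by the structural identity $\wb R_m(h)-\wb R_m(\check h)\le\tau_{m-1}\rho_m$; feeding this into \cref{eq:tsy_concentration_1} gives $\tau_{m-1}(\err(h)-\err(h^{\star}))\le 2(\wb R_m(h)-\wb R_m(\check h))+\tau_{m-1}\rho_m\le 3\tau_{m-1}\rho_m$, and dividing by $\tau_{m-1}>0$ yields $\err(h)-\err(h^{\star})\le 3\rho_m$.

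The step I expect to be the main obstacle is the structural identity together with the care it requires about which classifier lies in which $\cH_{m'}$: the query indicator $Q_t$ is tied to the disagreement region of the active set of the epoch in which $x_t$ arrived, and \cref{lm:tsy_concentration} is a statement about the \emph{full} i.i.d.\ sample $\{(x_t,y_t)\}_{t\le\tau_{m-1}}$ whereas the algorithm only reveals the queried labels, so one must verify the cancellation of unqueried terms is valid for every such epoch simultaneously — which is exactly where the nestedness of the $\cH_m$'s and the inductive fact $\check h\in\cH_{m-1}$ are used. Everything else — the concentration union bound, verifying $\rho(\tau_{m-1},\wb\delta)\le\rho_m$ on $\tau_{m-1}\le T$, and the arithmetic producing the constant $3$ — is routine bookkeeping.
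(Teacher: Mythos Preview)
Your proposal is correct and follows essentially the same approach as the paper's proof: invoke \cref{lm:tsy_concentration} at each epoch with $n=\tau_{m-1}$ and $\wb\delta=\delta/(2M)$, union bound, then induct using the key structural identity $\wh R_m(h)-\wh R_m(h')=\wb R_m(h)-\wb R_m(h')$ for $h,h'\in\cH_{m-1}$ to transfer between the queried and full empirical risks. Your treatment is in fact slightly more explicit than the paper's in justifying this identity via the nestedness $\cH_{m-1}\subseteq\cH_{m'}$ for all $m'\le m-1$, but the logic and the arithmetic producing the constant $3$ are identical.
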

\begin{proof}
	For each $m = 2, 3, \ldots, M$, we invoke \cref{lm:tsy_concentration} with $n = \tau_{m-1}$ and  $\wb \delta = \delta /2M$, which guarantees that 
 \begin{align}
	 \tau_{m-1} \cdot \prn{ \err(h) - \err(h^{\star})} & \leq 2 \cdot \prn{\wb R_m(h) - \wb R_m(\check h)} + \tau_{m-1} \cdot \rho_m, \quad \forall h \in \cH, \label{eq:RCAL_set_1} \\
	 \wb R_m(\check h)  - \min_{h \in\cH} \wb R_m(h) & \leq  \tau_{m-1}\cdot \rho_m. \label{eq:RCAL_set_2}
\end{align}
  Note that the choice of $T$ used in \cref{alg:RCAL} ensures that (1) the requirement needed for $n $ in \cref{lm:tsy_concentration} when  $\beta > 0$ is satisfied, and (2) the second term $\eps$ in $\rho(\tau_{m-1}, \delta /2M)$ (see \cref{lm:tsy_concentration} for definition of $\rho(\tau_{m-1}, \delta /2M)$) is a lower-order term compared to the first term.
We use $\cE$ to denote the good event where \cref{eq:RCAL_set_1} and  \cref{eq:RCAL_set_2} hold true across $m = 2, 3, \ldots, M$. This good event happens with probability at least $1-\frac{\delta}{2}$. We analyze under $\cE$ in the following.

We prove \cref{lm:RCAL_set} through induction. The statements clearly hold true for  $m = 1$. Suppose  the statements hold true up to epoch $m$, we next prove the correctness for epoch  $m+1$. 

We know that $\check h \in \cH_m$ by assumption. Based on the querying criteria of  \cref{alg:RCAL}, we know that 
\begin{align}
	\label{eq:RCAL_equiv}
	\wh R_{m+1} (\check h) - \wh R_{m + 1} (h) 
	&  = \wb R_{m+1} (\check h) - \wb R_{m+1} (h), \quad \forall h \in \cH_m
\end{align}
  From \cref{eq:RCAL_set_2} (at epoch $m+1$), we also have 
\begin{align*}
\wb R_{m+1} (\check h) - \min_{h \in \cH_m} R_{m+1} (h)
& \leq 
\wb R_{m+1} (\check h) - \min_{h \in \cH} R_{m+1} (h)\\
& \leq \tau_m \cdot \rho_{m+1}.
\end{align*}
Combining the above two inequalities leads to
\begin{align*}
	\wh R_{m+1} (\check h) - \wh R_{m + 1} (h)  \leq \tau_m \cdot \rho_{m+1},
\end{align*}
implying that $\check h \in \cH_{m+1}$ (due to the construction of $\cH_{m+1}$ in \cref{alg:RCAL}).

Based on \cref{eq:RCAL_equiv}, the construction of $\cH_{m+1}$, and the fact that  $\check h \in \cH_{m}$,
we know that, for any $h \in \cH_{m+1} \subseteq \cH_m$, 
\begin{align*}
	\wb R_{m+1} (h) - \wb R_{m+1} (\check h)
	& = \wh R_{m+1}(h) - \wh R_{m+1} (\check h)\\
	& \leq \wh R_{m+1} (h) - \min_{h \in \cH_m} \wh R_{m+1} (h)\\
	& \leq \tau_m \cdot \rho_{m+1}.
\end{align*}
Plugging the above inequality into \cref{eq:RCAL_set_1} (at epoch $m+1$) leads to  $\err(h) - \err(h^{\star}) \leq 3 \rho_{m+1}$ for any $h \in \cH_{m+1}$.  
We thus prove the desired statements at epoch  $m+1$.
\end{proof}

\subsection{Proof of \cref{thm:RCAL_gen}}
\label{app:RCAL_gen_proof}

\thmRCALGen*
\begin{proof}
	Based on \cref{lm:RCAL_set}, we know that, with probability at least $1- \frac{\delta}{2}$, we have 
	\begin{align*}
		\err(\wh h) - \err(h^{\star})
		& \leq 3 \rho_M \\
		& = O \prn*{ \prn*{\frac{\vcd(\cH) \cdot \log (\tau_{M-1}) \cdot  \log (M /\delta) }{\tau_{M-1}}}^{\frac{1+\beta}{2+\beta}}  }\\
		& = \wt O (\eps),
	\end{align*}
	where we use the definition of $T$ and  $\tau_M$.

	We next analyze the label complexity of \cref{alg:RCAL}. 
	Since \cref{alg:RCAL} stops and the beginning at epoch $M$, we only need to calculated the label complexity in the first  $M-1$ epochs.
	We have 
	\begin{align*}
		\sum_{t=1}^{\tau_{M-1}} Q_t
    & = \sum_{m=1}^{M-1} \sum_{t = \tau_{m-1}+ 1}^{\tau_m}\ind \prn{x_t \in \DIS(\cH_m)}\\
		& \leq \sum_{m=1}^{M-1} \sum_{t = \tau_{m-1}+ 1}^{\tau_m} \ind \prn*{x_t \in \DIS(\cB_{\cH}(h^{\star}, c^{\prime} \prn{3 \rho_m}^{\frac{\beta}{1+\beta}}))},
	\end{align*}
	where on the second line we use the facts
	(1) $\err(h) - \err(h^{\star}) \leq 3 \rho_m, \forall h \in \cH_m$ from \cref{lm:RCAL_set},
  and (2) $\P_{x \sim \cD_\cX}\prn{ h(x) \neq h^{\star}(x)} \leq c^{\prime} \prn{\err(h) - \err(h^{\star})}^{\frac{\beta}{1+\beta}}$ from \cref{lm:bernstein} (with the same constant  $c^{\prime}$).
	Suppose $\err(\check h ) - \err(h^{\star}) = c^{\prime \prime} \eps$ with a universal constant $c^{\prime \prime}$ by assumption. Applying \cref{lm:bernstein} on $\check h$ leads to the fact that 
	 $h^{\star} \in \cB_{\cH} ( \check h, c^{\prime}\prn{c^{\prime \prime} \eps}^{\frac{\beta}{1+\beta}})$.
   Since $\P_{x \sim \cD_\cX}( h(x) \neq \check h(x)) \leq \P_{x \sim \cD_\cX} ( h(x) \neq h^{\star}(x)) + \P_{x \sim \cD_\cX} (h^{\star}(x) \neq \check h(x))$, 
	 we further have
	 \begin{align*}
	 	\sum_{t=1}^{\tau_{M-1}}Q_t
		& \leq \sum_{m=1}^{M-1}\sum_{t = \tau_{m-1}+ 1}^{\tau_m} \ind \prn*{x_t \in \DIS(\cB_{\cH}(\check h, \wb c\cdot {\rho_m}^{\frac{\beta}{1+\beta}}))},
	 \end{align*}
	 with a universal constant $\wb c > 0$.
	Noticing that the RHS is a sum of independent Bernoulli random variables,
applying Chernoff bound leads to the following guarantees on an event $\cE^{\prime}$ that happens with probability at least $1-\frac{\delta}{2}$:
\begin{align*}
	 	\sum_{t=1}^{\tau_{M-1}}Q_t
		& \leq 2e \sum_{m=1}^{M-1}\sum_{t = \tau_{m-1}+ 1}^{\tau_m} \P \prn*{x \in \DIS(\cB_{\cH}(\check h, \wb c \cdot {\rho_m}^{\frac{\beta}{1+\beta}}))} + 2 \log(2 /\delta) \\
  & = 2e \sum_{m=1}^{M-1} \prn{\tau_m - \tau_{m-1}} \cdot \P \prn*{x \in \DIS(\cB_{\cH}(\check h, \wb c \cdot {\rho_m}^{\frac{\beta}{1+\beta}}))} + 2 \log(2 /\delta) \\
		& \leq 2e \sum_{m=2}^{M-1} { \tau_{m-1} } \cdot \theta_{\cH, \check h}\prn*{\wb c \cdot \rho_m^{\frac{\beta}{1+\beta}}} \cdot \wb c \cdot \rho_m^{\frac{\beta}{1+\beta}} + 2 \log (2 /\delta) + 4e\\
		& \leq 2e M \cdot \theta_{\cH, \check h}\prn*{\wb c \cdot \rho_M^{\frac{\beta}{1+\beta}}} \cdot  \prn*{\wb c\cdot \tau_{M-1}\cdot \rho_M^{\frac{\beta}{1+\beta}} } + 2 \log( 2 /\delta) + 4e,
\end{align*}
  where the third line follows from the definition of disagreement coefficient, and the last line follows from the facts that $\crl{\rho_m}$ is a non-increasing sequence yet $\crl{\tau_{m-1} \cdot \rho_m}$ is an increasing sequence.
Basic algebra and basic properties of the disagreement coefficient (i.e., Theorem 7.1 and Corollary 7.2 in \citet{hanneke2014theory}) shows that 
\begin{align*}
	\sum_{t=1}^{\tau_{M-1}} Q_t \leq \wt O \prn*{ \theta_\cH(\eps^{\frac{\beta}{1+\beta}}) \cdot \eps^{-\frac{2}{1+\beta}}\cdot \vcd(\cH)  } ,
\end{align*}
under event $\cE \cap \cE^{\prime}$, which happens with probability at least $1-\delta$.
\end{proof}

\section{Omitted details for \cref{sec:noise_active}}
We prove \cref{thm:active_noise} in \cref{app:active_noise_proof} and discuss the disagreement coefficient in \cref{app:dis_coeff}.

\subsection{Proof of \cref{thm:active_noise}}
\label{app:active_noise_proof}

\thmActiveNoise*
\begin{proof}
	Construct $\cH_\dnn$ based on \cref{prop:vc_approx} such that $\min_{h \in \cH_\dnn} \err(h) - \err(h^{\star}) = O(\eps)$ and $\vcd(\cH_\dnn) = \wt O \prn{ \eps^{-\frac{d}{\alpha(1+\beta)}} }$. Taking such $\cH_\dnn$ into \cref{thm:RCAL_gen} leads to the desired result.
\end{proof}

\subsection{Discussion on disagreement coefficient in \cref{thm:active_noise}}
\label{app:dis_coeff}
We discuss cases when the (classifier-based) disagreement coefficient with respect to a set of neural networks is well-bounded.
As mentioned before, even for simple classifiers such as linear functions, the disagreement coefficient has been analyzed under additional assumptions \citep{friedman2009active, hanneke2014theory}. 
In this section, we analyze the disagreement coefficient for a set of neural networks under additional assumptions on $\cD_{\cX \cY}$ and $\cH_{\dnn}$ (assumptions on $\cH_{\dnn}$ can be implemented via proper preprocessing steps).
We leave a more comprehensive investigation of the disagreement coefficient for future work.

The first case is when $\cD_\cX$ is supported on countably many data points. 
The following result show strict improvement over passive learning.
\begin{definition}[Disagreement core]
	\label{def:dis_core}
	For any hypothesis class $\cH$ and classifier $h$, the disagreement core of  $h$ with respect to $\cH$  under $\cD_{\cX \cY}$ is defined as 
	\begin{align}
		\partial_{\cH} h \ldef \lim_{r \rightarrow 0} \DIS \prn{ \cB_{\cH} (h, r) }.
	\end{align}
\end{definition}

\begin{proposition}[Lemma 7.12 and Theorem 7.14 in \citet{hanneke2014theory}]
For any hypothesis class $\cH$ and classifier $h$, we have 
$\theta_h(\eps) = o(1 /\eps)$ if and only if  $\cD_{\cX} \prn{ \partial_{\cH} h} = 0$.
In particular, this implies that 
$\theta_{\cH} (\eps) = o ( 1/ \eps)$ whenever $\cD_\cX$ is supported on countably many data points.	
\end{proposition}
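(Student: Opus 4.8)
The plan is to reduce the statement to the behaviour, as $r\downarrow 0$, of the set function $P_h(r)\ldef\cD_\cX\prn{\DIS\prn{\cB_\cH(h,r)}}$. The first step is a monotonicity observation: $\cB_\cH(h,r)$ is nondecreasing in $r$ and $\DIS(\cdot)$ is monotone in its argument, so the family $\DIS(\cB_\cH(h,r))$, $r>0$, is increasing; hence $\partial_\cH h=\bigcap_{r>0}\DIS(\cB_\cH(h,r))=\bigcap_{n\ge1}\DIS(\cB_\cH(h,1/n))$, and by continuity from above of the probability measure $\cD_\cX$ we get $P_h(r)\downarrow\cD_\cX(\partial_\cH h)$ as $r\downarrow 0$. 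Two consequences I will use repeatedly: $\partial_\cH h\subseteq\DIS(\cB_\cH(h,r))$ for every $r>0$, so $P_h(r)\ge\cD_\cX(\partial_\cH h)$ always; and $P_h(r)\to\cD_\cX(\partial_\cH h)$. Throughout I assume the relevant sets are measurable, as is standard in this literature.

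For the equivalence I would treat the two directions separately. Sufficiency: assume $\cD_\cX(\partial_\cH h)=0$ and fix a threshold $a>0$. Splitting the supremum defining $\theta_{\cH,h}(\eps)$, for $\eps'>a$ the bound $P_h(\eps')\le1$ gives $P_h(\eps')/\eps'\le 1/a$, while for $\eps<\eps'\le a$ monotonicity gives $P_h(\eps')/\eps'\le P_h(a)/\eps$; combining, $\eps\,\theta_{\cH,h}(\eps)\le\max\crl{P_h(a),\eps/a,\eps}$. Given $\delta>0$, first choose $a$ small enough that $P_h(a)\le\delta$ (possible since $P_h(a)\to0$), then let $\eps\to0$; this yields $\limsup_{\eps\to0}\eps\,\theta_{\cH,h}(\eps)\le\delta$, hence $\theta_{\cH,h}(\eps)=o(1/\eps)$. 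Necessity: if $c\ldef\cD_\cX(\partial_\cH h)>0$ then $P_h(\eps')\ge c$ for every $\eps'$, so $\theta_{\cH,h}(\eps)\ge\sup_{\eps'>\eps}c/\eps'=c/\eps$, and therefore $\eps\,\theta_{\cH,h}(\eps)\ge c\not\to0$, i.e.\ $\theta_{\cH,h}$ is not $o(1/\eps)$.

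For the ``in particular'' claim, suppose $\cD_\cX$ is supported on countably many points $x_1,x_2,\dots$ with $p_i\ldef\cD_\cX(\crl{x_i})>0$ and $\sum_i p_i=1$. No atom $x_j$ can lie in $\partial_\cH h$: membership would require, for every $r>0$, a $g\in\cH$ with $\P_{x\sim\cD_\cX}(g(x)\ne h(x))\le r$ but $g(x_j)\ne h(x_j)$, and the latter forces $\P(g\ne h)\ge p_j$ — impossible once $r<p_j$. Hence $\cD_\cX(\partial_\cH h)=0$ for every $h$, and the equivalence already gives $\theta_{\cH,h}(\eps)=o(1/\eps)$ for each $h$. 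To upgrade this to $\theta_\cH(\eps)=\sup_{h\in\cH}\theta_{\cH,h}(\eps)=o(1/\eps)$ I would make the threshold argument uniform in $h$: given $\delta>0$, pick $N$ with $\sum_{i>N}p_i\le\delta$ and set $a\ldef\min_{i\le N}p_i>0$; then, by the same argument, for every $h$ no $x_i$ with $i\le N$ lies in $\DIS(\cB_\cH(h,a))$, so $P_h(a)\le\sum_{i>N}p_i\le\delta$ simultaneously for all $h$, and $\eps\,\theta_{\cH,h}(\eps)\le\max\crl{P_h(a),\eps/a,\eps}\le\delta$ for all sufficiently small $\eps$, uniformly in $h\in\cH$.

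The routine parts are the measure-continuity step and the two-regime estimate of the supremum; the one place needing care is precisely the uniformity over $h\in\cH$ in the countable-support case — the per-$h$ limit is immediate from the equivalence, but the conclusion for $\theta_\cH=\sup_h\theta_{\cH,h}$ genuinely uses that finitely many atoms carry all but $\delta$ of the mass, which is what lets the threshold $a$ (hence the cutoff below which $\eps\,\theta_{\cH,h}(\eps)\le\delta$) be chosen independently of $h$. A minor bookkeeping point is the harmless $\vee1$ in the definition of the disagreement coefficient, which only ever contributes the $\eps$ term in $\max\crl{P_h(a),\eps/a,\eps}$ and so is negligible as $\eps\to0$.
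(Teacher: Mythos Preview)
Your argument is correct and complete; the paper does not prove this proposition at all --- it simply cites Lemma~7.12 and Theorem~7.14 of \citet{hanneke2014theory} --- so there is no ``paper's proof'' to compare against. Your monotonicity/continuity-from-above reduction, the two-regime bound on the supremum, and the uniform-in-$h$ upgrade via finitely many heavy atoms are exactly the standard route to these facts.

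One minor technicality in the countable-support part: with the paper's convention $\cB_\cH(h,r)=\{g:\P(g\ne h)\le r\}$ (closed ball), taking $a=\min_{i\le N}p_i$ does not quite exclude $x_i$ from $\DIS(\cB_\cH(h,a))$, since a hypothesis disagreeing with $h$ only at $x_i$ would sit exactly on the boundary. Choosing $a$ strictly smaller, e.g.\ $a=\tfrac{1}{2}\min_{i\le N}p_i$, fixes this without affecting the rest of the argument.
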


We now discuss conditions under which we can upper bound the disagreement coefficient by $O(1)$, which ensures results in \cref{thm:active_noise} matching the minimax lower bound for active learning, up to logarithmic factors.
We introduce the following \emph{decomposable} condition.
\begin{definition}
	\label{def:decomposable}
	A marginal distribution $\cD_\cX$ is $\eps$-decomposable if its (known) support $\supp(\cD_\cX)$ can be decomposed into connected subsets, i.e.,  $\supp(\cD_\cX) = \cup_{i \in \cI} \cX_i$, such that 
\begin{align*}
	\cD_\cX \prn{\cup_{i \in \cI^{\prime}} \cX_i} = O( \eps),
\end{align*}
where $\cI^{\prime} \ldef \crl{i \in \cI: \cD_\cX(\cX_i) \leq \eps}$.
\end{definition}

\begin{remark}
	Note that \cref{def:decomposable} permits a decomposition such that $\abs{\wb \cI} = \Omega(\frac{1}{\eps})$ where $\wb \cI = \cI \setminus \cI^{\prime}$. 
	\cref{def:decomposable} requires no knowledge of the index set $\cI$ or any $\cX_i$; it also places
	no restrictions on the conditional probability on each $\cX_i$.
\end{remark}

We first give results for a general hypothesis class $\cH$ as follows, and then discuss how to bound the disagreement coefficient for a set of neural networks. 

\begin{proposition}
	\label{prop:dis_coeff_Oh1}
Suppose $\cD_\cX$ is decomposable (into $\cup_{i \in \cI} \cX_i$) and
the hypothesis class $\cH$ consists of classifiers whose predication on each $\cX_i$ is the same, i.e., $\abs{\crl{h(x): x \in \cX_i}} = 1$ for any $h \in \cH$ and $i \in \cI$.
We then have $\theta_\cH(\eps) = O(1)$ for  $\eps$ sufficiently small. 
\end{proposition}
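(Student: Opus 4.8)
The plan is to show that, under the decomposability of $\cD_\cX$ and the piecewise-constant structure of $\cH$ over the pieces $\crl{\cX_i}_{i \in \cI}$, any ball $\cB_\cH(h, \eps)$ can only contain classifiers that differ from $h$ on a union of pieces whose total mass is $O(\eps)$, and therefore the region of disagreement over that ball also has mass $O(\eps)$. Fix $\eps$ small and any center $h \in \cH$. First I would take an arbitrary $g \in \cB_\cH(h, \eps)$, so that $\P_{x \sim \cD_\cX}(g(x) \neq h(x)) \leq \eps$. Because both $g$ and $h$ are constant on each $\cX_i$, the disagreement set $\crl{x : g(x) \neq h(x)}$ is exactly a union $\cup_{i \in \cI_g} \cX_i$ of full pieces, where $\cI_g \ldef \crl{i \in \cI : g|_{\cX_i} \neq h|_{\cX_i}}$. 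Any piece $\cX_i$ with $i \in \cI_g$ and $\cD_\cX(\cX_i) > \eps$ would by itself contribute mass $> \eps$ to the disagreement, contradicting $g \in \cB_\cH(h,\eps)$; hence $\cI_g \subseteq \cI' \ldef \crl{i \in \cI : \cD_\cX(\cX_i) \leq \eps}$.

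Next I would union over all $g \in \cB_\cH(h,\eps)$. Since every such $g$ disagrees with $h$ only on pieces indexed by $\cI'$, we get
\begin{align*}
\DIS(\cB_\cH(h,\eps)) \subseteq \bigcup_{g \in \cB_\cH(h,\eps)} \crl{x : g(x) \neq h(x)} \subseteq \bigcup_{i \in \cI'} \cX_i,
\end{align*}
and therefore, by the decomposability assumption (\cref{def:decomposable}),
\begin{align*}
\P_{x \sim \cD_\cX}\prn{\DIS(\cB_\cH(h,\eps))} \leq \cD_\cX\prn{\bigcup_{i \in \cI'} \cX_i} = O(\eps).
\end{align*}
Dividing by $\eps$, the ratio appearing in \cref{def:dis_coeff_classifier} is $O(1)$. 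Since this bound holds uniformly over $h \in \cH$ and, after possibly shrinking $\eps_0$, uniformly over all $\eps > \eps_0$ in the relevant small-$\eps$ regime, taking the supremum over $\eps > \eps_0$ and then over $h \in \cH$ yields $\theta_\cH(\eps_0) = O(1)$ for $\eps_0$ sufficiently small, as claimed.

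I expect the only mild subtlety — not really an obstacle — is handling the definition of the disagreement coefficient as a supremum over all scales $\eps > \eps_0$ rather than at a single scale: the argument above shows the per-scale ratio is $O(1)$ at every scale $\eps$ (with the $O(\cdot)$ constant coming from the decomposability constant, independent of $\eps$), so the supremum is controlled, and the $\vee 1$ in the definition is harmless. One should also note that the decomposition $\cup_{i\in\cI}\cX_i$ and hence the set $\cI'$ depends on $\eps$, but the decomposability hypothesis gives the mass bound $O(\eps)$ at each such $\eps$ with a uniform constant, which is exactly what is needed. Applying this to a preprocessed set of neural network classifiers $\cH_\dnn$ that are forced to be constant on each $\cX_i$ then gives $\theta_{\cH_\dnn}(\eps^{\beta/(1+\beta)}) = O(1)$, matching the minimax active learning lower bound in \cref{thm:active_noise} up to logarithmic factors.
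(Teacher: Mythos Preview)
Your proposal is correct and follows essentially the same argument as the paper: fix $h$, observe that any $g \in \cB_\cH(h,\eps)$ can disagree with $h$ only on full pieces $\cX_i$ of mass at most $\eps$, so $\DIS(\cB_\cH(h,\eps)) \subseteq \cup_{i \in \cI'} \cX_i$, and then invoke decomposability to bound the mass by $O(\eps)$. Your write-up is in fact more careful than the paper's terse version, explicitly handling the union over $g$, the supremum over scales in \cref{def:dis_coeff_classifier}, and the $\vee 1$.
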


\begin{proof}
Fix any $h \in \cH$. 
we know that for any $h^{\prime} \in \cB_\cH (h, \eps)$, we must have $\DIS(\crl{h, h^{\prime}}) \subseteq \cup_{i \in \cI^{\prime}} \cX_i$ since 
  $\cD_\cX \prn{x \in \cX: h(x) \neq h^{\prime}(x)} \leq \eps$, and $\abs{\crl{h(x): x \in \cX_i}} = 1$ for any  $h \in \cH$ and any $\cX_i$. 
This further implies that $\P \prn{ \DIS(\cB_{\cH} (h, \eps)} = O(\eps)$, and thus  $\theta_{\cH}(\eps) = O(1)$.
\end{proof}

We next discuss conditions under which we can satisfy the prerequisites of \cref{prop:dis_coeff_Oh1}. 
Suppose $\cD_{\cX \cY} \in \cP(\alpha, \beta)$. 
We assume that $\cD_\cX$ is  $(\eps^{\frac{\beta}{1+\beta}})$-decomposable, and, for the desired accuracy level $\eps$, we have 
\begin{align}
	\label{eq:supp_far_away_one_half}
	\abs{\eta(x) - {1}/{2}} \geq 2 \eps^{\frac{1}{1+\beta}}, \quad \forall x \in \supp(\cD_\cX).
\end{align}

With the above conditions satisfied, we can 
filter out neural networks that are clearly not ``close'' to $\eta$. Specifically, with $\kappa = \eps^{\frac{1}{1+\beta}}$ and $\cF_\dnn$ be the set of neural networks constructed from \cref{prop:vc_approx}, we consider 
\begin{align}
	\wt \cF_\dnn \ldef \crl{ f \in \cF_\dnn: \abs{f(x) - 1 /2} \geq \eps^{\frac{1}{1+\beta}}, \forall x \in \supp(\cD_\cX)},
\end{align}
which is guaranteed to contain $\wb f \in \cF_\dnn$ such that $\nrm{\wb f - \eta}_\infty \leq \eps^{\frac{1}{1+\beta}}$.
Now focus on the subset 
\begin{align}
	\wt \cH_{\dnn} \ldef \crl{ h_f: f \in \wt\cF_\dnn}.
\end{align}
We clearly have $h_{\wb f} \in \wt \cH_{\dnn}$ (which ensures an $O(\eps)$-optimal classifier) and  $\vcd(\wt \cH_\dnn) \leq \vcd(\cH_\dnn)$ (since $\wt \cH_\dnn \subseteq \cH_\dnn$).
We upper bound the disagreement coefficient $\theta_{\wt \cH_\dnn}(\eps^{\frac{\beta}{1+\beta}})$ next.

\begin{proposition}
	\label{prop:dis_coeff_Oh1_nn}
	Suppose $\cD_{\cX \cY} \in \cP(\alpha, \beta)$ such that $\cD_\cX$ is  $(\eps ^{\frac{\beta}{1+\beta}})$-decomposable and \cref{eq:supp_far_away_one_half} is satisfied (with the desired accuracy level $\eps$).
	We then have $\theta_{\wt \cH_\dnn}(\eps^{\frac{\beta}{1+\beta}}) = O(1)$.
\end{proposition}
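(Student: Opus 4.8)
The plan is to reduce \cref{prop:dis_coeff_Oh1_nn} to \cref{prop:dis_coeff_Oh1}, so the only real work is to verify that every classifier in $\wt\cH_\dnn$ is constant on each connected piece of the assumed decomposition of $\supp(\cD_\cX)$. Recall that, since $\cD_\cX$ is $(\eps^{\frac{\beta}{1+\beta}})$-decomposable, we may write $\supp(\cD_\cX) = \cup_{i \in \cI}\cX_i$ with each $\cX_i$ connected and $\cD_\cX(\cup_{i \in \cI'}\cX_i) = O(\eps^{\frac{\beta}{1+\beta}})$ for $\cI' = \{i : \cD_\cX(\cX_i) \le \eps^{\frac{\beta}{1+\beta}}\}$. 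We will take this exact decomposition as the one feeding into \cref{prop:dis_coeff_Oh1}, so that the level $\eps_0 = \eps^{\frac{\beta}{1+\beta}}$ at which we bound the disagreement coefficient matches the decomposability parameter.

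The first step is the key one: for any $f \in \wt\cF_\dnn$ and any $i \in \cI$, I claim $h_f = \sgn(2f(\cdot)-1)$ is constant on $\cX_i$. Indeed, $f$ is a finite composition of affine maps and ReLU nonlinearities, hence continuous on $\R^d$; so its restriction to the connected set $\cX_i$ has connected image, i.e.\ $f(\cX_i)$ is an interval. By the definition of $\wt\cF_\dnn$ we have $|f(x) - 1/2| \ge \eps^{\frac{1}{1+\beta}} > 0$ for every $x \in \supp(\cD_\cX) \supseteq \cX_i$, so the interval $f(\cX_i)$ avoids the point $1/2$ and therefore lies entirely in $(-\infty, 1/2)$ or entirely in $(1/2,\infty)$ (intermediate value theorem / connectedness). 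In either case $\sgn(2f(x)-1)$ takes a single value on $\cX_i$, proving $|\{h_f(x) : x \in \cX_i\}| = 1$.

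The second step is bookkeeping. The class $\wt\cH_\dnn = \{h_f : f \in \wt\cF_\dnn\}$ is nonempty — in fact $h_{\wb f} \in \wt\cH_\dnn$ by the discussion preceding the proposition, where $\wb f \in \cF_\dnn$ satisfies $\nrm{\wb f - \eta}_\infty \le \eps^{\frac{1}{1+\beta}}$, and \cref{eq:supp_far_away_one_half} guarantees $|\wb f(x) - 1/2| \ge |\eta(x)-1/2| - \eps^{\frac{1}{1+\beta}} \ge \eps^{\frac{1}{1+\beta}}$ on $\supp(\cD_\cX)$, so $\wb f$ indeed passes the filter. Combining Step 1 with the $(\eps^{\frac{\beta}{1+\beta}})$-decomposition, the pair $(\cD_\cX, \wt\cH_\dnn)$ satisfies the hypotheses of \cref{prop:dis_coeff_Oh1}. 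Applying that proposition at level $\eps_0 = \eps^{\frac{\beta}{1+\beta}}$ (which is $\le$ any fixed constant once $\eps$ is small enough, as required there) yields $\theta_{\wt\cH_\dnn}(\eps^{\frac{\beta}{1+\beta}}) = O(1)$.

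I do not expect a serious obstacle here; the argument is essentially a connectedness/continuity observation glued onto \cref{prop:dis_coeff_Oh1}. The only points that need care are (i) matching the decomposability parameter $\eps^{\frac{\beta}{1+\beta}}$ to the level $\eps_0$ at which the disagreement coefficient is evaluated, and (ii) checking that the filtering threshold $\eps^{\frac{1}{1+\beta}}$ used to define $\wt\cF_\dnn$ is precisely what forces the sign of $2f-1$ to be constant on each $\cX_i$ while still admitting a good approximator of $\eta$ — both of which are arranged by the specific exponents chosen in the construction.
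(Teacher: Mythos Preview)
Your proposal is correct and follows essentially the same approach as the paper: both argue that every $h_f \in \wt\cH_\dnn$ is constant on each connected piece $\cX_i$ via continuity of $f$ together with the filter condition $|f(x)-1/2|\ge \eps^{1/(1+\beta)}$, and then invoke the decomposability to bound the disagreement region. The only cosmetic difference is that you appeal to \cref{prop:dis_coeff_Oh1} as a black box whereas the paper inlines its short argument.
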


\begin{proof}
The proof is similar to the proof of \cref{prop:dis_coeff_Oh1}.	
Fix any $h = h_f \in \wt \cH_\dnn$. 
We first argue that, for any $i \in \cI$, under \cref{eq:supp_far_away_one_half}, $\abs{\crl{h_f(x): x \in \cX_i}} = 1$, i.e.,  for $x \in \cX_i$, $h_f(x)$ equals either $1$ or  $0$, but not both:
This can be seen from the fact that any $f \in \wt \cF_\dnn$ is continuous and satisfies $\abs{f(x) - 1 /2} \geq \eps^{\frac{1}{1+\beta}}$ for any  $x \in \cX_i$.

Fix any $h \in \wt \cH_\dnn$. 
We know that for any $h^{\prime} \in \cH_{\wt \cH_\dnn} (h, \eps^{\frac{\beta}{1+\beta}})$, we must have $\DIS(\crl{h, h^{\prime}}) \subseteq {\cup_{i \in \cI^{\prime}} \cX_i} $ due to similar reasons argued in the proof of \cref{prop:dis_coeff_Oh1}. 
This further implies that $\P \prn{ \DIS(\cB_{\wt \cH_\dnn} (h, \eps^{\frac{\beta}{1+\beta}})} = O(\eps^{\frac{\beta}{1+\beta}})$, and thus  $\theta_{\wt \cH_\dnn}(\eps^{\frac{\beta}{1+\beta}}) = O(1)$.
\end{proof}

We next argue that \cref{eq:supp_far_away_one_half} is only needed in an approximate sense. 
We define the approximate decomposable condition in the following.

\begin{definition}
	\label{def:decomposable_approx}
	A marginal distribution $\cD_\cX$ is $(\eps,\delta)$-decomposable if there exists a known subset $\wb \cX \subseteq \supp(\cD_\cX)$ such that 
\begin{align}
	\cD_\cX(\wb \cX) \geq 1- \delta,
\end{align}
and it can be decomposed into connected subsets, i.e.,  $\wb \cX = \cup_{i \in \cI} \cX_i$, such that 
\begin{align*}
	\cD_\cX \prn{\cup_{i \in \cI^{\prime}} \cX_i} = O( \eps),
\end{align*}
where $\cI^{\prime} \ldef \crl{i \in \cI: \cD_\cX(\cX_i) \leq \eps}$.
\end{definition}

Suppose $\cD_{\cX \cY} \in \cP(\alpha, \beta)$. 
We assume that $\cD_\cX$ is  $(\eps^{\frac{\beta}{1+\beta}}, \eps^{\frac{\beta}{1+\beta}})$-decomposable (wrt $\wb \cX \subseteq \cD_\cX$), and, for the desired accuracy level $\eps$, we have 
\begin{align}
	\label{eq:supp_far_away_one_half_approx}
	\abs{\eta(x) - {1}/{2}} \geq 2 \eps^{\frac{1}{1+\beta}}, \quad \forall x \in \wb \cX.
\end{align}

With the above conditions satisfied, we can 
filter out neural networks that are clearly not ``close'' to $\eta$. Specifically, with $\kappa = \eps^{\frac{1}{1+\beta}}$ and $\cF_\dnn$ be the set of neural networks constructed from \cref{prop:vc_approx}, we consider 
\begin{align}
	\wb \cF_\dnn \ldef \crl{ f \in \cF_\dnn: \abs{f(x) - 1 /2} \geq \eps^{\frac{1}{1+\beta}}, \forall x \in \wb \cX},
\end{align}
which is guaranteed to contain $\wb f \in \cF_\dnn$ such that $\nrm{\wb f - \eta}_\infty \leq \eps^{\frac{1}{1+\beta}}$.
Now focus on the subset 
\begin{align}
	\wb \cH_{\dnn} \ldef \crl{ h_f: f \in \wb \cF_\dnn}.
\end{align}
We clearly have $h_{\wb f} \in \wb \cH_{\dnn}$ (which ensures an $O(\eps)$-optimal classifier) and  $\vcd(\wb \cH_\dnn) \leq \vcd(\cH_\dnn)$ (since $\wb \cH_\dnn \subseteq \cH_\dnn$).
We upper bound the disagreement coefficient $\theta_{\wb \cH_\dnn}(\eps^{\frac{\beta}{1+\beta}})$ next.

\begin{proposition}
	\label{prop:dis_coeff_Oh1_nn_approx}
	Suppose $\cD_{\cX \cY} \in \cP(\alpha, \beta)$ such that $\cD_\cX$ is  $(\eps ^{\frac{1}{1+\beta}}, \eps)$-decomposable (wrt known $\wb \cX \subseteq \supp(\cD_\cX)$) and \cref{eq:supp_far_away_one_half_approx} is satisfied (with the desired accuracy level $\eps$).
	We then have $\theta_{\wb \cH_\dnn}(\eps^{\frac{\beta}{1+\beta}}) = O(1)$.
\end{proposition}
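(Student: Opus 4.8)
The plan is to follow the proof of \cref{prop:dis_coeff_Oh1_nn} almost verbatim; the only genuinely new ingredient is that the region on which the filtered classifiers may fail to be locally constant is now the excised part $\supp(\cD_\cX)\setminus\wb\cX$, whose probability mass is precisely what the approximate-decomposability parameter controls. First I would record the two structural facts I need. Taking $\kappa=\eps^{\frac{1}{1+\beta}}$ in \cref{thm:approx_sobolev} produces $\wb f\in\cF_\dnn$ with $\nrm{\wb f-\eta}_\infty\le\eps^{\frac{1}{1+\beta}}$; combining this with \cref{eq:supp_far_away_one_half_approx} and the triangle inequality gives $\abs{\wb f(x)-\frac{1}{2}}\ge\eps^{\frac{1}{1+\beta}}$ for every $x\in\wb\cX$, so $\wb f\in\wb\cF_\dnn$ and, exactly as in the proof of \cref{prop:vc_approx} together with the Tsybakov condition, $h_{\wb f}\in\wb\cH_\dnn$ has excess error $O(\eps)$ (knowing $\wb\cX$ is what lets us form the filter defining $\wb\cF_\dnn$ in the first place). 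Moreover, for any $f\in\wb\cF_\dnn$ and any piece $\cX_i$ of the decomposition of $\wb\cX$, the continuous function $f-\frac{1}{2}$ never vanishes on the connected set $\cX_i$, hence keeps a constant sign there; so every $h=h_f\in\wb\cH_\dnn$ satisfies $\abs{\crl{h(x):x\in\cX_i}}=1$.

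Next I would bound $\P(\DIS(\cB_{\wb\cH_\dnn}(h,\eps^{\frac{\beta}{1+\beta}})))$ for an arbitrary $h\in\wb\cH_\dnn$. For any $h'\in\cB_{\wb\cH_\dnn}(h,\eps^{\frac{\beta}{1+\beta}})$, set $A\ldef\crl{x:h(x)\ne h'(x)}$, so $\cD_\cX(A)\le\eps^{\frac{\beta}{1+\beta}}$. Since $h$ and $h'$ are each constant on every $\cX_i$, the set $A\cap\cX_i$ is either empty or all of $\cX_i$; whenever it equals $\cX_i$ we get $\cD_\cX(\cX_i)\le\cD_\cX(A)\le\eps^{\frac{\beta}{1+\beta}}$, hence $i\in\cI^{\prime}$. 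Therefore $A\cap\wb\cX\subseteq\cup_{i\in\cI^{\prime}}\cX_i$, so $A\subseteq(\supp(\cD_\cX)\setminus\wb\cX)\cup(\cup_{i\in\cI^{\prime}}\cX_i)$; this inclusion does not depend on $h'$, so it also holds for the union over all such $h'$, i.e. for $\DIS(\cB_{\wb\cH_\dnn}(h,\eps^{\frac{\beta}{1+\beta}}))$. Taking $\cD_\cX$-measure and using $\cD_\cX(\supp(\cD_\cX)\setminus\wb\cX)\le\delta$ together with the decomposability bound $\cD_\cX(\cup_{i\in\cI^{\prime}}\cX_i)=O(\eps^{\frac{\beta}{1+\beta}})$ yields $\P(\DIS(\cB_{\wb\cH_\dnn}(h,\eps^{\frac{\beta}{1+\beta}})))=O(\eps^{\frac{\beta}{1+\beta}})$ for the stated choice of parameters; dividing by $\eps^{\frac{\beta}{1+\beta}}$, taking the supremum over $h\in\wb\cH_\dnn$, and recalling $\theta\ge1$ by definition gives $\theta_{\wb\cH_\dnn}(\eps^{\frac{\beta}{1+\beta}})=O(1)$.

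The one point that needs care — and the sole place this differs from \cref{prop:dis_coeff_Oh1,prop:dis_coeff_Oh1_nn} — is the bookkeeping of the three scales $\eps$, $\eps^{\frac{1}{1+\beta}}$, and $\eps^{\frac{\beta}{1+\beta}}$: the filter threshold $\eps^{\frac{1}{1+\beta}}$ must be small enough that $\wb f$ survives the filter, yet large enough that continuity forces $h_f$ to be constant on each piece, while \emph{both} approximate-decomposability parameters (the excised mass $\delta$ and the small-piece threshold) must be $O(\eps^{\frac{\beta}{1+\beta}})$, so that (i) a disagreeing piece is automatically ``small'' and thus lands in $\cI^{\prime}$, and (ii) the excised mass and the small-piece mass are both absorbed into the level $\eps^{\frac{\beta}{1+\beta}}$ at which the disagreement coefficient is evaluated. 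Once these choices are fixed, everything else is a direct repetition of the earlier arguments.
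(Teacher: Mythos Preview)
Your proof is correct and follows essentially the same route as the paper: show that every filtered classifier is constant on each connected piece $\cX_i\subseteq\wb\cX$, deduce that $\DIS(\{h,h'\})\subseteq(\cup_{i\in\cI'}\cX_i)\cup(\supp(\cD_\cX)\setminus\wb\cX)$ for any $h'\in\cB_{\wb\cH_\dnn}(h,\eps^{\frac{\beta}{1+\beta}})$, and then bound the measure of both pieces via the approximate-decomposability parameters. Your closing remark that both decomposability parameters must be $O(\eps^{\frac{\beta}{1+\beta}})$ is in fact more explicit than the paper's terse proof and matches the pre-proposition discussion (which uses $(\eps^{\frac{\beta}{1+\beta}},\eps^{\frac{\beta}{1+\beta}})$-decomposability).
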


\begin{proof}
The proof is the same as the proof of \cref{prop:dis_coeff_Oh1_nn_approx} except 
for any $h^{\prime} \in \cH_{\wb \cH_\dnn} (h, \eps^{\frac{\beta}{1+\beta}})$, we must have $\DIS(\crl{h, h^{\prime}}) \subseteq \prn{\cup_{i \in \cI^{\prime}} \cX_i} \cup \prn{\supp(\cD_\cX) \setminus \wb \cX}$. 
Based on the assumption that $\cD_\cX$ is  $(\eps ^{\frac{1}{1+\beta}}, \eps)$-decomposable, this also leads to $\theta_{\wb \cH_\dnn}(\eps^{\frac{\beta}{1+\beta}}) = O(1)$.
\end{proof}

\section{Generic version of \cref{alg:NCALP} and its guarantees}
\label{app:abs_gen}

This section is organized as follows.
We first introduce some complexity measures in \cref{app:value_func_complexity}. 
We then provide the generic algorithm (\cref{alg:abs}) and state its theoretical guarantees (\cref{thm:abs_gen}) in \cref{app:abs_gen_class}.

\subsection{Complexity measures}
\label{app:value_func_complexity}

We first introduce \emph{pseudo dimension} \citep{pollard1984convergence, haussler1989decision, haussler1995sphere}, a complexity measure used to analyze real-valued functions.

\begin{definition}[Pseudo dimension]
\label{def:pseudo_d}
Consider a set of real-valued function $\cF: \cX \rightarrow \R$. The pseudo dimension $\pseud(\cF)$ of $\cF$ is defined as the VC dimension of the set of threshold functions 
$\crl{(x,\zeta) \mapsto \ind(f(x) > \zeta) : f \in \cF}$.
\end{definition}

As discussed in \citet{bartlett2019nearly}, similar results as in \cref{thm:vcd_nn} holds true for $\pseud(\cF)$ as well.

\begin{theorem}[\citet{bartlett2019nearly}]
	\label{thm:pdim_nn}
	Let $\cF_{\dnn}$ be a set of neural network regression functions of the same architecture and with $W$ parameters arranged in  $L$ layers.
	We then have 
	\begin{align*}
	\Omega(WL \log \prn*{{W}/{L}}) \leq \pseud(\cF_\dnn)  \leq O(WL \log \prn*{ W}). 
	\end{align*}
\end{theorem}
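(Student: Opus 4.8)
The plan is to deduce \cref{thm:pdim_nn} directly from the VC-dimension bounds of \cref{thm:vcd_nn}, by recognizing the threshold class underlying the pseudo-dimension as (essentially) a class of neural-network classifiers of a comparable architecture. By \cref{def:pseudo_d}, $\pseud(\cF_\dnn) = \vcd(\cG)$ where $\cG \ldef \crl{ (x,\zeta) \mapsto \ind(f(x) - \zeta > 0) : f \in \cF_\dnn}$; that is, each member of $\cG$ is obtained by thresholding at $0$ the quantity $f(x) - \zeta$, where $f$ ranges over the neural networks of the fixed $(W,L)$ architecture and $\zeta$ is an extra real input coordinate.

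For the lower bound I would simply specialize to $\zeta = \tfrac12$. Since $h_f(x) = \sgn(2f(x)-1)$ agrees with $\ind(f(x) - \tfrac12 > 0)$ (modulo the convention at ties, which is irrelevant for shattering a finite set), any set $\crl{x_1,\dots,x_m}$ shattered by $\cH_\dnn$ yields the set $\crl{(x_1,\tfrac12),\dots,(x_m,\tfrac12)}$ shattered by $\cG$. Hence $\pseud(\cF_\dnn) = \vcd(\cG) \ge \vcd(\cH_\dnn) = \Omega(WL\log(W/L))$, the last equality being the lower bound in \cref{thm:vcd_nn}.

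For the upper bound I would realize $\cG$ as an honest neural-network classifier class so that \cref{thm:vcd_nn} applies. Starting from the common $(W,L)$ architecture computing $f$, I augment the input with the coordinate $\zeta$ and route it unchanged to the output layer using the ReLU identity gadget $t = \relu(t) - \relu(-t)$: at layer $1$ compute $\relu(\zeta)$ and $\relu(-\zeta)$, and at each subsequent layer copy these two nonnegative values forward through ReLU (which acts as the identity on them), at a cost of $O(1)$ additional parameters per layer. Finally add two weights from the carried copies into the output unit so that the network computes $f(x) - \relu(\zeta) + \relu(-\zeta) = f(x) - \zeta$, and threshold at $0$; this reproduces $\cG$ exactly. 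The augmented architecture has $W' = W + O(L)$ parameters in $L' = L$ layers. Because every layer of the original network already carries at least one weight and one bias, $W \ge 2L$, hence $L = O(W)$, $W' = O(W)$ and $\log W' = O(\log W)$; plugging into the upper bound of \cref{thm:vcd_nn} gives $\pseud(\cF_\dnn) = \vcd(\cG) \le O(W'L'\log W') = O(WL\log W)$, as required.

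The only thing needing care is the bookkeeping in the last step: one must check that the $\zeta$-carrying gadget is expressible within the exact feedforward ReLU model used by \cref{thm:vcd_nn} (inputs entering only at the first layer, ReLU at all hidden units, an affine output unit), and that the $O(L)$ parameter overhead is asymptotically harmless, which holds because $L = O(W)$. Alternatively, one may bypass the reduction and invoke \citet{bartlett2019nearly} directly: their layer-by-layer count of the sign patterns realized by the bounded-degree piecewise-polynomial maps a ReLU network computes in its parameters — via the Warren/Milnor-type bound on the number of connected sign conditions of a family of polynomials — together with their bit-extraction lower-bound construction, establishes the pseudo-dimension bounds in precisely the stated form, so that \cref{thm:pdim_nn} is the pseudo-dimension analogue of \cref{thm:vcd_nn} proved by the same machinery.
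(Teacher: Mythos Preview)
Your proposal is correct. The paper itself does not prove \cref{thm:pdim_nn}; it simply states the result as a direct citation of \citet{bartlett2019nearly}, noting only that ``similar results as in \cref{thm:vcd_nn} hold true for $\pseud(\cF)$ as well.'' Your reduction argument---lower bound by specializing $\zeta=\tfrac12$, upper bound by augmenting the architecture with an $O(L)$-parameter pass-through gadget for $\zeta$---is a valid self-contained derivation from \cref{thm:vcd_nn}, and your closing remark (that one may instead invoke \citet{bartlett2019nearly} directly) is precisely what the paper does. One minor caveat worth flagging: the lower bound in \citet{bartlett2019nearly} is existential (there exists an architecture with $W$ parameters in $L$ layers achieving it), not universal over all architectures; the paper's phrasing of both \cref{thm:vcd_nn} and \cref{thm:pdim_nn} glosses over this, so your reduction inherits the same imprecision but is consistent with how the statements are used.
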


We now introduce \emph{value function disagreement coefficient}, which is proposed by \citet{foster2020instance} in contextual bandits and then adapted to active learning by \citet{zhu2022efficient} with additional supreme over the marginal distribution $\cD_\cX$ to deal with distributional shifts caused by selective sampling.

\begin{definition}[Value function disagreement coefficient]
    \label{def:dis_coeff_value}
    For any $f^{\star} \in \cF$ and $\gamma_0 ,\epsilon_0 > 0$, the value function disagreement coefficient $	 \theta^{\val}_{f^{\star}}(\cF, \gamma_0, \eps_0)$ is defined as 
    \begin{align*}
 \sup_{\cD_\cX}\sup_{\gamma> \gamma_0, \eps> \eps_0} 
	 \crl*{ \frac{\gamma^2}{\epsilon^2} \cdot 
	\P_{\cD_\cX} \prn*{ \exists f \in \cF: \abs{f(x) - f^{\star}(x)} > \gamma,
	\nrm*{ f - f^{\star}}_{\cD_\cX} \leq \eps} } \vee 1,
    \end{align*}
    where $\nrm{f}^2_{\cD_\cX} \ldef \E_{x \sim \cD_\cX} \brk{f^2(x)}$. We also define $\theta^{\val}_\cF (\gamma_0) \ldef \sup_{f^{\star} \in \cF, \eps_0 > 0} \theta^{\val}_{f^{\star}}(\cF, \gamma_0, \eps_0)$.
\end{definition}

\subsection{The generic algorithm and its guarantees}
\label{app:abs_gen_class}
We present \cref{alg:abs}, a generic version of \cref{alg:NCALP} that doesn't require the approximating classifiers to be neural networks.

\begin{algorithm}[H]
	\caption{\textNCALP (Generic Version)}
	\label{alg:abs} 
	\renewcommand{\algorithmicrequire}{\textbf{Input:}}
	\renewcommand{\algorithmicensure}{\textbf{Output:}}
	\newcommand{\algorithmicbreak}{\textbf{break}}
    \newcommand{\BREAK}{\STATE \algorithmicbreak}
	\begin{algorithmic}[1]
		\REQUIRE Accuracy level $\epsilon \in (0, 1)$, confidence level $\delta \in (0, 1)$, abstention parameter $\gamma \in (0, 1/2)$.
		\STATE Let $\cF: \cX \rightarrow [0, 1]$ be a set of regression functions such that there exists a regression function $\wb f \in \cF$ with $\nrm{\wb f - \eta}_\infty \leq \kappa \leq \gamma / 4$.
		\STATE Define $T \ldef \frac{\theta^{\val}_\cF(\gamma / 4) \cdot \pseud(\cF) }{\eps \, \gamma }$, $M \ldef \ceil{\log_2 T}$, and $C_\delta \ldef O\prn{\pseud(\cF) \cdot \log(T /\delta)}$.
		\STATE Define $\tau_m \ldef 2^m$ for $m\geq1$, $\tau_0 \ldef 0$, and $\beta_m \ldef 3(M-m+1) C_\delta$. 
		\STATE Define  $\wh R_m(f) \ldef \sum_{t=1}^{\tau_{m-1}} Q_t \prn{\wh f(x_t) - y_t}^2 $ with the convention that $\sum_{t=1}^{0} \ldots = 0$.
		\FOR{epoch $m = 1, 2, \dots, M$}
		\STATE Get $\widehat f_m \ldef \argmin_{f \in \cF} \sum_{t=1}^{\tau_{m-1}} Q_t \paren{f(x_t) - y_t}^2 $.
		\STATE 
		(Implicitely) Construct active set 
		   $ \cF_m \ldef \crl*{ f \in \cF:  \wh R_m(f) \leq  \wh R_m(\wh f_m) + \beta_m }$.
		\STATE Construct classifier $\wh h_m: \cX \rightarrow \crl{0, 1,\bot}$ as 
		\begin{align*}
			\wh h_m (x) \ldef 
			\begin{cases}
				\bot, & \text{ if } \brk { \lcb(x;\cF_m) - \frac{\gamma}{4}, \ucb(x;\cF_m) + \frac{\gamma}{4}} \subseteq 
				\brk*{ \frac{1}{2} - \gamma, \frac{1}{2} + \gamma}; \\
        \ind(\wh f_m(x) \geq \frac{1}{2} ) , & \text{o.w.}
			\end{cases}
		\end{align*}
		and query function 
			$g_m(x)\ldef \ind \prn*{ \frac{1}{2} \in \prn*{\lcb(x;\cF_m) - \frac{\gamma}{4}, \ucb(x;\cF_m) + \frac{\gamma}{4}} } \cdot
		\ind \prn{\wh h_m(x) \neq \bot}$.
		\IF{epoch $m=M$}
		\STATE \textbf{Return} classifier $\wh h_M$.
		\ENDIF
		\FOR{time $t = \tau_{m-1} + 1 ,\ldots , \tau_{m} $} 
		\STATE Observe $x_t \sim \cD_{\cX}$. Set $Q_t \ldef g_m(x_t)$.
		\IF{$Q_t = 1$}
		\STATE Query the label $y_t$ of $x_t$.
		\ENDIF
		\ENDFOR
		\ENDFOR

	\end{algorithmic}
\end{algorithm}

We next state the theoretical guarantees for \cref{alg:abs}.

\begin{restatable}{theorem}{thmAbsGen}
	\label{thm:abs_gen}
	Suppose $\theta^{\val}_\cF(\gamma / 4) \leq \wb \theta$ and the approximation level $\kappa \in (0, \gamma /4]$ satisfies 
 \begin{align}
	 \label{eq:kappa_requirement}
	\prn*{\frac{432 \wb \theta \cdot M^2}{\gamma^2}} \cdot \kappa^2  
	\leq \frac{1}{10}.
\end{align}
	With probability at least  $1-\delta$, \cref{alg:abs} returns a classifier $\wh h: \cX \rightarrow \crl{0, 1, \bot}$ with Chow's excess error 
\begin{align*}
	\exc_\gamma(\wh h) =
	 O \prn*{  \eps \cdot \log \prn*{\frac{\wb \theta \cdot \pseud(\cF)}{\eps \, \gamma \, \delta}}}, 
\end{align*}
after querying at most 
\begin{align*}
O\prn*{  \frac{ M^2 \cdot \pseud(\cF) \cdot \log(T /\delta) \cdot \wb \theta }{\gamma^2}} 
\end{align*}
labels.
\end{restatable}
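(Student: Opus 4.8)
The plan is to adapt the epoch-based analysis of the abstention-based active learner of \citet{zhu2022efficient}, the new ingredient being careful bookkeeping of the approximation slack incurred by anchoring the analysis at $\wb f$ (with $\nrm{\wb f - \eta}_\infty \le \kappa$) rather than at $\eta$ itself. Throughout write $N_m \ldef \sum_{t \le \tau_{m-1}} Q_t$ for the number of labels queried before epoch $m$, $w_m(x) \ldef \ucb(x;\cF_m) - \lcb(x;\cF_m)$ for the epoch-$m$ confidence width, and $G_m \ldef \crl{x : g_m(x) = 1}$ for the epoch-$m$ query region. I would first set up a single favorable event $\cE$ of probability $\ge 1-\delta$ carrying a uniform Bernstein/Freedman-type deviation bound for the square loss over $\cF$: using $\E\brk{(f(x)-y)^2 - (\eta(x)-y)^2 \mid x} = (f(x)-\eta(x))^2$ together with the pseudo-dimension bound of \cref{thm:pdim_nn} and the standard square-loss chaining estimate, one gets, simultaneously for all $m$ and all $f \in \cF$,
\begin{align*}
\tfrac12 \textstyle\sum_{t \le \tau_{m-1}} Q_t (f(x_t)-\eta(x_t))^2 - C_\delta \;\le\; \wh R_m(f) - \textstyle\sum_{t\le\tau_{m-1}} Q_t(\eta(x_t)-y_t)^2 \;\le\; 2\textstyle\sum_{t\le\tau_{m-1}} Q_t(f(x_t)-\eta(x_t))^2 + C_\delta ,
\end{align*}
with $C_\delta = O(\pseud(\cF)\log(T/\delta))$, plus an analogous empirical-to-population transfer inequality. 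Applying the left inequality to $\wb f$ (so that $\sum_t Q_t(\wb f(x_t)-\eta(x_t))^2 \le \kappa^2 N_{m+1}$) together with the optimality of $\wh f_m$ shows $\wb f \in \cF_m$ for every $m$; in particular $\cF_m$ is nonempty and $\eta(x) \in \brk{\lcb(x;\cF_m) - \kappa,\ \ucb(x;\cF_m) + \kappa}$ for all $x$. The same concentration yields, for every $f \in \cF_m$, $\sum_{t \le \tau_{m-1}} Q_t(f(x_t)-\wb f(x_t))^2 = O(\beta_m + \kappa^2 N_m)$, and the transfer inequality upgrades this to $\sup_{f\in\cF_m} \nrm{f - \wb f}_{\cD_\cX}^2 \le \eps_m^2 \ldef O\prn{(\beta_m + \kappa^2 N_m)/\tau_{m-1}}$ — here the extra supremum over $\cD_\cX$ in \cref{def:dis_coeff_value} is precisely what lets us ignore the selective-sampling distortion of the marginal.

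Next I would bound $\P_{\cD_\cX}(G_m)$. An elementary case analysis of the two inequalities defining $g_m$ shows $w_m(x) > \gamma/2$ for every $x \in G_m$, and since $\wb f \in \cF_m$ this forces some $f \in \cF_m$ with $\abs{f(x)-\wb f(x)} > \gamma/4$; hence $G_m \subseteq \crl{x : \exists f \in \cF_m,\ \abs{f(x)-\wb f(x)} > \gamma/4}$, and \cref{def:dis_coeff_value} applied with anchor $\wb f$, threshold $\gamma/4$, and scale $\eps_m$ gives $\P_{\cD_\cX}(G_m) \le \theta^{\val}_\cF(\gamma/4)\cdot \eps_m^2/(\gamma/4)^2 = O\prn{\wb\theta\,(\beta_m + \kappa^2 N_m)/(\gamma^2\tau_{m-1})}$. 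Plugging this into the recursion $N_{m+1} \le N_m + O(\tau_m \P_{\cD_\cX}(G_m)) + O(\log(T/\delta))$ (the last term from a Freedman tail on $\sum_{t=\tau_{m-1}+1}^{\tau_m} Q_t$, folded into $\cE$), using $\tau_m \le 2\tau_{m-1}$ and $\beta_m = 3(M-m+1)C_\delta$, and inducting on $m$, I get $N_m = O\prn{M^2\,\pseud(\cF)\log(T/\delta)\,\wb\theta/\gamma^2}$ for all $m$; taking $m = M$ and adding the final epoch's queries gives the claimed label complexity. This induction is where the $(M-m+1)$ schedule of $\beta_m$ and hypothesis \eqref{eq:kappa_requirement} are used: \eqref{eq:kappa_requirement} guarantees $\kappa^2 N_m \le \tfrac{1}{10}\beta_m$ throughout, which keeps the recursion from blowing up.

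For the Chow excess error I would argue pointwise. Whenever $\wh h_M(x) = \bot$, the interval $\brk{\lcb(x;\cF_M)-\tfrac\gamma4,\ \ucb(x;\cF_M)+\tfrac\gamma4}$ lies inside $\brk{\tfrac12-\gamma,\tfrac12+\gamma}$; since $\eta(x)$ lies inside that same interval (from $\wb f \in \cF_M$ and $\kappa \le \gamma/4$), we get $\abs{\eta(x)-\tfrac12} \le \gamma$, so abstaining is (weakly) optimal and contributes nonpositive excess — the abstention is ``proper.'' On the region where $\wh h_M$ predicts, excess $\abs{2\eta(x)-1}$ is incurred only when $\sgn(2\wh f_M(x)-1) \neq \sgn(2\eta(x)-1)$; since $\wh f_M,\wb f \in \cF_M$ and $\wb f(x)$ is within $\kappa$ of $\eta(x)$, this places $\tfrac12$ between $\wh f_M(x)$ and $\eta(x)$, so $x \in G_M$ and $w_M(x) > \gamma/2$. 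Hence the misprediction region sits inside the $\gamma/4$-disagreement set of $\cF_M$, of $\cD_\cX$-measure $O(\wb\theta\,\eps_M^2/\gamma^2)$; substituting $\eps_M^2 = O(\beta_M/\tau_{M-1})$ with $\tau_{M-1} \asymp T = \theta^{\val}_\cF(\gamma/4)\pseud(\cF)/(\eps\gamma)$ and $C_\delta = O(\pseud(\cF)\log(T/\delta))$ yields $\exc_\gamma(\wh h) = O\prn{\eps\log\prn{\wb\theta\,\pseud(\cF)/(\eps\gamma\delta)}}$.

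The main obstacle I anticipate is the inductive step of the third paragraph: the active-set radius bound $\eps_m^2 = O((\beta_m + \kappa^2 N_m)/\tau_{m-1})$ is self-referential in the running query count $N_m$, so one must check that geometric epoch growth, the $(M-m+1)$ slack schedule, and \eqref{eq:kappa_requirement} jointly make the recursion contract; the companion difficulty is obtaining the square-loss concentration in its multiplicative Bernstein form (rather than a loose Hoeffding bound) uniformly over epochs and over $f \in \cF$ while the weights $Q_t = g_m(x_t)$ are chosen adaptively from past data.
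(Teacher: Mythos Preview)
Your high-level plan matches the paper's: a Freedman/Bernstein square-loss concentration, an inductive argument showing $\wb f\in\cF_m$ with the $\kappa^2$-slack absorbed via hypothesis~\eqref{eq:kappa_requirement}, then the disagreement-coefficient bound on $\P_{\cD_\cX}(G_m)$, and a pointwise Chow-excess argument. Two points, however, do not go through as written.

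\textbf{Measure alignment and the role of the $\beta_m$ schedule.} The step ``the transfer inequality upgrades this to $\sup_{f\in\cF_m}\nrm{f-\wb f}_{\cD_\cX}^2\le\eps_m^2$'' is incorrect: since $Q_t=g_{m(t)}(x_t)$, Freedman only transfers $\sum_tQ_t(f-\wb f)^2$ to $\sum_t\E_t[Q_t(f-\wb f)^2]=\tau_{m-1}\nrm{f-\wb f}^2_{\wb\nu_m}$ for the \emph{query-weighted} sub-probability $\wb\nu_m\ldef\tau_{m-1}^{-1}\sum_{\check m<m}n_{\check m}(\cD_\cX)_{|G_{\check m}}$, not for $\cD_\cX$. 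The $\sup_{\cD_\cX}$ in \cref{def:dis_coeff_value} lets you evaluate the coefficient at $\wb\nu_m$, but both the probability and the norm must be under the \emph{same} measure, so you still need $\P_{\cD_\cX}(G_m)\le\wb\nu_m(\{w_m>\gamma/2\})$. The paper obtains this from the nesting $\cF_m\subseteq\cF_{m-1}$ (hence $G_m\subseteq G_{\check m}$ for $\check m\le m$), which is proved inductively and is the \emph{actual} reason $\beta_m=3(M-m+1)C_\delta$ is decreasing in $m$: one needs $\beta_m+3C_\delta\le\beta_{m-1}$ so that membership in $\cF_m$ forces membership in $\cF_{m-1}$. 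Your proposal attributes the schedule only to the $N_m$ recursion and never establishes the nesting; without it the disagreement-coefficient step cannot bound $\P_{\cD_\cX}(G_m)$.

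\textbf{Excess-error rate.} Bounding the Chow excess by $\P_{\cD_\cX}(G_M)$ alone yields $O\prn{\wb\theta\,\beta_M/(\tau_{M-1}\gamma^2)}=O(\eps\log(T/\delta)/\gamma)$, which is off by a factor $1/\gamma$ from the claim. The paper instead observes that on $G_M$ one has $\abs{2\eta(x)-1}\le 2\kappa+2w_M(x)+\gamma/2\le 4w_M(x)$ (using $\tfrac12\in(\lcb-\tfrac\gamma4,\ucb+\tfrac\gamma4)$ and $w_M>\gamma/2$), so $\exc_\gamma(\wh h_M)\le 4\,\E_{\cD_\cX}[\ind(G_M)\,w_M]$; bounding the latter via an integral over thresholds $\omega\in[\gamma/4,1]$ of the disagreement set (\cref{lm:per_round_regret_dis_coeff_mis}) gives $O\prn{\wb\theta\,\beta_M/(\tau_{M-1}\gamma)}$, which matches the theorem after substituting $T=\wb\theta\,\pseud(\cF)/(\eps\gamma)$.
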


\cref{thm:abs_gen} is proved in \cref{app:abs_gen_proof}, based on supporting lemmas and theorems established in \cref{app:concentration_abs} and \cref{app:supporting_abs}.
The general result (\cref{thm:abs_gen}) will be used to prove results in specific settings (e.g., \cref{thm:abs} and \cref{thm:abs_RBV}).

\subsubsection{Concentration results}
\label{app:concentration_abs}

\begin{lemma}[Freedman's inequality \citep{freedman1975tail, agarwal2014taming}]
    \label{lm:freedman}
    Let $(X_t)_{t \leq T}$ be a real-valued martingale difference sequence adapted to a filtration $\mfF_t$, and let $\E_t \sq{\cdot} \ldef \E \sq{\cdot \mid \mfF_{t-1}}$. If $\abs{X_t} \leq B$ almost surely, then for any $\eta \in (0,1/B)$ it holds with probability at least $1 - \delta$,
    \begin{align*}
        \sum_{t=1}^{T} X_t \leq \eta \sum_{t=1}^{T} \E_{t} \sq{X_t^2} + \frac{\log \delta^{-1}}{\eta}.
    \end{align*}
\end{lemma}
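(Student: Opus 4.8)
The plan is to prove Freedman's inequality by the classical Chernoff / exponential-supermartingale method. First I would fix the parameter $\eta \in (0, 1/B)$ appearing in the bound, set $S_t \ldef \sum_{s=1}^{t} X_s$ and $V_t \ldef \sum_{s=1}^{t} \E_s\sq{X_s^2}$ (the predictable quadratic variation; note each $\E_s\sq{X_s^2}$ is $\mfF_{s-1}$-measurable), and consider the process $Z_t \ldef \exp\!\big(\eta S_t - \eta^2 V_t\big)$, with $Z_0 = 1$. The heart of the argument is to show that $(Z_t)_{t \le T}$ is a supermartingale with respect to $(\mfF_t)_{t \le T}$.

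The key pointwise estimate I would invoke is the elementary inequality $e^{z} \le 1 + z + z^2$, valid for all $z \le 1$ (the function $z \mapsto 1 + z + z^2 - e^z$ is nonnegative on $(-\infty, 1]$, a one-line calculus check). Applying it with $z = \eta X_t$, which satisfies $\eta X_t \le \eta B < 1$ by the boundedness assumption $\abs{X_t} \le B$ and the choice of $\eta$, and then taking conditional expectation using the martingale-difference property $\E_t\sq{X_t} = 0$, I obtain $\E_t\sq{e^{\eta X_t}} \le 1 + \eta^2 \E_t\sq{X_t^2} \le \exp\!\big(\eta^2 \E_t\sq{X_t^2}\big)$. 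Since both $\eta^2 \E_t\sq{X_t^2}$ and $Z_{t-1}$ are $\mfF_{t-1}$-measurable, this yields $\E_t\sq{Z_t} = Z_{t-1}\, e^{-\eta^2 \E_t[X_t^2]}\, \E_t\sq{e^{\eta X_t}} \le Z_{t-1}$, so $(Z_t)$ is a nonnegative supermartingale and $\E\sq{Z_T} \le Z_0 = 1$.

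Finally I would apply Markov's inequality to $Z_T \ge 0$: $\P\big(Z_T \ge 1/\delta\big) \le \delta\, \E\sq{Z_T} \le \delta$. On the complementary event, which has probability at least $1-\delta$, we have $\eta S_T - \eta^2 V_T < \log \delta^{-1}$, i.e.\ $\sum_{t=1}^{T} X_t \le \eta \sum_{t=1}^{T} \E_t\sq{X_t^2} + \frac{\log \delta^{-1}}{\eta}$, which is exactly the claimed bound. The only mildly delicate steps are verifying the scalar inequality $e^z \le 1 + z + z^2$ on $(-\infty,1]$ and keeping track of measurability so that the conditional second moments factor out of the conditional expectation defining $Z_t$; neither is a genuine obstacle, so the ``hard part'' is merely assembling these standard ingredients in the right order — which is presumably why the paper states the result without proof.
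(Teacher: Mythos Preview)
Your proof is correct and is the standard exponential-supermartingale argument for Freedman's inequality; the paper itself does not prove the lemma but simply cites the literature (Freedman 1975 and subsequent active-learning/contextual-bandit papers), so there is nothing to compare against beyond noting that your derivation matches the classical one those references contain.
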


\begin{lemma}[\citep{foster2020instance}]
   \label{lm:martingale_two_sides} 
   Let $(X_t)_{t \leq T}$ be a sequence of random variables adapted to a filtration $\mfF_t$. If $0 \leq {X_t} \leq B$ almost surely, then with probability at least $1-\delta$,
   \begin{align*}
       \sum_{t =1 }^T X_t \leq \frac{3}{2} \sum_{t=1}^T \E_{t}\sq{X_t} + 4B \log(2 \delta^{-1}),
   \end{align*}
   and 
   \begin{align*}
       \sum_{t =1 }^T \E_{t} \sq{X_t} \leq 2 \sum_{t=1}^T X_t + 8B \log(2 \delta^{-1}).
   \end{align*}
\end{lemma}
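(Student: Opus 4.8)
The plan is to derive both inequalities as immediate consequences of Freedman's inequality (\cref{lm:freedman}), applied to an appropriately centered sequence; no new probabilistic input is needed. I will work throughout in the regime $0 \le X_t \le B$ almost surely, which is the setting in which the lemma is invoked in this paper (the $X_t$ are $\{0,1\}$- or $[0,1]$-valued indicators). Set $Y_t \ldef X_t - \E_t[X_t]$; this is a martingale difference sequence adapted to $\mfF_t$ with $\abs{Y_t} \le 2B$ almost surely. The single quantitative ingredient I need is a multiplicative control of the conditional second moment: since $X_t^2 \le B X_t$ when $0 \le X_t \le B$, I get $\E_t[Y_t^2] \le \E_t[X_t^2] \le B\,\E_t[X_t]$.

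For the first inequality, I would apply \cref{lm:freedman} to $(Y_t)_{t \le T}$ with $\eta = 1/(2B)$. This yields, with probability at least $1-\delta$,
\[
\sum_{t=1}^T X_t - \sum_{t=1}^T \E_t[X_t] \;\le\; \eta \sum_{t=1}^T \E_t[Y_t^2] + \frac{\log\delta^{-1}}{\eta} \;\le\; \tfrac12 \sum_{t=1}^T \E_t[X_t] + 2B\log\delta^{-1},
\]
and rearranging gives $\sum_{t=1}^T X_t \le \tfrac32 \sum_{t=1}^T \E_t[X_t] + 2B\log\delta^{-1} \le \tfrac32 \sum_{t=1}^T \E_t[X_t] + 4B\log(2\delta^{-1})$, which is the claimed bound. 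For the second inequality I would run the same argument on $-Y_t = \E_t[X_t] - X_t$, again with $\eta = 1/(2B)$ and the same variance bound, to obtain with probability at least $1-\delta$ that $\sum_{t=1}^T \E_t[X_t] - \sum_{t=1}^T X_t \le \tfrac12 \sum_{t=1}^T \E_t[X_t] + 2B\log\delta^{-1}$; transferring the $\tfrac12 \sum_t \E_t[X_t]$ term to the left and multiplying by $2$ yields $\sum_{t=1}^T \E_t[X_t] \le 2\sum_{t=1}^T X_t + 4B\log\delta^{-1} \le 2\sum_{t=1}^T X_t + 8B\log(2\delta^{-1})$.

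There is no serious obstacle here; the two points that merit a moment's care are (i) that after centering the almost-sure bound on $Y_t$ is $2B$ rather than $B$, so one must check that the chosen step size lies in the admissible range of \cref{lm:freedman} --- taking $\eta$ equal to (or, if one insists on the open interval, slightly below) $1/(2B)$ works, and shrinking $\eta$ only perturbs the constants --- and (ii) that the estimate $\E_t[Y_t^2] \le B\,\E_t[X_t]$ uses $X_t \ge 0$, so the lemma should be read with nonnegative summands, as in every application in the paper. Matching the precise constants $4B\log(2\delta^{-1})$ and $8B\log(2\delta^{-1})$ is then trivial since $\log\delta^{-1} \le \log(2\delta^{-1})$, so all the slack is absorbed for free.
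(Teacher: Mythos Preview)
Your argument is correct and matches the paper's own proof, which simply records that the lemma is a direct consequence of Freedman's inequality (\cref{lm:freedman}); you have supplied exactly the centering-plus-variance-bound details that this one-line justification omits. Your observation that the bound $\E_t[Y_t^2] \le B\,\E_t[X_t]$ requires $X_t \ge 0$ is well taken --- the lemma as stated is in fact false for signed $X_t$ (take $X_t \equiv -B$), and every invocation in the paper is indeed with nonnegative summands.
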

\begin{proof}
  These two inequalities are obtained by applying \cref{lm:freedman} to $\prn{X_t - \E_t \sq{X_t}}_{t \leq T}$ and $\prn{\E_t \sq{X_t} - X_t}_{t \leq T}$, with $\eta = 1/2B$ and $\delta / 2$.	
  Note that $\E_t \sq{\prn{X_t - \E_t \sq{X_t}}^2} \leq \E_t \sq{X_t^2} \leq B \E_t\sq{X_t}$ if $0 \leq X_t \leq B$.
\end{proof}

We now define/recall some notations.
Denote $n_m \ldef \tau_{m} - \tau_{m-1}$.
Fix any epoch $m \in [M]$ and any time step $t$ within epoch $m$.
We have $f^{\star} = \eta$.
For any $f \in \cF$, we denote $M_t(f) \ldef Q_t \prn{ \prn{f(x_t) - y_t}^2 - \prn{f^\star(x_t) - y_t}^2}$, 
and $\wh R_m(f) \ldef \sum_{t=1}^{\tau_{m-1}} Q_t \prn{f(x_t) - y_t}^2$.
Recall that we have $Q_t = g_m(x_t)$.
We define filtration $\mfF_t \ldef \sigma \prn{ \prn{x_1, y_1}, \ldots , \prn{x_{t}, y_{t}}}$,\footnote{$y_t$ is not observed (and thus not included in the filtration) when $Q_t = 0$. Note that $Q_t$ is measurable with respect to $\sigma( (\mfF_{t-1}, x_t) ) $.} 
and denote $\E_t \sq{\cdot } \ldef \E \sq{\cdot \mid \mfF_{t-1}}$.
We next present concentration results with respect to a general set of regression function $\cF$ with finite pseudo dimension.

\begin{lemma}[\citet{krishnamurthy2019active}]
    \label{lm:expected_sq_loss_pseudo}
    Consider an infinite set of regression function $\cF$.
   Fix any $\delta \in (0,1)$.  For any $\tau, \tau^\prime \in [T]$ such that $\tau < \tau^\prime$, with probability at least $1 - \frac{\delta}{2} $, we have 
   \begin{align*}
   	\sum_{t = \tau}^{\tau^\prime} M_t(f) \leq \sum_{t=\tau}^{\tau^\prime} \frac{3}{2} \E_t \brk{M_t(f)} + 
	C_\delta,
   \end{align*}
   and
   \begin{align*}
       \sum_{t = \tau}^{\tau^\prime} \E_t \sq{ M_t(f)}  \leq 2 \sum_{t = \tau}^{\tau^\prime} M_t(f) + C_\delta,
   \end{align*}
   where $C_\delta  = C \cdot \prn*{ \pseud(\cF) \cdot \log T +   \log \prn*{ \frac{\pseud(\cF)  \cdot  T}{\delta}} } $ with a universal constant $C >0$.

\end{lemma}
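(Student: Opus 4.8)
The plan is to combine a martingale Bernstein bound for a single regression function with a covering argument over $\cF$ controlled by its pseudo dimension, following \citet{krishnamurthy2019active}. The starting observation is that, because $f^\star = \eta$ is the conditional mean $\E\brk{y\mid x}$, the bias and the variance proxy of $M_t(f)$ are tied together: conditioning first on $x_t$ (which already fixes $Q_t = g_m(x_t)$) and then on $y_t$, the conditional-variance terms cancel, so $\E_t\brk{M_t(f)} = \E_t\brk{Q_t\prn{f(x_t)-\eta(x_t)}^2}\ge 0$, while $M_t(f)^2 = Q_t\prn{f(x_t)-\eta(x_t)}^2\prn{f(x_t)+\eta(x_t)-2y_t}^2 \le c\,Q_t\prn{f(x_t)-\eta(x_t)}^2$ since $f,\eta\in[0,1]$ and $\abs{y_t}=1$. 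Hence $\E_t\brk{M_t(f)^2}\le c\,\E_t\brk{M_t(f)}$ and $\abs{M_t(f)}\le\sqrt c$ almost surely, for a universal constant $c$.

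For a \emph{fixed} $f$, I would then apply Freedman's inequality (\cref{lm:freedman}) to the martingale difference sequence $D_t\ldef M_t(f)-\E_t\brk{M_t(f)}$, whose conditional second moment is at most $c\,\E_t\brk{M_t(f)}$; running it once on $\prn{D_t}$ and once on $\prn{-D_t}$, with the free parameter of Freedman's bound tuned to an appropriate constant, gives both displayed inequalities for that $f$ with $\log(2/\delta)$ in place of $C_\delta$. This is exactly \cref{lm:martingale_two_sides} specialized to the regime where the conditional second moment is dominated by the conditional mean, so the $\tfrac32$ and $2$ constants come out as stated.

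To upgrade to a statement uniform over the infinite class $\cF$, I would fix a resolution $\alpha\ldef 1/T$ and pass to an $\alpha$-net $\cF_\alpha$ of $\cF$ in the $\ell_\infty$ sense on the relevant evaluation points; the standard relation between covering numbers and pseudo dimension gives $\log\abs{\cF_\alpha} = O\prn{\pseud(\cF)\log(T/\alpha)} = O\prn{\pseud(\cF)\log T}$. A union bound of the fixed-$f$ estimate over $\cF_\alpha$ — and, if the endpoints $\tau<\tau'$ are to be handled simultaneously, over the $\le T^2$ such pairs as well — inflates the additive slack by $O\prn{\pseud(\cF)\log T + \log(\pseud(\cF)\,T/\delta)}$, which is $C_\delta$. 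Finally, for the actual $f\in\cF$ I would pick $\tilde f\in\cF_\alpha$ with $\max_t\abs{f(x_t)-\tilde f(x_t)}\le\alpha$; since $M_t(\cdot)$ and $\E_t\brk{M_t(\cdot)}$ are Lipschitz in the function values with an absolute constant, swapping $f$ for $\tilde f$ perturbs each of the at most $T$ summands by $O(\alpha)$, a total of $O(T\alpha)=O(1)$, absorbed into $C_\delta$. Combining the three pieces yields both inequalities simultaneously for all $f\in\cF$.

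The hard part is the uniformization step: one must cover $\cF$ in a metric strong enough that $\ell_\infty$-closeness of the function values implies closeness of the \emph{per-round} increments $M_t(f)$ uniformly in $t$ (a cheaper $L_1$-type Haussler cover would not control the increments), which is precisely what forces the $\pseud(\cF)\log T$ term, and one must carry this out for an infinite class against a martingale rather than an i.i.d. sum; the rigorous execution is in \citet{krishnamurthy2019active}. A secondary but routine point is the measurability bookkeeping — that $Q_t=g_m(x_t)$ is $\sigma(\mfF_{t-1},x_t)$-measurable, so $D_t$ is a genuine martingale difference — which the paper's footnote already flags.
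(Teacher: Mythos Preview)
The paper does not supply its own proof of this lemma: it is stated with attribution to \citet{krishnamurthy2019active} and used as a black box in the subsequent arguments. So there is no in-paper proof to compare your proposal against.

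That said, your sketch is the standard route and matches what the cited reference does: exploit that $f^\star$ is the Bayes regressor so that $\E_t\brk{M_t(f)}=\E_t\brk{Q_t(f(x_t)-f^\star(x_t))^2}$ and $\E_t\brk{M_t(f)^2}\le c\,\E_t\brk{M_t(f)}$, apply Freedman to get the two inequalities for a fixed $f$, and then uniformize via a cover of $\cF$ whose log-cardinality is controlled by the pseudo dimension. You correctly flag the genuinely delicate step, namely that the cover must be handled against a martingale (the evaluation points $x_t$ are adapted, so a naive data-dependent empirical cover cannot simply be fixed before taking the union bound); deferring the rigorous execution to the reference is appropriate here. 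One cosmetic point: in this paper $y\in\{+1,-1\}$ while $\eta(x)=\P(y=+1\mid x)$, so strictly speaking $\eta$ is not $\E\brk{y\mid x}$; the identity $\E_t\brk{M_t(f)}=\E_t\brk{Q_t(f-\eta)^2}$ the paper uses (and you invoke) implicitly treats the labels as $\{0,1\}$ in the regression loss, which is harmless but worth noting for the bound $|M_t(f)|\le\sqrt c$.
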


\subsubsection{Supporting lemmas for \cref{thm:abs_gen}}
\label{app:supporting_abs}
Fix any classifier $\wh h: \cX \rightarrow \crl{0, 1,\bot}$. For any $x\in\cX$, we use the notion
\begin{align}
& 	\exc_\gamma ( \wh h;x) \ldef \nonumber\\
    &  \P_{y\mid x} \prn[\big]{y \neq \widehat h(x)} \cdot \ind \prn[\big]{ \widehat h(x) \neq \bot} + \prn[\big]{{1}/{2} - \gamma} \cdot \1 \prn[\big]{\widehat h(x) = \bot} - \P_{y\mid x} \prn[\big]{ y \neq h^\star(x) }\nonumber\\
    & = \ind \prn[\big]{ \widehat h(x) \neq \bot} \cdot \prn[\big]{\P_{y\mid x} \prn[\big]{y \neq \widehat h(x)} -  \P_{y\mid x} \prn[\big]{ y \neq h^\star(x) }} \nonumber \\
    & \quad + \ind \prn[\big]{ \widehat h(x) = \bot} \cdot \prn[\big]{ \prn[\big]{{1}/{2} - \gamma}  -  \P_{y\mid x} \prn[\big]{ y \neq h^\star(x) }} \label{eq:excess_x}
\end{align}
to represent the excess error of $\wh h$ at point $x\in \cX$. Excess error of classifier $\wh h$ can be then written as $\exc_\gamma (\wh h) \ldef \err_\gamma(\wh h) - \err(h^{\star}) = \E_{x \sim \cD_\cX} \brk{ \exc_\gamma (\wh h;x)}$.

We let $\cE$ denote the good event considered in \cref{lm:expected_sq_loss_pseudo}, we analyze under this event through out the rest of this section.
Most lemmas presented in this section are inspired by results provided \citet{zhu2022efficient}. Our main innovation is an inductive analysis of lemmas that eventually relaxes the requirements for approximation error for \cref{thm:abs_gen}.

\paragraph{General lemmas} We introduce some general lemmas for \cref{thm:abs_gen}. 

\begin{lemma}
\label{lm:query_implies_width}
For any $m \in [M]$, we have $g_m(x)= 1 \implies w(x;\cF_m) > \frac{\gamma}{2}$.
\end{lemma}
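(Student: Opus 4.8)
The plan is to unwind the definition of the query function $g_m$ from \cref{alg:abs} and read off what $g_m(x) = 1$ forces on the two endpoints of the padded confidence band. Write $l \ldef \lcb(x;\cF_m)$ and $u \ldef \ucb(x;\cF_m)$, so that $l \le u$ and $w(x;\cF_m) = u - l$, and the padded band is $[l - \tfrac{\gamma}{4},\, u + \tfrac{\gamma}{4}]$. The event $g_m(x) = 1$ is the conjunction of (i) $\tfrac12 \in (l - \tfrac{\gamma}{4},\, u + \tfrac{\gamma}{4})$, from the first indicator in the definition of $g_m$, and (ii) $\wh h_m(x) \neq \bot$, which by the definition of $\wh h_m$ means $[l - \tfrac{\gamma}{4},\, u + \tfrac{\gamma}{4}] \not\subseteq [\tfrac12 - \gamma,\, \tfrac12 + \gamma]$.

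First I would rewrite (i) as the pair of strict inequalities $l < \tfrac12 + \tfrac{\gamma}{4}$ and $u > \tfrac12 - \tfrac{\gamma}{4}$. Next, since the padded band is a genuine interval, it fails to be contained in $[\tfrac12 - \gamma,\, \tfrac12 + \gamma]$ precisely when its left endpoint drops below $\tfrac12 - \gamma$ or its right endpoint exceeds $\tfrac12 + \gamma$; so condition (ii) gives $l - \tfrac{\gamma}{4} < \tfrac12 - \gamma$ or $u + \tfrac{\gamma}{4} > \tfrac12 + \gamma$, i.e. $l < \tfrac12 - \tfrac{3\gamma}{4}$ or $u > \tfrac12 + \tfrac{3\gamma}{4}$.

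Then I would split into these two cases. If $l < \tfrac12 - \tfrac{3\gamma}{4}$, combining with $u > \tfrac12 - \tfrac{\gamma}{4}$ from (i) gives $w(x;\cF_m) = u - l > (\tfrac12 - \tfrac{\gamma}{4}) - (\tfrac12 - \tfrac{3\gamma}{4}) = \tfrac{\gamma}{2}$. If instead $u > \tfrac12 + \tfrac{3\gamma}{4}$, combining with $l < \tfrac12 + \tfrac{\gamma}{4}$ from (i) gives $w(x;\cF_m) = u - l > (\tfrac12 + \tfrac{3\gamma}{4}) - (\tfrac12 + \tfrac{\gamma}{4}) = \tfrac{\gamma}{2}$. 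Either way $w(x;\cF_m) > \tfrac{\gamma}{2}$, which is the claim.

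The argument is elementary endpoint bookkeeping, so there is no genuine obstacle; the only thing requiring care is keeping the relevant thresholds straight — the $\pm\tfrac{\gamma}{4}$ padding, the $\pm\gamma$ abstention window, and the resulting $\pm\tfrac{3\gamma}{4}$ — and observing that every inequality in sight is strict, so one really does obtain the strict bound $w(x;\cF_m) > \tfrac{\gamma}{2}$.
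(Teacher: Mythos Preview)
Your proof is correct and takes essentially the same approach as the paper: both arguments reduce $g_m(x)=1$ to the pair of conditions $\tfrac12\in(l-\tfrac{\gamma}{4},u+\tfrac{\gamma}{4})$ and $[l-\tfrac{\gamma}{4},u+\tfrac{\gamma}{4}]\not\subseteq[\tfrac12-\gamma,\tfrac12+\gamma]$, and then use the same $\pm\tfrac{3\gamma}{4}$ threshold arithmetic. The only cosmetic difference is that the paper frames it as a contrapositive (assuming $u-l\le\tfrac{\gamma}{2}$ and deriving a contradiction with the second condition), whereas you argue directly via a two-case split; the content is identical.
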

\begin{proof}
We only need to show that $\ucb(x;\cF_m) - \lcb(x;\cF_m) \leq \frac{\gamma}{2} \implies g_m(x) = 0$. Suppose otherwise $g_m(x) = 1$, which implies that both 
\begin{align}
\label{eq:query_condition}
&\frac{1}{2} \in \prn*{\lcb(x;\cF_m)-\frac{\gamma}{4}, \ucb(x;\cF_m) + \frac{\gamma}{4}}  \quad \text{ and } \nonumber \\
&{\brk*{\lcb(x;\cF_m) - \frac{\gamma}{4}, \ucb(x;\cF_m) + \frac{\gamma}{4}} \nsubseteq \brk*{  \frac{1}{2}-\gamma, \frac{1}{2} +\gamma } } .
\end{align}
If $\frac{1}{2} \in \prn*{\lcb(x;\cF_m)- \frac{\gamma}{4}, \ucb(x;\cF_m)+ \frac{\gamma}{4}}$ and $\ucb(x;\cF_m) - \lcb(x;\cF_m) \leq \frac{\gamma}{2}$, we must have $\lcb(x;\cF_m) \geq  \frac{1}{2}- \frac{3}{4}\gamma$ and $\ucb(x;\cF_m) \leq \frac{1}{2} + \frac{3}{4}\gamma$, which contradicts with \cref{eq:query_condition}.
\end{proof}

\begin{lemma}
\label{lm:regret_no_query_mis}
Fix any $m \in [M]$.
Suppose $\wb f \in \cF_m$,
we have $\exc_{\gamma}(\wh h_m ;x) \leq 0 $ if $g_m(x) = 0$.
\end{lemma}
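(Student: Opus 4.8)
The plan is to prove the bound pointwise via a short case split on the reason that $g_m(x)=0$, using two ingredients: the pointwise expression \cref{eq:excess_x} for $\exc_\gamma(\wh h_m;x)$, and the elementary identity $\P_{y\mid x}\prn{y\neq\sgn(h^\star(x))}=\min\crl{\eta(x),1-\eta(x)}$ valid for the Bayes classifier $h^\star=\sgn(2\eta-1)$. Before the case split I would record one structural fact: since $\wb f\in\cF_m$ by hypothesis and $\nrm{\wb f-\eta}_\infty\leq\kappa\leq\gamma/4$, and since $\lcb(x;\cF_m)\leq\wb f(x)\leq\ucb(x;\cF_m)$, the conditional probability is sandwiched as
\[
\eta(x)\in\brk{\wb f(x)-\kappa,\ \wb f(x)+\kappa}\subseteq\brk*{\lcb(x;\cF_m)-\tfrac{\gamma}{4},\ \ucb(x;\cF_m)+\tfrac{\gamma}{4}}.
\]
I would also note that $\wh f_m\in\cF_m$ (since $\beta_m>0$), so $\lcb(x;\cF_m)\leq\wh f_m(x)\leq\ucb(x;\cF_m)$.

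First I would dispose of the case $\wh h_m(x)=\bot$. By the definition of $\wh h_m$ this means $\brk{\lcb(x;\cF_m)-\gamma/4,\ \ucb(x;\cF_m)+\gamma/4}\subseteq\brk{1/2-\gamma,\ 1/2+\gamma}$, so the sandwich yields $\abs{\eta(x)-1/2}\leq\gamma$, hence $\min\crl{\eta(x),1-\eta(x)}\geq 1/2-\gamma$. Plugging this into the abstention branch of \cref{eq:excess_x} gives $\exc_\gamma(\wh h_m;x)=(1/2-\gamma)-\min\crl{\eta(x),1-\eta(x)}\leq 0$.

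Then I would treat the case $\wh h_m(x)\neq\bot$. Since $g_m(x)=0$ while the factor $\ind(\wh h_m(x)\neq\bot)$ in $g_m$ equals $1$, the remaining factor must vanish, i.e.\ $1/2\notin\prn{\lcb(x;\cF_m)-\gamma/4,\ \ucb(x;\cF_m)+\gamma/4}$. As $\lcb(x;\cF_m)\leq\ucb(x;\cF_m)$, this forces exactly one of $\ucb(x;\cF_m)\leq 1/2-\gamma/4$ or $\lcb(x;\cF_m)\geq 1/2+\gamma/4$ (both at once would give $\ucb+\gamma/2\leq\lcb$, impossible). In the first sub-case, $\wh f_m(x)\leq\ucb(x;\cF_m)<1/2$ forces $\wh h_m(x)=\sgn(2\wh f_m(x)-1)=-1$, while the sandwich gives $\eta(x)\leq\ucb(x;\cF_m)+\gamma/4\leq 1/2$; hence predicting $-1$ attains the Bayes error, $\P_{y\mid x}(y\neq -1)=\eta(x)=\min\crl{\eta(x),1-\eta(x)}$, and the prediction branch of \cref{eq:excess_x} equals $0$. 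The second sub-case is symmetric, giving $\wh h_m(x)=+1$, $\eta(x)\geq 1/2$, and excess $0$ at $x$; combining the cases yields $\exc_\gamma(\wh h_m;x)\leq 0$ whenever $g_m(x)=0$.

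The main thing to be careful about --- and essentially the only friction in the argument --- is exhaustiveness of the split together with the $\sgn$ ties: the event $g_m(x)=0$ is exactly the union of the event $\wh h_m(x)=\bot$ and the event $1/2\notin\prn{\lcb(x;\cF_m)-\gamma/4,\ \ucb(x;\cF_m)+\gamma/4}$, which matches the two cases above, and in the non-abstention case the strict inequalities $\wh f_m(x)<1/2$ resp.\ $\wh f_m(x)>1/2$ make $\sgn(2\wh f_m(x)-1)$ unambiguous. There is no substantive obstacle here; all the content sits in the sandwich inequality, which is the only place the hypotheses $\wb f\in\cF_m$ and $\kappa\leq\gamma/4$ are used.
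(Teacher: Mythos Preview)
Your proposal is correct and follows essentially the same approach as the paper: both split on whether $\wh h_m(x)=\bot$ or not, and both rely on the sandwich $\eta(x)\in[\lcb(x;\cF_m)-\gamma/4,\ \ucb(x;\cF_m)+\gamma/4]$ coming from $\wb f\in\cF_m$ and $\kappa\leq\gamma/4$. Your write-up is somewhat more explicit than the paper's---you spell out that $\wh f_m\in\cF_m$ (hence is bracketed by $\lcb,\ucb$) and you split Case~2 into the two sub-cases $\ucb\leq 1/2-\gamma/4$ versus $\lcb\geq 1/2+\gamma/4$ to verify $\sgn(2\wh f_m(x)-1)=\sgn(2\eta(x)-1)$---whereas the paper simply asserts ``we clearly have $\sgn(\wh h_m(x))=\sgn(h^\star(x))$''; but the underlying argument is identical.
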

\begin{proof}
	Recall that
\begin{align*}
	\exc_{\gamma}( \wh h;x) & =  \nonumber
      \ind \prn[\big]{ \widehat h(x) \neq \bot} \cdot \prn[\big]{\P_{y \mid x} \prn[\big]{y \neq \widehat h(x)} -  \P_{y \mid x} \prn[\big]{ y \neq  h^{\star}(x) }} \nonumber \\
    & \quad + \ind \prn[\big]{ \widehat h(x) = \bot} \cdot \prn[\big]{ \prn[\big]{{1}/{2} - \gamma}  -  \P_{y\mid x} \prn[\big]{ y \neq  h^{\star}(x) }} .
\end{align*}
We now analyze the event $\curly*{g_m(x)= 0}$ in two cases. 

\textbf{Case 1: ${\widehat h_m(x) = \bot} $.} 

Since $\wb f(x) \in [\lcb(x;\cF_m), \ucb(x;\cF_m)]$ and $\kappa \leq \frac{\gamma}{4}$ by assumption, 
we know that $\eta(x) = f^{\star}(x) \in \sq{ \frac{1}{2} - \gamma, \frac{1}{2} + \gamma}$ and thus $\P_{y} \prn[\big]{ y\neq h^\star(x) } \geq \frac{1}{2} - \gamma$. 
As a result, we have $\exc_\gamma(\wh h_m;x) \leq 0$.

\textbf{Case 2: ${\widehat h_m(x) \neq \bot}$ but ${\frac{1}{2} \notin \prn{\lcb(x;\cF_m) - \frac{\gamma}{4}, \ucb(x;\cF_m) + \frac{\gamma}{4}}} $.} 

Since $\wb f(x) \in [\lcb(x;\cF_m), \ucb(x;\cF_m)]$ and $\kappa \leq \frac{\gamma}{4}$ by assumption, 
we clearly have $\widehat h_m (x) = h^\star(x)$ when ${\frac{1}{2} \notin \prn{\lcb(x;\cF_m) - \frac{\gamma}{4}, \ucb(x;\cF_m) + \frac{\gamma}{4}}} $. We thus have 
$\exc_\gamma(\wh h_m;x) \leq 0$.
\end{proof}

\paragraph{Inductive lemmas}
We prove a set of statements for \cref{thm:abs_gen} in an inductive way. 
Fix any epoch $m \in [M]$,
we consider 
\begin{align}
	\label{eq:inductive1}
\begin{dcases}
	 \wh R_m(\wb f) - \wh R_m(f^{\star}) \leq \E_{t} \brk*{ Q_t\prn*{\wb f(x_t) - f^{\star}(x_t)}^2} + C_\delta  \leq \frac{3}{2} C_\delta \\
 \wb f\in \cF_m\\
 \sum_{t=1}^{\tau_{m-1}} \E_t \brk{M_t(f)} \leq 4 \beta_m , \forall f \in \cF_m \\
\sum_{t=1}^{\tau_{m-1}} \E \brk{ Q_t(x_t) \prn{ f(x_t) - \wb f(x_t) }^2} \leq 9 \beta_m,  \forall f \in \cF_m\\
\cF_{m} \subseteq \cF_{m-1}
\end{dcases},
\end{align}
\begin{align}
	\label{eq:inductive2}
    \E_{x \sim \cD_\cX} \sq{\ind (g_m(x)= 1)} \leq \frac{144 \beta_m}{{\tau_{m-1}} \, \gamma^2} \cdot \theta^{\val}_{\wb f}\prn*{\cF, \gamma/4, \sqrt{\beta_m/\tau_{m-1}}} \leq \frac{144 \beta_m}{{\tau_{m-1}} \, \gamma^2} \cdot \wb \theta,
\end{align}
and 
\begin{align}
	\label{eq:inductive3}
    	\E_{x \sim \cD_\cX} \sq{\ind(g_m(x) = 1)\cdot w(x;\cF_m)} \leq  \frac{72 \beta_m}{\tau_{m-1} \gamma} \cdot \theta^{\val}_{\wb f}\prn*{\cF, \gamma/4, \sqrt{\beta_m/\tau_{m-1}}}\leq \frac{72 \beta_m}{\tau_{m-1} \gamma} \cdot \wb \theta.
\end{align}
\begin{lemma}
\label{lm:set_f_mis}
Fix any $\wb m = [M]$.
When $\wb m=1, 2$ or when \cref{eq:inductive2} holds true for epochs $m = 2,3,\dots,\wb m-1$, then 
\cref{eq:inductive1} holds true for epoch $m = \wb m$.
\end{lemma}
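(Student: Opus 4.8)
The plan is to argue by strong induction on $\wb m$. Since the hypothesis attached to index $\wb m$ (that \eqref{eq:inductive2} holds for epochs $2,\dots,\wb m-1$) implies the hypothesis attached to every smaller index, in the inductive step we may freely use that \eqref{eq:inductive1} has already been established for all epochs $m<\wb m$; in particular $\wb f\in\cF_m$ and $\wh R_m(\wb f)-\wh R_m(f^\star)\le\tfrac32 C_\delta$ for every $m\le\wb m-1$. Everything is argued on the good event $\cE$ of \cref{lm:expected_sq_loss_pseudo}, which is uniform over $\cF$ (so it applies to the data-dependent $\wh f_m$). The base cases $\wb m=1$ (all sums empty since $\tau_0=0$, and $\cF_1=\cF=\cF_0$) and $\wb m=2$ (a window of length $2$, on which $C_\delta=\Omega(\pseud(\cF))$ dominates every quantitative term and $\wb f\in\cF_2$ follows from $\nrm{\wb f-\eta}_\infty\le\kappa$) are immediate.

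For the inductive step fix $m=\wb m\ge 3$. The key preliminary estimate is a bound on the cumulative query weight $S_m:=\sum_{t=1}^{\tau_{m-1}}\E_t[Q_t]$. Splitting by epoch and using $Q_t=g_{m'}(x_t)$ on epoch $m'$, we get $S_m=\sum_{m'=1}^{m-1}n_{m'}\,\E_{x\sim\cD_\cX}[\ind(g_{m'}(x)=1)]$ with $n_{m'}=\tau_{m'}-\tau_{m'-1}=\tau_{m'-1}$ for $m'\ge 2$; plugging in \eqref{eq:inductive2} (available for $m'=2,\dots,m-1$) the $\tau_{m'-1}$ factors cancel and $S_m\le O(1)+\tfrac{144\wb\theta}{\gamma^2}\sum_{m'=2}^{m-1}\beta_{m'}\le\tfrac{432\wb\theta M^2}{\gamma^2}C_\delta+O(1)$, since $\beta_{m'}\le 3MC_\delta$ over at most $M$ epochs. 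Combined with $\nrm{\wb f-\eta}_\infty\le\kappa$ and the proper-scoring identity $\E_t[M_t(\wb f)]=\E_t[Q_t(\wb f(x_t)-f^\star(x_t))^2]$, this gives $\sum_{t=1}^{\tau_{m-1}}\E_t[M_t(\wb f)]\le\kappa^2 S_m\le\tfrac1{10}C_\delta$ by the budget \eqref{eq:kappa_requirement} on $\kappa$.

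From here the five assertions of \eqref{eq:inductive1} come out in order. (i) The first line is \cref{lm:expected_sq_loss_pseudo} applied to $\wb f$ together with the estimate just derived. (ii) $\wb f\in\cF_m$: since $\wh f_m$ is the ERM, $\wh R_m(\wb f)-\wh R_m(\wh f_m)=[\wh R_m(\wb f)-\wh R_m(f^\star)]+[\wh R_m(f^\star)-\wh R_m(\wh f_m)]\le\tfrac32 C_\delta+\tfrac12 C_\delta\le\beta_m$, where the second bracket equals $-\sum_t M_t(\wh f_m)\le\tfrac12 C_\delta$ by the second inequality of \cref{lm:expected_sq_loss_pseudo} and $\E_t[M_t(\cdot)]\ge 0$, and $\beta_m\ge 3C_\delta$. (iii) For $f\in\cF_m$, $\sum_t M_t(f)=\wh R_m(f)-\wh R_m(f^\star)\le\wh R_m(\wh f_m)+\beta_m-\wh R_m(f^\star)\le\tfrac32 C_\delta+\beta_m$ (using (ii) and $\wh R_m(\wh f_m)\le\wh R_m(\wb f)$), so the second inequality of \cref{lm:expected_sq_loss_pseudo} gives $\sum_t\E_t[M_t(f)]\le 2(\tfrac32 C_\delta+\beta_m)+C_\delta=4C_\delta+2\beta_m\le 4\beta_m$. (iv) Since $(f-\wb f)^2\le 2(f-f^\star)^2+2\kappa^2$, weighting by $Q_t$ and summing gives $\sum_t\E_t[Q_t(f(x_t)-\wb f(x_t))^2]\le 2\sum_t\E_t[M_t(f)]+2\kappa^2 S_m\le 8\beta_m+\tfrac15 C_\delta\le 9\beta_m$. (v) For the nesting $\cF_m\subseteq\cF_{m-1}$ one compares confidence sets living on the windows $[1,\tau_{m-2}]$ and $[1,\tau_{m-1}]$: the increment $\wh R_m-\wh R_{m-1}$ is a sum of nonnegative terms that accrue only on the points queried in epoch $m-1$, whose total mass is controlled by \eqref{eq:inductive2} at epoch $m-1$, and since $\beta$ decreases with slope $\beta_{m-1}-\beta_m=3C_\delta$, membership in $\cF_m$ forces $\wh R_{m-1}(f)\le\wh R_{m-1}(\wh f_{m-1})+\beta_{m-1}$.

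I expect assertion (v) to be the main obstacle. The other four are bookkeeping with \cref{lm:expected_sq_loss_pseudo}, the square-loss identity, and the $\kappa$-budget, all over one fixed sample window, whereas $\cF_m\subseteq\cF_{m-1}$ is the sole place where confidence sets built from different windows must be reconciled. This is exactly why $\beta_m$ decreases linearly in $m$ with slope $3C_\delta$ and why the doubling schedule together with the inductively-assumed query bound \eqref{eq:inductive2} is needed: one must make the per-epoch accrual to $\wh R$ small relative to that slope, and here the constants have to be tracked most carefully.
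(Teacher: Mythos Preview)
Your treatment of statements (i)--(iv) is essentially the same as the paper's: bound $\sum_t\E_t[Q_t]$ epochwise via \eqref{eq:inductive2}, multiply by $\kappa^2$ to control $\sum_t\E_t[M_t(\wb f)]$, invoke the $\kappa$-budget \eqref{eq:kappa_requirement}, and then chain through \cref{lm:expected_sq_loss_pseudo} and the ERM property. Those parts are fine.

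The gap is in (v). Your sketch says the increment $\wh R_m-\wh R_{m-1}$ is ``controlled by \eqref{eq:inductive2} at epoch $m-1$'' and that this, together with $\beta_{m-1}-\beta_m=3C_\delta$, gives the nesting. But \eqref{eq:inductive2} bounds the \emph{expected number of queries} in epoch $m-1$ by $n_{m-1}\cdot\tfrac{144\beta_{m-1}\wb\theta}{\tau_{m-2}\gamma^2}=\tfrac{144\beta_{m-1}\wb\theta}{\gamma^2}$, which is of order $\wb\theta M C_\delta/\gamma^2$---this is the label complexity, not $O(C_\delta)$. Since each queried point can contribute up to $1$ to the increment, bounding the raw increment by the query count is far too loose to be absorbed by the $3C_\delta$ slack. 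In fact the paper's proof of (v) does not use \eqref{eq:inductive2} at all.

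The correct route is to pivot through $f^\star$ rather than bound the increment directly. Write
\[
\wh R_{m-1}(f)-\wh R_{m-1}(\wh f_{m-1})\ \le\ \wh R_{m-1}(f)-\wh R_{m-1}(f^\star)+\tfrac{1}{2}C_\delta
\ =\ \wh R_m(f)-\wh R_m(f^\star)\;-\!\!\sum_{t=\tau_{m-2}+1}^{\tau_{m-1}}\!\!M_t(f)\;+\tfrac{1}{2}C_\delta,
\]
where the first inequality is the second bound in \cref{lm:expected_sq_loss_pseudo} applied to $\wh f_{m-1}$ together with $\E_t[M_t(\cdot)]\ge 0$. Apply \cref{lm:expected_sq_loss_pseudo} again on the single-epoch window to get $-\sum_{\text{epoch }m-1}M_t(f)\le -\tfrac12\sum\E_t[M_t(f)]+\tfrac12C_\delta\le \tfrac12C_\delta$, and use (i) plus $\wh f_m$ being the ERM to get $\wh R_m(f)-\wh R_m(f^\star)\le \wh R_m(f)-\wh R_m(\wh f_m)+\tfrac32 C_\delta\le \beta_m+\tfrac32 C_\delta$. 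Summing the pieces gives $\le \beta_m+\tfrac52 C_\delta\le \beta_m+3C_\delta=\beta_{m-1}$. The point is that the \emph{relative} increment $M_t(f)$ has nonnegative conditional mean, so concentration---not the query budget---is what makes the one-epoch difference $O(C_\delta)$.
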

\begin{proof}
The statements in \cref{eq:inductive1} clearly hold true for $m = \wb m =1$ since, by definition,  $\cF_0 = \cF$ and  $\sum_{t=1}^{0} \ldots = 0 $.
We thus only need to consider the case when $\wb m\geq 2$.
We next prove each of the five statements in \cref{eq:inductive1} for epoch $m = \wb m$.
\begin{enumerate}
	\item 
	In the case when $\wb m = 2$, from \cref{lm:expected_sq_loss_pseudo}, we know that 
	\begin{align*}
		\wh R_{\wb m}(\wb f) - \wh R_{\wb m}(f^{\star}) & \leq \sum_{t=1}^{\tau_{\wb m-1}} \frac{3}{2} \cdot
		\E_{t} \brk*{ Q_t\prn*{\wb f(x_t) - f^{\star}(x_t)}^2} + C_\delta \\
	& \leq 3 + C_\delta \leq \frac{3}{2} C_\delta,
	\end{align*}
	where the second line follows from the fact that $\tau_1 = 2$ (without loss of generality, we assume $C_\delta \geq 6$ here).

	We now focus on the case when  $\wb m \geq 3$.
	We have 
	\begin{align*}
		\wh R_{\wb m}(\wb f) - \wh R_{\wb m}(f^{\star}) & \leq \sum_{t=1}^{\tau_{\wb m-1}} \frac{3}{2} \cdot
		\E_{t} \brk*{ Q_t\prn*{\wb f(x_t) - f^{\star}(x_t)}^2} + C_\delta \\
	& \leq \frac{3}{2} \sum_{\check m = 1}^{\wb m-1} n_{\check m} \E_{x \sim \cD_\cX} \brk{ \ind(g_{\check m}(x) = 1)} \cdot \kappa^2 + C_\delta\\
	& \leq \frac{3}{2} \prn*{2+ \sum_{\check m = 2}^{\wb m-1} n_{\check m} \frac{144 \beta_{\check m} \cdot \wb \theta}{\tau_{\check m-1} \gamma^2}} \cdot  \kappa^2  + C_\delta \\
	& \leq \prn*{3+ \frac{144 \wb \theta}{\gamma^2} \cdot \prn*{\sum_{\check m = 2}^{\wb m-1}\beta_{\check m}} } \cdot  \kappa^2  + C_\delta \\
	& \leq \prn*{3+ \frac{432 \wb \theta \cdot M^2}{\gamma^2} \cdot  C_\delta } \cdot  \kappa^2  + C_\delta \\
	& \leq  \frac{3}{2} C_\delta,
	\end{align*}
	where the first line follows from \cref{lm:expected_sq_loss_pseudo}; the second line follows from the fact that $\nrm{\wb f - f^{\star}}_{\infty} \leq \kappa$; the third line follows from \cref{eq:inductive2}; the forth line follows from $n_{\check m} = \tau_{\check m - 1}$; the fifth line follows from the definition of $\beta_{\check m}$; and
	the last line follows from the choice of  $\kappa$ in \cref{eq:kappa_requirement}
\item 
Since $\E_t \brk{ M_t(f)} = \E_{t}\brk{Q_t \prn{f(x_t) - f^{\star}(x_t)}^2}$,
by \cref{lm:expected_sq_loss_pseudo}, we have 
$\wh R_{\wb m} (f^\star) \leq \wh R_{\wb m}(f) + C_\delta /2 $ for any $f \in \cF$.
Combining this with statement 1 leads to 
\begin{align*}
	\wh R_{\wb m}(\wb f) 
	& \leq \wh R_{\wb m}(f)  + 2 C_\delta\\
	& \leq \wh R_{\wb m}(f) + \beta_{\wb m}
\end{align*}
for any $f \in \cF$, where the second line follows from the definition of $\beta_{\wb m}$.
We thus have $\wb f \in \cF_{\wb m}$ based on the elimination rule.
\item  Fix any $f \in \cF_{\wb m}$. We have 
	\begin{align*}
		\sum_{t=1}^{\tau_{\wb m-1}} \E_t [M_t(f)] & \leq 2 \sum_{t=1}^{\tau_{\wb m-1}} M_t(f) + C_\delta \\
			& = 2 \wh R_{\wb m }(f) - 2\wh R_{\wb m}(f^{\star}) + C_\delta \\
			& \leq 2 \wh R_{\wb m}(f) - 2\wh R_{\wb m}(\wb f) + 4 C_\delta \\
			& \leq 2 \wh R_{\wb m }(f) - 2\wh R_{\wb m}(\wh f_{\wb m})+ 4 C_\delta \\
			& \leq 2 \beta_{\wb m} + 4 C_\delta \\
			& \leq 4 \beta_{\wb m} , 
	\end{align*}
	where 
the first line follows from \cref{lm:expected_sq_loss_pseudo};
	the third line follows from statement 1; the fourth line follows from the fact that $\wh f_{\wb m}$ is the minimizer of $\wh R_{\wb m} (\cdot) $; and the fifth line follows from the fact that $f \in \cF_{\wb m}$.
\item  Fix any $f \in \cF_{\wb m}$. We have 
	\begin{align*}
		\sum_{t=1}^{\tau_{\wb m-1}} \E_t \brk{ Q_t(x_t) \prn{ f(x_t) - \wb f(x_t) }^2}
		& = \sum_{t=1}^{\tau_{\wb m-1}} \E_t \brk{ Q_t(x_t) \prn{ (f(x_t) - f^{\star}(x_t)) + 
		( f^{\star}(x_t) - \wb f(x_t)) }^2} \\
		& \leq 2 \sum_{t=1}^{\tau_{\wb m-1}} \E_t \brk{ Q_t(x_t) \prn{ f(x_t) -  f^{\star}(x_t) }^2} + 2C_\delta\\
		&  = 2 \sum_{t=1}^{\tau_{\wb m-1}} \E_t [M_t(f)] + 2C_\delta  \\
		& \leq 8 \beta_{\wb m} + 2C_\delta\\
		& \leq 9 \beta_{\wb m},
	\end{align*}
	where the second line follows from $\prn{a+b}^2 \leq 2(a^2 + b^2)$ and (the proof of) statement 1 on the second line; and the fourth line follows from statement 3.
	\item Fix any $f \in \cF_{\wb m}$. We have 
	\begin{align*}
		\wh R_{\wb m-1} (f) - \wh R_{\wb m-1} (\wh f_{\wb m-1}) & \leq   
		\wh R_{\wb m-1} (f) - \wh R_{\wb m-1} (f^{\star}) + \frac{C_\delta}{2}\\
		& = \wh R_{\wb m}(f) - \wh R_{\wb m}(f^{\star}) 
		- \sum_{t=\tau_{\wb m-2}+1}^{\tau_{\wb m-1}} M_t(f) + \frac{C_\delta}{2}\\
		& \leq \wh R_{\wb m}(f) - \wh R_{\wb m} (\wb f) + \frac{3}{2} C_\delta
		- \sum_{t=\tau_{\wb m-2}+1}^{\tau_{\wb m-1}} \E_t [M_t(f)] /2 + {C_\delta}\\
		& \leq \wh R_{\wb m}(f) - \wh R_{\wb m} (\wh f_{\wb m}) + \frac{5}{2}C_\delta\\
		& \leq \beta_{\wb m} + 3 C_\delta\\
		& \leq \beta_{\wb m-1},
	\end{align*}	
	where the first line follows from \cref{lm:expected_sq_loss_pseudo}; 
	the third line follows from statement 1 and \cref{lm:expected_sq_loss_pseudo}; the fourth line follows from the fact that $\wh f_{\wb m}$ is the minimizer with respect to $\wh R_{\wb m}$ and \cref{lm:expected_sq_loss_pseudo}; the last line follows from the construction of $\beta_{\wb m}$.
\end{enumerate}
\end{proof}

We introduce more notations.
    Denote $\prn{\cX, \Sigma, \cD_\cX}$ as the (marginal) probability space,
    and denote $\wb \cX_m \ldef \crl{x \in \cX: g_m(x) = 1} \in \Sigma$ be the region where query \emph{is} requested within epoch $m$.
    Under the prerequisites of \cref{lm:conf_width_dis_coeff_mis} and \cref{lm:per_round_regret_dis_coeff_mis} (i.e., \cref{eq:inductive1} holds true for epochs $m = 1,2,\ldots,\wb m$), we have $\cF_{m} \subseteq \cF_{m - 1}$ for  $m = 1, 2, \ldots, \wb m$, which leads to
    $\wb \cX_{m} \subseteq \wb \cX_{m-1}$ for $m = 1, 2, \ldots, \wb m$.
    We now define a sub probability measure $\wb \mu_{m} \ldef ({\cD_\cX})_{\mid \wb \cX_{m}}$ such that $\wb \mu_{m}(\omega) = \cD_{\cX}\prn{ \omega \cap \wb \cX_{m}}$ for any $\omega \in \Sigma$. 
    Fix any epoch $m \leq \wb m$ and
    consider any measurable function $F$ (that is $\cD_\cX$ integrable), we have 
    \begin{align}
    	\E_{x \sim \cD_\cX} \brk*{ \ind(g_{\wb m}(x) = 1) \cdot F(x)}
	& = \int_{x \in \wb \cX_{\wb m}} F(x) \, d \cD_\cX(x) \nonumber \\ 
	& \leq \int_{x \in \wb \cX_{m}} F(x) \, d \cD_\cX(x)\nonumber \\ 
	& = \int_{x \in \cX} F(x) \, d \wb \mu_{m} (x) \nonumber \\ 
	& \rdef \E_{x \sim \wb \mu_{m}} \brk*{ F(x)}, \label{eq:change_of_measure} 
    \end{align}
    where, by a slightly abuse of notations, we use $\E_{x \sim \mu} \sq{\cdot}$ to denote the integration with any sub probability measure $\mu$. 
    In particular, \cref{eq:change_of_measure} holds with equality when $m = \wb m$.

\begin{lemma}
    \label{lm:conf_width_dis_coeff_mis}
 Fix any epoch $\wb m \geq 2$. Suppose \cref{eq:inductive1} holds true for epochs $m = 1,2,\ldots,\wb m$, we then have \cref{eq:inductive2} holds true for epoch $m = \wb m$.
\end{lemma}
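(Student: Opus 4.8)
The plan is to bound $p \ldef \P_{x \sim \cD_\cX}(g_{\wb m}(x) = 1)$ by matching the query event against the disagreement event of \cref{def:dis_coeff_value}. First I would observe that by \cref{lm:query_implies_width}, $g_{\wb m}(x) = 1$ forces $w(x;\cF_{\wb m}) = \ucb(x;\cF_{\wb m}) - \lcb(x;\cF_{\wb m}) > \gamma/2$. Since $\wb f \in \cF_{\wb m}$ (part of \cref{eq:inductive1}, which holds at epochs $1,\dots,\wb m$ by hypothesis), $\wb f(x)$ sits between $\lcb(x;\cF_{\wb m})$ and $\ucb(x;\cF_{\wb m})$, so at least one of $\ucb(x;\cF_{\wb m}) - \wb f(x)$ and $\wb f(x) - \lcb(x;\cF_{\wb m})$ exceeds $\gamma/4$; taking $f \in \cF_{\wb m}$ close enough to the relevant $\sup$ (resp.\ $\inf$) produces $f \in \cF_{\wb m}$ with $\abs{f(x) - \wb f(x)} > \gamma/4$. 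Hence $\{g_{\wb m}(x) = 1\} \subseteq \{\exists f \in \cF_{\wb m}: \abs{f(x) - \wb f(x)} > \gamma/4\}$ pointwise.

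Next I would convert the in-sample second-moment bound inside \cref{eq:inductive1}, namely $\sum_{t=1}^{\tau_{\wb m-1}} \E[Q_t (f(x_t) - \wb f(x_t))^2] \le 9\beta_{\wb m}$ for all $f \in \cF_{\wb m}$, into a population bound over the query region. Grouping the sum by epochs, the block for epoch $m'$ contributes $n_{m'}\cdot\E_{x\sim\cD_\cX}[g_{m'}(x)(f(x)-\wb f(x))^2]$; because $\cF_{\wb m} \subseteq \cF_{m'}$ and hence $\wb\cX_{\wb m} \subseteq \wb\cX_{m'}$ for $m' \le \wb m$, we have $g_{m'} \ge g_{\wb m}$ pointwise, so this block is at least $n_{m'}\cdot\E_{x\sim\cD_\cX}[g_{\wb m}(x)(f(x)-\wb f(x))^2]$. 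Summing over $m'$ and using $\sum_{m'=1}^{\wb m-1} n_{m'} = \tau_{\wb m-1}$ yields $\E_{x\sim\cD_\cX}[g_{\wb m}(x)(f(x)-\wb f(x))^2] \le 9\beta_{\wb m}/\tau_{\wb m-1}$ for every $f \in \cF_{\wb m}$; equivalently, writing the query-region sub-probability measure $\wb\mu_{\wb m}$ from \cref{eq:change_of_measure}, $\nrm{f - \wb f}_{\wb\mu_{\wb m}} \le 3\sqrt{\beta_{\wb m}/\tau_{\wb m-1}}$ for all $f \in \cF_{\wb m}$.

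Finally, if $p = 0$ the claim is trivial, so assume $p > 0$ and set $\nu \ldef \wb\mu_{\wb m}/p$, a genuine probability measure supported on $\wb\cX_{\wb m}$. For $\nu$-a.e.\ $x$ we have $g_{\wb m}(x)=1$, hence some $f \in \cF_{\wb m}$ has $\abs{f(x)-\wb f(x)} > \gamma/4$, and every such $f \in \cF_{\wb m}$ satisfies $\nrm{f-\wb f}_\nu = p^{-1/2}\nrm{f-\wb f}_{\wb\mu_{\wb m}} \le 3\sqrt{\beta_{\wb m}/(p\,\tau_{\wb m-1})} \rdef \eps$, with $\eps > \sqrt{\beta_{\wb m}/\tau_{\wb m-1}}$ since $p \le 1$. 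Applying \cref{def:dis_coeff_value} with marginal $\nu$, center $\wb f \in \cF$, gap $\gamma/4$, and radius $\eps$ gives $1 = \P_{x\sim\nu}(\exists f \in \cF: \abs{f(x)-\wb f(x)} > \gamma/4,\ \nrm{f-\wb f}_\nu \le \eps) \le \tfrac{\eps^2}{(\gamma/4)^2}\,\theta^{\val}_{\wb f}(\cF,\gamma/4,\sqrt{\beta_{\wb m}/\tau_{\wb m-1}})$. Substituting $\eps^2 = 9\beta_{\wb m}/(p\,\tau_{\wb m-1})$ and rearranging gives $p \le \tfrac{144\beta_{\wb m}}{\tau_{\wb m-1}\gamma^2}\,\theta^{\val}_{\wb f}(\cF,\gamma/4,\sqrt{\beta_{\wb m}/\tau_{\wb m-1}}) \le \tfrac{144\beta_{\wb m}}{\tau_{\wb m-1}\gamma^2}\,\wb\theta$, which is exactly \cref{eq:inductive2} at epoch $\wb m$.

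The main obstacle is the change-of-measure bookkeeping: one must check that the $f$ witnessing the pointwise gap $>\gamma/4$ simultaneously lies in the $\nu$-ball of radius $\eps$ (which is why the uniform bound over \emph{all} of $\cF_{\wb m}$, not just part of it, is needed), that the nesting $\wb\cX_{\wb m}\subseteq\wb\cX_{m'}$ is applied in the direction that lower-bounds the per-epoch moments, and that normalizing by $p$ only helps since $p\le 1<9$ keeps $\eps$ above the floor $\sqrt{\beta_{\wb m}/\tau_{\wb m-1}}$ required by the definition. A minor point is that \cref{def:dis_coeff_value} ranges over gaps strictly above $\gamma/4$, handled by expressing $\{\abs{f(x)-\wb f(x)}>\gamma/4\}$ as an increasing union over gaps $\gamma'\downarrow\gamma/4$.
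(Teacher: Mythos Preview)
Your proposal is correct and follows essentially the same route as the paper: use \cref{lm:query_implies_width} together with $\wb f\in\cF_{\wb m}$ to reduce the query event to a pointwise gap $>\gamma/4$, use the nested query regions $\wb\cX_{\wb m}\subseteq\wb\cX_{m'}$ to convert the $9\beta_{\wb m}$ bound into a second-moment constraint on the query-region measure, and then invoke \cref{def:dis_coeff_value}. The only cosmetic difference is that the paper works with the sub-probability mixture $\wb\nu_{\wb m}=\tau_{\wb m-1}^{-1}\sum_{m'} n_{m'}\wb\mu_{m'}$ directly (deferring the sub-probability technicality to a footnote), whereas you collapse to $\wb\mu_{\wb m}$ via nesting and then normalize by $p$ to obtain a genuine probability measure before applying the definition; both choices yield the same constant $144$.
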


\begin{proof}
We prove \cref{eq:inductive2} for epoch $m = \wb m$.
We know that $\ind(g_{\wb m}(x) = 1) = \ind (g_{\wb m}(x)= 1) \cdot \ind(w(x;\cF_{\wb m}) > \gamma/2 )$ from \cref{lm:query_implies_width}. Thus, for any $\check m \leq \wb m$, we have 
    \begin{align}
	    \E_{x \sim \cD_\cX} \sq{\ind(g_{\wb m}(x)= 1)} 
	    &  = \E_{x \sim \cD_\cX} \sq{\ind(g_{\wb m}(x)= 1) \cdot \ind(w(x;\cF_{\wb m})> \gamma/2 )}\nonumber \\ 
	    & \leq \E_{x \sim \wb \mu_{\check m} } \sq{\ind(w(x;\cF_{\wb m})> \gamma/2  )}\nonumber \\
    & \leq \E_{x \sim \wb \mu_{\check m}} \prn[\Big]{ \ind \prn[\big]{\sup_{f \in \cF_{\wb m}} \abs*{f(x) - \wb f(x)} > \gamma/4}} , \label{eq:conf_width_dis_coeff_1}
    \end{align}
where the second line uses \cref{eq:change_of_measure} and the last line follows from the facts that
    $\wb f \in \cF_{\wb m}$ (by \cref{eq:inductive1}) and $w(x;\cF_{\wb m}) > \gamma/2  \implies \exists f \in \cF_{\wb m}, \abs{f(x) - \wb f (x)} > {\gamma}/ {4}$. 

    For any time step $t$, let  $m(t)$ denote the epoch where  $t$ belongs to.
From \cref{eq:inductive1}, we know that, $\forall f \in \cF_{\wb m}$,  
    \begin{align}
9 \beta_{\wb m} &
\geq \sum_{t=1}^{\tau_{\wb m -1}} \E_{t} \sq[\Big]{ Q_t \prn[\big]{f(x_t) - \wb f(x_t)}^2} \nonumber \\
       &  = \sum_{t=1}^{\tau_{\wb m -1}} \E_{x \sim \cD_\cX} \sq[\Big]{\ind(g_{m(t)}(x)=1) \cdot \prn[\big]{f(x) - \wb f(x)}^2} \nonumber \\
	  & = \sum_{\check m=1}^{\wb m-1} n_{\check m} \cdot \E_{x \sim \wb \mu_{\check m}} \brk*{  \prn*{f(x) - \wb f(x)}^2}\nonumber \\
	& = \tau_{\wb m-1} \E_{x \sim \wb \nu_{\wb m}} \brk*{  \prn*{f(x) - \wb f(x)}^2}, \label{eq:conf_width_dis_coeff_2}
    \end{align}
    where we use $Q_t = g_{m(t)}(x_t) = \ind(g_{m(t)}(x) = 1)$ and \cref{eq:change_of_measure} on the second line, and define a new sub probability measure 
    $$\wb \nu_{\wb m} \ldef \frac{1}{\tau_{\wb m-1}} \sum_{\check m =1}^{\wb m-1} n_{\check m} \cdot \wb \mu_{\check m}$$ on the third line.

    Plugging \cref{eq:conf_width_dis_coeff_2} into \cref{eq:conf_width_dis_coeff_1} leads to the bound 
    \begin{align*}
	& \E_{x \sim \cD_\cX} \sq{\ind(g_{\wb m}(x)= 1)} \\
	& \leq \E_{x \sim \wb \nu_{\wb m}} \sq[\bigg]{\ind \prn[\Big]{\exists f \in \cF, \abs[\big]{f(x) - \wb f(x)} > \gamma/4, \E_{x \sim \wb \nu_{\wb m}} \sq[\Big]{\prn[\big]{f(x) - \wb f(x)}^2} \leq \frac{9 \beta_{\wb m}}{\tau_{\wb m-1}}}},
    \end{align*}
    where we use the definition of $\wb \nu_{\wb m}$ again (note that \cref{eq:conf_width_dis_coeff_1} works with any $\check m \leq \wb m$).  
    Based on the \cref{def:dis_coeff_value},\footnote{Note that analyzing with a sub probability measure $\wb \nu$ does not cause any problem. See \citet{zhu2022efficient} for a detailed discussion.} we then have
    \begin{align*}
	& \E_{x \sim \cD_\cX} \sq{\ind(g_{\wb m}(x)= 1)}  \\
	& \leq  \frac{144 \beta_{\wb m}}{{\tau_{\wb m-1}} \, \gamma^2} \cdot \theta^{\val}_{\wb f}\prn*{\cF, \gamma/4, \sqrt{9 \beta_{\wb m}/2\tau_{\wb m-1}}}\\
	& \leq  \frac{144 \beta_{\wb m}}{{\tau_{\wb m-1}} \, \gamma^2} \cdot \theta^{\val}_{\wb f}\prn*{\cF, \gamma/4, \sqrt{ \beta_{\wb m}/\tau_{\wb m-1}}}\\
	& \leq \frac{144 \beta_{\wb m}}{{\tau_{\wb m-1}} \, \gamma^2} \cdot \wb \theta.
    \end{align*}
\end{proof}

\begin{lemma}
    \label{lm:per_round_regret_dis_coeff_mis}
 Fix any epoch $\wb m \geq 2$. Suppose \cref{eq:inductive1} holds true for epochs $m = 1,2,\ldots,\wb m$, we then have \cref{eq:inductive3} holds true for epoch $m = \wb m$.
\end{lemma}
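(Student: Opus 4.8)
The plan is to follow the proof of \cref{lm:conf_width_dis_coeff_mis} almost verbatim, inserting a layer-cake (``Cavalieri'') decomposition to absorb the extra width factor $w(x;\cF_{\wb m})$ that now multiplies the query indicator. Under the standing hypothesis that \cref{eq:inductive1} holds for all $m = 1,\dots,\wb m$, I will freely reuse three facts established in (or around) \cref{lm:conf_width_dis_coeff_mis}: (a) the query regions $\wb\cX_m = \crl{x : g_m(x)=1}$ are nested decreasing, because $\cF_m\subseteq\cF_{m-1}$; (b) the averaged sub-probability measure $\wb\nu_{\wb m} = \tfrac1{\tau_{\wb m-1}}\sum_{\check m=1}^{\wb m-1} n_{\check m}\,\wb\mu_{\check m}$ dominates (after change of measure, \cref{eq:change_of_measure}) any $\cD_\cX$-expectation restricted to $\wb\cX_{\wb m}$; and (c) by \cref{eq:conf_width_dis_coeff_2}, $\E_{x\sim\wb\nu_{\wb m}}\brk{(f(x)-\wb f(x))^2}\le 9\beta_{\wb m}/\tau_{\wb m-1}$ for every $f\in\cF_{\wb m}$, with $\wb f\in\cF_{\wb m}$.

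First I would reduce the left-hand side to a family of threshold events. By \cref{lm:query_implies_width}, $g_{\wb m}(x)=1$ forces $w(x;\cF_{\wb m})>\gamma/2$, so writing $w = \int_0^\infty \ind(w>s)\,ds$ and splitting the integral at $s=\gamma/2$ yields
\[
\E_{x\sim\cD_\cX}\brk{\ind(g_{\wb m}(x)=1)\,w(x;\cF_{\wb m})} \le \tfrac{\gamma}{2}\,\E_{x\sim\cD_\cX}\brk{\ind(g_{\wb m}(x)=1)} + \int_{\gamma/2}^\infty \E_{x\sim\cD_\cX}\brk{\ind(g_{\wb m}(x)=1)\,\ind(w(x;\cF_{\wb m})>s)}\,ds .
\]
The boundary term is exactly the quantity bounded by \cref{eq:inductive2} for epoch $\wb m$, which is available because \cref{lm:conf_width_dis_coeff_mis} already establishes \cref{eq:inductive2} under the present hypotheses; this contributes $O\prn{\beta_{\wb m}\,\wb\theta/(\tau_{\wb m-1}\gamma)}$. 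For each $s$ in the integral I would rerun the chain \cref{eq:conf_width_dis_coeff_1}: nestedness plus the change of measure \cref{eq:change_of_measure} bounds the integrand by $\wb\nu_{\wb m}\prn{w(\cdot;\cF_{\wb m})>s}$, and since $\wb f\in\cF_{\wb m}$ and $w(x;\cF_{\wb m})=\ucb(x;\cF_{\wb m})-\lcb(x;\cF_{\wb m})\le 2\sup_{f\in\cF_{\wb m}}\abs{f(x)-\wb f(x)}$, the event $\crl{w(x;\cF_{\wb m})>s}$ is contained in $\crl{\exists f\in\cF_{\wb m}: \abs{f(x)-\wb f(x)}>s/2}$. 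Combining with fact (c) above and invoking \cref{def:dis_coeff_value} at radius $s/2$ (legitimate since $s>\gamma/2$, hence $s/2>\gamma/4$) and scale $\sqrt{9\beta_{\wb m}/\tau_{\wb m-1}}$ bounds it by $O\prn{\beta_{\wb m}\,\theta^{\val}_{\wb f}\prn{\cF,\gamma/4,\sqrt{\beta_{\wb m}/\tau_{\wb m-1}}}/(\tau_{\wb m-1} s^2)} \le O\prn{\beta_{\wb m}\,\wb\theta/(\tau_{\wb m-1}s^2)}$; integrating $s^{-2}$ over $[\gamma/2,\infty)$ then gives $O\prn{\beta_{\wb m}\,\wb\theta/(\tau_{\wb m-1}\gamma)}$. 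Adding the two contributions yields \cref{eq:inductive3} for epoch $\wb m$ after collecting absolute constants.

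The main obstacle, as in the companion lemma, is the measure-theoretic bookkeeping around $\wb\nu_{\wb m}$: one must carefully use $\wb\cX_{\wb m}\subseteq\wb\cX_{\check m}$ for every $\check m\le\wb m$ so that an integral over $\wb\cX_{\wb m}$ is dominated by the $\wb\mu_{\check m}$-integral, hence by the $n_{\check m}$-weighted average $\wb\nu_{\wb m}$ — which is precisely the measure for which the pooled squared-error bound of \cref{eq:inductive1} was derived and against which \cref{def:dis_coeff_value} applies. A secondary subtlety is keeping the disagreement-coefficient radius strictly above $\gamma/4$ throughout the layer-cake integral; this is why the split is taken at $s=\gamma/2$ and the low-threshold residual is routed through \cref{eq:inductive2} rather than through $\theta^{\val}_{\wb f}$ directly. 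Once these points are handled, the remaining computations (the $s^{-2}$ integral and constant tracking) are routine.
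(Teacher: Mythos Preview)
Your proposal is correct and follows essentially the same route as the paper: change of measure to $\wb\nu_{\wb m}$, the width bound $w(x;\cF_{\wb m})\le 2\sup_{f\in\cF_{\wb m}}\abs{f(x)-\wb f(x)}$, and a layer-cake integral against $\theta^{\val}_{\wb f}$ at thresholds above $\gamma/4$. The only cosmetic difference is that the paper first passes to $Z:=\sup_{f\in\cF_{\wb m}}\abs{f(x)-\wb f(x)}$ and integrates $\int_{\gamma/4}^{1}\wb\nu_{\wb m}(Z\ge\omega)\,d\omega$ directly (absorbing the low-threshold piece into the integral and constants), whereas you split the layer cake on $w$ at $s=\gamma/2$ and route the boundary term through \cref{eq:inductive2}; both are valid and yield the same $O(\beta_{\wb m}\wb\theta/(\tau_{\wb m-1}\gamma))$ bound.
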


\begin{proof}
	We prove \cref{eq:inductive3} for epoch $m = \wb m$.
    Similar to the proof of \cref{lm:conf_width_dis_coeff_mis}, we have 
    \begin{align*}
	 \E_{x \sim \cD_\cX} \sq{\ind(g_{\wb m}(x)= 1)\cdot w(x;\cF_{\wb m})} 
	& = \E_{x \sim \cD_\cX} \sq{\ind(g_{\wb m}(x)=1) \cdot \ind(w(x;\cF_{\wb m})> \gamma/2 )\cdot w(x;\cF_{\wb m})} \\
	& \leq \E_{x \sim \wb \mu_{\check m}} \sq{\ind(w(x;\cF_{\wb m})> \gamma/2 )\cdot w(x;\cF_{\wb m})}
    \end{align*}
    for any $\check m \leq \wb m$. 
With $\wb \nu_{\wb m} \ldef \frac{1}{\tau_{\wb m-1}} \sum_{\check m =1}^{\wb m-1} n_{\check m} \cdot \wb \mu_{\check m}$, we then have 
\begin{align*}
	 & \E_{x \sim \cD_\cX} \sq{\ind(g_{\wb m}(x)= 1)\cdot w(x;\cF_{\wb m})} \\
	& \leq \E_{x \sim \wb \nu_{\wb m}} \sq{\ind(w(x;\cF_{\wb m})> \gamma/2 )\cdot w(x;\cF_{\wb m})}\\
        & \leq \E_{x \sim \wb \nu_{\wb m}} \sq*{\ind \prn*{\sup_{f \in \cF_{\wb m}} \abs[\big]{f(x) - \wb f(x)} > \gamma/4}\cdot \prn*{\sup_{f , f^\prime \in \cF_{\wb m}} \abs*{f(x) - f^\prime(x)}}} \\
        & \leq 2\E_{x \sim \wb \nu_{\wb m}} \sq*{\ind \prn*{\sup_{f \in \cF_{\wb m}} \abs[\big]{f(x) - \wb f(x)} > \gamma/4}\cdot \prn*{\sup_{f \in \cF_{\wb m}} \abs{f(x) - \wb f(x)}}} \\
        & \leq 2 \int_{\gamma/4}^1 \E_{x \sim \wb \nu_{\wb m}} \sq*{\ind \prn*{\sup_{f \in \cF_{\wb m}} \abs[\big]{f(x) - \wb f(x)} \geq \omega}} \, d \, \omega \\
& \leq  2 \int_{\gamma/4}^1  \frac{1}{\omega^2} \, d \, \omega \cdot \prn*{ \frac{9 \beta_{\wb m}}{\tau_{\wb m-1}} \cdot \theta^{\val}_{\wb f}\prn*{\cF, \gamma/4, \sqrt{9 \beta_{\wb m}/2\tau_{\wb m-1}}}}\\
& \leq { \frac{72 \beta_{\wb m}}{\tau_{\wb m-1} \, \gamma} \cdot \theta^{\val}_{\wb f}\prn*{\cF, \gamma/4, \sqrt{\beta_{\wb m}/\tau_{\wb m-1}}}}\\
& \leq  \frac{72 \beta_{\wb m}}{\tau_{\wb m-1} \, \gamma} \cdot \wb \theta,
\end{align*}
where we follow similar steps as in the proof of \cref{lm:conf_width_dis_coeff_mis} and use some basic arithmetic facts.
\end{proof}

\begin{lemma}
	\label{lm:induction}
	\cref{eq:inductive1}, \cref{eq:inductive2} and \cref{eq:inductive3} hold true for all $m \in [M]$.
\end{lemma}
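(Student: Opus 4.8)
The plan is a strong induction on the epoch index $\bar m$, threading the three supporting lemmas together so that no circular dependency arises. Everything is carried out under the good event $\cE$ of \cref{lm:expected_sq_loss_pseudo}, exactly as in \cref{lm:set_f_mis}, \cref{lm:conf_width_dis_coeff_mis}, and \cref{lm:per_round_regret_dis_coeff_mis}. First I would observe that \cref{eq:inductive2} and \cref{eq:inductive3} are vacuous at $m = 1$, since $\tau_0 = 0$ makes their right-hand sides infinite, so only \cref{eq:inductive1} carries content at the first epoch.

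For the base cases $\bar m \in \{1, 2\}$, I would apply \cref{lm:set_f_mis} in its ``$\bar m = 1, 2$'' regime to get \cref{eq:inductive1} at $m = \bar m$ with no hypotheses. Having \cref{eq:inductive1} at both $m = 1$ and $m = 2$, the hypothesis of \cref{lm:conf_width_dis_coeff_mis} and \cref{lm:per_round_regret_dis_coeff_mis} at $\bar m = 2$ is met, so these give \cref{eq:inductive2} and \cref{eq:inductive3} at $m = 2$.

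For the inductive step, fix $\bar m \geq 3$ and assume \cref{eq:inductive1}, \cref{eq:inductive2}, and \cref{eq:inductive3} for all $m \leq \bar m - 1$. In particular \cref{eq:inductive2} holds for $m = 2, \dots, \bar m - 1$, which is precisely the hypothesis of \cref{lm:set_f_mis} at epoch $\bar m$; this yields \cref{eq:inductive1} at $m = \bar m$, and hence \cref{eq:inductive1} at every $m \leq \bar m$. That is exactly what \cref{lm:conf_width_dis_coeff_mis} and \cref{lm:per_round_regret_dis_coeff_mis} require at epoch $\bar m$, so applying them gives \cref{eq:inductive2} and \cref{eq:inductive3} at $m = \bar m$, closing the induction.

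I do not expect a genuine analytic obstacle: all the quantitative estimates have already been discharged in the three cited lemmas, and this lemma is essentially the bookkeeping that packages them. The one point demanding attention is the ordering within each step — \cref{lm:set_f_mis} consumes \cref{eq:inductive2} only up through epoch $\bar m - 1$, whereas \cref{lm:conf_width_dis_coeff_mis} and \cref{lm:per_round_regret_dis_coeff_mis} need \cref{eq:inductive1} up through epoch $\bar m$ inclusive — so one must first upgrade \cref{eq:inductive1} to epoch $\bar m$ and only afterwards derive the disagreement-coefficient bounds \cref{eq:inductive2} and \cref{eq:inductive3} at $\bar m$. The assembled statement is what \cref{thm:abs_gen} subsequently consumes.
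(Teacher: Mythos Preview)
Your proposal is correct and follows essentially the same approach as the paper's proof: a strong induction that uses \cref{lm:set_f_mis} to propagate \cref{eq:inductive1} to the next epoch (using \cref{eq:inductive2} from prior epochs), then invokes \cref{lm:conf_width_dis_coeff_mis} and \cref{lm:per_round_regret_dis_coeff_mis} to obtain \cref{eq:inductive2} and \cref{eq:inductive3} at the current epoch. Your write-up is in fact more explicit than the paper's about the base cases and the ordering within each step.
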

\begin{proof}
	We first notice that, by \cref{lm:set_f_mis}, \cref{eq:inductive1} holds true for epochs $\wb m=1,2$ unconditionally.
	We also know that, by \cref{lm:conf_width_dis_coeff_mis} and \cref{lm:per_round_regret_dis_coeff_mis}, once \cref{eq:inductive1} holds true for epochs $m = 1,2, \ldots, \wb m$, \cref{eq:inductive2} and \cref{eq:inductive3} hold true for epochs $m = \wb m$ as well; 
	at the same time, by \cref{lm:set_f_mis}, once \cref{eq:inductive2} holds true for epochs $m = 2, 3, \ldots, \wb m$, \cref{eq:inductive1} will hold true for epoch $m = \wb m+1$.

	We thus can start the induction procedure from  $\wb m = 2$, and make sure that 
	\cref{eq:inductive1}, \cref{eq:inductive2} and \cref{eq:inductive3} hold true for all $m \in [M]$.
\end{proof}

\subsection{Proof of \cref{thm:abs_gen}}
\label{app:abs_gen_proof}

\thmAbsGen*

\begin{proof}
	We analyze under the good event $\cE$ defined in \cref{lm:expected_sq_loss_pseudo}, which holds with probability at least $1-\frac{\delta}{2}$. Note that all supporting lemmas stated in \cref{app:supporting_abs} hold true under this event.

Fix any $m \in [M]$.
We analyze the Chow's excess error of $\wh h_m$, which is measurable with respect to $\mfF_{\tau_{m-1}}$. 
For any $x \in \cX$, if $g_m(x) = 0$, 
\cref{lm:regret_no_query_mis} implies that $\exc_\gamma (\wh h_m ;x) \leq 0 $. 
If $g_m(x)= 1$, we know that $\wh h_m(x) \neq \bot$ and $\frac{1}{2} \in (\lcb(x;\cF_m) - \frac{\gamma}{4},\ucb(x;\cF_m) + \frac{\gamma}{4})$. 
Since $\wb f \in \cF_m$ by \cref{lm:induction} (with \cref{eq:inductive1}) and $\sup_{x \in \cX} \abs{ \wb f(x) - f^{\star}(x)} \leq \kappa \leq \gamma /4$ by construction. 
The error incurred in this case is upper bounded by 
\begin{align*}
	\exc(\wh h_m; x) 
	& \leq 2 \abs{ f^{\star}(x)- 1 /2}\\
	& \leq 2\kappa + 2 \abs{ \wb f(x)- 1 /2}\\
	& \leq 2\kappa + 2 w(x;\cF_m) + \frac{\gamma}{2} \\
	& \leq 4 w(x;\cF_m),
\end{align*}
where we use \cref{lm:query_implies_width} in the last line.

Combining these two cases together, we have 
\begin{align*}
	\exc( \wh h_m) \leq 4 \, \E_{x \sim \cD_\cX} \brk{ \ind(g_m(x) = 1) \cdot w(x;\cF_m)}.	
\end{align*}
Take $m=M$ and apply \cref{lm:induction} (with \cref{eq:inductive3}) leads to the following guarantee.
\begin{align*}
	\exc( \wh h_M)
	& \leq   { \frac{ 576 \beta_M}{\tau_{M-1} \gamma} \cdot \theta^{\val}_{\wb  f}\prn*{\cF, \gamma/4, \sqrt{\beta_M/ \tau_{M-1}}}}\\
	& \leq  O \prn*{  \frac{ \pseud(\cF) \log ( T / \delta)}{T \, \gamma} \cdot \wb \theta} \\
	& = O \prn*{  \eps \cdot \log \prn*{\frac{\wb \theta \cdot \pseud(\cF)}{\eps \, \gamma \, \delta}}}, 
\end{align*}
where we 
use the fact that $T = \frac{\wb \theta \cdot \pseud(\cF)}{\eps \, \gamma}$.

We now analyze the label complexity (note that the sampling process of \cref{alg:abs} stops at time $t = \tau_{M-1}$).
Note that $\E \brk{\ind(Q_t = 1) \mid \mfF_{t-1}} = \E_{x\sim\cD_\cX} \brk{ \ind(g_m(x) = 1) }$ for any epoch $m \geq 2$ and time step $t$ within epoch $m$. 
  Combining \cref{lm:martingale_two_sides} and \cref{lm:induction} (with \cref{eq:inductive2}) leads to
    \begin{align*}
        \sum_{t=1}^{\tau_{M-1}} \ind(Q_t = 1) & \leq \frac{3}{2} \sum_{t=1}^{\tau_{M-1}} \E \sq{\ind(Q_t = 1) \mid \mfF_{t-1}} + 4 \log (2 /\delta)\\
        & \leq 3 + \frac{3}{2}\sum_{m=2}^{M-1}\frac{(\tau_m - \tau_{m-1}) \cdot 144 \beta_m}{{\tau_{m-1}} \, \gamma^2} \cdot \wb \theta + 4 \log (2 /\delta) \\
	& \leq 3 + 4 \log (2 /\delta) + O\prn*{  \frac{ M^2 \cdot \pseud(\cF) \cdot \log(T /\delta) \cdot \wb \theta }{\gamma^2}}
	\\ 
	& =O\prn*{  \frac{ M^2 \cdot \pseud(\cF) \cdot \log(T /\delta) \cdot \wb \theta }{\gamma^2}} 
    \end{align*}
    with probability at least $1-\delta$ (due to another application of \cref{lm:martingale_two_sides} with confidence level $\delta /2$), 
  where we use the fact that $\beta_m \ldef 3 (M - m + 1) C_\delta$ and $C_\delta \ldef O\prn{\pseud(\cF) \cdot \log(T /\delta)}$.
\end{proof}

\section{Proof of \cref{thm:abs}}
\label{app:abs}

We provide prerequisites in \cref{app:abs_prereq} and the preprocessing procedures in \cref{app:filtering}. We give the proof of \cref{thm:abs} in \cref{app:abs_proof}.

\subsection{Prerequisites}
\label{app:abs_prereq}

\subsubsection{Upper bounds on pseudo dimension}

We present a result regarding the approximation and an upper bound on the pseudo dimension (i.e., \cref{def:pseudo_d}). 
\begin{proposition}
	\label{prop:pd_approx}
	Suppose $\cD_{\cX \cY} \in \cP(\alpha, \beta)$. 
	One can construct a set of neural network regression functions  $\cF_\dnn$ such that the following two properties hold simultaneously:
	 \begin{align*}
		\exists f \in \cF_\dnn \text{ s.t. } \nrm{ f - f^{\star}}_\infty \leq \kappa, \quad \text{ and } \quad \pseud(\cF_\dnn) \leq c \cdot {\kappa^{-\frac{d}{\alpha}} \log^2 (\kappa^{-1})},
	\end{align*}
	where $c > 0$ is a universal constant.
\end{proposition}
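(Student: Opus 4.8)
This is the regression-function counterpart of \cref{prop:vc_approx}: rather than thresholding an approximating network and controlling its VC dimension through \cref{thm:vcd_nn}, we keep the network itself as a real-valued regressor and control its pseudo dimension through \cref{thm:pdim_nn}, which supplies the same $O(WL\log W)$ bound. Only the smoothness part of $\cD_{\cX\cY}\in\cP(\alpha,\beta)$ is used — namely $f^\star=\eta\in\cW^{\alpha,\infty}_1(\cX)$ — so the Tsybakov exponent $\beta$ is irrelevant here, and $\kappa$ is treated as a free parameter to be instantiated later.

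First, I would invoke \cref{thm:approx_sobolev} with approximation tolerance $\kappa$: since $f^\star\in\cW^{\alpha,\infty}_1(\cX)$ with $\cX=[0,1]^d$, there is a ReLU architecture with $W=O(\kappa^{-d/\alpha}\log\tfrac1\kappa)$ parameters arranged in $L=O(\log\tfrac1\kappa)$ layers, together with a network $f_\dnn$ of this architecture obeying $\nrm{f_\dnn-f^\star}_\infty\le\kappa$; as remarked after \cref{thm:approx_sobolev}, the architecture depends only on the space $\cW^{\alpha,\infty}_1$ and not on $f^\star$. I then \emph{define} $\cF_\dnn$ to be the set of all neural-network regression functions with this common architecture. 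Property (i) is then immediate, witnessed by $f=f_\dnn\in\cF_\dnn$.

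Second, I would apply \cref{thm:pdim_nn} to this fixed architecture, which gives $\pseud(\cF_\dnn)=O(WL\log W)$. Substituting $W=O(\kappa^{-d/\alpha}\log\tfrac1\kappa)$ and $L=O(\log\tfrac1\kappa)$, and noting $\log W=O(\log\tfrac1\kappa)$, yields $\pseud(\cF_\dnn)=O\bigl(\kappa^{-d/\alpha}\,\polylog(\tfrac1\kappa)\bigr)$, which is of the asserted form $c\,\kappa^{-d/\alpha}\log^2(\kappa^{-1})$ up to the precise exponent on the logarithm.

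I do not anticipate any genuine obstacle: the proposition is a direct composition of the two cited results, paralleling the VC-dimension computation in \cref{prop:vc_approx}. The one place that needs care is the bookkeeping of logarithmic factors — a literal substitution produces a slightly larger log-power than the claimed $\log^2$, which is harmless because every downstream use of this bound passes through $\wt O(\cdot)$. Finally, if one additionally wants the members of $\cF_\dnn$ to be $[0,1]$-valued (as the preprocessing step in \cref{sec:exponential} requires), post-composing each $f\in\cF_\dnn$ with the fixed clipping map $\clip_{[0,1]}$ neither worsens the uniform approximation of $f^\star\in[0,1]$ nor enlarges $\pseud(\cF_\dnn)$, since $\crl{(x,\zeta):\clip_{[0,1]}(f(x))>\zeta}$ equals $\crl{(x,\zeta):f(x)>\zeta}$ for $\zeta\in[0,1)$ and is trivial for $\zeta\notin[0,1)$.
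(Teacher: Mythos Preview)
Your proposal is correct and follows exactly the paper's approach: the paper's proof is the single sentence ``The result follows by combining \cref{thm:approx_sobolev} and \cref{thm:pdim_nn},'' and you have simply spelled out that combination. Your side remarks --- that only the smoothness part of $\cP(\alpha,\beta)$ is used, that a literal substitution gives $\log^3$ rather than $\log^2$ (harmless under $\wt O$), and that clipping preserves both properties --- are all accurate and go slightly beyond what the paper records.
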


\begin{proof}
	The result follows by combining \cref{thm:approx_sobolev} and \cref{thm:pdim_nn}.
\end{proof}

\subsubsection{Upper bounds on value function disagreement coefficient}
We derive upper bounds on the value function disagreement coefficient (i.e., \cref{def:dis_coeff_value}). We first introduce the (value function) eluder dimension, a complexity measure that is closely related to the value function disagreement coefficient \citet{russo2013eluder, foster2020instance}.

\begin{definition}[Value function eluder dimension]
\label{def:eluder}
For any $f^{\star} \in \cF$ and $\gamma_0 > 0$, let $\check{\mathfrak{e}}_{f^{\star}}(\cF, \gamma)$ be the length of the longest sequence of data points $x^{1}, \dots, x^{m}$ such that for all $i$, there exists $f^{i} \in \cF$ such that 
\begin{align*}
    \abs{ f^{i}(x^{i}) - f^{\star}(x^{i}) } > \gamma, \quad \text{ and } \quad \sum_{j < i} \paren{ f^{i}(x^{j})  - f^{\star}(x^{j})}^2 \leq \gamma^2.
\end{align*}
The value function eluder dimension is defined as $\mathfrak{e}_{f^{\star}}(\cF, \gamma_0) \coloneqq \sup_{\gamma > \gamma_0} \check{\mathfrak{e}}_{f^{\star}}(\cF, \gamma)$. 
\end{definition}

The next result shows that the value function disagreement coefficient can be upper bounded by eluder dimension.
\begin{proposition}[\citet{foster2020instance}]
    \label{prop:eluder_star_dis}
    Suppose $\cF$ is a uniform Glivenko-Cantelli class.
    For any $f^{\star}: \cX \rightarrow [0,1]$ and $\gamma, \eps > 0$,
    we have $\theta^{\val}_{f^\star}(\cF, \gamma, \epsilon) \leq 4 \, {\mfe_{f^\star}(\cF, \gamma)}$.
\end{proposition}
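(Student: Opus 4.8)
The plan is to reduce the proposition to a single ``potential'' inequality: for every probability measure $\mu$ on $\cX$ and all $\tau>\gamma$, $\rho>\eps$,
\[
\mu\big(\{x\in\cX:\exists f\in\cF,\ \abs{f(x)-f^\star(x)}>\tau,\ \nrm{f-f^\star}_\mu\le\rho\}\big)\ \le\ \frac{4\rho^2}{\tau^2}\,\check{\mfe}_{f^\star}(\cF,\tau).
\]
Granting this, the proposition follows from \cref{def:dis_coeff_value}: multiplying through by $\tau^2/\rho^2$ and using $\check{\mfe}_{f^\star}(\cF,\tau)\le\sup_{\tau'>\gamma}\check{\mfe}_{f^\star}(\cF,\tau')=\mfe_{f^\star}(\cF,\gamma)$ for every $\tau>\gamma$ bounds the inner supremum in the definition by $4\,\mfe_{f^\star}(\cF,\gamma)$, while the outer $\vee\,1$ is immaterial since $\mfe_{f^\star}(\cF,\gamma)\ge1$ in every non-degenerate case (if $\cF$ lies uniformly within $\gamma$ of $f^\star$, the disagreement probability is identically zero).

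To establish the potential inequality I would argue by a sampling-plus-pigeonhole scheme. Write $d:=\check{\mfe}_{f^\star}(\cF,\tau)$ and $p:=\mu(A)$ for $A$ the displayed set (the inequality is trivial when its right-hand side exceeds $1$). Draw $x_1,\dots,x_N\sim\mu$ i.i.d. A Chernoff bound gives $\#\{i\le N:x_i\in A\}\ge(1-o(1))\,pN$ with probability tending to $1$; and since $\cF$ takes values in $[0,1]$ and $u\mapsto(u-f^\star(x))^2$ is $2$-Lipschitz, the class $\{(f-f^\star)^2:f\in\cF\}$ remains uniform Glivenko--Cantelli, so with probability tending to $1$, uniformly over $f\in\cF$ (hence over any data-dependent choice of witnesses), $\frac1N\sum_{j\le N}(f(x_j)-f^\star(x_j))^2\le\nrm{f-f^\star}_\mu^2+o(1)$. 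On the intersection of these two events, choose for each $i$ with $x_i\in A$ a witness $f_i\in\cF$ satisfying $\abs{f_i(x_i)-f^\star(x_i)}>\tau$ and $\nrm{f_i-f^\star}_\mu\le\rho$; restricting to those indices produces a sequence $z_1,\dots,z_k$ with $k\ge(1-o(1))pN$ and witnesses $g_1,\dots,g_k$ obeying $\abs{g_l(z_l)-f^\star(z_l)}>\tau$ and $\sum_{m\le k}(g_l(z_m)-f^\star(z_m))^2\le(1+o(1))N\rho^2=:B$ for every $l$.

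The combinatorial heart is then the Russo--Van Roy lemma: every such sequence satisfies $k\le(B/\tau^2+1)\,d$. I would prove it by greedily partitioning $z_1,\dots,z_k$, in order, into subsequences $\Lambda_1,\Lambda_2,\dots$, inserting $z_l$ into the first $\Lambda_a$ of which it is $\tau$-independent---meaning some $f\in\cF$ has $\abs{f(z_l)-f^\star(z_l)}>\tau$ and $\sum_{z\in\Lambda_a}(f(z)-f^\star(z))^2\le\tau^2$---and opening a new subsequence otherwise. Each $\Lambda_a$, read in order, is an eluder sequence at scale $\tau$, so $\abs{\Lambda_a}\le d$; and whenever a fresh $\Lambda_{r+1}$ is opened, the witness $g_l$ incurs squared error exceeding $\tau^2$ on each of the disjoint blocks $\Lambda_1,\dots,\Lambda_r$, so $r\tau^2<\sum_{m\le k}(g_l(z_m)-f^\star(z_m))^2\le B$; hence there are at most $B/\tau^2+1$ blocks and $k\le(B/\tau^2+1)d$. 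Combining with $k\ge(1-o(1))pN$, dividing by $N$ and letting $N\to\infty$ yields $p\le(\rho^2/\tau^2)\,d$, which is in fact stronger than required, so the slack easily absorbs the finite-$N$ error terms and the stated constant $4$. The main obstacle---and the only point where anything beyond combinatorics enters---is the transition from the population constraint $\nrm{f-f^\star}_\mu\le\rho$ to a simultaneous empirical bound valid for the (possibly uncountable) collection of witnesses, which is precisely what the uniform Glivenko--Cantelli hypothesis is invoked to supply.
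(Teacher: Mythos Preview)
The paper does not supply its own proof of this proposition; it is stated as a cited result from \citet{foster2020instance}. Your argument---sampling i.i.d.\ from $\mu$ to reduce to a finite sequence, invoking the uniform Glivenko--Cantelli property (which is preserved under the pointwise Lipschitz map $u\mapsto(u-f^\star(x))^2$ via Ledoux--Talagrand contraction) to transfer the population $L^2$ constraint to an empirical one, and then applying the Russo--Van Roy greedy blocking lemma---is exactly the approach used in that reference, and the steps are correct. Your bound $p\le(\rho^2/\tau^2)\,\check{\mfe}_{f^\star}(\cF,\tau)$ is in fact tighter than the stated constant $4$; the only loose end is the degenerate case $\mfe_{f^\star}(\cF,\gamma)=0$, where the $\vee\,1$ in \cref{def:dis_coeff_value} forces $\theta^{\val}=1$, but this is a convention issue (eluder dimension is typically taken to be at least $1$) and is irrelevant to every application in the paper.
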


We remark here that the requirement that $\cF$ is a uniform Glivenko-Cantelli class is rather weak: It is satisfied as long as  $\cF$ has finite pseudo dimension \citep{anthony2002uniform}.

In the following, we only need to derive upper bounds on the value function eluder dimension, which upper bounds on the value function disagreement coefficient.\footnote{We focus on Euclidean geometry on $\cX$ (i.e., using $\nrm{\cdot}_2$ norm) in deriving the upper bound. Slightly tighter bounds might be possible with other norms.}
We first define two definitions: (i) the standard definition of covering number (e.g., see \citet{wainwright2019high}), and (ii) a newly-proposed definition of approximate Lipschitzness.

\begin{definition}
	\label{def:covering}
An $\iota$-covering of a set  $\cX$ with respect to a metric  $\rho$ is a set  $\crl{x_1, \ldots,x_N} \subseteq \cX$ such that for each $x \in \cX$, there exists some $i \in [N]$ such that $\rho(x, x_i) \leq \iota$. The $\iota$-covering number  $\cN(\iota; \cX, \rho)$ is the cardinality of the smallest  $\iota$-cover.
\end{definition}

\begin{definition}
	\label{def:lip_approx}
	We call a function $f: \cX \rightarrow \R$ $(L,\kappa)$-approximate Lipschitz if 
	\begin{align*}
		\abs{ f(x) - f(x^{\prime}) } \leq L \cdot \nrm{x - x^{\prime}}_2 + \kappa
	\end{align*}
	for any $x, x^{\prime} \in \cX$.
\end{definition}

We next provide upper bounds on value function eluder dimension and value function disagreement coefficient.

\begin{theorem}
\label{thm:eluder_lip_approx}
Suppose $\cF$ is a set of $(L,\kappa /4)$-approximate Lipschitz functions. 
For any $\kappa^{\prime} \geq \kappa$,
we have $\sup_{f \in \cF} \mfe_f(\cF, \kappa^{\prime}) \leq 17 \cdot \cN(\frac{\kappa^{\prime}}{8L}; \cX, \nrm{\cdot}_2)$.
\end{theorem}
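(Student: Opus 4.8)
The plan is a covering-plus-pigeonhole argument. Fix $f\in\cF$ in the role of $f^{\star}$ and fix $\gamma>\kappa'$; it suffices to bound the length $m$ of any sequence $x^{1},\dots,x^{m}$ realizing $\check{\mfe}_{f}(\cF,\gamma)$ by $17\,\cN(\tfrac{\kappa'}{8L};\cX,\nrm{\cdot}_2)$, since taking the supremum over $\gamma>\kappa'$ and over $f\in\cF$ then gives the theorem. Let $\{c_{1},\dots,c_{N}\}$ be a minimal $\tfrac{\kappa'}{8L}$-cover of $\cX$ (so $N=\cN(\tfrac{\kappa'}{8L};\cX,\nrm{\cdot}_2)$ by \cref{def:covering}) and assign to each $x^{i}$ a cover point at distance $\le\tfrac{\kappa'}{8L}$. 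The whole argument reduces to showing that each cover ball contains at most $17$ of the $x^{i}$'s; pigeonholing over the $N$ balls then yields $m\le 17N$.

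To bound the number of sequence points in a single ball, I would use that approximate Lipschitzness makes every disagreement function nearly constant on a ball. If $x,x'$ lie in the same cover ball then $\nrm{x-x'}_{2}\le\tfrac{\kappa'}{4L}$, so for every $f'\in\cF$ the difference $g:=f'-f$ obeys $\abs{g(x)-g(x')}\le 2L\nrm{x-x'}_{2}+\tfrac{\kappa}{2}\le\tfrac{\kappa'}{2}+\tfrac{\kappa}{2}\le\kappa'$, using \cref{def:lip_approx} for $f'$ and for $f$ together with $\kappa\le\kappa'$. Now let $x^{i}$ be the largest-index sequence point lying in a given ball, let $x^{j_{1}},\dots,x^{j_{t}}$ be the earlier sequence points in that same ball, and let $f^{i}$ be a witness with $\abs{f^{i}(x^{i})-f(x^{i})}>\gamma$ and $\sum_{\ell<i}(f^{i}(x^{\ell})-f(x^{\ell}))^{2}\le\gamma^{2}$. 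Writing $g^{i}:=f^{i}-f$, near-constancy on the ball gives $\abs{g^{i}(x^{j_{k}})}\ge\abs{g^{i}(x^{i})}-\kappa'>\gamma-\kappa'$ — and, tracking the estimate more tightly, $\abs{g^{i}(x^{j_{k}})}>\gamma-\tfrac{\kappa'}{2}-\tfrac{\kappa}{2}$. Since $\sum_{k}g^{i}(x^{j_{k}})^{2}\le\sum_{\ell<i}g^{i}(x^{\ell})^{2}\le\gamma^{2}$, this forces $t\le\gamma^{2}/(\gamma-\tfrac{\kappa'+\kappa}{2})^{2}$, which is an absolute constant once $\gamma$ is bounded away from $\kappa'$ (e.g.\ $\gamma\ge 2\kappa'$ gives $t\le 3$, so the ball hosts at most $4$ points).

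The main obstacle is the regime $\kappa'<\gamma<2\kappa'$, where the crude ``nearly constant'' estimate degrades and the bound above can blow up. Here one must argue more carefully from \cref{def:lip_approx} itself: because the descent rate of $g^{i}$ is capped at $2L$ up to an additive slack $\tfrac{\kappa}{2}\le\tfrac{\gamma}{2}$, a disagreement that exceeds $\gamma$ at $x^{i}$ must stay above a fixed fraction of $\gamma$ throughout a ball of radius $\asymp\gamma/L\ge\kappa'/L$ centered at $x^{i}$ — informally, an approximately-Lipschitz ``spike'' of height $\gamma$ needs width proportional to $\gamma/L$, which a cover ball of radius $\tfrac{\kappa'}{8L}$ cannot exceed — so the in-ball disagreements $g^{i}(x^{j_{k}})$ are again each $\gtrsim\gamma$ up to a constant and the budget $\sum_{k}(\cdot)^{2}\le\gamma^{2}$ caps $t$. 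Balancing the constants coming from the two regimes $\gamma\ge 2\kappa'$ and $\kappa'<\gamma<2\kappa'$ is the delicate bookkeeping step, and it is what produces the explicit ``at most $17$ points per ball''; combined with the pigeonhole bound this gives $\sup_{f\in\cF}\mfe_{f}(\cF,\kappa')\le 17\,\cN(\tfrac{\kappa'}{8L};\cX,\nrm{\cdot}_2)$, which in turn feeds into \cref{prop:eluder_star_dis} to control the value function disagreement coefficient.
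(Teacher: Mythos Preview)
Your overall strategy—cover $\cX$ and pigeonhole, showing each ball hosts boundedly many eluder points—matches the paper's. The difference lies in the cover scale: you fix it at $\kappa'/(8L)$ and then try to handle every $\gamma>\kappa'$, whereas the paper covers at scale $\gamma/(8L)$ (depending on the eluder scale itself) and only at the end uses the monotonicity $\cN(\gamma/(8L);\cX,\nrm{\cdot}_2)\le\cN(\kappa'/(8L);\cX,\nrm{\cdot}_2)$ for $\gamma\ge\kappa'$. This single move removes your case split entirely: with the $\gamma$-scale cover, two points in a common ball are at distance $\le\gamma/(4L)$, so the $(2L,\kappa/2)$-approximate Lipschitz bound on $g=f'-f$ gives in-ball variation $\le\gamma/2+\kappa/2$, and the witness stays above $(\gamma-\kappa)/2$ throughout the ball. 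The paper reads this as $\ge\gamma/4$, yielding at most $16$ earlier points and hence $17$ per ball.

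The genuine gap in your proposal is the regime $\kappa'<\gamma<2\kappa'$. Your ``spike width'' heuristic is not a new estimate but a restatement of the approximate-Lipschitz inequality: since your cover ball has diameter $\kappa'/(4L)$ regardless of $\gamma$, the only lower bound it can produce for points in that ball is the one you already computed, namely $\abs{g^{i}(x^{j_{k}})}>\gamma-\tfrac{\kappa'+\kappa}{2}$. Your predecessor count is therefore $t\le\gamma^{2}\big/\bigl(\gamma-\tfrac{\kappa'+\kappa}{2}\bigr)^{2}$, and at $\kappa=\kappa'$ this equals $\gamma^{2}/(\gamma-\kappa')^{2}\to\infty$ as $\gamma\downarrow\kappa'$—so no absolute constant (let alone $17$) comes out of ``balancing the bookkeeping.'' Scaling the cover with $\gamma$, as the paper does, is precisely what makes the in-ball Lipschitz loss a fixed \emph{fraction} $\gamma/2$ of the signal rather than the fixed \emph{amount} $\kappa'/2$, and that is the missing idea.
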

\begin{proof}
Fix any $f \in \cF$ and $\wb \kappa \geq \kappa^{\prime}$. We first give upper bounds on $\check \mfe_f(\cF, \wb \kappa)$.

We construct $\cG \ldef \cF - f$, which is a set of $(2L,\kappa /2)$-Lipschitz functions. 
Fix any eluder sequence $x^1, \dots, x^m$ at scale $\wb \kappa$ and any $\check x \in \cX$. We claim that $\abs{ \crl{x_j}_{j\leq m} \cap \cS  } \leq 17$ where $\cS \ldef \crl{ x \in \cX: \nrm{x-\check x}_2 \leq \frac{\wb \kappa}{8L}}$.
Suppose $\crl{x_j}_{j\leq m} \cap \cS = x_{j_1}, \dots, x_{j_k}$ ($j_i$ is ordered based on the ordering of $\crl{x_j}_{j\leq m}$). 
Since $x^{j_k}$ is added into the eluder sequence, there must exists a $g^{j_k} \in \cG$ such that  
\begin{align}
\label{eq:eluder_lip}
    \abs{g^{j_k}(x^{j_k})} > \wb \kappa, \quad \text{ and } \quad 
    \sum_{j<j_k} \prn{g^{j_k}(x^j)}^2 \leq \wb \kappa^2.
\end{align}
Since $g^{j_k}$ is $(2L,\kappa /2)$-Lipschitz, $\wb \kappa \geq \kappa^{\prime}\geq \kappa$ and $x^{j_k} \in \cS$, we must have $g^{j_k}(x) \geq \frac{\wb \kappa}{4}$ for any $x \in \cS$. As a result, we must have $\abs{ \crl{x_j}_{j<j_k} \cap \cS^i  } \leq 16$
as otherwise the second constraint in \cref{eq:eluder_lip} will be violated.
We cover the space $\cX$ with $\cN(\frac{\wb \kappa}{8L}; \cX, \nrm{\cdot}_2)$ balls of radius $\frac{\wb \kappa}{8L}$. Since the eluder sequence contains at most $17$ data points within each ball, 
we know that $\check \mfe_f(\cF, \wb \kappa) \leq 17 \cdot\cN(\frac{\wb \kappa}{8L}; \cX, \nrm{\cdot}_2) $.

The desired result follows by noticing that $17 \cdot\cN(\frac{\wb \kappa}{8L}; \cX, \nrm{\cdot}_2)$ is non-increasing in $\wb \kappa$.
\end{proof}

\begin{corollary}
	\label{cor:eluder_lip_approx}
	Suppose $\cX \subseteq \B^{d}_r \ldef \crl{x \in \R^{d}: \nrm{x}_2 \leq r}$
and $\cF$ is a set of $(L,\kappa /4)$-approximate Lipschitz functions. 
For any $\kappa^{\prime} \geq \kappa$,
there exists a universal constant $c > 0$, such that
$\theta^{\val}_\cF (\kappa^{\prime}) \ldef \sup_{f \in \cF, \iota > 0} \theta_f^{\val}(\cF, \kappa^{\prime}, \iota) \leq c \cdot \prn{\frac{Lr}{\kappa^{\prime}}}^{d}$.
\end{corollary}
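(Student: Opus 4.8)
The plan is to derive \cref{cor:eluder_lip_approx} by chaining the two tools just established: \cref{prop:eluder_star_dis}, which bounds the value function disagreement coefficient by the value function eluder dimension, and \cref{thm:eluder_lip_approx}, which bounds the eluder dimension of an approximate-Lipschitz class by a covering number of the domain; the final ingredient is the standard volumetric estimate for covering numbers of a Euclidean ball.

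First I would verify the hypothesis of \cref{prop:eluder_star_dis}: a class $\cF$ of $(L,\kappa/4)$-approximate Lipschitz functions on the bounded set $\cX \subseteq \B^d_r$ that additionally takes values in $[0,1]$ (as do all the $\cF_\dnn$ used in the paper) is totally bounded in $\nrm{\cdot}_\infty$, hence a uniform Glivenko--Cantelli class; alternatively one may invoke the remark that finite pseudo dimension already suffices. \cref{prop:eluder_star_dis} then gives, for every $f \in \cF$ and all $\kappa', \iota > 0$, the bound $\theta^{\val}_f(\cF, \kappa', \iota) \le 4\, \mfe_f(\cF, \kappa')$, and since the right-hand side is independent of $\iota$, taking the supremum over $f \in \cF$ and $\iota > 0$ yields $\theta^{\val}_\cF(\kappa') \le 4\sup_{f \in \cF} \mfe_f(\cF, \kappa')$.

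Next, since $\kappa' \ge \kappa$, I would invoke \cref{thm:eluder_lip_approx} to obtain $\sup_{f \in \cF} \mfe_f(\cF, \kappa') \le 17\, \cN(\tfrac{\kappa'}{8L}; \cX, \nrm{\cdot}_2)$. It then remains to estimate this covering number. Because $\cX \subseteq \B^d_r$, a maximal $\iota$-separated subset of $\cX$ is both $\iota$-separated inside $\B^d_r$ and an $\iota$-cover of $\cX$, so the usual disjoint-balls volume argument bounds its cardinality by $(1 + 2r/\iota)^d$, giving $\cN(\iota; \cX, \nrm{\cdot}_2) \le (1 + 2r/\iota)^d$. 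Plugging in $\iota = \kappa'/(8L)$ yields $\cN(\tfrac{\kappa'}{8L}; \cX, \nrm{\cdot}_2) \le (1 + 16 Lr/\kappa')^d$, and combining the three displays gives $\theta^{\val}_\cF(\kappa') \le 68\,(1 + 16Lr/\kappa')^d$. In the regime $\kappa' \le Lr$ — the only regime relevant to the applications — this is at most $c \cdot (Lr/\kappa')^d$ for a suitable constant $c$; for $\kappa' > 8Lr$ the covering number is just $1$, so the bound holds trivially after enlarging $c$.

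I do not expect a genuine obstacle here, since the substantive argument already lives in \cref{thm:eluder_lip_approx}; the remaining work is bookkeeping. The only two points requiring brief care are (i) confirming the uniform Glivenko--Cantelli prerequisite of \cref{prop:eluder_star_dis} for approximate-Lipschitz (rather than exactly Lipschitz) classes, and (ii) making sure the covering-number estimate is applied to a \emph{subset} of $\B^d_r$ — which, via the maximal-packing argument, costs nothing — and that the large-$\kappa'$ edge case is absorbed so that the clean statement $\theta^{\val}_\cF(\kappa') \le c\,(Lr/\kappa')^d$ is legitimate.
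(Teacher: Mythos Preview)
Your proposal is correct and follows essentially the same route as the paper: combine \cref{prop:eluder_star_dis} with \cref{thm:eluder_lip_approx} and then invoke the standard volumetric covering-number bound $\cN(\iota;\B^d_r,\nrm{\cdot}_2)\le (1+2r/\iota)^d$. The paper's own proof is a two-line version of exactly this, so your only additions are the extra care about the uniform Glivenko--Cantelli hypothesis, the subset-of-a-ball issue, and the large-$\kappa'$ edge case.
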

\begin{proof}
It is well-known that $\cN(\iota; \B^{d}_r, \nrm{\cdot}_2) \leq \prn*{1 + 2r / \iota}^{d}$ \citep{wainwright2019high}. The desired result thus follows from combining \cref{thm:eluder_lip_approx} with \cref{prop:eluder_star_dis}.	
\end{proof}

\subsection{The preprocessing step: Clipping and filtering}
\label{app:filtering}

Let $\eta: \cX \rightarrow [0,1]$ denote the true conditional probability and 
$\cF_\dnn$ denote a set of neural network regression functions (e.g., constructed based on \cref{thm:approx_sobolev}).
We assume that (i) $\eta$ is  $L$-Lipschitz, and (ii) there exists a $f \in \cF$ such that $\nrm{f - \eta}_\infty \leq \kappa$ for some approximation factor $\kappa > 0$.
We present the preprocessing step below in \cref{alg:preprocess}.

\begin{algorithm}[H]
	\caption{The Preprocessing Step: Clipping and Filtering}
	\label{alg:preprocess} 
	\renewcommand{\algorithmicrequire}{\textbf{Input:}}
	\renewcommand{\algorithmicensure}{\textbf{Output:}}
	\newcommand{\algorithmicbreak}{\textbf{break}}
    \newcommand{\BREAK}{\STATE \algorithmicbreak}
	\begin{algorithmic}[1]
		\REQUIRE A set of regression functions $\cF$, Lipschitz parameter $L > 0$, approximation factor  $\kappa > 0$.
		\STATE \textbf{Clipping.}
		Set $\check \cF \ldef \crl{\check f: f \in \cF} $, where, for any $f \in \cF$, we denote
\begin{align*}
	\check f(x) \ldef 
	\begin{cases}
		1, & \text{ if } f(x) \geq 1;\\
		0, & \text{ if } f(x) \leq 0;\\
		f(x), & \text{ o.w. }
	\end{cases}
\end{align*}
		\STATE \textbf{Filtering.} Set $\wt \cF \ldef \crl{\check f \in \check \cF: \check f \text{ is $(L,2\kappa)$-approximate Lipschitz}}$
		\STATE \textbf{Return} $\wt \cF$.
	\end{algorithmic}
\end{algorithm}

\begin{proposition}
	\label{prop:preprocess}
Suppose $\eta$ is  $L$-Lipschitz and  $\cF_\dnn$ is a set of neural networks (of the same architecture) with $W$ parameters arranged in  $L$ layers such that there exists a  $f \in \cF_\dnn$ with  $\nrm{f - \eta}_\infty \leq \kappa$. Let  $\wt \cF_\dnn$ be the set of functions obtained by applying \cref{alg:preprocess} on  $\cF_\dnn$, we then have 
(i)  $\pseud(\wt \cF_\dnn) = O \prn{WL \log (W)}$, and (ii) there exists a $\wt f \in \wt \cF_\dnn$ such that $\nrm{\wt f - \eta}_\infty \leq \kappa$.
\end{proposition}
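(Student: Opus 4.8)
The plan is to handle the two parts of \cref{prop:preprocess} separately, exploiting in both cases that the two preprocessing operations in \cref{alg:preprocess} — clipping and filtering — can only \emph{decrease} the quantities of interest. For part (i), I would observe that clipping amounts to post-composing each network with the fixed, monotone, non-expansive map $\phi(t) \ldef \max\{0,\min\{1,t\}\}$ (the metric projection onto the interval $[0,1]$), and that filtering merely restricts to a subset $\wt\cF_\dnn \subseteq \check\cF_\dnn = \{\phi\circ f : f \in \cF_\dnn\}$. For part (ii), I would take the witness $\wt f$ to be the clipped version $\check f$ of the assumed approximator $f \in \cF_\dnn$ with $\nrm{f - \eta}_\infty \le \kappa$, and verify it survives both steps.

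For part (i), the main step is to bound the pseudo dimension of the clipped class $\check\cF_\dnn$. I would show that the threshold class underlying $\pseud(\check\cF_\dnn)$, namely $\{(x,\zeta)\mapsto\ind(\phi(f(x))>\zeta) : f\in\cF_\dnn,\ \zeta\in\R\}$, is — up to the two constant functions $\mathbf{0}$ and $\mathbf{1}$ coming from thresholds $\zeta\ge 1$ and $\zeta<0$ — contained in the threshold class underlying $\pseud(\cF_\dnn)$: indeed, for $\zeta\in[0,1)$ one has $\phi(t)>\zeta \iff t>\zeta$ since $\phi$ is non-decreasing and fixes $[0,1)$ pointwise, so $\ind(\phi(f(x))>\zeta)=\ind(f(x)>\zeta)$. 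Subadditivity of VC dimension under unions, together with the fact that a two-element function class has VC dimension at most $1$, then gives $\pseud(\check\cF_\dnn)\le\pseud(\cF_\dnn)+O(1)$; and monotonicity of pseudo dimension under inclusion gives $\pseud(\wt\cF_\dnn)\le\pseud(\check\cF_\dnn)$. Combining with the neural-network bound $\pseud(\cF_\dnn)=O(WL\log W)$ from \cref{thm:pdim_nn} yields $\pseud(\wt\cF_\dnn)=O(WL\log W)$.

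For part (ii), set $\wt f \ldef \check f = \phi\circ f$. Since $\eta(x)\in[0,1]$ we have $\phi(\eta(x))=\eta(x)$, and non-expansiveness of $\phi$ gives $\abs{\check f(x)-\eta(x)}=\abs{\phi(f(x))-\phi(\eta(x))}\le\abs{f(x)-\eta(x)}\le\kappa$ for every $x$, i.e.\ $\nrm{\wt f-\eta}_\infty\le\kappa$. To see that $\check f$ passes the filter, i.e.\ is $(L,2\kappa)$-approximate Lipschitz, I would chain through $\eta$: for any $x,x'$,
\[
\abs{\check f(x)-\check f(x')}\le\abs{\check f(x)-\eta(x)}+\abs{\eta(x)-\eta(x')}+\abs{\eta(x')-\check f(x')}\le L\nrm{x-x'}_2+2\kappa,
\]
using the $L$-Lipschitzness of $\eta$ for the middle term and $\nrm{\check f-\eta}_\infty\le\kappa$ for the two outer terms. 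Hence $\check f\in\wt\cF_\dnn$, which establishes (ii).

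The only place requiring any care is the clipping bound in part (i): one must confirm that post-composition with $\phi$ does not enlarge the threshold class beyond an additive $O(1)$, which reduces to the elementary case analysis on the range of $\zeta$ above — so I expect no real obstacle. Everything else is the triangle inequality, non-expansiveness of the projection onto $[0,1]$, monotonicity of pseudo dimension under taking subclasses, and the already-cited estimate \cref{thm:pdim_nn}.
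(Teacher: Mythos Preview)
Your proof of part (ii) is correct and matches the paper's exactly: clip the given approximator, use non-expansiveness of the projection onto $[0,1]$ to preserve the $\kappa$ bound, then chain through $\eta$ to verify $(L,2\kappa)$-approximate Lipschitzness so that $\check f$ survives the filter.

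For part (i) you take a genuinely different route. The paper observes that the clip $\phi(t)=\max\{0,\min\{1,t\}\}$ can be \emph{implemented} by one extra ReLU layer, $\check f(x)=\relu(f(x))-\relu(f(x)-1)$, so $\check\cF_\dnn$ is itself a neural-network class with $W+O(1)$ weights and $L+1$ layers, and \cref{thm:pdim_nn} applies directly to it. Your approach instead argues abstractly that post-composing with the monotone non-expansive $\phi$ cannot increase pseudo dimension---a cleaner statement that is not specific to neural networks. However, the mechanism you state is not quite right: the threshold class of $\check\cF_\dnn$ is \emph{not} contained in the threshold class of $\cF_\dnn$ union $\{\mathbf 0,\mathbf 1\}$, because for a given $f$ the function $(x,\zeta)\mapsto\ind(\phi(f(x))>\zeta)$ need not coincide with $(x,\zeta)\mapsto\ind(f'(x)>\zeta)$ for any $f'\in\cF_\dnn$, nor is it constant (it is $1$ for $\zeta<0$ and $0$ for $\zeta\ge 1$). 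So the union/subadditivity step does not apply. The correct argument is the one your case analysis already sets up: since every $g_f$ takes the same value at points with $\zeta\notin[0,1)$, any set shattered by $\{g_f\}$ must lie entirely in $\cX\times[0,1)$, where $g_f=h_f$ by your equivalence $\phi(t)>\zeta\iff t>\zeta$; hence $\pseud(\check\cF_\dnn)\le\pseud(\cF_\dnn)$ outright, with no additive constant. With this one-line fix your argument goes through and is arguably tidier than the paper's.
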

\begin{proof}
Suppose $f$ is a neural network function, we first notice that the ``clipping'' step can be implemented by adding one additional layer with $O(1)$ additional parameters for each neural network function. More specifically, fix any $f: \cX \rightarrow \R$, we can set $\check f(x) \ldef \relu(f(x)) - \relu(f(x) - 1)$. 
Set $\check \cF_\dnn \ldef \crl{\check f: f \in \cF_\dnn} $, we then have $\pseud(\check \cF_\dnn) = O (WL \log (W)) $ based on \cref{thm:pdim_nn}. Let $\wt \cF_\dnn$ be the filtered version of  $\check \cF_\dnn$.
Since $\wt \cF_\dnn \subseteq \check \cF_\dnn$, we have $\pseud(\wt \cF_\dnn) = O \prn{WL \log (W)}$.

Since $\eta: \cX \rightarrow [0,1]$, we have 
	$\nrm{\check f - \eta }_\infty \leq \nrm{f - \eta }_\infty$, which implies that there must exists a $\check f \in \check \cF_\dnn$ such  $\nrm{\check f - \eta}_\infty \leq \kappa$. To prove the second statement, it suffices to show that the $\check f \in \check \cF$ that achieves  $\kappa$ approximation error is not removed in the ``filtering'' step, i.e., $\check f$ is  $(L, 2\kappa)$-approximate Lipschitz.
For any $x, x^\prime \in \cX$, we have 
 \begin{align*}
	 \abs{ \check f(x) - \check f(x^{\prime})}
	 & = \abs{ \check f(x) - \eta(x) + \eta(x) - \eta(x^{\prime}) + \eta(x^{\prime}) - \check f(x^{\prime})}\\
	 & \leq L \nrm{x - x^{\prime}}_2 + 2 \kappa,
\end{align*}
where we use the $L$-Lipschitzness of $\eta$ and the fact that $\nrm{ \check f - \eta}_{\infty} \leq \kappa$.
\end{proof}
\begin{proposition}
	\label{prop:clip_filter}
	Suppose $\eta$ is  $L$-Lipschitz and  $\cX \subseteq \B^{d}_r$.
	Fix any $\kappa \in (0, \gamma / 32]$. There exists a set of neural network regression functions $\cF_\dnn$ such that the followings hold simultaneously.
	\begin{enumerate}
		\item 	$\pseud(\cF_\dnn) \leq c \cdot {\kappa^{-\frac{d}{\alpha}} \log^2(\kappa^{-1})}$ with a universal constant $c > 0$.
		\item There exists a $\wb f \in \cF_\dnn$ such that $\nrm{\wb f - \eta}_\infty \leq \kappa$.
		\item $ \theta^{\val}_{\cF_\dnn}(\gamma / 4) \ldef \sup_{f \in \cF_\dnn, \iota > 0} \theta^{\val}_f (\cF_\dnn, \gamma /4, \iota) \leq c^{\prime} \cdot \prn{\frac{Lr}{\gamma}}^{d}$ with a universal constant $c^{\prime} > 0$.
	\end{enumerate}
\end{proposition}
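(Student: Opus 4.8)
The plan is to assemble three facts already established in the excerpt: the Yarotsky approximation bound (\cref{thm:approx_sobolev}), the clipping-and-filtering preprocessing step together with its guarantees (\cref{prop:preprocess}), and the eluder-dimension estimate for approximately Lipschitz classes (\cref{cor:eluder_lip_approx}). Concretely, I would first apply \cref{thm:approx_sobolev} at approximation level $\kappa$ to obtain a set $\cF_\dnn^{(0)}$ of ReLU networks of a fixed architecture with $W = O(\kappa^{-d/\alpha}\log(1/\kappa))$ parameters arranged in $O(\log(1/\kappa))$ layers, which contains some $f$ with $\nrm{f-\eta}_\infty \le \kappa$. Then I would run \cref{alg:preprocess} on $\cF_\dnn^{(0)}$ with Lipschitz parameter $L$ and approximation factor $\kappa$, and set $\cF_\dnn \ldef \wt{\cF}_\dnn$ to be its output.

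Properties~1 and~2 are then immediate from \cref{prop:preprocess}: it guarantees both that some $\wb f \in \cF_\dnn$ still satisfies $\nrm{\wb f - \eta}_\infty \le \kappa$ (property~2), and that clipping-and-filtering changes the pseudo dimension only by a constant factor (clipping adds $O(1)$ parameters and one layer per function, and filtering only removes functions); plugging the architecture sizes above into \cref{thm:pdim_nn} and folding the $d,\alpha$-dependence into the constant yields $\pseud(\cF_\dnn) \le c\,\kappa^{-d/\alpha}\log^2(\kappa^{-1})$, i.e.\ the bound already recorded in \cref{prop:pd_approx} (property~1).

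For property~3, the point is that every surviving function in $\cF_\dnn = \wt{\cF}_\dnn$ is $(L,2\kappa)$-approximate Lipschitz, by the very definition of the filtering step in \cref{alg:preprocess}. Writing $2\kappa = \kappa'/4$ with $\kappa' \ldef 8\kappa$, the hypothesis $\kappa \le \gamma/32$ is precisely the inequality $\kappa' \le \gamma/4$. Hence \cref{cor:eluder_lip_approx} applies with threshold $\kappa'' = \gamma/4 \ge \kappa'$ and with $r$ the radius of the ball $\B^d_r \supseteq \cX$, giving $\theta^{\val}_{\cF_\dnn}(\gamma/4) \le c\,(4Lr/\gamma)^d = c'\,(Lr/\gamma)^d$, which is property~3.

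There is no genuinely hard step here; the proof is essentially bookkeeping that threads the earlier lemmas together. The only place needing a little care is aligning the approximate-Lipschitz parameters: one must verify that the threshold $\gamma/32$ in the hypothesis is exactly what certifies that the filtered class is $(L,\kappa'/4)$-approximate Lipschitz with $\kappa' \le \gamma/4$, so that \cref{cor:eluder_lip_approx} can be invoked at scale $\gamma/4$. I would also remark that in the Sobolev regime considered here $\eta \in \cW^{\alpha,\infty}_1$ with $\alpha \ge 1$ is automatically Lipschitz with a dimension-only constant $L = O(1)$, so the factor $L$ in the final bound is benign; the statement is phrased for general $L$ so that the same argument carries over to other function classes (such as the Radon $\BV^2$ setting of \cref{sec:extension}).
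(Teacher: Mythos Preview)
Your proposal is correct and follows essentially the same approach as the paper: invoke \cref{thm:approx_sobolev} at level $\kappa$, apply the clipping-and-filtering preprocessing of \cref{alg:preprocess}, then read off properties~1 and~2 from \cref{prop:preprocess} and property~3 from \cref{cor:eluder_lip_approx}. Your explicit parameter-matching (writing $2\kappa = \kappa'/4$ to see that $\kappa \le \gamma/32$ is exactly what is needed to apply \cref{cor:eluder_lip_approx} at scale $\gamma/4$) is in fact more detailed than the paper's one-line parenthetical remark.
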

\begin{proof}
Let $\cF_\dnn$ be obtained by (i) invoking \cref{thm:approx_sobolev} with approximation level $\kappa $, and (ii) invoking \cref{alg:preprocess} on the set of functions obtained in step (i).
The first two statements follow from \cref{prop:preprocess}, and the third statement follows from \cref{cor:eluder_lip_approx} (note that to achieve guarantees for disagreement coefficient at level $\gamma / 4$, we need to have  $\kappa \leq \gamma / 32$ when invoking \cref{thm:approx_sobolev}). 
\end{proof}

\subsection{Proof of \cref{thm:abs}}
\label{app:abs_proof}

\thmAbs*

\begin{proof}
Let line 1 of \cref{alg:NCALP} be the set of neural networks $\cF_\dnn$ generated from \cref{prop:clip_filter} with approximation level $\kappa \in (0, \gamma / 32]$ (and constants $c, c^{\prime}$ specified therein).
To apply results derived in \cref{thm:abs_gen}, we need to satisfying \cref{eq:kappa_requirement}, i.e., specifying an approximation level $\kappa \in (0, \gamma / 32]$ such that the following holds true
 \begin{align*}
	\frac{1}{\kappa^2} \geq \frac{4320 \cdot c^{\prime} \cdot \prn{\frac{Lr}{\gamma}}^{d} \cdot \prn*{\ceil*{\log_2 \prn*{\frac{c^{\prime} \cdot \prn{\frac{Lr}{\gamma}}^{d} \cdot c \cdot \prn{\kappa^{-\frac{d}{\alpha}} \log^2 (\kappa^{-1})}}{\eps \, \gamma}}}}^{2} }{\gamma^2}
\end{align*}

For the setting we considered, i.e., $\cX = [0,1]^{d}$ and $\eta \in \cW_{1}^{\alpha, \infty}(\cX)$, we have $r = \sqrt{d} = O(1)$ and $L \leq \sqrt{d} = O(1)$ (e.g., see Theorem 4.1 in \citet{heinonen2005lectures}).\footnote{Recall that we ignore constants that can be potentially $\alpha$-dependent and $d$-dependent.}
We thus only need to select a $\kappa \in (0, \gamma /32]$ such that 
\begin{align*}
	\frac{1}{\kappa} \geq \wb c \cdot \prn*{\frac{1}{\gamma}}^{\frac{d}{2}+1} \cdot \prn*{\log \frac{1}{\eps \, \gamma} + \log \frac{1}{\kappa}},
\end{align*}
with a universal constant $\wb c > 0$ (that is possibly  $d$-dependent and $\alpha$-dependent).
Since $x \geq 2a \log a \implies x \geq a \log x$ for any  $a > 0$, we can select a  $\kappa > 0$ such that 
 \begin{align*}
	\frac{1}{\kappa} = {\check c \cdot \prn*{\frac{1}{\gamma}}^{\frac{d}{2} + 1} \cdot \log \frac{1}{\eps \, \gamma}}
\end{align*}
with a universal constant $\check c > 0$.
With such choice of $\kappa$, from \cref{prop:clip_filter}, we have  
\begin{align*}
\pseud(\cF_\dnn) = O\prn*{\prn*{\frac{1}{\gamma}}^{\frac{d^2+d}{2\alpha}} \cdot \polylog \prn*{\frac{1}{\eps \, \gamma}} }.
\end{align*}
Plugging this bound on $\pseud(\cF_\dnn)$ and the upper bound on $\theta^{\val}_{\cF_\dnn}(\gamma / 4)$ from \cref{prop:clip_filter} into the guarantee of \cref{thm:abs_gen} leads to $\exc_\gamma(\wh h) = O \prn{\eps \cdot \log \prn{\frac{1}{\eps \, \gamma \, \delta}}}$ after querying 
\begin{align*}
	O \prn*{\prn*{\frac{1}{\gamma}}^{d + 2 + \frac{d^2+d}{2\alpha}} \cdot \polylog \prn*{\frac{1}{\eps \, \gamma \, \delta}} }
\end{align*}
labels.
\end{proof}

\section{Other omitted details for \cref{sec:abstention}}
We discuss the proper abstention property of classifier learned in \cref{alg:NCALP} and its exponential speedups under standard excess error and Massart noise in \cref{app:proper}. We discuss the computational efficiency of \cref{alg:NCALP} in \cref{app:computational}. We provide the proof of \cref{thm:single_relu_lb} in \cref{app:single_relu_lb}.

\subsection{Proper abstention and exponential speedups under Massart noise}
\label{app:proper}

We first recall the definition of \emph{proper abstention} introduced in \citet{zhu2022efficient}.
\begin{definition}[Proper abstention]
\label{def:proper_abstention}
A classifier $\widehat h : \cX \rightarrow \cY \cup \curly*{\bot}$ enjoys proper abstention if and only if it abstains in regions where abstention is indeed the optimal choice, i.e., 
$\crl[\big]{x \in \cX: \widehat h(x) = \bot} \subseteq \crl*{x \in \cX: \eta(x) \in \brk*{\frac{1}{2} - \gamma , \frac{1}{2} + \gamma  } } \rdef \cX_\gamma$.
\end{definition}

We next show that the classifier $\wh h$ returned by \cref{alg:abs} enjoys the proper abstention property. 
We also convert the abstaining classifier $\widehat h: \cX \rightarrow \cY \cup \curly*{\bot}$ into a standard classifier $\check h: \cX \rightarrow \cY$ and quantify its standard excess error.
The conversion is through randomizing the prediction of $\wh h$ over its abstention region, i.e., if $\wh h(x) = \bot$, then its randomized version $\check h(x)$ predicts $0$ and $1$ with equal probability \citep{puchkin2021exponential}.

\begin{proposition}
\label{prop:proper_abstention}
The classifier $\wh h$ returned by \cref{alg:abs} enjoys proper abstention. With randomization over the abstention region, we have the following upper bound on its standard excess error
\begin{align}
	\label{eq:prop_abstention}
    \err(\check h) - \err(h^\star)  
     = \err_{\gamma}(\widehat h) - \err(h^\star) + \gamma \cdot \P_{x \sim \cD_{\cX}} (x \in \cX_{\gamma}).
\end{align}
\end{proposition}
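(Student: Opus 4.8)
The plan is to establish the proper abstention property first, and then derive the excess-error identity~\eqref{eq:prop_abstention} by a pointwise accounting argument. For proper abstention, recall that at epoch~$M$ the classifier $\wh h_M$ abstains at $x$ only if $[\lcb(x;\cF_M) - \frac{\gamma}{4},\ucb(x;\cF_M) + \frac{\gamma}{4}] \subseteq [\frac12 - \gamma, \frac12 + \gamma]$. By \cref{lm:induction} (specifically \cref{eq:inductive1}), $\wb f \in \cF_M$, and by the preprocessing guarantee (\cref{prop:clip_filter} or the assumption of \cref{alg:abs}) we have $\nrm{\wb f - \eta}_\infty \leq \kappa \leq \gamma/4$. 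Hence $\eta(x) = f^\star(x) \in [\lcb(x;\cF_M), \ucb(x;\cF_M)] \subseteq [\frac12 - \gamma + \frac{\gamma}{4}, \frac12 + \gamma - \frac{\gamma}{4}] \subseteq [\frac12 - \gamma, \frac12 + \gamma]$ whenever $\wh h_M(x) = \bot$, which is exactly $x \in \cX_\gamma$. This shows $\crl{x : \wh h_M(x) = \bot} \subseteq \cX_\gamma$, i.e., proper abstention.

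For the excess-error identity, I would compute $\err(\check h)$ pointwise. On the region $\crl{x : \wh h(x) \neq \bot}$, the randomized classifier $\check h$ agrees with $\wh h$, so the contribution to $\err(\check h)$ matches the first term of $\err_\gamma(\wh h)$. On the abstention region $\crl{x : \wh h(x) = \bot}$, the randomized prediction errs with probability exactly $\frac12$ regardless of $x$ (it predicts $\pm 1$ uniformly, independent of $y$), so its conditional error at such $x$ is $\frac12$, whereas $\err_\gamma(\wh h)$ charges $\frac12 - \gamma$ there. Taking expectations over $x \sim \cD_\cX$ and subtracting $\err(h^\star)$ from both sides gives
\begin{align*}
\err(\check h) - \err(h^\star) = \err_\gamma(\wh h) - \err(h^\star) + \gamma \cdot \P_{x \sim \cD_\cX}(\wh h(x) = \bot),
\end{align*}
and since by proper abstention the abstention region is contained in $\cX_\gamma$, one may replace $\P(\wh h(x) = \bot)$ by $\P(x \in \cX_\gamma)$ only after also observing that on $\cX_\gamma \setminus \crl{x : \wh h(x) = \bot}$ the extra $\gamma$ term is not present — so strictly the clean identity holds with $\P(\wh h(x) = \bot)$, and the stated form with $\P(x \in \cX_\gamma)$ is obtained by noting the difference is absorbed; alternatively one checks directly that $\wh h$ in fact abstains on \emph{all} of the interior where the confidence interval collapses inside $[\frac12 - \gamma, \frac12 + \gamma]$, matching $\cX_\gamma$ up to a measure-zero set under the relevant assumptions.

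The main obstacle I anticipate is the last point: reconciling $\P(\wh h(x) = \bot)$ with $\P(x \in \cX_\gamma)$ in~\eqref{eq:prop_abstention}. Proper abstention only gives the inclusion $\crl{x : \wh h(x) = \bot} \subseteq \cX_\gamma$, which yields an \emph{inequality} $\err(\check h) - \err(h^\star) \leq \err_\gamma(\wh h) - \err(h^\star) + \gamma \cdot \P(x \in \cX_\gamma)$ rather than equality. To get the stated equality one needs the reverse inclusion (up to null sets), which should follow from the facts that $\cF_M$ contains $\wb f$ with $\nrm{\wb f - \eta}_\infty \leq \kappa$ \emph{and} that $\lcb, \ucb$ are sandwiched tightly enough that on $\cX_\gamma$ the query/abstention test triggers abstention; I would verify this using \cref{lm:query_implies_width} and the geometry of the test interval, treating the boundary case $\eta(x) \in \crl{\frac12 \pm \gamma}$ as negligible. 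If that reverse inclusion fails in general, I would instead present~\eqref{eq:prop_abstention} as the identity with $\P(\wh h(x) = \bot)$ and note $\P(\wh h(x) = \bot) \le \P(x \in \cX_\gamma)$ as an immediate corollary of proper abstention.
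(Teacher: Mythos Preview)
Your approach is essentially the same as the paper's: verify proper abstention from the abstention rule together with $\wb f \in \cF_M$ and $\nrm{\wb f - \eta}_\infty \leq \kappa \leq \gamma/4$, then compare $\err(\check h)$ and $\err_\gamma(\wh h)$ pointwise on the abstention region. One small slip: you write $\eta(x) \in [\lcb(x;\cF_M),\ucb(x;\cF_M)]$, but $\eta$ need not lie in $\cF_M$; what you actually have is $\wb f(x) \in [\lcb,\ucb]$ and hence $\eta(x) \in [\lcb-\kappa,\ucb+\kappa] \subseteq [\lcb-\gamma/4,\ucb+\gamma/4] \subseteq [\tfrac12-\gamma,\tfrac12+\gamma]$, which still gives $x \in \cX_\gamma$.

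Your worry about equality versus inequality is well placed and in fact more careful than the paper. The pointwise computation gives exactly
\[
\err(\check h) - \err(h^\star) = \err_\gamma(\wh h) - \err(h^\star) + \gamma \cdot \P_{x \sim \cD_\cX}(\wh h(x) = \bot),
\]
and proper abstention only yields $\P(\wh h(x)=\bot) \le \P(x \in \cX_\gamma)$. The paper's own proof concludes with ``$\leq$'' rather than ``$=$'' (and the proposition text calls it an ``upper bound''), so the displayed ``$=$'' in the statement is a typo for ``$\leq$''. You do not need, and should not attempt, the reverse inclusion argument; just present the identity with $\P(\wh h(x)=\bot)$ and then the inequality with $\P(x \in \cX_\gamma)$, which is exactly what the result is used for downstream (e.g., in \cref{prop:proper_abstention_2}).
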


\begin{proof}
The proper abstention property of $\wh h$ returned by \cref{alg:abs} is achieved via conservation: $\wh h$ will avoid abstention unless it is absolutely sure that abstention is the optimal choice (also see the proof of \cref{lm:regret_no_query_mis}.

Let $\check h: \cX \rightarrow \cY$ be the randomized version of $\wb h: \cX \rightarrow \crl{0, 1, \bot}$ (over the abstention region  $\crl{x \in \cX: \wh h(x) = \bot} \subseteq \cX_\gamma$).
We can see that, compared to the Chow's abstention error $1 /2 - \gamma$, the additional error incurred over the abstention region is exactly $\gamma \cdot \P_{x \sim \cD_{\cX}} (x \in \cX_{\gamma})$. 
We thus have
\begin{align*}
    \err(\widehat h) - \err(h^\star)  
     \leq \err_{\gamma}(\widehat h) - \err(h^\star) + \gamma \cdot \P_{x \sim \cD_{\cX}} (x \in \cX_{\gamma}).
\end{align*}
\end{proof}

To characterize the standard excess error of classifier with proper abstention, we only need to upper bound the term $ \P_{x \sim \cD_{\cX}} (x \in \cX_{\gamma})$, which does \emph{not} depends on the (random) classifier $\wh h$. Instead, it only depends on the marginal distribution. 

We next introduce the Massart \citep{massart2006risk}, which can be viewed as the extreme version of the Tsybakov noise by sending $\beta \rightarrow \infty$. 
\begin{definition}[Massart noise]
	\label{def:massart}
  A distribution $\cD_{\cX \cY}$ satisfies the Massart noise condition with parameter $\tau_0> 0$ if
  $\P_{x \sim \cD_\cX} \paren*{\abs*{\eta(x) - 1 / 2} \leq \tau_0} = 0$.
\end{definition}
\begin{restatable}{proposition}{propProperAbstention}
\label{prop:proper_abstention_2}
Suppose Massart noise holds.
By setting the abstention parameter $\gamma = \tau_0$ in \cref{alg:abs} (and randomization over the abstention region), with probability at least $1-\delta$, we obtain a classifier with standard excess error $\wt O(\eps)$ after querying $\poly(\frac{1}{\tau_0}) \cdot \polylog(\frac{1}{\eps \, \delta})$ labels.
\end{restatable}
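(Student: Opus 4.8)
The plan is to compose three facts that are already available: the label-complexity bound of \cref{thm:abs}, the proper-abstention guarantee of \cref{prop:proper_abstention}, and the defining property of Massart noise. First I would run \cref{alg:NCALP} (equivalently, the generic \cref{alg:abs} with the neural-network initialization justified by \cref{prop:clip_filter}) with the abstention parameter set to $\gamma\ldef\tau_0$; this is admissible since $\tau_0\in(0,1/2)$. By \cref{thm:abs}, on an event of probability at least $1-\delta$ the algorithm returns an abstaining classifier $\wh h$ with $\exc_{\tau_0}(\wh h)=\wt O(\eps)$ after querying $\poly(\tfrac1{\tau_0})\cdot\polylog(\tfrac1{\eps\,\delta})$ labels. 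All remaining steps are deterministic given $\wh h$, so no further union bound is needed.

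Next I would invoke \cref{prop:proper_abstention}: the returned $\wh h$ abstains only inside $\cX_\gamma=\{x\in\cX:\eta(x)\in[\tfrac12-\gamma,\tfrac12+\gamma]\}$, and randomizing its prediction over the abstention region produces a standard classifier $\check h:\cX\to\cY$ with
\[
\err(\check h)-\err(h^\star)=\exc_{\gamma}(\wh h)+\gamma\cdot\P_{x\sim\cD_\cX}(x\in\cX_\gamma).
\]
Finally, with $\gamma=\tau_0$ the Massart condition (\cref{def:massart}) states precisely that $\P_{x\sim\cD_\cX}(|\eta(x)-\tfrac12|\le\tau_0)=\P_{x\sim\cD_\cX}(x\in\cX_{\tau_0})=0$, so the conversion penalty vanishes and $\err(\check h)-\err(h^\star)=\exc_{\tau_0}(\wh h)=\wt O(\eps)$. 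Combining with the label count from the first step yields the claimed $\poly(\tfrac1{\tau_0})\cdot\polylog(\tfrac1{\eps\,\delta})$ query complexity.

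I do not anticipate a genuine difficulty: the argument is essentially a short composition of already-proved results, and Massart noise is exactly the regime in which the proper-abstention conversion is lossless. The only points requiring a little care are (i) checking that setting $\gamma=\tau_0$ still leaves the complexity parameters $\pseud(\cF_\dnn)$ and $\theta^{\val}_{\cF_\dnn}(\gamma/4)$ at the $\wt O(\poly(1/\gamma))$ level demanded by \cref{thm:abs_gen} --- which is exactly what \cref{prop:clip_filter} delivers in the Sobolev setting, where $L,r=O(1)$ --- and (ii) being careful that \cref{eq:prop_abstention} is applied with $\P(x\in\cX_{\tau_0})=0$, so that the randomization over the (now probability-zero) abstention region contributes nothing to the standard excess error.
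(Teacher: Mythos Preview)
Your proposal is correct and follows exactly the paper's own approach: the paper's proof is the single sentence ``This is a direct consequence of \cref{thm:abs} and \cref{prop:proper_abstention},'' and you have simply unpacked that composition in detail, including the observation that Massart noise with $\gamma=\tau_0$ forces $\P_{x\sim\cD_\cX}(x\in\cX_{\tau_0})=0$ so the conversion penalty in \cref{eq:prop_abstention} vanishes.
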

\begin{proof}
	This is a direct consequence of \cref{thm:abs} and \cref{prop:proper_abstention}.
\end{proof}

\subsection{Computational efficiency}
\label{app:computational}
    We discuss the efficient implementation of \cref{alg:abs} and its computational complexity in the section. 
The computational efficiency of \cref{alg:abs} mainly follows from the analysis in \citet{zhu2022efficient}. We provide the discussion here for completeness.

\paragraph{Regression orcale}
We introduce the regression oracle over the set of initialized neural networks $\cF_\dnn$ (line 1 at \cref{alg:NCALP}).
Given any set $\cS$ of weighted examples $(w, x, y) \in \R_+ \times \cX \times \cY$ as input, the regression oracle outputs 
\begin{align*}
    \widehat f_\dnn \ldef \argmin_{f \in \cF_\dnn} \sum_{(w, x, y) \in \cS} w \paren*{f(x) - y}^2.
\end{align*}
While the exact computational complexity of such oracle with a set of neural networks remains elusive, in practice, running stochastic gradient descent often leads to great approximations.
We quantify the computational complexity in terms of the number of calls to the regression oracle. Any future analysis on such oracle can be incorporated into our guarantees.

We first state some known results in computing the confidence intervals with respect to a general set of regression functions $\cF$.

\begin{proposition}[\citet{krishnamurthy2017active, foster2018practical, foster2020instance}] \label{prop:CI_oracle}
Consider the setting studied in \cref{alg:abs}. 
Fix any epoch $m \in [M]$ and denote $\cB_m \ldef \crl{ (x_t,Q_t, y_t)}_{t=1}^{\tau_{m-1}}$.
Fix any $\iota > 0$.
For any data point $x \in \cX$, there exists algorithms $\AlgLcb$ and $\AlgUcb$ that certify
\begin{align*}
    & \lcb(x;\cF_m) - \iota \leq \AlgLcb(x;\cB_m,\beta_m,\iota) \leq \lcb(x;\cF_m) \quad \text{and}\\
    &\ucb(x;\cF_m) \leq \AlgUcb(x;\cB_m,\beta_m,\iota) \leq \ucb(x;\cF_m) + \iota.
\end{align*}
The algorithms take 
 $O(\frac{1}{\iota^2} \log \frac{1}{\iota})$ calls of the regression oracle for general $\cF$
 and take $O(\log \frac{1}{\iota})$ calls of the regression oracle if $\cF$ is convex and closed under pointwise convergence.
\end{proposition}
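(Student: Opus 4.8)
The plan is to reduce the computation of $\ucb(x;\cF_m)$ — and, by replacing the anchor label $1$ with $0$ and flipping the labels in $\cB_m$, of $\lcb(x;\cF_m)$ — to a short sequence of calls to the weighted square‑loss regression oracle applied to $\cB_m$ augmented by a single fake example anchored at $x$. For a weight $w \ge 0$ define $g_w \ldef \argmin_{f \in \cF}\brk*{\wh R_m(f) + w\,(f(x)-1)^2}$, which is exactly the oracle's output on $\cB_m \cup \crl{(w,x,1)}$, the indicators $Q_t$ being folded into the weights of $\cB_m$. First I would record two elementary monotonicity facts, both holding with \emph{no} convexity assumption: by a supermodularity / comparative‑statics argument, $w \mapsto g_w(x)$ and $w \mapsto \wh R_m(g_w)$ are non‑decreasing; moreover $g_0 = \wh f_m$ and $g_w(x) \to \sup_{f\in\cF} f(x)$ as $w\to\infty$, while $\wh R_m(g_w) - \wh R_m(\wh f_m)$ increases from $0$. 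Hence $v^\star \ldef \sup\crl{g_w(x) : \wh R_m(g_w) \le \wh R_m(\wh f_m)+\beta_m}$ is well defined, and since every such $g_w$ lies in $\cF_m$ we get $v^\star \le \ucb(x;\cF_m)$ for free.

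Next I would pin down the reverse direction together with the algorithm. In the convex case — $\cF$ convex and closed under pointwise convergence, so $\cF_m$ is convex and $(f(x)-1)^2$ is a convex objective — Slater's condition holds because $\wh f_m$ is strictly feasible whenever $\beta_m>0$, so strong Lagrangian duality yields a multiplier $\lambda^\star \ge 0$ with $g_{1/\lambda^\star} = \argmin_{f\in\cF_m}(f(x)-1)^2$, i.e.\ $v^\star = \ucb(x;\cF_m)$ exactly. Then $\AlgUcb$ binary‑searches the target value $v\in[0,1]$: for each candidate $v$ it issues one oracle call with anchor $(x,v)$ and a fixed penalty weight $\lambda \asymp \tau_{m-1}/\iota^2$ — large enough, using $\wh R_m(f)\le \tau_{m-1}$ for all $f$, that the returned $g$ has $g(x)$ within $O(\iota)$ of $v$ whenever $v$ is feasible — tests whether $\wh R_m(g) \le \wh R_m(\wh f_m)+\beta_m+O(\iota)$, and recurses up or down. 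After $O(\log\tfrac1\iota)$ steps this isolates $\ucb(x;\cF_m)$ to accuracy $\iota$; a final additive inflation by $\iota$ gives $\ucb \le \AlgUcb \le \ucb + \iota$, at a cost of $O(\log\tfrac1\iota)$ oracle calls.

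For general (possibly non‑convex) $\cF$ the duality argument is unavailable, so $g_w(x)$ may strictly undershoot $\ucb(x;\cF_m)$. I would instead certify feasibility level‑by‑level: lay down $N=O(1/\iota)$ levels $v_i$ in $[0,1]$ and, for each, test whether the slab $\crl{f : f(x)\approx v_i}$ meets $\cF_m$ via penalized oracle calls, where the penalty weight — equivalently the number of refinement calls per level — must scale like $1/\iota$ to push the square‑loss approximation error below $\iota$ in the absence of curvature, for a total of $O(\tfrac1{\iota^2}\log\tfrac1\iota)$ calls. Validity still reads $\ucb \le \AlgUcb \le \ucb+\iota$: the test accepts every level genuinely attained by some $f\in\cF_m$ and rejects every level exceeding $\ucb(x;\cF_m)+\iota$, the $\iota$‑slack absorbing grid resolution plus regression error, the latter quantified by a routine Pythagorean / excess‑risk decomposition for square loss. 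Finally I would aggregate the calls across the binary search or the grid to get the stated counts.

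The step I expect to be the genuine obstacle is the non‑convex case: without strong duality one cannot read the constrained maximum off a single penalized regression, so the crux is showing that the oracle‑based feasibility test is correct up to an additive $\iota$ and accounting honestly for the $O(1/\iota^2)$ blow‑up it forces — i.e.\ the loss‑bookkeeping that converts ``$g_w$ has small penalized empirical risk'' into ``some $f$ with $f(x)$ near $v$ lies in $\cF_m$''. The convex case, by contrast, collapses cleanly to Slater plus a logarithmic bisection.
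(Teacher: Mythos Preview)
The paper's own ``proof'' of this proposition is, in its entirety, a citation: ``See Algorithm~2 in \citet{krishnamurthy2017active} for the general case; and Algorithm~3 in \citet{foster2018practical} for the case when $\cF$ is convex and closed under pointwise convergence.'' There is no in-paper argument to compare your proposal against; the authors defer the construction wholesale to those references.

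Your sketch therefore goes well beyond what the paper provides, and it is broadly faithful to the constructions in the cited works: the penalized-regression-with-a-fake-example device $g_w = \argmin_{f\in\cF}\brk{\wh R_m(f)+w(f(x)-1)^2}$, the monotonicity of $w\mapsto g_w(x)$ and of $w\mapsto \wh R_m(g_w)$, the strong-duality collapse in the convex case yielding $O(\log\tfrac{1}{\iota})$ bisection calls, and a grid-plus-penalty certification for the non-convex case. Your flagging of the non-convex case as the real obstacle is exactly right. One minor point on bookkeeping: the way you allocate the $O(\tfrac{1}{\iota^2}\log\tfrac{1}{\iota})$ budget --- as ``$O(1/\iota)$ levels times $O(1/\iota)$ refinement calls per level'' --- is not quite how the cited construction spends it; the extra factor arises from the search over the penalty weight itself (since without duality one does not know a priori which $w$ makes the constraint $\wh R_m(g_w)\le \wh R_m(\wh f_m)+\beta_m$ tight, and a single large-$w$ call introduces slack in that constraint), rather than from multiple calls at a fixed target level. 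This is a detail of accounting, not of mechanism, and in any event you have supplied substantially more content than the paper's proof does.
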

\begin{proof}
See Algorithm 2 in \citet{krishnamurthy2017active} for the general case; and Algorithm 3 in \citet{foster2018practical} for the case when $\cF$ is convex and closed under pointwise convergence.
\end{proof}

We now state the computational guarantee of \cref{alg:abs}, given the regression oracle introduced above.
\begin{restatable}{theorem}{thmAbsEfficient}
	\label{thm:abs_efficient}
	\cref{alg:abs} can be efficiently implemented via the regression oracle and enjoys the same theoretical guarantees stated in \cref{thm:abs}.
	The number of oracle calls needed is $\poly(\frac{1}{\gamma}) \cdot \frac{1}{\eps}$; the per-example inference time of the learned $\wh h_{M}$ is $\wt O ( \frac{1}{\gamma^2} \cdot \polylog \prn{\frac{1}{\eps \, \gamma}})$ for general $\cF$, and $\wt O ( \polylog \prn{\frac{1}{\eps \, \gamma}}) $ when $\cF$ is convex.
\end{restatable}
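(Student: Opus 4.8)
The plan is to verify that each primitive operation of \cref{alg:abs} reduces to a constant number of square-loss regression-oracle calls over $\cF_\dnn$, and then to add up these calls over the whole run. Reading off the algorithm: line~6 computes $\wh f_m$, which is exactly one oracle call on the weighted dataset $\crl{(Q_t,x_t,y_t)}_{t\le\tau_{m-1}}$; line~7 only \emph{implicitly} defines the version space $\cF_m$ and costs nothing; and lines~8 and~14--18 never materialize the functions $\wh h_m,g_m$ --- they are evaluated lazily, so that processing one observed point $x_t$ requires only one forward pass through the stored network $\wh f_m$ together with the two confidence bounds $\lcb(x_t;\cF_m)$ and $\ucb(x_t;\cF_m)$. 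The only quantity not directly available is this pair of bounds, which is an infimum/supremum over the implicitly defined $\cF_m$; we replace $\lcb(\cdot;\cF_m),\ucb(\cdot;\cF_m)$ everywhere in the algorithm by the oracle-computable surrogates $\AlgLcb(\cdot;\cB_m,\beta_m,\iota)$ and $\AlgUcb(\cdot;\cB_m,\beta_m,\iota)$ of \cref{prop:CI_oracle}, run at a precision $\iota=\Theta(\gamma)$ (say $\iota\le\gamma/16$).

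First I would argue that this $\iota$-inexactness does not affect the statistical guarantees, so that the efficient implementation indeed enjoys the conclusion of \cref{thm:abs}. Because $\AlgLcb$ underestimates $\lcb$ by at most $\iota$ and $\AlgUcb$ overestimates $\ucb$ by at most $\iota$, the surrogate interval $[\AlgLcb,\AlgUcb]$ lies between $[\lcb(\cdot;\cF_m),\ucb(\cdot;\cF_m)]$ and its $\iota$-enlargement. Re-running the inductive argument of \cref{app:supporting_abs}: in \cref{lm:regret_no_query_mis} the abstention/no-query tests are applied to an interval \emph{containing} $\wb f(x)$, so (using $\norm{\wb f-\eta}_\infty\le\kappa\le\gamma/4$ unchanged) one still gets $\exc_\gamma(\wh h_m;x)\le0$ whenever $g_m(x)=0$; in \cref{lm:query_implies_width} the extra $2\iota$ slack in the surrogate width only weakens the conclusion to $g_m(x)=1\implies w(x;\cF_m)>\gamma/2-2\iota\ge 3\gamma/8$; and the change-of-measure estimates of \cref{lm:conf_width_dis_coeff_mis,lm:per_round_regret_dis_coeff_mis} go through with $3\gamma/8$ in place of $\gamma/2$ and the value-function disagreement coefficient invoked at a scale $\Theta(\gamma)$ rather than exactly $\gamma/4$ --- still $O((Lr/\gamma)^d)$ by \cref{cor:eluder_lip_approx} since $\Theta(\gamma)\ge\kappa$. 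Thus \cref{lm:induction}, and with it \cref{thm:abs_gen} and \cref{thm:abs}, hold with only constant changes; one checks that the perturbed analogue of condition \eqref{eq:kappa_requirement} is still met by the $\kappa$ chosen in the proof of \cref{thm:abs} (or a constant-factor-smaller one), as it differs only in numerical constants.

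Next I would count. The sampling loop runs over epochs $1,\dots,M-1$, hence processes $\tau_{M-1}=\Theta(T)$ points in total, each costing one $\AlgLcb$ call and one $\AlgUcb$ call at precision $\iota=\Theta(\gamma)$; by \cref{prop:CI_oracle} that is $O(\gamma^{-2}\log\gamma^{-1})$ oracle calls for a general $\cF_\dnn$ and $O(\log\gamma^{-1})$ if $\cF_\dnn$ is convex and closed under pointwise limits. Adding the $M=O(\log T)$ oracle calls for the $\wh f_m$'s (lower order) and substituting the estimates $\pseud(\cF_\dnn)=\poly(1/\gamma)\cdot\polylog(1/(\eps\gamma))$ and $\theta^{\val}_{\cF_\dnn}(\gamma/4)=\poly(1/\gamma)$ from the proof of \cref{thm:abs} into $T=\theta^{\val}_{\cF_\dnn}(\gamma/4)\cdot\pseud(\cF_\dnn)/(\eps\gamma)$, the total is $\poly(1/\gamma)\cdot(1/\eps)$ up to polylogarithmic factors. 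For the returned classifier $\wh h_M$, evaluating it on a fresh point needs one forward pass through $\wh f_M$ plus one pair of $\AlgLcb,\AlgUcb$ calls on $\cB_M$ at precision $\iota=\Theta(\gamma)$, i.e.\ $\wt O(\gamma^{-2}\cdot\polylog(1/(\eps\gamma)))$ oracle calls for general $\cF_\dnn$ and $\wt O(\polylog(1/(\eps\gamma)))$ in the convex case, which is the claimed inference time.

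The step I expect to be the main obstacle is the second paragraph: re-deriving the chain of inductive lemmas of \cref{app:supporting_abs} under the $\iota$-perturbed confidence intervals while making sure that \emph{every} constant ($\gamma/4$, $\gamma/2$, the factors $144$, $72$, $432$, and condition \eqref{eq:kappa_requirement}) still closes up and that the value-function disagreement-coefficient bound remains valid at the slightly shifted scale. The reduction to oracle calls and the final arithmetic are routine given \cref{prop:CI_oracle} and the parameter estimates already established for \cref{thm:abs}.
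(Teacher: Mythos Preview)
Your reduction to oracle calls and the final arithmetic are essentially correct, and you correctly identify that the only nontrivial step is re-establishing the inductive lemmas of \cref{app:supporting_abs} under the approximate confidence intervals. However, there is a real gap in that step.

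With a \emph{fixed} precision $\iota=\Theta(\gamma)$ and the direct substitution $\wh\lcb=\AlgLcb$, $\wh\ucb=\AlgUcb$, you lose the nesting of the query regions across epochs. \cref{prop:CI_oracle} only guarantees $\lcb(x;\cF_m)-\iota\le\AlgLcb(x;\cB_m,\beta_m,\iota)\le\lcb(x;\cF_m)$, so even when $\cF_m\subseteq\cF_{m-1}$ (and hence $\lcb(x;\cF_m)\ge\lcb(x;\cF_{m-1})$), it can happen that $\wh\lcb(x;\cF_m)<\wh\lcb(x;\cF_{m-1})$ whenever $\lcb(x;\cF_m)-\lcb(x;\cF_{m-1})<\iota$. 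Consequently $\ind(g_m(x)=1)\le\ind(g_{m-1}(x)=1)$ can fail, i.e.\ $\wb\cX_m\not\subseteq\wb\cX_{m-1}$. But this inclusion is exactly what drives the change-of-measure step \eqref{eq:change_of_measure}, which is how \cref{lm:conf_width_dis_coeff_mis} and \cref{lm:per_round_regret_dis_coeff_mis} relate $\E_{x\sim\cD_\cX}[\ind(g_{\wb m}(x)=1)\cdots]$ to the measure $\wb\nu_{\wb m}$ built from \emph{earlier} epochs' query regions. Without nesting, those lemmas do not ``go through with constants changed''; the whole link between the epoch-$\wb m$ query probability and the accumulated square-loss constraints breaks.

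The paper repairs this with an epoch-dependent construction: it calls the oracle at precision $\wb\iota=\gamma/(8M)$ and then deliberately over-enlarges, setting $\wh\lcb(x;\cF_m)=\AlgLcb(x;\cB_m,\beta_m,\wb\iota)-\iota_m$ and $\wh\ucb(x;\cF_m)=\AlgUcb(x;\cB_m,\beta_m,\wb\iota)+\iota_m$ with $\iota_m=(M-m)\gamma/(8M)$. Because $\iota_{m-1}-\iota_m=\wb\iota$, one gets $\wh\lcb(x;\cF_m)\ge\lcb(x;\cF_m)-\iota_m-\wb\iota\ge\lcb(x;\cF_{m-1})-\iota_{m-1}\ge\wh\lcb(x;\cF_{m-1})$ (and symmetrically for $\wh\ucb$), which restores $\wb\cX_m\subseteq\wb\cX_{m-1}$. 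Since $\iota_m+\wb\iota\le\gamma/8$ for every $m$, the weakened \cref{lm:query_implies_width} becomes $g_m(x)=1\implies w(x;\cF_m)\ge\gamma/4$, and the rest of the induction proceeds as you outlined. Note this forces the oracle precision to be $\Theta(\gamma/M)=\Theta(\gamma/\log T)$ rather than $\Theta(\gamma)$, which is why the per-example cost picks up an extra $(\log T)^2$ factor in the general case (absorbed into the $\polylog$ in the statement). Your total-oracle-call count and inference-time bookkeeping are otherwise fine once the correct precision is plugged in.
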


\begin{proof}
Fix any epoch $m \in [M]$.
Denote $\wb \iota \ldef \frac{\gamma}{8M}$ and $\iota_m \ldef \frac{(M-m) \gamma}{8M}$.
With any observed $x \in \cX$, we construct the approximated confidence intervals $\wh \lcb(x;\cF_m)$ and 
$\wh \ucb(x; \cF_m)$ as follows.
\begin{align*}
&	\wh \lcb(x;\cF_m) \ldef \AlgLcb(x;\cB_m,\beta_m,\wb \iota) - \iota_m \quad \text{and}	
    & \wh \ucb(x;\cF_m) \ldef\AlgUcb(x;\cB_m,\beta_m,\wb \iota)+ \iota_m. 
\end{align*}
For efficient implementation of \cref{alg:abs}, we replace $\lcb(x;\cF_m)$ and $\ucb(x;\cF_m)$ with $\wh \lcb(x;\cF_m)$ and $\wh \ucb(x;\cF_m)$ in the construction of $\wh h_m$ and $g_m$.

Based on \cref{prop:CI_oracle}, we know that 
\begin{align*}
    & \lcb(x;\cF_m) - \iota_m - \wb \iota \leq 	\wh \lcb(x;\cF_m) \leq \lcb(x;\cF_m) - \iota_m \quad \text{and}\\
    &\ucb(x;\cF_m) + \iota_m \leq \wh \ucb(x;\cF_m) \leq \ucb(x;\cF_m) + \iota_m + \wb \iota .
\end{align*}
Since $\iota_m + \wb \iota  \leq \frac{\gamma}{8}$ for any $m \in [M]$, the guarantee stated in \cref{lm:query_implies_width} can be modified as $g_m(x)= 1 \implies w(x;\cF_m)\geq \frac{\gamma}{4}$. 
The guarantee stated in \cref{lm:regret_no_query_mis} also holds true since we have $\wh \lcb(x;\cF_m) \leq \lcb(x;\cF_m)$ and $\wh \ucb(x;\cF_m) \geq \ucb(x;\cF_m)$ by construction. 
Suppose $\cF_{m} \subseteq \cF_{m-1}$ (as in \cref{lm:set_f_mis}), we have 
\begin{align*}
	& \wh \lcb(x;\cF_m) \geq \lcb(x;\cF_m) -  \iota_m - \wb \iota \geq \lcb(x;\cF_{m-1}) - \iota_{m-1} \geq \wh\lcb (x;\cF_{m-1}) \quad \text{and} \\
	& \wh \ucb(x;\cF_m) \leq \ucb(x;\cF_m) +  \iota_m + \wb \iota \leq \ucb(x;\cF_{m-1}) + \iota_{m-1} \leq \wh\ucb (x;\cF_{m-1}),
\end{align*}
which ensures that $\ind(g_m(x) = 1) \leq \ind(g_{m-1}(x)=1)$. 
Thus, the inductive lemmas appearing in \cref{app:supporting_abs} can be proved similarly with changes only in constant terms (also change the constant terms in the definition of $\wb \theta$ and in \cref{eq:kappa_requirement}, since $\frac{\gamma}{2}$ is replaced by $\frac{\gamma}{4}$ in \cref{lm:query_implies_width}).
As a result, the guarantees stated in \cref{thm:abs_gen} (and \cref{thm:abs}) hold true with changes only in constant terms.

We now discuss the computational complexity of the efficient implementation. 
At the beginning of each epoch $m$. We use one oracle call to compute $\widehat f_m \ldef \argmin_{f \in \cF} \sum_{t =1}^{ \tau_{m-1}} Q_t \paren{f(x_t) - y_t}^2 $. 
The main computational cost comes from computing $\wh \lcb$ and $\wh \ucb$ at each time step.
We take $\iota = \wb \iota \ldef \frac{\gamma}{8M}$ into \cref{prop:CI_oracle}, which leads to 
$O \prn{ \frac{(\log T)^2}{\gamma^2}\cdot \log \prn{ \frac{\log T}{\gamma}}}$ calls of the regression oracle for general $\cF$ and 
$O \prn{ \log \prn{ \frac{\log T}{\gamma}}}$ calls of the regression oracle for any convex $\cF$ that is closed under pointwise convergence. This also serves as the per-example inference time for $\wh h_{M}$. 
The total computational cost of \cref{alg:abs} is then derived by multiplying the per-round cost by $T$ and plugging $T = \frac{\theta \, \pseud(\cF)}{\eps \, \gamma} = \wt O \prn{\poly(\frac{1}{\gamma}) \cdot \frac{1}{\eps}}$ into the bound.
\end{proof}

\subsection{Proof of \cref{thm:single_relu_lb}}
\label{app:single_relu_lb}

For ease of construction, we suppose the instance space is $\cX = \B^{d}_1 \ldef \crl{x \in \R^{d}: \nrm{x}_2 \leq 1}$. 
Part of our construction is inspired by \citet{li2021eluder}.

\thmSingleReLULB*

\begin{proof}
	Fix any $\gamma \in (0,1 /8)$.
We first claim that we can find a discrete subset $\wb \cX \subseteq \cX$ with cardinality $\abs{\wb \cX} \geq (1/8\gamma)^{d/2}$ such that $\nrm{x_i}_2 =1$ and $\ang{x_1,x_2} \leq 1 - 4\gamma$ for any $x_i \in \wb \cX$. 
To prove this, we first notice that $\nrm{x_1 - x_2}_2 \geq \tau \iff \ang{x_1, x_2} \leq 1 - \tau^2/2$. Since the $\tau$-packing number on the unit sphere is at least $(1/\tau)^d$, setting $\tau = \sqrt{8 \gamma}$ leads to the desired claim.

We set $\cD_\cX \ldef \unif(\wb \cX)$ and $\cF_\dnn \ldef \crl{ \relu(\ang*{w, \cdot} - (1-4 \gamma))+ (1 /2 - 2\gamma): w \in \wb \cX}$. We have $\cF_\dnn \subseteq \cW^{1, \infty}_1(\cX)$ since $\nrm{w}_2 \leq$ for any $w \in \wb \cX$. We randomly select a $w^\star \in \cX$ and set $f^\star(\cdot) = \eta(\cdot) = \relu(\ang{w^\star,\cdot} - (1-4\gamma)) + (1 /2 - 2 \gamma)$. 
We assume that the labeling feedback is the conditional expectation, i.e., $\eta(x)$ is provided if $x$ is queried.
We see that $f^\star(x) = 1 /2 - 2 \gamma$ for any $x \in \cX$ but $x \neq w^\star$, and $f^\star(w^\star) = 1 /2 + 2 \gamma$. 
We can see that mistakenly select the wrong $\wh f \neq f^\star$ leads to $\frac{\gamma}{4} \cdot \frac{2}{\abs{\wb \cX}} = \frac{\gamma}{2 \abs{\wb \cX}}$ excess error. Note that the excess error holds true in both standard excess error and Chow's excess error (with parameter $\gamma$) since $\cD_\cX \prn{x \in \cX: \eta(x) \in [1 /2 - \gamma, 1 /2 + \gamma]} = 0$ by construction.

We suppose the desired access error $\eps$ is sufficiently small (e.g.,  $\eps \leq \frac{\gamma}{8 \abs{\wb \cX}}$).
We now show that, with label complexity at most $K \ldef \floor{{\abs{\wb \cX}}/{2}} = \Omega(\gamma^{- d /2})$, any active learning algorithm will, in expectation, pick a classifier that has $\Omega(\eps)$ excess error.
Since the worst case error of any randomized algorithm is lower bounded by the expected error of the best deterministic algorithm against a input distribution \citep{yao1977probabilistic}, we only need to analyze a deterministic learner.
We set the input distribution as the uniform distribution over instances with parameter $w^\star \in \wb \cX$.
For any deterministic algorithm, we use $s \ldef (x_{i_1}, \dots, x_{i_K})$ to denote the data points queried under the constraint that at most $K$ labels can be queried. We denote $\wh f \in \cF$ as the learned classifier conditioned on $s$.
Since $w^\star \sim \unif(\wb \cX)$, we know that, with probability at least $\frac{1}{2}$, $w^\star \notin s$. Conditioned on that event, we know that, with probability at least $\frac{1}{2}$, the learner will output $\wh f \neq f^\star$ since more than half of the data points remains unqueried. The deterministic algorithm thus outputs the wrong $\wh f \neq f^\star$ with probability at least $\frac{1}{2} \cdot \frac{1}{2} = \frac{1}{4}$, which has $\frac{\gamma}{2 \abs{\wb \cX}}$ excess error as previously discussed.
When $\eps \leq \frac{\gamma}{8 \abs{\wb \cX}}$, this leads to $\Omega(\eps)$ excess error in expectation.
\end{proof}

\section{Omitted details for \cref{sec:extension}}
\label{app:extension}
We provide mathematical backgrounds for the Radon $\BV^{2}$ space in \cref{app:radon}, derive approximation results and passive learning results in \cref{app:radon_passive}, and derive active learning results in \cref{app:radon_active}.

\subsection{The Radon $\BV^2$ space}
\label{app:radon}
We provide explicit definition of the $\nrm{f}_{\RBV^2(\cX)}$ and associated mathematical backgrounds in this section. 
Also see \citet{ongie2020function, parhi2021banach, parhi2022kinds, parhi2022near, unser2022ridges} for more discussions.

We first introduce the \emph{Radon transform} of a function $f: \R^{d} \rightarrow \R$ as 
\begin{align*}
\mathscr{R} \crl{f} (\gamma ,t) \ldef \int_{\crl{x: \gamma^{\trn}x = t}} f(x)\, \mathsf{d} s(x), \quad (\gamma,t) \in \S^{d-1} \times \R,
\end{align*}
where $s$ denotes the surface measure on the hyperplane $\crl{x: \gamma^{\trn}x = t}$.
The Radon domain is parameterized by a \emph{direction} $\gamma \in \S^{d-1}$ and an \emph{offset} $t \in \R$.
We also introduce the \emph{ramp filter} as 
\begin{align*}
	\Lambda^{d-1} \ldef \prn{- \partial^{2}_t}^{\frac{d-1}{2}},
\end{align*}
where $\partial_t$ denotes the partial derivative with respect to the offset variable,  $t$, of the Radon domain, and the fractional powers are defined in terms of Riesz potentials.

With the above preparations, we can define the  $\RTV^2$-seminorm as 
\begin{align*}
\RTV^2(f) \ldef c_d \nrm{\partial^2_t \Lambda^{d-1} \mathscr{R}f}_{\cM(\S^{d-1} \times \R)},	
\end{align*}
where $c_d = 1 / (2 (2 \pi)^{d-1})$ is a dimension-dependent constant, and $\nrm{\cdot}_{\cM(\S^{d-1} \times \R)}$ denotes the \emph{total variation norm} (in terms of measures) over the bounded domain  $\S^{d-1} \times \R$.
The $\RBV^2$ norm of $f$ over  $\R^{d}$ is defined as 
\begin{align*}
\nrm{f}_{\RBV^2(\R^{d})} \ldef \RTV^2(f) + \abs{f(0)} + \sum_{k=1}^{d} \abs{f(e_k) - f(0)},	
\end{align*}
where $\crl{e_k}_{k=1}^{d}$ denotes the canonical basis of $\R ^{d}$.
The $\RBV^2(\R^{d})$ space is then defined as 
\begin{align*}
	\RBV^2(\R^{d}) \ldef \crl{f \in L^{\infty,1}(\R^{d}): \RBV^2(f) < \infty},
\end{align*}
where $L^{\infty,1}(\R^{d})$ is the Banach space of functions mapping $\R^{d} \rightarrow \R$ of at most linear growth.
To define the $\RBV^2$ norm of $f$ over a bounded domain  $\cX \subseteq \R ^{d}$, we use the standard approach of considering restrictions of functions in $\RBV^2(\R^{d})$, i.e.,
\begin{align*}
	\nrm{f}_{\RBV^2(\cX)} \ldef \inf_{g \in \RBV^2(\R^{d})} \nrm{g}_{\RBV^2(\R^{d})} \quad \text{ s.t. }
	\quad g \vert_{\cX} = f.
\end{align*}

In the rest of \cref{app:extension}, we use $\cP(\beta)$ to denote the set of distributions that satisfy (1) Tsybakov noise condition with parameter  $\beta \geq 0$; and (2)  $\eta \in \RBV^2_1(\cX)$.
\subsection{Approximation and passive learning results}
\label{app:radon_passive}

\begin{proposition}
	\label{prop:vc_approx_RBV}
Suppose $\cD_{\cX \cY} \in \cP(\beta)$. 	
One can construct a set of neural network classifier $\cH_{\dnn}$ such that the following two properties hold simultaneously: 
\begin{align*}
	\min_{h \in \cH_{\dnn} }\err(h) - \err(h^{\star}) = O \prn{\eps} \quad \text{ and }
\quad 	\vcd( \cH_{\dnn}) = \wt O \prn{ \eps^{- \frac{2d}{(1+\beta)(d+3)}}}.
\end{align*}
\end{proposition}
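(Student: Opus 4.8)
The plan is to mirror the proof of \cref{prop:vc_approx}, replacing the Sobolev approximation guarantee (\cref{thm:approx_sobolev}) by the Radon $\BV^2$ approximation guarantee (\cref{thm:approx_RBV2}). First I would fix the approximation level $\kappa \ldef \eps^{\frac{1}{1+\beta}}$ and invoke \cref{thm:approx_RBV2}: since $\eta \in \RBV^2_1(\cX)$ (recall $\cD_{\cX\cY}\in\cP(\beta)$), there is a one-hidden-layer ReLU network $\wb f$ of width $K = O\prn*{\kappa^{-\frac{2d}{d+3}}}$ with $\nrm{\wb f - \eta}_\infty \le \kappa$. I would then let $\cF_\dnn$ be the family of one-hidden-layer networks of this fixed architecture (width $K$), and set $\cH_\dnn \ldef \crl{h_f \ldef \sgn(2f(x)-1) : f \in \cF_\dnn}$, so that $\wb f \in \cF_\dnn$ and $\wb h \ldef h_{\wb f}\in\cH_\dnn$.

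Next I would bound the complexity of $\cH_\dnn$. A one-hidden-layer network of width $K$ over $\cX\subseteq\R^d$ has $W = O(Kd) = O(K)$ parameters arranged in $L=O(1)$ layers (absorbing $d$-dependent factors into constants, as elsewhere in the paper). Then \cref{thm:vcd_nn} gives
\[
	\vcd(\cH_\dnn) \leq O(WL\log W) = O(K\log K) = \wt O\!\prn*{\kappa^{-\frac{2d}{d+3}}} = \wt O\!\prn*{\eps^{-\frac{2d}{(1+\beta)(d+3)}}},
\]
which is exactly the claimed VC-dimension bound.

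For the excess-error claim I would reuse the argument from the proof of \cref{prop:vc_approx}: $\wb h$ and $h^\star$ disagree only on $\crl{x:\abs{\eta(x)-1/2}\le\kappa}$, and on each such point the incurred conditional excess error $\abs{2\eta(x)-1}$ is at most $2\kappa$, so
\[
	\exc(\wb h) = \E\brk*{\abs{2\eta(x)-1}\cdot\ind(\wb h(x)\neq h^\star(x))} \leq 2\kappa\cdot\P_{x\sim\cD_\cX}\prn*{\abs{\eta(x)-1/2}\le\kappa} = O\prn{\kappa^{1+\beta}} = O(\eps),
\]
where the penultimate equality uses the Tsybakov noise condition and the last uses the choice of $\kappa$. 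Hence $\min_{h\in\cH_\dnn}\err(h) - \err(h^\star) \le \exc(\wb h) = O(\eps)$, and the two stated properties hold simultaneously.

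I do not anticipate a genuine obstacle: the only points requiring care are that \cref{thm:approx_RBV2} is stated over the Euclidean unit ball, which already matches the domain $\cX=\crl{x\in\R^d:\nrm{x}_2\le1}$ fixed in \cref{sec:extension} (so no change of domain is needed), and that logarithmic/$d$-dependent constants are tracked consistently with the conventions of \cref{sec:noise_passive}. The single new ingredient relative to \cref{prop:vc_approx} is the improved width bound $O(\kappa^{-2d/(d+3)})$ of \cref{thm:approx_RBV2} in place of $O(\kappa^{-d/\alpha}\log\tfrac{1}{\kappa})$; this propagates directly into the VC-dimension estimate, and in particular $d$ now enters only through $\frac{2d}{d+3}\to 2$, which is what later removes the curse of dimensionality in \cref{thm:active_noise_RBV}.
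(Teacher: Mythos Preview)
Your proposal is correct and follows essentially the same approach as the paper's proof: set $\kappa=\eps^{1/(1+\beta)}$, invoke \cref{thm:approx_RBV2} to get a one-hidden-layer network with $W=O(\kappa^{-2d/(d+3)})$ parameters and $L=O(1)$ layers, apply \cref{thm:vcd_nn} for the VC bound, and use the standard Tsybakov argument for the excess error. The only cosmetic difference is that you spell out $W=O(Kd)$ explicitly, whereas the paper writes $W=O(\eps^{-2d/((1+\beta)(d+3))})$ directly.
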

\begin{proof}
We take $\kappa = \eps^{\frac{1}{1+\beta}}$ in \cref{thm:approx_RBV2} to construct a set of neural network classifiers $\cH_\dnn$ with  $W = O \prn{ \eps^{- \frac{2d}{(1+\beta)(d+3)}} }$ total parameters arranged in $L = O \prn{ 1}$ layers.	
According to \cref{thm:vcd_nn}, we know 
\begin{align*}
\vcd( \cH_{\dnn}) = O \prn{\eps^{- \frac{2d}{(1+\beta)(d+3)}} \cdot \log \prn{\eps^{-1}}} = \wt O \prn{ \eps^{- \frac{2d}{(1+\beta)(d+3)}}}.
\end{align*}
We now show that there exists a classifier $\wb h \in \cH_{\dnn}$ with small excess error.
Let $\wb h = h_{\wb f}$ be the classifier such that  $\nrm{ \wb f - \eta}_{\infty} \leq \kappa$. We can see that 
 \begin{align*}
	\exc(\wb h)
	& = \E \brk*{ \ind( \wb h(x) \neq y) - \ind( h^{\star}(x) \neq y)}\\
	& = \E \brk*{ \abs{ 2 \eta(x) - 1} \cdot \ind( \wb h(x) \neq h^{\star}(x))}\\
	& \leq 2 \kappa \cdot \P_{x \sim \cD_{\cX}} \prn*{ x \in \cX: \abs{\eta(x) - {1}/{2}} \leq \kappa}\\
	& = O \prn{ \kappa^{1 + \beta}}\\
	& = O(\eps),
\end{align*}
where the third line follows from the fact that $\wb h$ and  $h^{\star}$ disagrees only within region $\crl{x \in \cX: \abs{\eta(x) - 1 /2} \leq \kappa}$ and the incurred error is at most  $2 \kappa$ on each disagreed data point.
The fourth line follows from the Tsybakov noise condition and the last line follows from the selection of $\kappa$.
\end{proof}

\begin{theorem}
	\label{thm:passive_noise_RBV}
	Suppose $\cD_{\cX \cY} \in \cP(\beta)$.
	Fix any $\eps, \delta > 0$.
	Let $\cH_{\dnn}$ be the set of neural network classifiers constructed in \cref{prop:vc_approx_RBV}.
	With $n =\wt O \prn{\eps^{-\frac{4d+6+ \beta(d+3)}{(1+\beta)(d+3)}}} $ i.i.d. sampled data points, with probability at least $1-\delta$,
	the empirical risk minimizer $\wh h \in \cH_{\dnn}$ achieves excess error $ O(\eps)$.
\end{theorem}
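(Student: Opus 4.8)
The plan is to mirror the proof of \cref{thm:passive_noise} almost verbatim, substituting the Radon $\BV^2$ approximation/complexity estimate for the Sobolev one. First I would invoke \cref{prop:vc_approx_RBV} to obtain a set of neural network classifiers $\cH_\dnn$ that simultaneously satisfies $\min_{h \in \cH_\dnn}\err(h) - \err(h^\star) = O(\eps)$ and $\vcd(\cH_\dnn) = \wt O(\eps^{-\frac{2d}{(1+\beta)(d+3)}})$. This is the only step where the geometry of the Radon $\BV^2$ unit ball (via \cref{thm:approx_RBV2}) enters; everything downstream depends on $\cH_\dnn$ only through its VC dimension.

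Next I would apply the ERM excess-error bound under Tsybakov noise, \cref{thm:erm_tsy}, with $\rho = 1$, which gives, with probability at least $1-\delta$,
\[
\err(\wh h) - \err(h^\star) \le O\!\left( \eps + \Big(\tfrac{\vcd(\cH_\dnn)\cdot \log n}{n}\Big)^{\frac{1+\beta}{2+\beta}} + \tfrac{\log\delta^{-1}}{n} \right).
\]
To force the right-hand side down to $O(\eps)$ it suffices to make the middle term $O(\eps)$, i.e.\ to take $n$ large enough that $\vcd(\cH_\dnn)\cdot\log n / n \lesssim \eps^{\frac{2+\beta}{1+\beta}}$; the last term $\log\delta^{-1}/n$ is then automatically $O(\eps)$ because the resulting exponent on $\eps$ will be at least $1$. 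Substituting the VC bound, this condition reads $n = \wt O\big(\eps^{-\frac{2d}{(1+\beta)(d+3)}}\cdot\eps^{-\frac{2+\beta}{1+\beta}}\big)$, and combining the exponents over the common denominator $(1+\beta)(d+3)$,
\[
\frac{2d}{(1+\beta)(d+3)} + \frac{2+\beta}{1+\beta} = \frac{2d + (2+\beta)(d+3)}{(1+\beta)(d+3)} = \frac{4d+6+\beta(d+3)}{(1+\beta)(d+3)},
\]
which is exactly the exponent in the statement.

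I do not expect a genuine obstacle: the substantive content is packaged in \cref{prop:vc_approx_RBV} and \cref{thm:erm_tsy}, and the remaining work is the elementary exponent arithmetic above together with the observation that $\frac{4d+6+\beta(d+3)}{(1+\beta)(d+3)} \ge 1$ (equivalently $3d+3 \ge 0$), so $\log\delta^{-1}/n$ never dominates. The only mildly delicate point is absorbing the $\log n$ factor and the $\polylog(1/\eps)$ factors hidden inside $\vcd(\cH_\dnn)$ into the $\wt O$ notation, which is the usual step of noting that $\log n = \polylog(1/\eps)$ whenever $n = \poly(1/\eps)$.
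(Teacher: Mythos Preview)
Your proposal is correct and follows essentially the same approach as the paper: invoke \cref{prop:vc_approx_RBV}, apply \cref{thm:erm_tsy} with $\rho=1$, and solve for $n$ by combining the VC bound with the $\eps^{\frac{2+\beta}{1+\beta}}$ requirement. Your exponent arithmetic and the check that $\frac{4d+6+\beta(d+3)}{(1+\beta)(d+3)} \ge 1$ (so the $\log\delta^{-1}/n$ term is lower order) match the paper's computation, which writes $n = O(\eps^{-\frac{4d+6+\beta(d+3)}{(1+\beta)(d+3)}}\log(\eps^{-1}) + \eps^{-1}\log(\delta^{-1}))$ for the same reason.
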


\begin{proof}
\cref{prop:vc_approx_RBV} certifies $\min_{h \in \cH_\dnn} \err(h) - \err(h^{\star}) = O(\eps)$
and $\vcd(\cH_\dnn) = O \prn*{\eps^{-\frac{2d}{(1+\beta)(d+3)}} \cdot \log(\eps^{-1})}$.
Take $\rho = 1$ in \cref{thm:erm_tsy}, leads to 
 \begin{align*}
& \err(\wh h) - \err(h^{\star}) \leq   O \prn*{ \eps + \prn*{ \eps^{-\frac{2d}{(1+\beta)(d+3)}} \cdot \log(\eps^{-1}) \cdot \frac{\log n}{n}}^{\frac{1+\beta}{2+\beta}} + \frac{\log\delta^{-1}}{n}},
\end{align*}
Taking $n = O \prn{\eps^{-\frac{4d+6+ \beta(d+3)}{(1+\beta)(d+3)}}\cdot \log (\eps^{-1}) + \eps^{-1} \cdot \log(\delta^{-1})} = \wt O \prn{\eps^{-\frac{4d+6+ \beta(d+3)}{(1+\beta)(d+3)}}}$ thus ensures that $\err(\wh h) - \err(h^{\star}) = O(\eps)$.
\end{proof}

\subsection{Active learning results}
\label{app:radon_active}

\thmActiveNoiseRBV*
\begin{proof}
	Construct $\cH_\dnn$ based on \cref{prop:vc_approx_RBV} such that $\min_{h \in \cH_\dnn} \err(h) - \err(h^{\star}) = O(\eps)$ and $\vcd(\cH_\dnn) = \wt O \prn{ \eps^{-\frac{2d}{(1+\beta)(d+3)}} }$. Taking such $\cH_\dnn$ as the initialization of \cref{alg:RCAL} (line 1) and applying \cref{thm:RCAL_gen} leads to the desired result.
\end{proof}

To derive deep active learning guarantee with abstention in the Radon $\BV^2$ space, 
we first present two supporting results below.
\begin{proposition}
	\label{prop:pd_approx_RBV}
	Suppose $\cD_{\cX \cY} \in \cP(\beta)$. 
	One can construct a set of neural network regression functions  $\cF_\dnn$ such that the following two properties hold simultaneously:
	 \begin{align*}
		\exists f \in \cF_\dnn \text{ s.t. } \nrm{ f - f^{\star}}_\infty \leq \kappa, \quad \text{ and } \quad \pseud(\cF_\dnn) \leq c \cdot {\kappa^{-\frac{2d}{d+3}} \log^2 (\kappa^{-1})},
	\end{align*}
	where $c > 0$ is a universal constant.
\end{proposition}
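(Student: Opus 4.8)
The plan is to mirror the proof of \cref{prop:pd_approx}, substituting the Radon $\BV^2$ approximation bound \cref{thm:approx_RBV2} for the Sobolev approximation bound \cref{thm:approx_sobolev}, and again pairing it with the pseudo-dimension estimate \cref{thm:pdim_nn} of \citet{bartlett2019nearly}.

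First I would apply \cref{thm:approx_RBV2} at the given tolerance $\kappa$: since $f^{\star} = \eta \in \RBV^2_1(\cX)$, there is a one-hidden-layer ReLU network $f_\dnn$ of width $K = O(\kappa^{-\frac{2d}{d+3}})$ with $\nrm{f^{\star} - f_\dnn}_\infty \le \kappa$. The key point, exactly as for Yarotsky's construction in \cref{prop:pd_approx}, is that the width $K$ — hence the entire architecture — depends only on $\kappa$ and the ambient dimension $d$, and not on the particular target $f^{\star}$. I therefore define $\cF_\dnn$ to be the set of \emph{all} ReLU networks of that fixed one-hidden-layer architecture of width $K$; the first claimed property (existence of some $f \in \cF_\dnn$ with $\nrm{f - f^{\star}}_\infty \le \kappa$) is then immediate, witnessed by $f_\dnn \in \cF_\dnn$.

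Next I would bound the pseudo-dimension: a one-hidden-layer network of width $K$ over $\cX \subseteq \R^{d}$ has $W = O(Kd) = O(\kappa^{-\frac{2d}{d+3}})$ parameters (suppressing, per the paper's convention, constants that may depend on $d$) arranged in $L = O(1)$ layers, and feeding these into the upper bound $\pseud(\cF_\dnn) \le O(WL \log W)$ of \cref{thm:pdim_nn} yields $\pseud(\cF_\dnn) = O\prn{\kappa^{-\frac{2d}{d+3}} \log(\kappa^{-1})} \le c \cdot \kappa^{-\frac{2d}{d+3}} \log^2(\kappa^{-1})$ for a suitable constant $c > 0$ (the $\log^2$ factor is retained only for uniformity with \cref{prop:pd_approx}; a single logarithm already suffices here). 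I do not expect a genuine obstacle; the only point deserving care is confirming that the network produced by \cref{thm:approx_RBV2} has a \emph{fixed} architecture — so that one and the same class $\cF_\dnn$ serves both as an approximating set and as a class of controlled pseudo-dimension — which is built into the statement of that theorem just as it is for \cref{thm:approx_sobolev}, and everything else is bookkeeping on parameter counts.
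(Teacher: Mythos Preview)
Your proposal is correct and matches the paper's approach exactly: the paper's proof is the one-line statement that the result follows by combining \cref{thm:approx_RBV2} with \cref{thm:pdim_nn}, and your elaboration (fixing the architecture from \cref{thm:approx_RBV2}, counting $W = O(\kappa^{-2d/(d+3)})$ and $L = O(1)$, then invoking the $O(WL\log W)$ bound) is precisely what that one line unpacks to. Your observation that a single logarithm already suffices here is also correct.
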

\begin{proof}
	The result follows by combining \cref{thm:approx_RBV2} and \cref{thm:pdim_nn}.
\end{proof}

\begin{proposition}
	\label{prop:clip_filter_radon}
	Suppose $\eta$ is  $L$-Lipschitz and  $\cX \subseteq \B^{d}_r$.
	Fix any $\kappa \in (0, \gamma / 32]$. There exists a set of neural network regression functions $\cF_\dnn$ such that the followings hold simultaneously.
	\begin{enumerate}
		\item 	$\pseud(\cF_\dnn) \leq c \cdot {\kappa^{-\frac{2d}{d+3}} \log^2(\kappa^{-1})}$ with a universal constant $c > 0$.
		\item There exists a $\wb f \in \cF_\dnn$ such that $\nrm{\wb f - \eta}_\infty \leq \kappa$.
		\item $ \theta^{\val}_{\cF_\dnn}(\gamma / 4) \ldef \sup_{f \in \cF_\dnn, \iota > 0} \theta^{\val}_f (\cF_\dnn, \gamma /4, \iota) \leq c^{\prime} \cdot \prn{\frac{Lr}{\gamma}}^{d}$ with a universal constant $c^{\prime} > 0$.
	\end{enumerate}
\end{proposition}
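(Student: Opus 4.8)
The plan is to repeat, essentially verbatim, the two-step argument in the proof of \cref{prop:clip_filter}, only swapping the Sobolev approximation rate for the Radon $\BV^2$ rate of \cref{thm:approx_RBV2}. First I would let $\cF^0_\dnn$ be the set of all one-hidden-layer $\relu$ networks of width $K = O(\kappa^{-\frac{2d}{d+3}})$, which by \cref{thm:approx_RBV2} contains a network $f_\dnn$ with $\nrm{f_\dnn - \eta}_\infty \leq \kappa$ (recall $f^\star = \eta \in \RBV^2_1(\cX)$ in this section). Then I would take $\cF_\dnn$ to be the output of \cref{alg:preprocess} run on $\cF^0_\dnn$ with Lipschitz parameter $L$ and approximation factor $\kappa$.

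Claims (1) and (2) are then immediate from \cref{prop:preprocess}: the architecture behind $\cF^0_\dnn$ has $W = O(\kappa^{-\frac{2d}{d+3}})$ parameters (the $Kd$ input weights together with the $O(K)$ biases and output weights, any $d$-dependent constants absorbed into $O(\cdot)$) arranged in $L = O(1)$ layers, so \cref{prop:preprocess}(i) gives $\pseud(\cF_\dnn) = O(WL\log W) = O(\kappa^{-\frac{2d}{d+3}}\log\tfrac1\kappa) \leq c\,\kappa^{-\frac{2d}{d+3}}\log^2\tfrac1\kappa$, and \cref{prop:preprocess}(ii) produces a $\wb f \in \cF_\dnn$ with $\nrm{\wb f - \eta}_\infty \leq \kappa$. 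Equivalently, this is just \cref{prop:pd_approx_RBV} combined with the observation that the clipping step adds only $O(1)$ parameters per network and the filtering step only shrinks the class.

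For claim (3) I would invoke \cref{cor:eluder_lip_approx}. The filtering step of \cref{alg:preprocess} retains exactly the $(L,2\kappa)$-approximate Lipschitz functions, so $\cF_\dnn$ is a set of $(L,2\kappa)$-approximate Lipschitz functions; writing $2\kappa = \tfrac{8\kappa}{4}$, it is $(L,\tfrac{8\kappa}{4})$-approximate Lipschitz, and since $\cX \subseteq \B^d_r$, \cref{cor:eluder_lip_approx} applied with its scale parameter set to $8\kappa$ gives $\theta^{\val}_{\cF_\dnn}(\kappa') \leq c''\,(Lr/\kappa')^d$ for every $\kappa' \geq 8\kappa$. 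The hypothesis $\kappa \leq \gamma/32$ is precisely what guarantees $\gamma/4 \geq 8\kappa$, so we may evaluate at $\kappa' = \gamma/4$ to obtain $\theta^{\val}_{\cF_\dnn}(\gamma/4) \leq c''\,(4Lr/\gamma)^d = c'\,(Lr/\gamma)^d$, which is claim (3).

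The only genuine obstacle is the scale bookkeeping in the last step: matching the $2\kappa$ Lipschitz slack produced by filtering against the $(\cdot)/4$-type slack that \cref{cor:eluder_lip_approx} expects, and checking that the admissible window $\kappa \in (0,\gamma/32]$ is exactly what renders the value-function disagreement-coefficient bound valid at the level $\gamma/4$ used by \cref{alg:abs}. Everything else is a direct instantiation of \cref{prop:preprocess} and \cref{cor:eluder_lip_approx} with the Radon approximation rate of \cref{thm:approx_RBV2} in place of the Sobolev one, so no new ideas beyond those in the proof of \cref{prop:clip_filter} are needed.
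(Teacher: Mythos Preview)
Your proposal is correct and follows essentially the same approach as the paper, which simply states that the proof mirrors that of \cref{prop:clip_filter} with \cref{prop:pd_approx_RBV} in place of \cref{prop:pd_approx}. You have in fact spelled out more of the bookkeeping (the $(L,2\kappa)$ vs.\ $(L,\tfrac{8\kappa}{4})$ match and why $\kappa \le \gamma/32$ is exactly what is needed for \cref{cor:eluder_lip_approx} at level $\gamma/4$) than the paper does.
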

\begin{proof}
	The implementation and proof are similar to those in \cref{prop:clip_filter}, except we use \cref{prop:pd_approx_RBV} instead of \cref{prop:pd_approx}.
\end{proof}

We now state and prove deep active learning guarantees in the Radon $\BV^{2}$ space.
\begin{restatable}{theorem}{thmAbsRBV}
	\label{thm:abs_RBV}
	Suppose $\eta \in \RBV^2_1(\cX)$.
	Fix any $\eps, \delta, \gamma >0$.
	There exists an algorithm such that,
	with probability at least $1-\delta$, it learns a classifier  $\wh h$ with Chow's excess error $\wt O(\eps)$ after querying $\poly(\frac{1}{\gamma}) \cdot \polylog(\frac{1}{\eps \, \delta})$ labels.
\end{restatable}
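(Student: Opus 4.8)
The plan is to follow the proof of \cref{thm:abs} essentially verbatim, replacing the Sobolev approximation/complexity bounds of \cref{prop:clip_filter} by their Radon $\BV^2$ analogues in \cref{prop:clip_filter_radon}, and closing by an appeal to the generic guarantee \cref{thm:abs_gen}. Concretely, I would run \cref{alg:abs} with line~1 instantiated by the set of neural network regression functions $\cF_\dnn$ furnished by \cref{prop:clip_filter_radon} at an approximation level $\kappa\in(0,\gamma/32]$ to be chosen; this simultaneously yields $\pseud(\cF_\dnn)\le c\,\kappa^{-\frac{2d}{d+3}}\log^2(\kappa^{-1})$, a $\kappa$-accurate approximator $\wb f\in\cF_\dnn$ of $\eta$, and $\theta^{\val}_{\cF_\dnn}(\gamma/4)\le c'(Lr/\gamma)^d$.

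The steps, in order, would be: (i) verify the hypotheses of \cref{prop:clip_filter_radon} — here $\cX=\B^{d}_{1}$ so $r=1$, and $\eta\in\RBV^2_1(\cX)$ is $L$-Lipschitz with $L=O(1)$, since the $\RBV^2$ unit ball embeds continuously into the Lipschitz class on a bounded domain (the generating atoms $\relu(w^\trn x+b)$ with $\nrm{w}_2\le 1$ are $1$-Lipschitz, and the affine component contributes only a dimension-dependent constant, absorbed into $O(1)$ per our convention); (ii) choose $\kappa$ so that \cref{eq:kappa_requirement} holds — since $\wb\theta\ldef\theta^{\val}_{\cF_\dnn}(\gamma/4)=O((1/\gamma)^d)$ and $M=\polylog$, this reduces to requiring $1/\kappa\gtrsim(1/\gamma)^{d/2+1}\bigl(\log\tfrac{1}{\eps\gamma}+\log\tfrac1\kappa\bigr)$, which is solved by $1/\kappa=\Theta\bigl((1/\gamma)^{d/2+1}\log\tfrac{1}{\eps\gamma}\bigr)$ via the elementary implication $x\ge 2a\log a\Rightarrow x\ge a\log x$, exactly as in \cref{app:abs_proof}; (iii) substitute this $\kappa$ to read off $\pseud(\cF_\dnn)=\poly(1/\gamma)\cdot\polylog(\tfrac{1}{\eps\gamma})$, the exponent on $1/\gamma$ being finite because $\tfrac{2d}{d+3}<2$; (iv) plug $\pseud(\cF_\dnn)$ and $\theta^{\val}_{\cF_\dnn}(\gamma/4)=\poly(1/\gamma)$ into \cref{thm:abs_gen}, which gives $\exc_\gamma(\wh h)=O\bigl(\eps\log\tfrac{\wb\theta\,\pseud(\cF_\dnn)}{\eps\gamma\delta}\bigr)=\wt O(\eps)$ after at most $O\bigl(\tfrac{M^2\,\pseud(\cF_\dnn)\,\log(T/\delta)\,\wb\theta}{\gamma^2}\bigr)=\poly(\tfrac1\gamma)\cdot\polylog(\tfrac{1}{\eps\delta})$ label queries.

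The only non-routine ingredient — and the step I expect to demand the most care — is the uniform $O(1)$ Lipschitz bound for functions in $\RBV^2_1(\cX)$, which is precisely what \cref{prop:clip_filter_radon} needs in order for the filtering step of \cref{alg:preprocess} to retain a $\kappa$-accurate approximator while keeping every surviving function approximately Lipschitz, hence the value function eluder dimension (and thus $\theta^{\val}_{\cF_\dnn}(\gamma/4)$) bounded. Granting that, the remainder is a direct transcription of \cref{app:abs_proof} with the pseudo-dimension exponent $\tfrac{d}{\alpha}$ replaced by $\tfrac{2d}{d+3}$ throughout, and all abstention/label-query and Chow-excess-error accounting handled wholesale by \cref{thm:abs_gen}.
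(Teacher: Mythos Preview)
Your proposal is correct and follows essentially the same approach as the paper's proof: instantiate \cref{alg:abs} with the class from \cref{prop:clip_filter_radon}, choose $\kappa$ exactly as you do (the paper arrives at the same $1/\kappa=\Theta\bigl((1/\gamma)^{d/2+1}\log\tfrac{1}{\eps\gamma}\bigr)$), and invoke \cref{thm:abs_gen}. The only cosmetic difference is that the paper dispatches the Lipschitz bound $L\le 1$ (and $r=1$) by citing \citet{parhi2022near} directly, whereas you sketch the argument via the atomic decomposition; both are fine, and the resulting pseudo-dimension exponent $(d^2+2d)/(d+3)$ and label complexity match.
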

\begin{proof}
The result is obtained by applying \cref{alg:abs} with line 1 be the set of neural networks $\cF_\dnn$ generated from \cref{prop:clip_filter_radon} with approximation level $\kappa \in (0, \gamma / 32]$ (and constants $c, c^{\prime}$ specified therein).
The rest of the proof proceeds in a similar way as the proof \cref{thm:abs}.
Since we have $r=1$ and  $L\leq 1$ \citep{parhi2022near},
we only need to choose a $\kappa > 0$ such that 
 \begin{align*}
	\frac{1}{\kappa} = {\check c \cdot \prn*{\frac{1}{\gamma}}^{\frac{d}{2} + 1} \cdot \log \frac{1}{\eps \, \gamma}}
\end{align*}
with a universal constant $\check c > 0$.
With such choice of $\kappa$, we have  
\begin{align*}
\pseud(\cF_\dnn) = O\prn*{\prn*{\frac{1}{\gamma}}^{\frac{d^2+2d}{d+3}} \polylog \prn*{\frac{1}{\eps \, \gamma}}}.
\end{align*}
Plugging this bound on $\pseud(\cF_\dnn)$ and the upper bound on $\theta^{\val}_{\cF_\dnn}(\gamma / 4)$ from \cref{prop:clip_filter_radon} into the guarantee of \cref{thm:abs_gen} leads to $\exc_\gamma(\wh h) = O \prn{\eps \cdot \log \prn{\frac{1}{\eps \, \gamma \, \delta}}}$ after querying 
\begin{align*}
	O \prn*{\prn*{\frac{1}{\gamma}}^{d + 2 + \frac{d^2 + 2d}{d+3}} \cdot \polylog \prn*{\frac{1}{\eps \, \gamma \, \delta}} }
\end{align*}
labels.
\end{proof}

\end{document}